\author{\name Mihaela Rosca \email mihaelacr@deepmind.com\\
      \addr DeepMind, University College London\\
      \AND
      \name Yan Wu \email yanwu@deepmind.com \\
      \addr DeepMind
      \AND
      \name Chongli Qin \email chongliqin@deepmind.com \\
      \addr DeepMind
      \AND
      \name Benoit Dherin \email dherin@google.com \\
      \addr Google}
\title{On a continuous time model of gradient descent dynamics and instability in deep learning}
    \renewcommand*{\bm}[1]{#1}%
\newtheorem{theorem}{Theorem}[section]
\newtheorem{lemma}[theorem]{Lemma}
\newtheorem{remark}{Remark}[section]
\newenvironment{proof}{\paragraph{Proof:}}{\hfill$\square$}
\newtheorem{definition}{Definition}[section]
\newcommand{\qedwhite}{\hfill \ensuremath{\Box}}
\newcommand\cut[1]{}
\newcommand{\norm}[1]{\left\lVert#1\right\rVert}
\newcommand{\squishlist}{
   \begin{list}{$\bullet$}
    { \setlength{\itemsep}{0pt}      \setlength{\parsep}{3pt}
      \setlength{\topsep}{3pt}       \setlength{\partopsep}{0pt}
      \setlength{\leftmargin}{1.5em} \setlength{\labelwidth}{1em}
      \setlength{\labelsep}{0.5em} } }
\newcommand{\squishlisttwo}{
   \begin{list}{$\bullet$}
    { \setlength{\itemsep}{0pt}    \setlength{\parsep}{0pt}
      \setlength{\topsep}{0pt}     \setlength{\partopsep}{0pt}
      \setlength{\leftmargin}{2em} \setlength{\labelwidth}{1.5em}
      \setlength{\labelsep}{0.5em} } }
\newcommand{\squishend}{
    \end{list}  }
\newcommand{\sign}{\mbox{sign}}
\newcommand{\myvec}[1]{\mathbf{#1}}
\newcommand{\myvecsym}[1]{\bm{#1}}
\newcommand{\vtheta}{\myvecsym{\theta}}
\newcommand{\vb}{\myvec{b}}
\newcommand{\vg}{\myvec{g}}
\newcommand{\vu}{\myvec{u}}
\newcommand{\vv}{\myvec{v}}
\newcommand{\vx}{\myvec{x}}
\newcommand{\vA}{\myvec{A}}
\newcommand{\vI}{\myvec{I}}
\newcommand{\vJ}{\myvec{J}}
\newcommand{\vP}{\myvec{P}}
\newcommand{\hatdotvtheta}{\bm{{\dot{\tilde{\theta}}}}}
\newcommand{\be}{\begin{equation}}
\newcommand{\ee}{\end{equation}}
\newcommand{\bea}{\begin{eqnarray}}
\newcommand{\eea}{\end{eqnarray}}
\newcommand{\beaa}{\begin{eqnarray*}}
\newcommand{\eeaa}{\end{eqnarray*}}
\DeclareMathAlphabet{\mathpzc}{OT1}{pzc}{m}{n}
\newtheorem{corollary}{Corollary}[section]
\newcommand{\owntag}[1]{\stepcounter{equation}   \tag{#1, \theequation} }
\newcommand{\rebuttalrthree}[1]{\textcolor{black}{#1}}
\newcommand{\rebuttalrtwo}[1]{\textcolor{black}{#1}}
\newcommand{\rebuttalrone}[1]{\textcolor{black}{#1}}
\begin{document}

\maketitle

\begin{abstract}
    The recipe behind the success of deep learning has been the combination of neural networks and gradient-based optimization.
    Understanding the behavior of gradient descent however, and particularly its instability, has lagged behind its empirical success. 
    To add to the theoretical tools available to study gradient descent we propose \textit{the principal flow} (PF), a continuous time flow that approximates gradient descent dynamics. To our knowledge, the PF is the only continuous flow that captures the divergent and oscillatory behaviors of gradient descent, including escaping local minima and saddle points. Through its dependence on the eigendecomposition of the Hessian the PF sheds light on the recently observed edge of stability phenomena in deep learning. Using our new understanding of instability we propose a learning rate adaptation method which enables us to control the trade-off between training stability and test set evaluation performance.
\end{abstract}

\section{Introduction}

Our goal is to use continuous time models to understand the behavior of gradient descent.  Using continuous dynamics to understand discrete time systems opens up tools from dynamical systems such as stability analysis, and has a long history in optimization and machine learning~\citep{glendinning1994stability,saxe2013exact,nagarajan2017gradient,lampinen2018analytic,arora2018optimization,advani2020high,elkabetz2021continuous,vardi2021implicit,franca2020,igr,igr_sgd}. Most theoretical analysis of gradient descent using continuous time systems uses the negative gradient flow, but this has well known limitations such as not being able to explain any behavior contingent on the learning rate.
To mitigate these limitations 
we find a new continuous time flow which
reveals important new roles of the Hessian in gradient descent training. 
To do so, we use backward error analysis (BEA), a method with a long history in the numerical integration community ~\citep{hairer2006geometric} that has only recently been used in the deep learning context~\citep{igr,igr_sgd}.

We find that the proposed flow sheds new light on gradient descent stability, including but not limited to divergent and oscillatory behavior around a fixed point.
Instability --- areas  of training where the loss consistently increases --- and edge of stability behaviors ~\citep{cohen2021gradient} ---areas  of training where the loss does not behave monotonically but decreases over long time periods --- are pervasive in deep learning and occur for all learning rates and architectures \cite{cohen2021gradient,gur2018gradient,gilmer2021loss,lewkowycz2020large}. We use our novel insights to understand and mitigate these instabilities.

\rebuttalrthree{The structure of the presented work is as follows:}
\begin{itemize}
    \item \rebuttalrthree{We discuss the advantages of a continuous time approach in Section \ref{sec:motivation}, where we also highlight the limitations of existing continuous time flows.}
    \item We introduce \textbf{the principal flow} (the PF), a flow \rebuttalrthree{in complex space} defined by the eigendecomposition of the Hessian (Section~\ref{sec:principal_flow}). To our knowledge the PF is the first continuous time flow that captures that gradient descent can diverge around local minima and saddle points. \rebuttalrthree{We show that using a complex flow is crucial in understanding instabilities in gradient descent.}
    \item We show the PF \rebuttalrthree{is better than existing flows at modelling neural network training dynamics} in Section~\ref{sec:the_pf_and_nns}. In Section~\ref{sec:instability_deep_learning} we use the PF to shed new light on edge of stability behaviors in deep learning. \rebuttalrthree{We do so by connecting changes in the loss and Hessian eigenvalues with core quantities exposed by the PF and neural network landscapes explored through the behavior of gradient flows.}
    \item \rebuttalrthree{Through a continuous time perspective we demonstrate empirically how to control the trade-off between stability and performance in deep learning in Section \ref{sec:stabilising_training}.  We do so using DAL (Drift Adjusted Learning rate), an approach to setting the learning rate dynamically based on insights on instability derived from the PF.}
    \item \rebuttalrthree{We end by showcasing the potential of integrating our continuous time approach with other optimization schemes and highlighting how the PF can be used as a tool for existing continuous time analyses in Section \ref{sec:future_work}}.
\end{itemize}

\textbf{Notation}: We denote as $E$ the loss function, $\vtheta$ the parameter vector of dimension $D$, $\nabla_{\vtheta}^2 E$ the loss Hessian and $\lambda_i$ the Hessian's $i$'th largest eigenvalue with $\vu_i$ the corresponding eigenvector. Since if $\vu_i$ is an eigenvector of $\nabla_{\vtheta}^2 E$ so is $-\vu_i$, we always use $\vu_i$ such that $Re[\nabla_{\vtheta} E^T \vu_i] \ge 0$; this has no effect on our results and is only used for convenience. For a continuous time flow $\vtheta(h)$ refers to the solution of the flow at time $h$.

\textbf{Experiments}: A list of figures with details on how to reproduce each of them is provided in the Appendix. Code available at \url{https://github.com/deepmind/discretisation_drift}.

\section{Continuous time models of gradient descent}
\label{sec:motivation}

The aim of this work is to understand the dynamics of gradient descent updates with learning rate $h$
\begin{align}
    \vtheta_t = \vtheta_{t-1} - h \nabla_{\vtheta} E(\vtheta_{t-1})
    \label{eq:gd-basic}
\end{align}

from the perspective of continuous dynamics. 
When using continuous time dynamics to understand gradient descent it is most common to use \textit{the negative gradient flow} (NGF)

\begin{align}
\dot{\vtheta} = - \nabla_{\vtheta} E
\label{eq:ngf}
\end{align}

Gradient descent can be obtained from the NGF through Euler numerical integration, with an error of $\mathcal{O}(h^2)$ after one gradient descent step. 
\rebuttalrthree{
Studying gradient descent and its behavior around equilibria and beyond has thus taken two main approaches: directly studying the discrete updates of Eq~\ref{eq:gd-basic} \citep{bartlett2018gradient,bartlett2018representing,mescheder2017numerics,gunasekar2018implicit,du2019gradient,allen2019convergence,du2019width,ziyin2021sgd,liu2021noise}, or the continuous time NGF of Eq~\ref{eq:ngf} \citep{glendinning1994stability,saxe2013exact,nagarajan2017gradient,lampinen2018analytic,arora2018optimization,advani2020high,elkabetz2021continuous,vardi2021implicit,franca2020,balduzzi2018mechanics}.
The appeal of continuous time systems lies in their connection with dynamical systems and the plethora of tools that thus become available, such as stability analysis; the simplicity by which conserved quantities can be obtained \citep{du2018algorithmic,franca2020}; and analogies that can be constructed through similarities with physical systems \citep{franca2020}. Because of the availability of tools for the analysis of continuous time systems, it has been previously noted that discrete time approaches are often more challenging and discrete time proofs are often inspired from continuous time ones \citep{may1976simple,elkabetz2021continuous}. We use an example to showcase the ease of continuous time analyses: when following the NGF the loss $E$ decreases since $\frac{dE}{dt} = \frac{d E}{d\vtheta}^T\frac{d\vtheta}{dt} = - ||\nabla_{\vtheta}{E}||^2$.  Showing that and \textit{when} following the discrete time gradient descent update in Eq~\ref{eq:gd-basic} is more challenging and requires adapting the analysis on the form of the loss function $E$. Classical convergence guarantees associated with other optimization approaches such as natural gradient are also derived in continuous time \citep{amari1998natural,ollivier2015riemannian,ollivier2015riemannian2}.
By analyzing the properties of continuous time systems one can also determine whether optimizers should more closely follow the underlying continuous time flow \citep{song2018accelerating,odegan}, what regularizers should be constructed to ensure convergence or stability \citep{nagarajan2017gradient,balduzzi2018mechanics,rosca2021discretization}, construct converge guarantees in functional space for infinitely wide networks \citep{jacot2018neural,lee2019wide}.
}

\subsection{\rebuttalrthree{Limitations of existing continuous time flows}}

The well-known discrepancy between Euler integration and the NGF, often called \textit{discretization error} or \textit{discretization drift} (Figure~\ref{fig:dd_def}) leads to certain limitations when using the NGF to describe gradient descent, namely: the NGF cannot explain divergence around a local minima for high learning rates or convergence to flat minima as often seen in the training of neural networks.  
Critically, since the NGF does not depend on the learning rate, it cannot explain any learning rate dependent behavior.

\begin{figure}[t]
\centering
\begin{subfigure}[Discretisation drift.]{
\begin{tikzpicture}[every text node part/.style={align=center,inner sep=0,outer sep=0}][overlay]
\coordinate (theta_t_minus_1) at (0,1);
\coordinate (theta_t) at (2.2,0.6);

\coordinate (cont_theta_t) at (2.2,3.3);

\node(draw) at ($(theta_t_minus_1) + (+0.3,-0.32)$) {$\vtheta_{t-1}$};
\node(draw) at ($(theta_t) + (0.5,-0.2)$) {$\vtheta_{t}$};

\coordinate (first_time_transition) at ($(theta_t_minus_1) + (+0.55,1.6)$);

\draw [thick,dashed] (theta_t_minus_1) -- (theta_t);

\draw [thick]  (theta_t_minus_1) to[out=50,in=180]  node[near end,above,yshift=0.1cm, xshift=-0.2cm] {$\dot{\vtheta} = -\nabla_{\vtheta} E$} (cont_theta_t);

\draw [
    thick,
    decoration={
        brace,
        mirror,
        raise=0.1cm
    },
    decorate
] (theta_t) -- (cont_theta_t)
node [pos=0.5,anchor=west,xshift=0.15cm,yshift=-0.cm,text width=1cm,align=left] { \scriptsize {\color{black}discretization \protect\newline drift}};
\end{tikzpicture}
\label{fig:dd_def}
}\end{subfigure}
\hspace{1em}
\begin{subfigure}[2D convex case.]{
 \includegraphics[width=0.3\columnwidth]{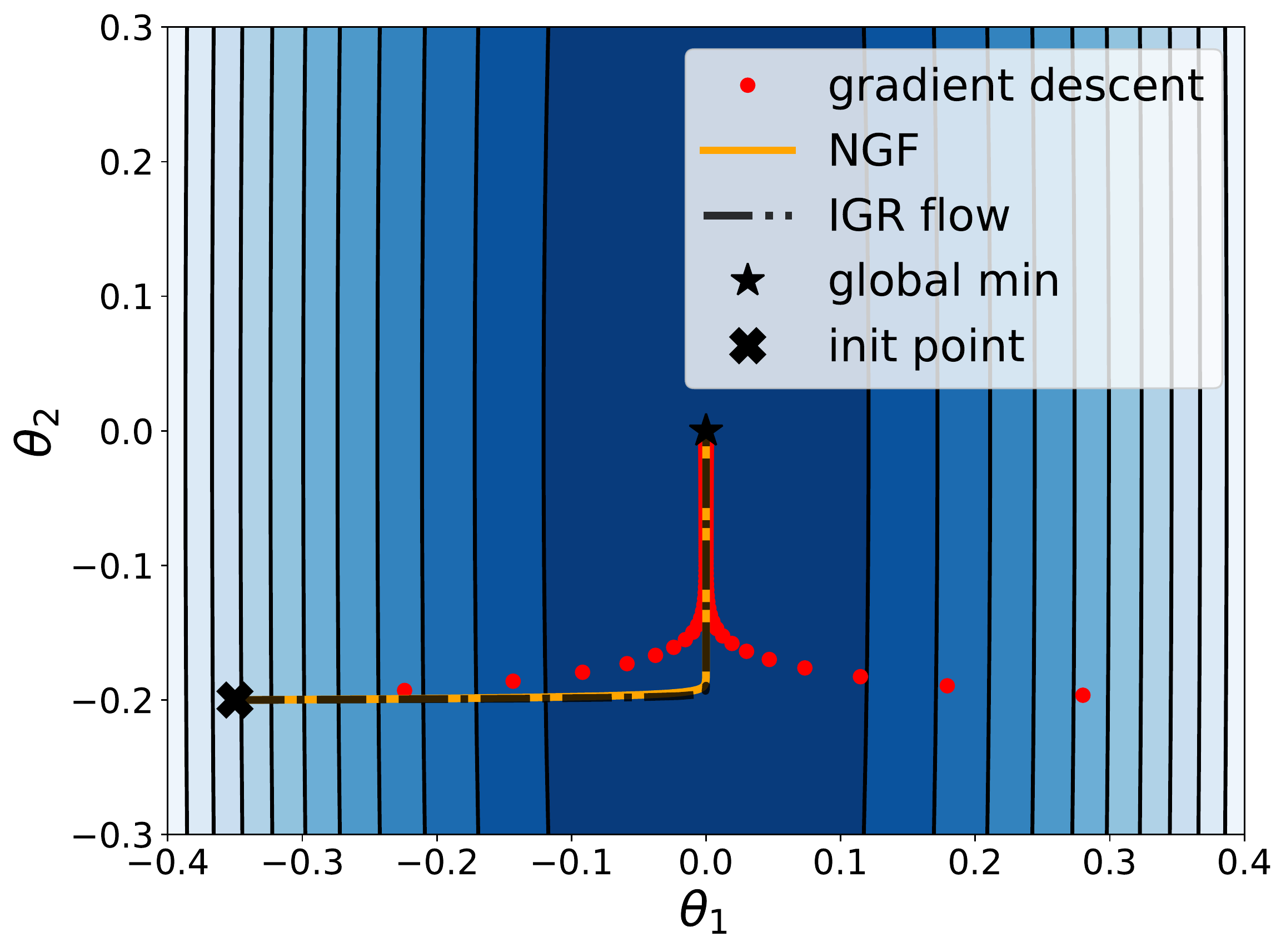}
 }\end{subfigure}
 \begin{subfigure}[Banana function.]{
  \includegraphics[width=0.3\columnwidth]{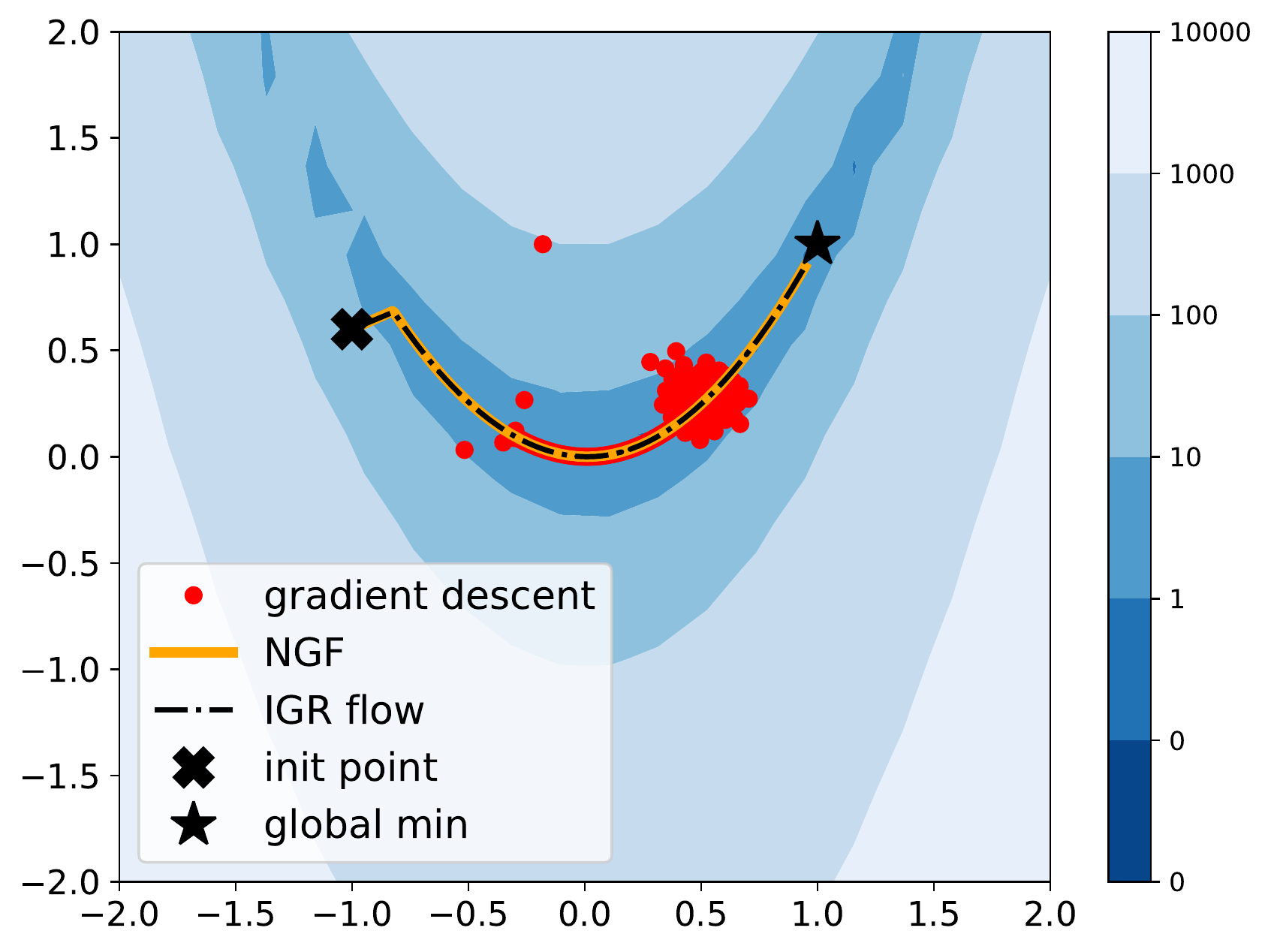}
 }\end{subfigure}
\caption[2D convex plot. The loss function is $E = \vtheta^T A \vtheta$ with  $A = ((1, 0.0), (0.0, 0.01))$. The learning rate is 0.9.]{\textbf{Motivation}. Using continuous time flows to understand gradient descent is limited by the gap between the discrete and continuous dynamics. In the case of the negative gradient flow, we call this gap \textit{discretization drift}. Other flows have been introduced to capture part of the drift, but they also fail to capture the oscillatory or unstable behavior of gradient descent.}
\label{fig:motivation_small_examples}
\end{figure}

The appeal of continuous time methods together with the limitations of the NGF have inspired the machine learning community to look for other continuous time systems which may better approximate the gradient descent trajectory.  One approach to constructing continuous time flows approximating gradient descent that takes into account the learning rate is backward error analysis (BEA). Using this approach, \citet{igr} introduce the Implicit Gradient Regularization flow (IGR flow):
\begin{align} 
\dot{\vtheta} = -\nabla_{\vtheta}E   -\frac{h}{2} \nabla_{\vtheta}^2 E \nabla_{\vtheta} E
\label{eq:modified_flow_igr}
\end{align}

which tracks the dynamics of the gradient descent step $\vtheta_t = \vtheta_{t-1} - h \nabla_{\vtheta} E(\vtheta_{t-1})$ with an error of $\mathcal{O}(h^3)$, thus reducing the order of the error compared to the NGF.
Unlike the NGF flow, the IGR flow depends on the learning rate $h$. This dependence explains certain properties of gradient descent, such as avoiding trajectories with high gradient norm; the authors connect this behavior to convergence to flat minima.

Like the NGF flow however, the IGR flow does not explain the instabilities of gradient descent, as we illustrate in Figure~\ref{fig:motivation_small_examples}. Indeed,~\citet{igr} (their Remark 3.4) show that performing stability analysis around local minima using the IGR flow does not lead to qualitatively different conclusions from those using the NGF: both NGF and the IGR flow predict gradient descent to be always locally attractive around a local minimum (proofs in Section~\ref{sec:jacobian_igr_ngf}), contradicting the empirically observed behavior of gradient descent.
To understand why both the NFG and the IGR flow cannot capture oscillations and divergence around a local minimum, we note that stationary points $\nabla_{\vtheta}E = \mathbf{0}$ are fixed points for both flows.
We visualize an example in Figure~\ref{fig:intuition_real}: since to go from the initial point to the gradient descent iterates requires passing through the local minimum,  both flows would stop at the local minimum and never reach the following gradient descent iterates. In the case of neural networks we show in Figure~\ref{fig:validating_bea} in the Appendix that while the IGR flow is better than the NGF at describing gradient descent, a substantial gap remains. 

\rebuttalrthree{The lack of ability of existing continuous time flows to model instabilities empirically observed in gradient descent such as those shown in Figure~\ref{fig:motivation_small_examples} has been used as a motivation to use discrete-time methods instead \citep{yaida2018fluctuation,liu2021noise}. The goal of our work is to overcome this issue by introducing a novel continuous time flow which captures instabilities observed in gradient descent. To do so, we follow the footsteps of \citet{igr} and use Backward Error Analysis. By using a continuous time flow we can leverage the tools and advantages of continuous time methods discussed earlier in this section; by incorporating discretization drift into our model of gradient descent we can increase their applicability to explain unstable training behavior.}
\rebuttalrthree{Indeed, we show in Figure~\ref{fig:intuition_complex} that the flow we propose captures the training instabilities; a key reason why is that, unlike existing flows, it operates in complex space. In Section ~\ref{sec:principal_flow} we show the importance of operating in complex space in order to understand oscillatory and instability behaviors of gradient descent. }

\begin{figure}[tb]
\begin{subfigure}[Real flows.]{
  \includegraphics[width=0.45\columnwidth]{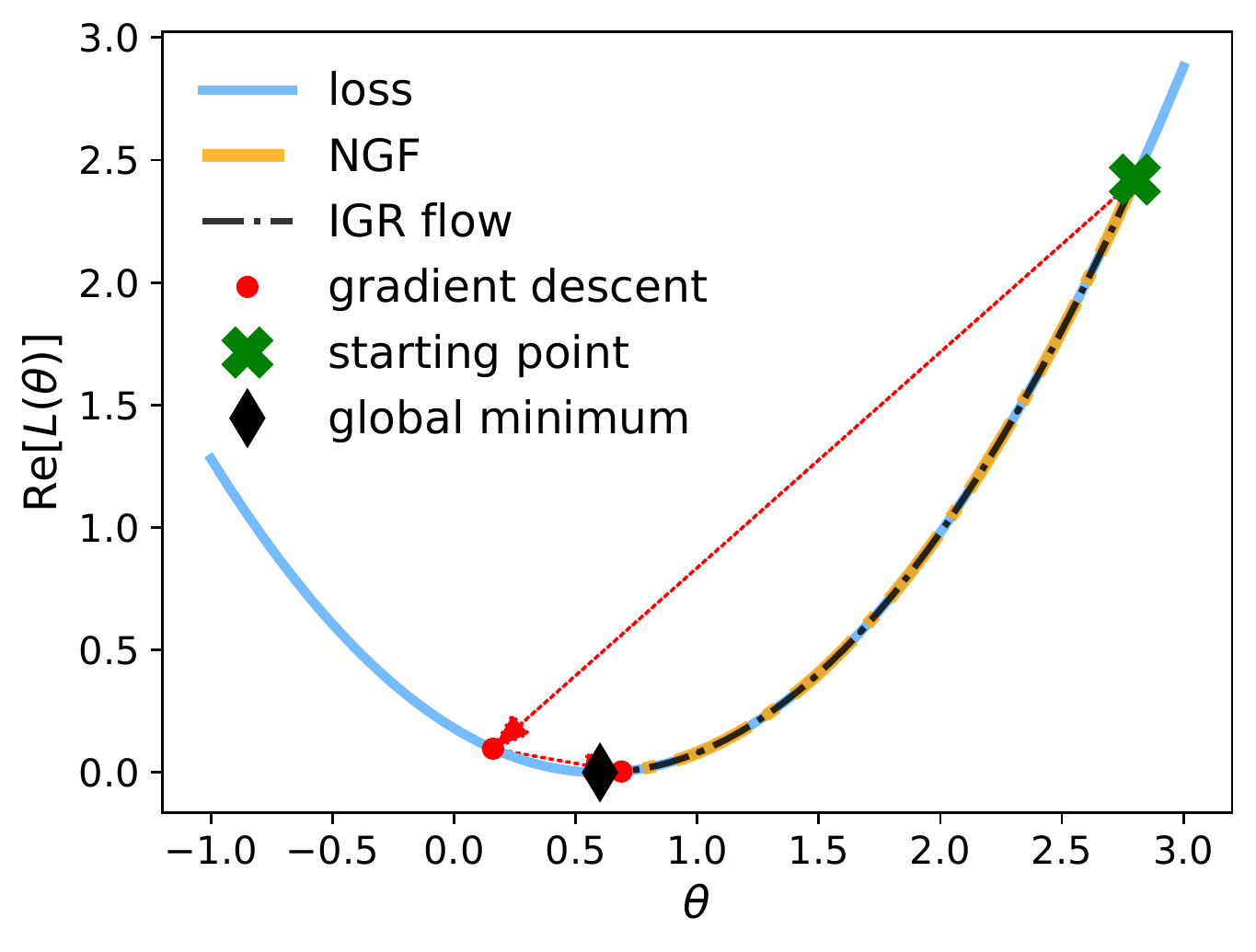}
  \label{fig:intuition_real}
 }\end{subfigure}
\begin{subfigure}[Complex flow.]{
 \includegraphics[width=0.45\columnwidth]{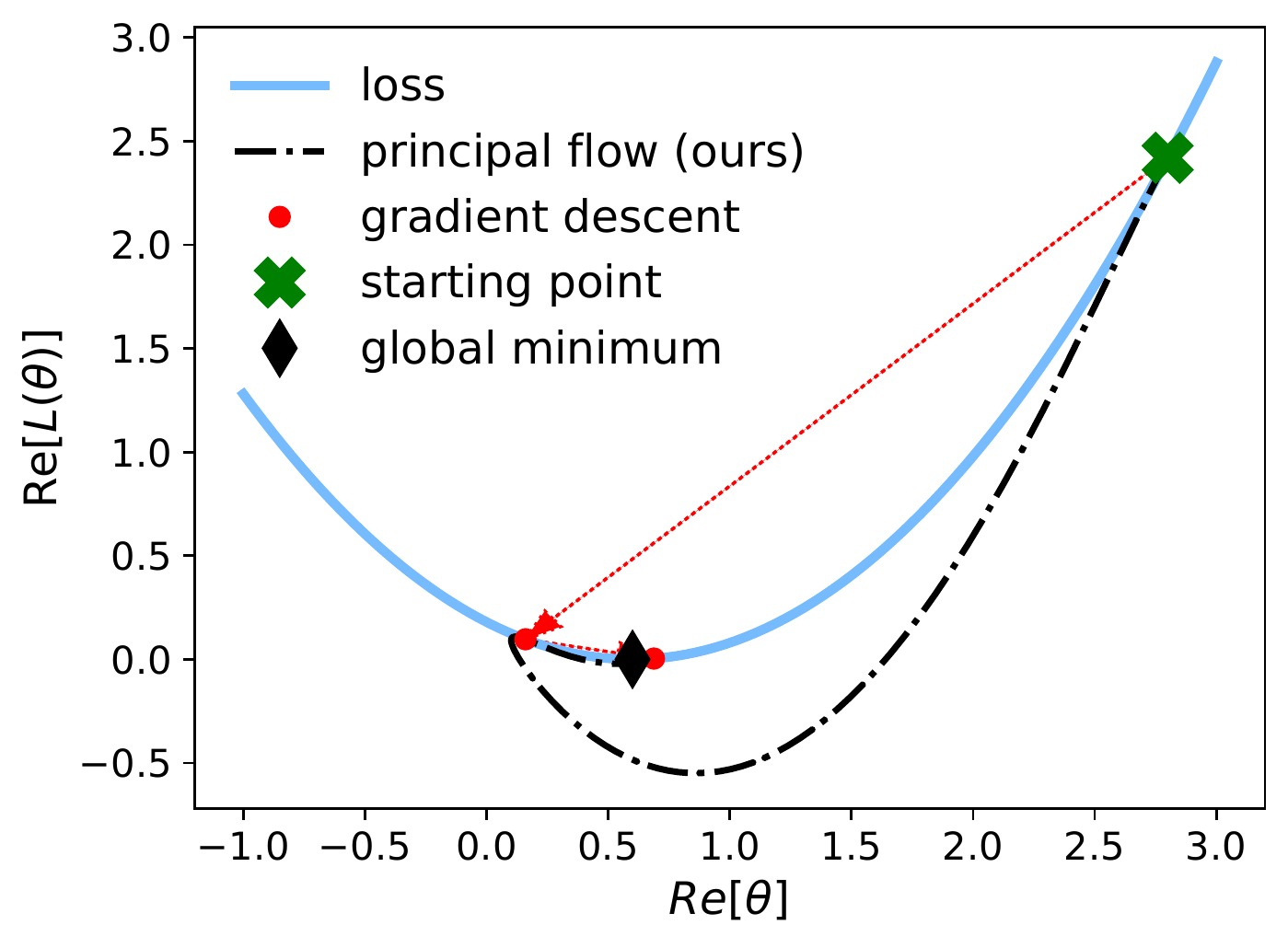}
  \label{fig:intuition_complex}
 }\end{subfigure}
\caption[1D example. $E = \frac 1 2 (\vtheta - 0.6)^2$ and $h=1.2$]{\textbf{Complex flows capture oscillations and divergence around local minima}. In the real space, the trajectory going from the starting point to the second gradient descent iterate goes through the global minima, and real flows stop there. In complex space however, that need not be the case.}
\label{fig:intuition_convex1d_complex_part_needed}
\end{figure}

\subsection{Backward error analysis}

\label{sec:bea}

Backward error analysis (BEA) is a tool in numerical analysis developed to understand the discretization error of numerical integrators. We now present an overview of how to use it in the context of gradient descent; for a general overview see~\citet{hairer2006geometric}.
BEA provides a modified vector field:
\begin{equation}
\tilde f_n(\vtheta) =  - \nabla_{\vtheta} E + h f_1(\vtheta) + \cdots + h^n f_n(\vtheta),
\end{equation}
 by finding functions $f_1$, ... $f_n$ such that the solution of the modified ODE at order $n$, that is,
\begin{align}
    \bm{{\dot{\tilde{\theta}}}} =  - \nabla_{\vtheta} E + h f_1(\vtheta) + \cdots + h^n f_n(\vtheta)
\label{eq:general_modified_vector_field}
\end{align} follows the discrete dynamics of the gradient descent update with an error $\| \vtheta_t - \tilde{\vtheta}(h)\|$ of order $\mathcal O(h^{n+2})$, where $\tilde{\vtheta}(h)$ is the solution of the modified equation truncated at order $n$ at time $h$, with $\tilde{\vtheta}(0) = \vtheta_{t-1}$. 
The full modified vector field with all orders ($n \rightarrow \infty$)
\begin{equation}
\tilde f(\vtheta) = - \nabla_{\vtheta} E + h f_1(\vtheta) + \cdots + h^n f_n(\vtheta) + \cdots,
\label{eq:bea_series}
\end{equation}
is usually divergent and only forms an asymptotic expansion. What BEA provides is the Taylor expansion in $h$ of an unknown $h$-dependent vector field $f_h(\vtheta)$ developed at $h=0$:
\begin{equation}
    \tilde f(\vtheta) = \textrm{Taylor}_{h=0} f_h(\vtheta).
\end{equation}

Thus a strategy for finding $f_h$ is to find a series of the form in Eq~\ref{eq:bea_series} via BEA and then find the function  $f_h$ such that its Taylor expansion in $h$ at 0 results in the found series.
Using this approach we can find the flow $ \hatdotvtheta = f_h(\tilde{\vtheta})$ which exactly describes the gradient descent step $\vtheta_t = \vtheta_{t-1} - h \nabla_{\vtheta} E(\vtheta_{t-1})$. 

While flows obtained using BEA are constructed to approximate one gradient descent step, the same flows can be used over multiple gradient descent steps as shown in Section~\ref{sec:multiple_steps_proof} in the Appendix.

\textbf{BEA proofs}.
The general structure of BEA proofs is as follows: start with a Taylor expansion in $h$ of the modified flow in Eq~\ref{eq:general_modified_vector_field}; write each term in the Taylor expansion as a function of $\nabla_{\vtheta} E$ and the desired $f_i$ (this often requires applying the chain rule repeatedly); group together terms of the same order in $h$ in the expansion; and identify $f_i$ such that all terms of $\mathcal{O}(h^p)$ are 0 for $p \ge 2$, as is the case in the gradient descent update. A formal overview of BEA proofs can be found in Section \ref{sec:bea_proof_structure} in the Appendix.

We now exemplify how to use BEA to find the IGR flow (Eq~\ref{eq:modified_flow_igr})~\citep{igr}.  Since we are only looking for the first correction term, we only need to find $f_1$.
We perform a Taylor expansion to find the value of $\tilde{\vtheta}(h)$ up to order $\mathcal O(h^{3})$ and then identify $f_1$ from that expression such that the error $\| \vtheta_t - \tilde{\vtheta}(h)\|$ is of order $\mathcal O(h^{3})$.
We have: ${\tilde{\vtheta}(h) = \vtheta_{t-1} + h \tilde{\vtheta}^{(1)}(\vtheta_{t-1}) + \frac{h^2}{2} \tilde{\vtheta}^{(2)}(\vtheta_{t-1}) +  \mathcal{O}(h^3)}$. 
We know by the definition of the modified vector field (Eq~\ref{eq:general_modified_vector_field}) that $\tilde{\vtheta}^{(1)} = - \nabla_{\vtheta} E + h f_1({\tilde{\vtheta}})$. We can then use the chain rule to obtain ${\tilde{\vtheta}^{(2)} = \frac{- \nabla_{\vtheta} E + h f_1({\vtheta})}{dt}  = \frac{- \nabla_{\vtheta} E}{dt} + \mathcal{O}(h) = \frac{- \nabla_{\vtheta} E}{d\vtheta}\frac{d \vtheta}{d t} + \mathcal{O}(h) = \nabla_{\vtheta}^2 E \nabla_{\vtheta} E + \mathcal{O}(h) }$. 
Thus
$ {\tilde{\vtheta}(h) = \vtheta_{t-1} - h \nabla_{\vtheta} E(\vtheta_{t-1}) + h^2 f_1(\vtheta_{t-1}) + \frac{h^2}{2}  \nabla_{\vtheta}^2 E (\vtheta_{t-1})\nabla_{\vtheta} E (\vtheta_{t-1})+  \mathcal{O}(h^3)}$. 
We can then write ${\vtheta_t - \tilde{\vtheta}(h) = \vtheta_{t-1} - h \nabla_{\vtheta} E(\vtheta_{t-1}) -  \left(\vtheta_{t-1} - h \nabla_{\vtheta} E(\vtheta_{t-1}) + h f_1(\vtheta_{t-1}) + \frac{h^2}{2}  \nabla_{\vtheta}^2 E (\vtheta_{t-1})\nabla_{\vtheta} E (\vtheta_{t-1})+  \mathcal{O}(h^3)\right)}$. 
After simplifying we obtain 
${\vtheta_t - \tilde{\vtheta}(h) = h^2 f_1(\vtheta_{t-1}) + \frac{h^2}{2}  \nabla_{\vtheta}^2 E (\vtheta_{t-1})\nabla_{\vtheta} E (\vtheta_{t-1})+  \mathcal{O}(h^3)}$.
For the error to be of order $\mathcal{O}(h^3)$ the terms of order $\mathcal{O}(h^2)$ have to be $\mathbf{0}$.
This entails $f_1 =  -\frac{1}{2} \nabla_{\vtheta}^2 E \nabla_{\vtheta} E $ leading to Eq~\ref{eq:modified_flow_igr}.

\section{The principal flow}
\label{sec:principal_flow}

In the previous section we have seen how BEA can be used to define continuous time flows which capture the dynamics of gradient descent up to a certain order in learning rate.
We have also explored the limitations of these flows, including the lack of ability to explain oscillations observed empirically when using gradient descent.
To further expand our understanding of gradient descent via continuous time methods, we would like to get an intuition for the structure of higher order modified vector fields provided by BEA. We start with the following modified vector field, which we will call \textit{the third order flow} (proof in Section~\ref{sec:third_order_flow_proof}):
\begin{align}
    \dot{\vtheta} = -\nabla_{\vtheta}E   -\frac{h}{2} \nabla_{\vtheta}^2E  \nabla_{\vtheta}E - h^2 \left( \frac{1}{3}  (\nabla_{\vtheta}^2E)^2 \nabla_{\vtheta} E + \frac {1} {12} \nabla_{\vtheta}E^T (\nabla_{\vtheta}^3E) \nabla_{\vtheta}E\right)
\label{eq:third_order_modified_vector_field}
\end{align}

The third order flow tracks the dynamics of the gradient descent step $\vtheta_t = \vtheta_{t-1} - h \nabla_{\vtheta} E(\vtheta_{t-1})$ with an error of $\mathcal{O}(h^4)$, thus further reducing the order of the error compared to the IGR flow.
Like the IGR flow and the NGF, the third order flow has the property that $ \dot{\vtheta} = \mathbf{0}$ if $\nabla_{\vtheta} E = \mathbf{0}$
and thus will exhibit the same limitations observed in Figure~\ref{fig:intuition_convex1d_complex_part_needed}.
The third order flow allows us to spot a pattern:
the correction term of order $\mathcal{O}(h^n)$ in 
the BEA modified flow describing gradient descent contains the term $(\nabla_{\vtheta}^2E)^{n} \nabla_{\vtheta} E$ and terms which contain higher order derivatives with respect to parameters, terms which we will denote as $\mathcal{C}(\nabla_{\vtheta}^3 E)$. 

\rebuttalrtwo{\textbf{Our approach}}.
We will use the terms of the form $(\nabla_{\vtheta}^2E)^{n} \nabla_{\vtheta} E$ to construct a new continuous time flow.
We will take a three-step approach. First, for an arbitrary order $\mathcal{O}(h^n)$ we will find the terms containing only first and second order derivatives in the modified vector field given by BEA and show they are of the form $(\nabla_{\vtheta}^2 E)^n \nabla_{\vtheta} E$ (Theorem \ref{thm:order_n_flow}). Second, we will use all orders to create a series (Corollary \ref{col:principal_series}). Third, we will use the series to find the modified flow given by BEA (Theorem \ref{thm:principal_ode}). All proofs are provided in Section~\ref{sec:all_proofs} of the Appendix.

\begin{theorem}
The modified vector field with an error of order $\mathcal{O}(h^{n+2})$ to the gradient descent update  $\vtheta_t = \vtheta_{t-1} -h \nabla_{\vtheta}E(\vtheta_{t-1})$
has the form:
\begin{align}
    \dot{\vtheta} = \sum_{p=0}^{n} \frac{-1}{p+1} h^p (\nabla_{\vtheta}^2 E)^p \nabla_{\vtheta} E + \mathcal{C}(\nabla_{\vtheta}^3 E)
\end{align}
\label{thm:order_n_flow}

where $\mathcal{C}(\nabla_{\vtheta}^3 E)$ denotes the family of functions which can be written as a sum of terms, each term containing a derivative of higher order than 3 with respect to parameters.
\end{theorem}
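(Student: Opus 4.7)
The plan is induction on $n$, supported by a closure property that isolates the "pure" terms of the form $(\nabla_{\vtheta}^2 E)^p \nabla_{\vtheta} E$ from the $\mathcal{C}(\nabla_{\vtheta}^3 E)$ remainder. The base case $n=0$ is the negative gradient flow, giving coefficient $-1 = -1/(0+1)$, in agreement with the formula. For the inductive step, I will assume each $f_q$ with $q < n$ has the form $c_q (\nabla_{\vtheta}^2 E)^q \nabla_{\vtheta} E + \mathcal{C}(\nabla_{\vtheta}^3 E)$ with $c_q = -1/(q+1)$, and use the BEA matching condition at order $h^{n+1}$ to determine $c_n$.

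The structural observation driving the induction is a closure lemma: along the modified flow $\dot{\vtheta} = F_h$, the total time derivative of $(\nabla_{\vtheta}^2 E)^k \nabla_{\vtheta} E$ equals $(\nabla_{\vtheta}^2 E)^{k+1} F_h$ modulo $\mathcal{C}(\nabla_{\vtheta}^3 E)$, because any application of $\frac{d}{dt}$ to a factor $\nabla_{\vtheta}^2 E$ spawns a $\nabla_{\vtheta}^3 E$. By the inductive hypothesis $F_h$ is itself a combination of terms $(\nabla_{\vtheta}^2 E)^q \nabla_{\vtheta} E$ modulo $\mathcal{C}(\nabla_{\vtheta}^3 E)$, so substituting produces only higher powers $(\nabla_{\vtheta}^2 E)^{k+q+1} \nabla_{\vtheta} E$, again modulo $\mathcal{C}(\nabla_{\vtheta}^3 E)$. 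Iterating this, every coefficient $\tilde\vtheta^{(k)}(0)$ in the Taylor expansion $\tilde\vtheta(h) = \vtheta_{t-1} + \sum_k \frac{h^k}{k!}\tilde\vtheta^{(k)}(0)$ admits the same decomposition. Collecting the $h^{n+1}$ contribution then produces a single scalar equation for the coefficient $c_n$ of $(\nabla_{\vtheta}^2 E)^n \nabla_{\vtheta} E$ inside $f_n$; any $\mathcal{C}(\nabla_{\vtheta}^3 E)$ part of $f_n$ contributes only to the $\mathcal{C}(\nabla_{\vtheta}^3 E)$ remainder of the statement and does not affect $c_n$.

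Rather than solving the resulting combinatorial recursion for $c_n$ directly, I will read it off by specializing to the scalar quadratic loss $E(\theta) = \tfrac{a}{2}\theta^2$, for which $\nabla^3 E \equiv 0$ and all $\mathcal{C}(\nabla^3 E)$ terms vanish identically. The modified ODE then reduces to the linear scalar equation $\dot\theta = \theta \sum_{p} c_p h^p a^{p+1}$, whose time-$h$ solution is $\theta(h) = \theta(0)\exp\bigl(\sum_p c_p h^{p+1} a^{p+1}\bigr)$, and the BEA condition $\theta(h) = (1 - ha)\theta(0)$ forces $\sum_p c_p h^{p+1} a^{p+1} = \ln(1-ha) = -\sum_{k \geq 1}(ha)^k/k$. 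Matching coefficients yields $c_p = -1/(p+1)$ for every $p$. Since the recursion for $c_n$ given by the closure property is algebraic in $c_0, \ldots, c_{n-1}$ and blind to the particular form of $E$, the value computed from this test case is the universal one.

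The main obstacle is establishing the closure property with sufficient precision that the quadratic reduction is legitimate. Concretely, one must verify that when $\frac{d}{dt}$ and gradients are applied repeatedly to expressions of the form $(\nabla_{\vtheta}^2 E)^k F_h$—as happens inside the Taylor expansion and inside the chain-rule expansion of each $\tilde\vtheta^{(k)}$—every cross-term is either of the expected form $(\nabla_{\vtheta}^2 E)^{k'} \nabla_{\vtheta} E$ or strictly inside $\mathcal{C}(\nabla_{\vtheta}^3 E)$, and that this decomposition is preserved across orders in $h$ so that the recursion for $c_n$ really is a scalar equation with no hidden dependence on $E$. Once this bookkeeping is in place, the quadratic test case delivers the coefficients $-1/(p+1)$ with no further combinatorial work.
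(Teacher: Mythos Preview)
Your proposal is correct and shares the inductive closure argument with the paper: both establish by induction that each BEA correction $f_p$ has the form $c_p(\nabla_{\vtheta}^2 E)^p\nabla_{\vtheta} E + \mathcal{C}(\nabla_{\vtheta}^3 E)$ with $c_p$ a universal scalar, using exactly the observation you call the closure lemma---that $\frac{d}{dt}$ applied to $(\nabla_{\vtheta}^2 E)^k\nabla_{\vtheta} E$ along the modified flow yields $(\nabla_{\vtheta}^2 E)^{k+1}F_h$ modulo $\mathcal{C}(\nabla_{\vtheta}^3 E)$. Where you diverge is in computing the $c_p$: the paper carries the bookkeeping far enough to write down the explicit recursion $c_n = -\sum_{k=2}^{n+1}\frac{1}{k!}\sum_{l_1+\cdots+l_k=n-k+1}c_{l_1}\cdots c_{l_k}$ and then solves it with the generating function $c(x)=\log(1+x)/x$; you instead observe that once universality of the recursion is established, the scalar quadratic $E(\theta)=\tfrac{a}{2}\theta^2$ kills every $\mathcal{C}(\nabla_{\vtheta}^3 E)$ term and forces $\exp\bigl(\sum_p c_p(ha)^{p+1}\bigr)=1-ha$, yielding $c_p=-1/(p+1)$ immediately. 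Your route is shorter and avoids the combinatorics of the recursion entirely; the paper's route has the advantage of making the recursion itself explicit, which is of independent interest. The only point that deserves a little more care in your write-up is the universality claim: you should state clearly that the order-$h^{n+1}$ matching condition, restricted to the pure part, reads $c_n + P_n(c_0,\ldots,c_{n-1})=0$ for a polynomial $P_n$ built purely from the multinomial coefficients arising in iterated chain rules, with no dependence on $E$---this is what licenses the specialization.
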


The result is proven by induction. The base cases for $n= 1, 2$ and $3$ follow from the NGF, IGR and third order flows. For higher order terms, the proof uses induction to find the term in $f_i$ depending on $\nabla_{\vtheta}^2 E $ and $\nabla_{\vtheta} E$ only and follows the BEA proof structure highlighted in Section \ref{sec:bea}, but Step 3 is modified to not account for terms in $\mathcal{C}(\nabla_{\vtheta}^3 E)$.  From the above, we can obtain the following corollary by using all orders $n$ and the eigen decomposition of $\nabla_{\vtheta}^2 E$:
\begin{corollary}
\rebuttalrtwo{The full order modified flow obtained by performing BEA on gradient descent updates is of the form:}
\begin{align}
  \dot{\vtheta} 
   &=  \sum_{p=0}^{\infty} \frac{-1}{p+1} h^p (\nabla_{\vtheta}^2 E)^p \nabla_{\vtheta} E + \mathcal{C}(\nabla_{\vtheta}^3 E) 
   =  \sum_{p=0}^{\infty} \frac{-1}{p+1} h^p \left( \sum_{i=0}^{D-1} \lambda_i^p \vu_i \vu_i^T\right) \nabla_{\vtheta} E + \mathcal{C}(\nabla_{\vtheta}^3 E)\\
   &=  \sum_{i=0}^{D-1} \left(\sum_{p=0}^{\infty} \frac{-1}{p+1} h^p  \lambda_i^p  \right) (\nabla_{\vtheta} E^T \vu_i) \vu_i  + \mathcal{C}(\nabla_{\vtheta}^3 E)
\label{eq:principal_series}
\end{align}
where $\lambda_i$ and $\vu_i$ are the respective eigenvalues and eigenvectors of the Hessian $\nabla_{\vtheta}^2 E$.
\label{col:principal_series}
\end{corollary}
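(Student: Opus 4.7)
The corollary is essentially a direct algebraic consequence of Theorem \ref{thm:order_n_flow} combined with the spectral theorem for the Hessian. I would organise the proof into three short steps, the only subtlety being the interpretation of the infinite series.

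First, I would pass to the ``limit'' $n \to \infty$ in the statement of Theorem \ref{thm:order_n_flow}. As emphasised in Section \ref{sec:bea}, the full BEA expansion is to be understood as a formal asymptotic series (Eq.~\ref{eq:bea_series}), not necessarily a convergent one. So this step is simply bookkeeping: collect the correction terms at every order $p \ge 0$ into the single formal series $\sum_{p=0}^{\infty} \frac{-1}{p+1} h^p (\nabla_{\vtheta}^2 E)^p \nabla_{\vtheta} E$, with the remaining higher-derivative contributions absorbed into $\mathcal{C}(\nabla_{\vtheta}^3 E)$. This yields the first equality of Eq.~\ref{eq:principal_series}.

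Second, I would substitute the eigendecomposition of the Hessian. Because $\nabla_{\vtheta}^2 E$ is symmetric, one has $\nabla_{\vtheta}^2 E = \sum_{i=0}^{D-1} \lambda_i \vu_i \vu_i^T$ with $\vu_i^T \vu_j = \delta_{ij}$. A one-line induction on $p$ using orthonormality gives $(\nabla_{\vtheta}^2 E)^p = \sum_{i=0}^{D-1} \lambda_i^p \vu_i \vu_i^T$, which produces the second equality.

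Third, I would exchange the (formal) sum over $p$ with the finite sum over $i$ and simplify using $(\vu_i \vu_i^T) \nabla_{\vtheta} E = (\nabla_{\vtheta} E^T \vu_i)\, \vu_i$. Pulling the scalar $\sum_{p=0}^{\infty} \frac{-1}{p+1} h^p \lambda_i^p$ out in front of $\vu_i$ gives the third equality, completing the corollary.

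The main obstacle I would flag is interpretive rather than technical: for $|h\lambda_i| \ge 1$ the inner scalar series diverges, so the identities of Eq.~\ref{eq:principal_series} cannot be read as pointwise equalities of analytic functions. This is, however, fully consistent with the BEA framework set up earlier in the excerpt, where Eq.~\ref{eq:bea_series} is explicitly described as divergent in general and only asymptotic. All three manipulations above are valid term-by-term on the formal series, which is what the corollary asserts; the task of producing a genuine $h$-dependent vector field $f_h$ whose Taylor expansion at $h=0$ matches this series is deferred to Theorem \ref{thm:principal_ode}, where the scalar series will be resummed into a closed form before it is identified with the principal flow.
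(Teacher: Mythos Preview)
Your proposal is correct and matches the paper's own approach exactly: the paper derives the corollary in one sentence by ``using all orders $n$ and the eigen decomposition of $\nabla_{\vtheta}^2 E$,'' which is precisely your three steps spelled out in more detail. Your remark about the series being only formal/asymptotic when $|h\lambda_i|\ge 1$ also mirrors the paper's discussion immediately following the corollary.
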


\rebuttalrtwo{If $\lambda_0 > 1/h$ the BEA series above diverges.
Generally BEA series are not convergent and approximate the discrete scheme only by truncation \citep{hairer2006geometric}.
When the series in Eq~\ref{eq:principal_series} diverges, truncating it up to any order $n$ however will result in a flow which will not be able to capture instabilities, even in the quadratic case. 
}
\rebuttalrtwo{Such flows (including the IGR flow) will always predict the loss function will decrease for a quadratic loss where a minimum exists, since:
$ {\frac{dE}{dt} = \nabla_{\vtheta}E ^T \left(\sum_{p=0}^{n} \frac{-1}{p+1} h^p (\nabla_{\vtheta}^2 E)^p \nabla_{\vtheta} E\right) = - \sum_{p=0}^{n} \frac{1}{p+1} h^p \sum_{i=0}^{D-1} (\lambda_i^p) (\nabla_{\vtheta} E ^T \vu_i)^2}$ which is never positive for any quadratic loss where a minimum exists (i.e. when $\lambda_i \ge 0, \forall i$). The above also entails that the flows always predict convergence around a local minimum, which is not the case for gradient descent which can diverge for large learning rates.}

\rebuttalrtwo{To further track instabilities we can use the BEA series to formulate the following flow:}

\begin{definition} We define the \textbf{principal flow} (PF) as
\begin{align}
  \dot{\vtheta} =  \sum_{i=0}^{D-1} \frac{\log(1 - h \lambda_i)}{h\lambda_i}(\nabla_{\vtheta} E^T \vu_i)  \vu_i
\end{align}
\end{definition}

We note that $\lim_{\lambda \to 0} \frac{\log(1 - h \lambda)}{h\lambda} = -1$ and thus the PF is well defined when the Hessian $\nabla_{\vtheta}^2 E$ is not invertible. Unlike the NGF and the IGR flow, the modified vector field of the PF cannot be always written as the gradient of a loss function in $\mathbb{R}$, and can be complex valued.

\begin{theorem}
The Taylor expansion in $h$ at $h=0$ of the PF vector field coincides with the series coming from the BEA of gradient descent (Eq~\ref{eq:principal_series}).
\label{thm:principal_ode}
\end{theorem}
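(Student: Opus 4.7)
The plan is to reduce the theorem to the elementary Taylor identity for the scalar function $g(x) = \log(1-x)/x$ at $x=0$. Rewrite the principal flow compactly as
\begin{equation}
    \dot{\vtheta} = \sum_{i=0}^{D-1} g(h\lambda_i)\,(\nabla_{\vtheta} E^T \vu_i)\,\vu_i,
\end{equation}
where $g$ is extended continuously at $0$ by $g(0)=-1$, consistent with the limit noted right after the definition of the PF. Since $\lambda_i$, $\vu_i$ and $\nabla_{\vtheta} E$ do not depend on $h$, the Taylor expansion of the right-hand side in $h$ at $0$ reduces to Taylor expanding each scalar coefficient $g(h\lambda_i)$ separately and then summing.

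For the main computation I would invoke the standard series $\log(1-x) = -\sum_{p\ge 1} x^p/p$ valid for $|x|<1$, divide by $x$, and shift the index to obtain $g(x) = -\sum_{p\ge 0} x^p/(p+1)$. Substituting $x = h\lambda_i$ yields
\begin{equation}
g(h\lambda_i) \;=\; \sum_{p\ge 0} \frac{-1}{p+1}\, h^p \lambda_i^p,
\end{equation}
which is exactly the inner power series appearing in Eq~\ref{eq:principal_series}. Inserting this back into the compact form of the PF and swapping the (finite) sum over $i$ with the formal power series in $h$ reproduces the Hessian-dependent block of Eq~\ref{eq:principal_series} term by term, which by Theorem~\ref{thm:order_n_flow} is precisely the sequence of coefficients produced by BEA at every order.

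The only bookkeeping point worth emphasising is that the PF depends solely on the eigenstructure of $\nabla_{\vtheta}^2 E$ and therefore cannot by construction reproduce the $\mathcal{C}(\nabla_{\vtheta}^3 E)$ family of corrections that BEA also generates at each order; Theorem~\ref{thm:principal_ode} should accordingly be read as matching the principal, Hessian-only block of the BEA series isolated in Theorem~\ref{thm:order_n_flow} and in the first summand of Eq~\ref{eq:principal_series}. This is the single subtlety of the argument; beyond it, the proof is a one-line application of the geometric series for $\log(1-x)$, and no analytic convergence question intervenes because BEA only requires matching formal Taylor coefficients in $h$, not convergence of the resulting series.
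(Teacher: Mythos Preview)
Your proposal is correct and follows essentially the same approach as the paper: the paper's proof is a one-line application of the Taylor identity $\textrm{Taylor}_{z=0}\,\log(1-z)/z = \sum_{p\ge 0} \tfrac{-1}{p+1} z^p$ with $z=h\lambda_i$, which is exactly what you do. Your additional remarks about the continuous extension at $0$, the $h$-independence of $\lambda_i,\vu_i,\nabla_{\vtheta}E$, and the fact that only the principal (Hessian-only) block of Eq~\eqref{eq:principal_series} is matched are all correct clarifications that the paper leaves implicit.
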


\begin{proof}
Using the Taylor expansion $\textrm{Taylor}_{z=0} \frac{\log(1-z)}{z} = \sum_{p=0}^{\infty} \frac{-1}{p+1} z^p$ we obtain:
\begin{align}
\textrm{Taylor}_{h=0}  \sum_{i=0}^{D-1}  \frac{\log(1 - h \lambda_i)}{h\lambda_i} (\nabla_{\vtheta} E^T \vu_i) \vu_i = \sum_{i=0}^{D-1} \left(\sum_{p=0}^{\infty} \frac{-1}{p+1} h^p \lambda_i^p \right)(\nabla_{\vtheta} E^T \vu_i) \vu_i
\end{align}
\end{proof}

We have used BEA to find the flow that when Taylor expanded at $h=0$ leads to the series in Eq~\ref{eq:principal_series}.  
\rebuttalrtwo{When the BEA series in Eq~\ref{eq:principal_series} converges, namely $\lambda_0 <1/h$, the PF and the flow given by the BEA series are the same. When $\lambda_0 > 1/h$ however, the PF is complex and the BEA series diverges. 
While in this case any BEA truncated flow will not be able to track gradient descent closely, we show that for quadratic losses the PF will track gradient descent exactly, and that it is a good model of gradient descent around fixed points. We show examples of the PF tracking gradient descent exactly in the quadratic case in Figures~\ref{fig:intuition_complex} and~\ref{fig:intuition_quadratic_2d}.}

\begin{remark} For quadratic losses of the form $E = \frac{1}{2}\vtheta^T \vA \vtheta + \vb^T \vtheta$, the PF captures gradient descent exactly. This case has been proven in \citet{hairer2006geometric}. The solution of the PF can also be computed exactly in terms of the eigenvalues of $\nabla_{\vtheta}^2 E$: $\vtheta(t) = \sum_{i=0}^{D-1} e^{\frac{\log(1 - h\lambda_i)}{h} t} \vtheta_0^T \vu_i \vu_i + t \sum_{i=0}^{D-1} \frac{\log (1 - h\lambda_i)}{h \lambda_i} b^T \vu_i$.
\label{remark:quadractic_remark}
\end{remark}

\begin{remark} In a small enough neighborhood around a critical point (where higher order derivatives can be ignored) the PF can be used to describe gradient descent dynamics closely. We show this also using a linearization argument in Section~\ref{sec:linearization} in the Appendix.
\end{remark}

\begin{definition} The terms $\mathcal{C}(\nabla_{\vtheta}^3 E)$ are called \textbf{non-principal terms}. The term $\frac {1} {12} \nabla_{\vtheta}E^T (\nabla_{\vtheta}^3E) \nabla_{\vtheta}E$ in Eq~\ref{eq:third_order_modified_vector_field} is a non-principal term (we will call this term non-principal third order term).
\end{definition}

\begin{definition} We define the \textbf{principal flow with third order non principal term} as
\begin{align}
  \dot{\vtheta} =  \sum_{i=0}^{D-1} \frac{\log(1 - h \lambda_i)}{h\lambda_i}(\nabla_{\vtheta} E^T \vu_i)  \vu_i
    - \underbrace{\frac{h^2}{12} \nabla_{\vtheta} E^T (\nabla^3_{\vtheta} E) \nabla_{\vtheta} E}_{\text{third order non principal term}}
    \label{eq:pf_with_non_principal}
\end{align}
\end{definition}

\rebuttalrtwo{General theoretical bounds on the error between continuous time flows and gradient descent are challenging to construct in the case of a general parametrised $E(\vtheta)$ as the error will be determined by the shape of $E$. We know the conditions which determine when certain flows follow gradient descent exactly. The NGF and gradient descent will follow the same trajectory in areas where $\nabla^2_{\vtheta} E \nabla_{\vtheta} E = 0$ (see Theorem~\ref{thm:total_drift}) and thus $E$ has a constant gradient in time, since $\frac{d \nabla_{\vtheta} E}{d t} = \nabla^2_{\vtheta} E \nabla_{\vtheta} E$. The PF generalises the NGF, in that it follows the same trajectory as gradient descent not only for trajectories where  $\nabla^2_{\vtheta} E \nabla_{\vtheta} E = 0$, but also when $E$ is quadratic. Informally, we can state that the closer we are to these exact conditions, the more likely the flows are to capture the dynamics of gradient descent. Formally, bounds on the error between GD and NGF can be provided by the Fundamental Theorem (Theorem 10.6 in~\citet{wanner1996solving}) which has recently been adapted to a neural network parametrisation by~\citet{elkabetz2021continuous}; this bound depends on the magnitude of the smallest Hessian eigenvalue along the NGF trajectory. We hope that future work can expand the Fundamental Theorem such that error bounds between the PF and gradient descent can be constructed for deep neural networks. Here we take an empirical approach and show that although not exact outside the quadratic case the PF captures key features of the gradient descent dynamics in stable or unstable regions of training, around and outside critical points, for small examples or large neural networks.}

\subsection{The principal flow and the eigen decomposition of the Hessian}
\label{sec:principal_flow_eigendecom}

\begin{table}[tb]
\centering
\begin{tabular}{ c | c  | c}
 \textbf{Negative Gradient Flow} & \textbf{IGR Flow} & \textbf{Principal Flow}  \\ 
 \hline
 \hline
 $\dot{\vtheta} =  \sum_{i=0}^{D-1} - (\nabla_{\vtheta} E^T \vu_i) \vu_i$ & $\dot{\vtheta} =  \sum_{i=0}^{D-1} - (1 +\frac{h}{2} \lambda_i) (\nabla_{\vtheta} E^T \vu_i) \vu_i$ &  $\dot{\vtheta} =  \sum_{i=0}^{D-1} \frac{\log(1 - h \lambda_i)}{h\lambda_i}  (\nabla_{\vtheta} E^T \vu_i) \vu_i$  \\  
 $\alpha_{NGF}(h \lambda_i) = -1$ &  $\alpha_{IGR}(h \lambda_i) = - (1 +\frac{h}{2} \lambda_i)$ & $\alpha_{PF}(h \lambda_i) = \frac{\log(1 - h \lambda_i)}{h\lambda_i}$
\end{tabular}
\caption{Understanding the differences between the flows discussed in terms of the eigendecomposition of the Hessian. All flows have the form $\dot{\vtheta} = \sum_{i=0}^{D-1} \alpha(h \lambda_i) (\nabla_{\vtheta} E^T \vu_i) \vu_i$ with different $\alpha$ summarized here.}
\label{tab:principal_flow_vs_negative_grad_flow} 
\end{table}

\begin{figure}[tb]
\begin{subfigure}[Real part.]{
 \includegraphics[width=0.45\columnwidth]{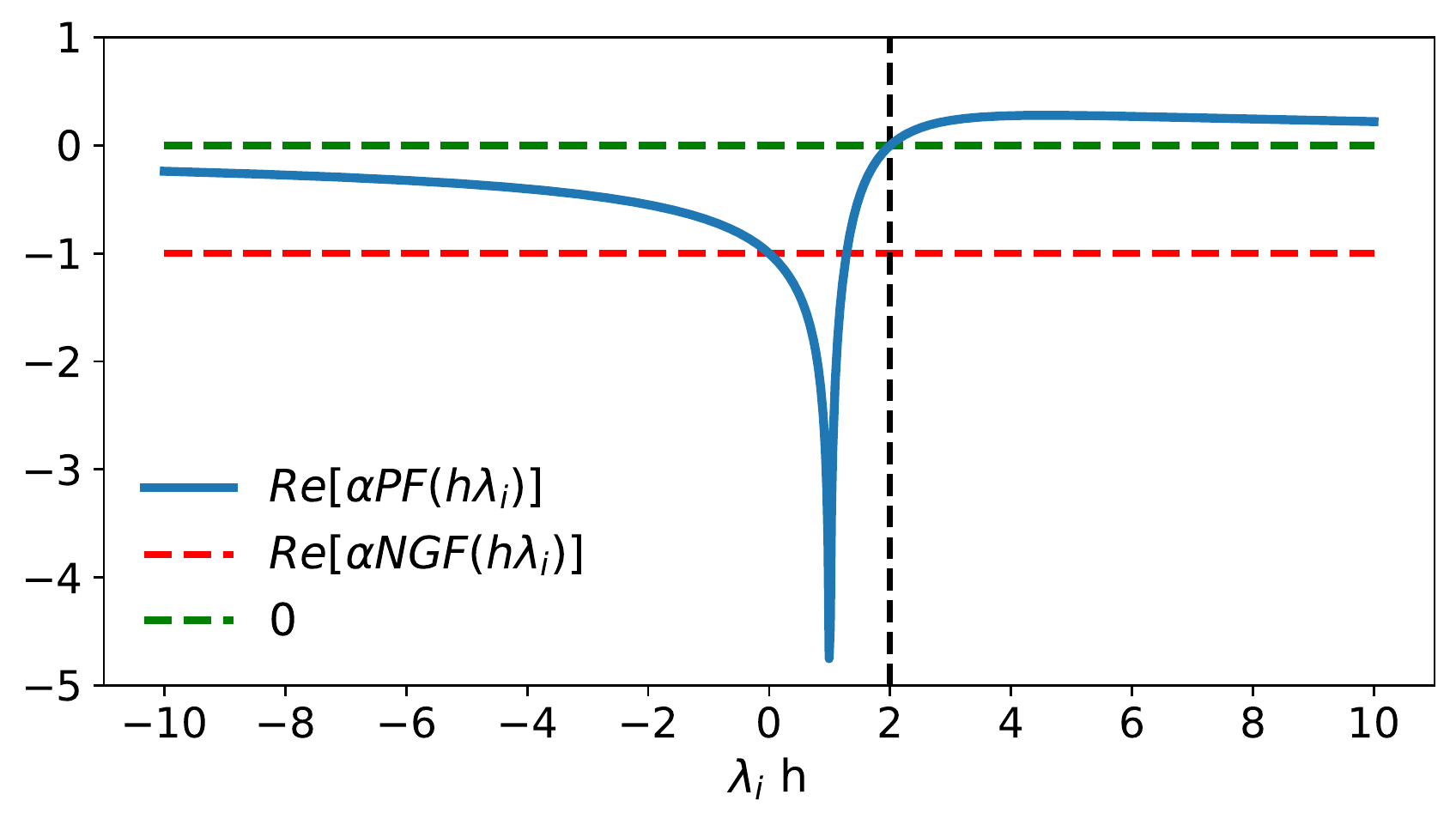}
 }\end{subfigure}
 \begin{subfigure}[Imaginary part.]{
 \includegraphics[width=0.45\columnwidth]{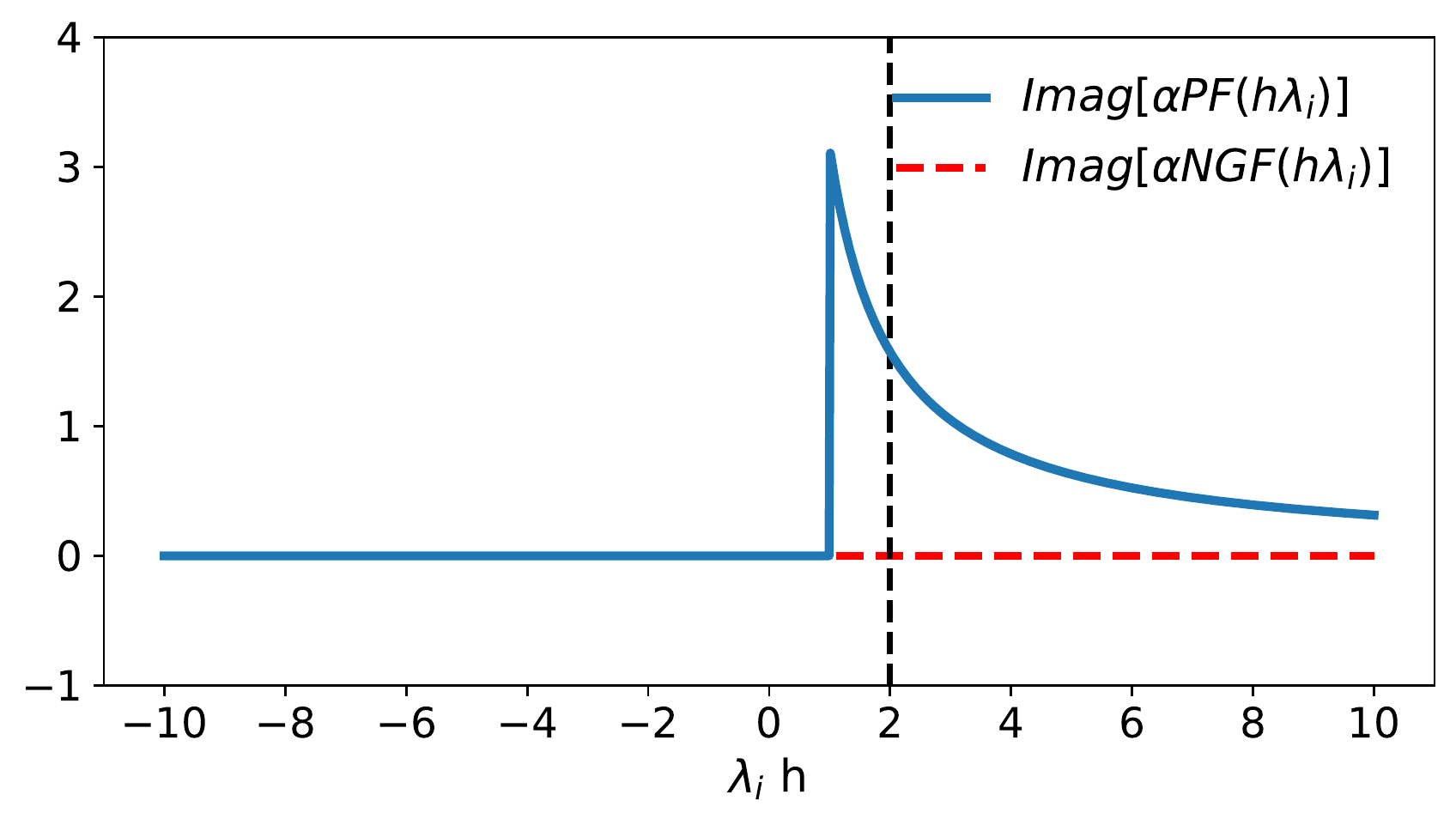}
 }\end{subfigure}
\caption{Comparing the coefficients $\alpha_{NGF}$ and $\alpha_{PF}$ across the training landscape.}
\label{fig:r_plot}
\end{figure}

All flows considered here have the form form $\dot{\vtheta} = \sum_{i=0}^{D-1} \alpha(h \lambda_i) (\nabla_{\vtheta} E^T \vu_i) \vu_i$, where $\alpha$ is a function computing the corresponding coefficient; we will denote the one associated with each flow as $\alpha_{NGF}$, $\alpha_{IGR}$ and $\alpha_{PF}$ respectively.
For a side-by-side comparison between the NGF, IGR flow and the PF as functions of the Hessian eigendecomposition see Table~\ref{tab:principal_flow_vs_negative_grad_flow}. 
 Since $\nabla_{\vtheta}E^T \vu_i \ge 0$, the $\alpha$ function determines the sign of a modified vector field in the direction $\vu_i$. For brevity it will be useful to define the coefficient of $\vu_i$ in the vector field of the PF:

\begin{definition} We call $sc_i = \frac{\log(1 - h \lambda_i)}{h\lambda_i}(\nabla_{\vtheta} E^T \vu_i) = \alpha_{PF}(h \lambda_i ) \nabla_{\vtheta} E^T \vu_i$ the \textbf{stability coefficient} for eigendirection $i$. $\sign(sc_i) = \sign (\alpha_{PF}(h \lambda_i))$.
\label{def:stab_coeff}
\end{definition}

In order to understand the PF and how it is different from the NGF we explore the change in each eigendirection $\vu_i$ and we perform case analysis on the relative value of the eigenvalues $\lambda_i$ and the learning rate $h$. To do so, we will compare $\alpha_{NGF}(h\lambda_i)$ and $\alpha_{PF}(h\lambda_i)$ since the sign of $\alpha_{NGF}(h\lambda_i)$ determines the direction which minimises $E$ given by $\vu_i$.
Since our goal is to understand the behavior of gradient descent, we perform the case by case analysis of what happens at the start of a gradient descent iteration and thus use real values for $\lambda_i$ and $\vu_i$ even when the PF is complex valued.
We visualize $\alpha_{NGF}$ and $\alpha_{PF}$ in Figure~\ref{fig:r_plot} and we use Figure~\ref{fig:z_square} to show examples of each case using a simple function.

\textbf{Real stable case}: $\lambda_i< 1/h$. $\sign(\alpha_{NGF}(h\lambda_i)) = \sign(\alpha_{PF}(h\lambda_i)) = -1$.

$\alpha_{NGF}(h\lambda_i) = -1$  and $\alpha_{PF}(h \lambda_i) = \frac{\log(1 - h \lambda_i)}{h \lambda_i} < 0$.  
The coefficients of both the NGF and PF in eigendirection $\vu_i$ are both negative and real.  The case is exemplified in Figure~\ref{fig:z_square_stable}.

\textbf{Complex stable case}: $1/h < \lambda_i < 2/h$. $\sign(\alpha_{NGF}(h\lambda_i)) = \sign(Re[\alpha_{PF}(h\lambda_i)]) = -1$. $\alpha_{PF}(h\lambda_i) \in \mathbb{C}$.

$\alpha_{NGF}(h\lambda_i) = -1$ and $\alpha_{PF}(h \lambda_i) = \frac{\log(1 - h \lambda_i)}{h \lambda_i} = \frac{\log(-1 + h \lambda_i) + i \pi}{h\lambda_i} \in \mathbb{C}$ and  $Re[\alpha_{PF}(h\lambda_i)] = \frac{\log(-1 + h \lambda_i)}{h\lambda_i} < 0$. The real part of the coefficient of the NGF and PF in eigendirection $\vu_i$ are both negative. The imaginary part of $\alpha_{PF}$ can still introduce instability and oscillations, as we show in Figure~\ref{fig:z_square_oscillation}.

\textbf{Unstable complex case}: $ 2/h < \lambda_i$. $\sign(\alpha_{NGF}(h\lambda_i)) \neq \sign(Re[\alpha_{PF}(h\lambda_i)])$. $\alpha_{PF}(h\lambda_i) \in \mathbb{C}$.

$\alpha_{NGF}(h\lambda_i) = -1$ and $\alpha_{PF}(h \lambda_i) = \frac{\log(1 - h \lambda_i)}{h \lambda_i} = \frac{\log(-1 + h \lambda_i) + i \pi}{h\lambda_i} \in \mathbb{C}$ and  $Re[\alpha_{PF}(h\lambda_i)] = \frac{\log(-1 + h \lambda_i)}{h\lambda_i} > 0$.  The real part of the coefficient of the NGF in eigendirection $\vu_i$ is negative, while the real part of the coefficient of the PF is positive.
The PF goes in the opposite direction of the NGF which minimises E; this change in sign can cause instabilities.
 The imaginary component can still introduce oscillations, however the larger $\lambda_i h$, the smaller the imaginary part of $\alpha_{PF}$. We visualize this case in Figure~\ref{fig:z_square_divergence}.

\begin{figure}[tb]
\begin{subfigure}[$\lambda_0< 1/h$, \hspace{1em} $\alpha_{PF}(\lambda_0 h) < 0$]{
 \includegraphics[width=0.32\columnwidth]{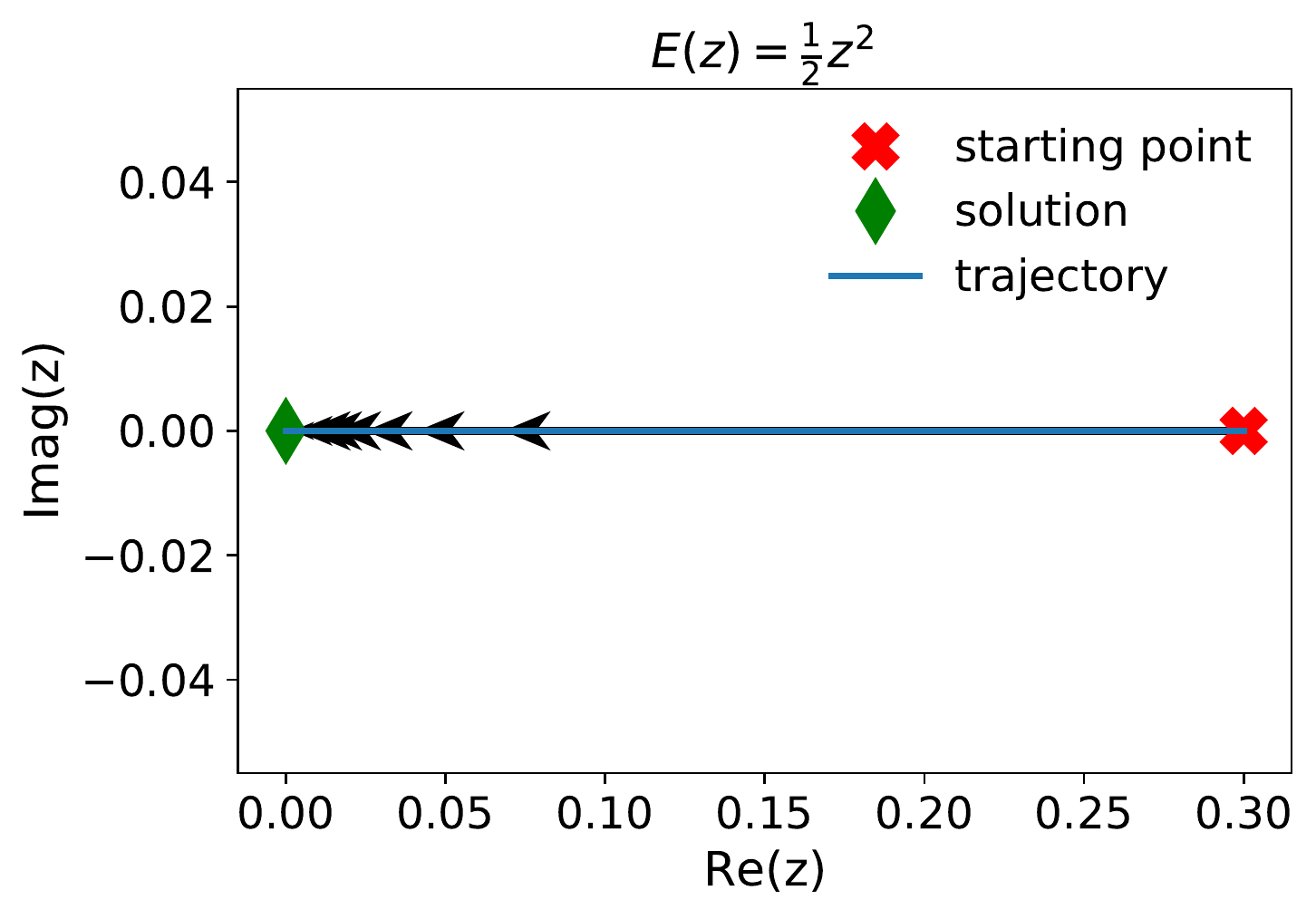}
 \label{fig:z_square_stable}
 }
\end{subfigure}
\begin{subfigure}[$1/h < \lambda_0 < 2/h, \hspace{0.5em} Re(\alpha_{PF}(\lambda_0 h)) < 0$]{
 \includegraphics[width=0.32\columnwidth]{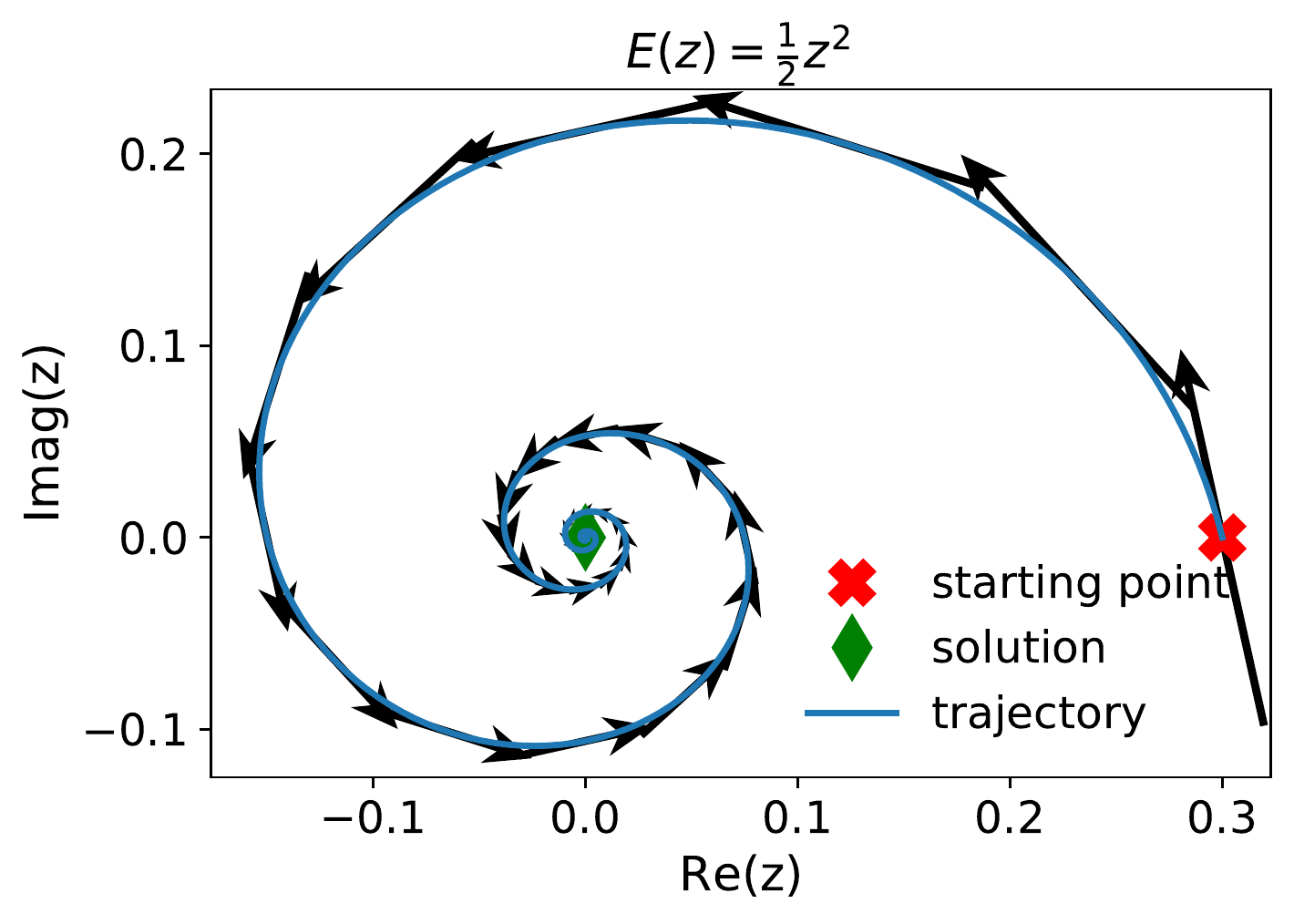}
 \label{fig:z_square_oscillation}
 } \end{subfigure}
\begin{subfigure}[$2/h < \lambda_0, \hspace{0.5em} Re(\alpha_{PF}(\lambda_0 h)) > 0$]{
 \includegraphics[width=0.32\columnwidth]{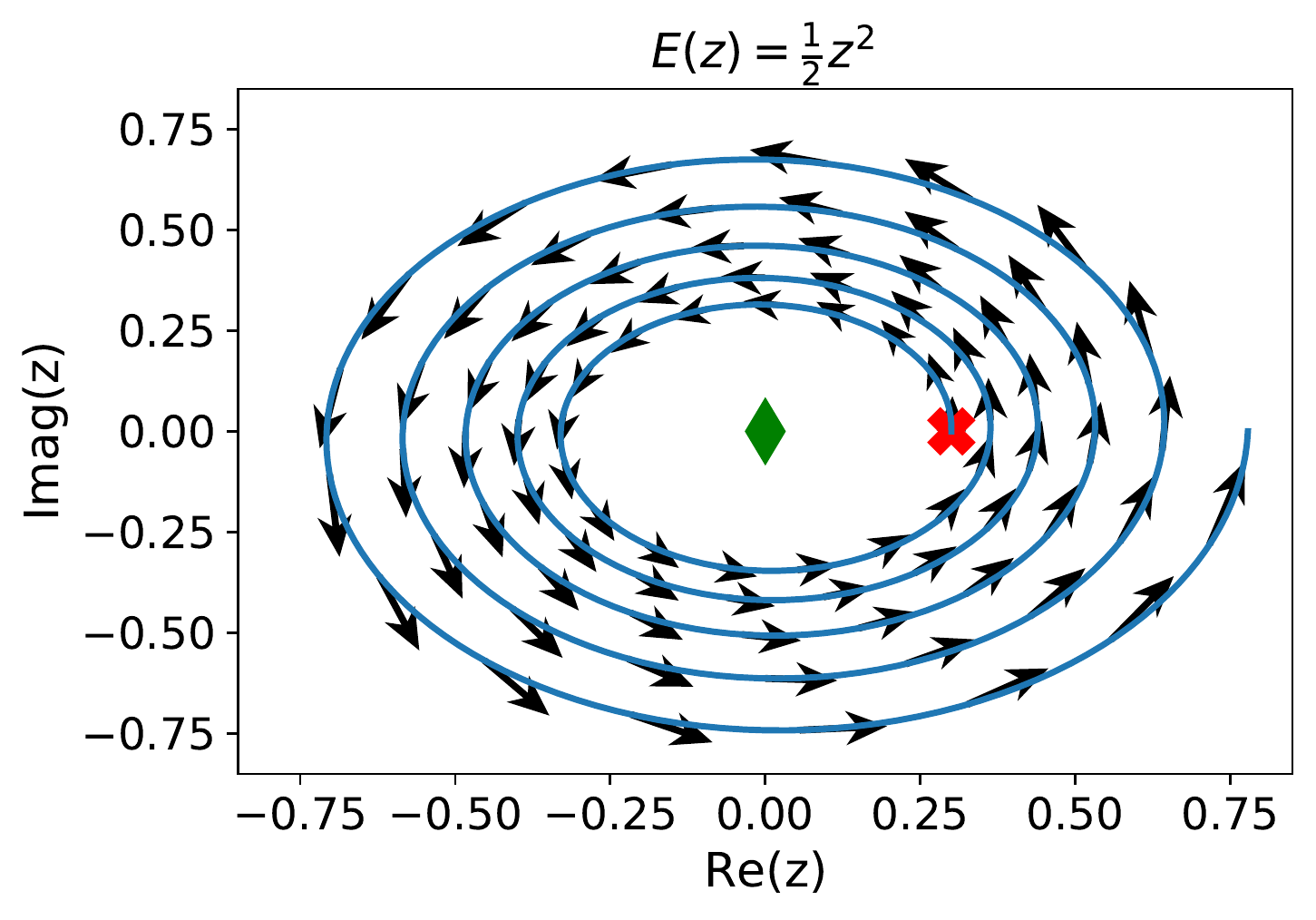}
 \label{fig:z_square_divergence}
 }\end{subfigure}
\caption[$E(z) = \frac 1 2 z^2$. Learning rates are $0.8$, $1.5$ and $2.1$.]{The behavior of PF on $E(z) = \frac 1 2 z^2$ with solution $z(t) = e^{\log(1 - h)/h}z(0)$. When $\lambda_0 < 1/h$,  $z(t) = (1-h)^{t/h} z(0)$ which is in real space and converges to the equilibrium. When $\lambda_0 > 1/h$, $z(t) = (h-1)^{t/h} \left(\cos(\pi t/h) + i \sin(\pi t/h) \right)z(0)$. This exhibits oscillatory behavior, and when $\lambda_0 > 2/h$, diverges.}
\label{fig:z_square}
\end{figure}

\textbf{The importance of the largest eigenvalue $\lambda_0$}.
The largest eigenvalue $\lambda_0$ plays an important part in the PF. Since $h \lambda_0 \ge h \lambda_i \hspace{1em} \forall i$, $\lambda_0$ determines  where in the above cases the PF is situated and thus whether there are oscillations and unstable behavior in training. 
For all flows of the form we consider we can write:
\begin{align}
\frac{d E(\vtheta)}{d t} = \frac{d E(\vtheta)}{d \vtheta}^T \frac{d \vtheta}{dt} =  \nabla_{\vtheta} E^T \sum_{i=0}^{D-1} \alpha(h\lambda_i)  \nabla_{\vtheta} E^T \vu_i \vu_i =  \sum_{i=0}^{D-1} \alpha(h\lambda_i)  (\nabla_{\vtheta}E^T \vu_i)^2
\label{eq:changes_in_e}
\end{align}

and thus if $\alpha(h \lambda_i) \in \mathbb{R}$ and $\alpha(h \lambda_i) < 0$ $\forall i$ then $\frac{d E(\vtheta)}{d t} \le 0$ and following the corresponding flow minimises $E$.
In the case of the PF this gets determined by $\lambda_0$.
 If $\lambda_0 < \frac{1}{h}$ then $\alpha_{PF}(h\lambda_i) < 0 \; \forall i$  (real stable case above)  and the PF minimises E.
If  $1/h < \lambda_0 < \frac{2}{h}$ then $Re[\alpha_{PF}(h\lambda_i)] < 0 \;\forall i$  (complex stable case above) close to a gradient descent iteration $\lambda_i, \vu_i \in \mathbb{R}$ we can write that $\frac{d Re[E(\vtheta)]}{d t} = \sum_{i=0}^{D-1} Re[\alpha_{PF}(h\lambda_i)]  (\nabla_{\vtheta}E^T \vu_i)^2$ and thus the real part of the loss function decreases. If $\lambda_0 > \frac{2}{h}$  then $ Re[\alpha_{PF}(h\lambda_0)] > 0 $ (unstable complex case above) and if $(\nabla_{\vtheta}E^T \vu_0)^2$ is sufficiently large we can no longer ascertain the behavior of $E$. We present a discrete time argument for this observation in Section~\ref{sec:changes_in_loss_discrete}.

\textbf{Building intuition}.
For quadratic objective $E(\vtheta) = \frac{1}{2}\vtheta^T A \vtheta$ the  PF describes gradient descent exactly. We show examples Figures~\ref{fig:intuition_convex1d_complex_part_needed} and~\ref{fig:intuition_quadratic_2d}. 
Unlike the NGF or the IGR flow, the PF captures the oscillatory  and divergent behavior of gradient decent. Importantly, to capture the unstable behavior which occurs when $\lambda_0 > 1/h$ the imaginary part of the PF is needed.
To expand intuition outside the quadratic case, we show the PF for the banana function~\citep{rosenbrock1960automatic} in Figure~\ref{fig:intuition_banana} and an additional example in 1D with a non-quadratic function (Figure~\ref{fig:1d_cosine} in the Appendix). In this case, the PF no longer follows the gradient descent trajectory exactly, but we still observe the importance of the PF in capturing instabilities of gradient descent; we also observe that adding non-principal terms can restabilize the trajectory. 

\begin{remark} For the banana function, the principal terms have a destabilizing effect when $h > 2/\lambda_0$ while the non principal terms can have a stabilizing effect.
\end{remark}

\begin{figure}[thb]
\begin{subfigure}[$\lambda_0 < 1/h$ (stability)]{
 \includegraphics[width=0.3\columnwidth]{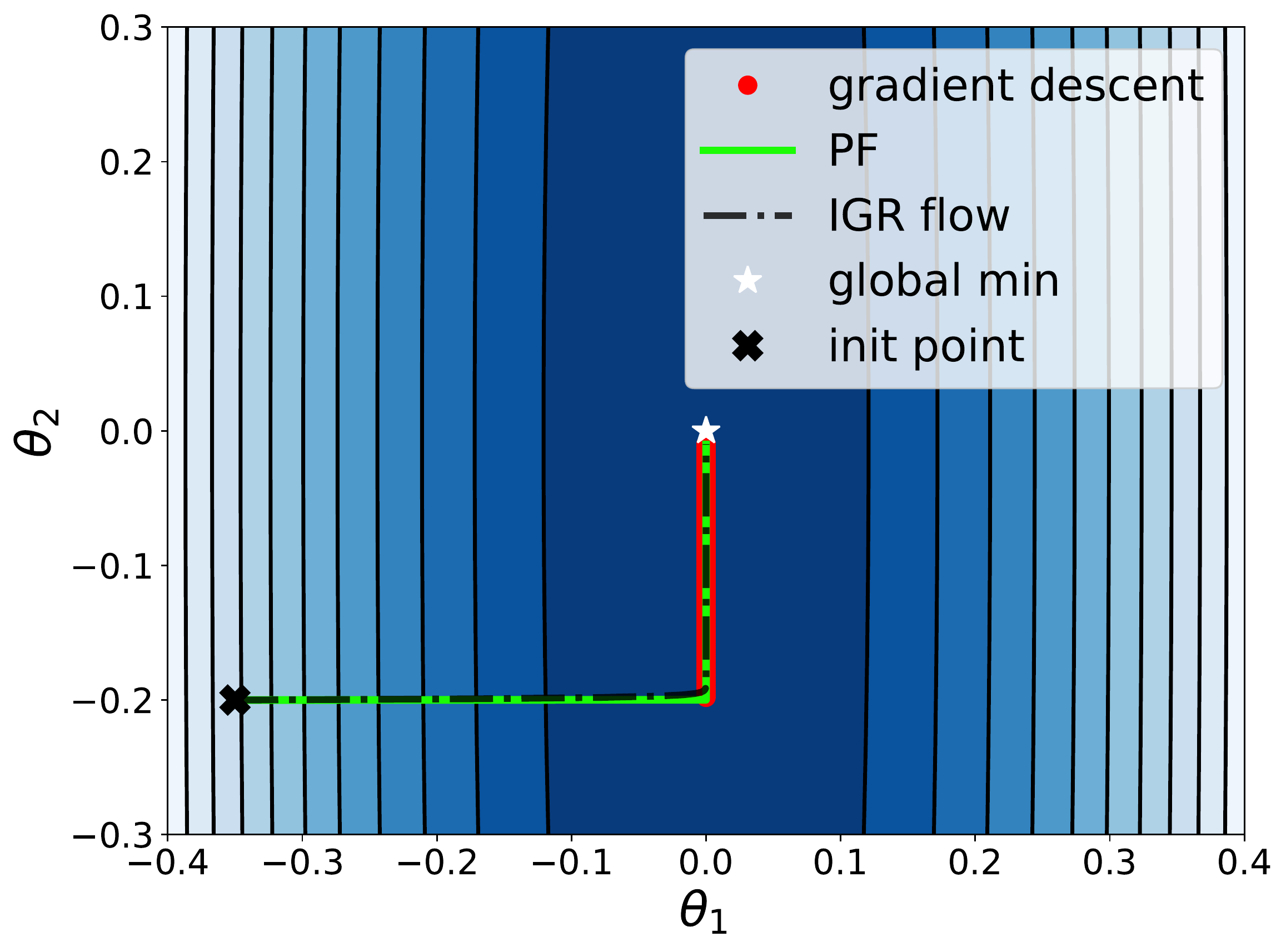}
 }\end{subfigure}
\begin{subfigure}[$1/h <\lambda_0 < 2/h$ (oscillations)]{
 \includegraphics[width=0.3\columnwidth]{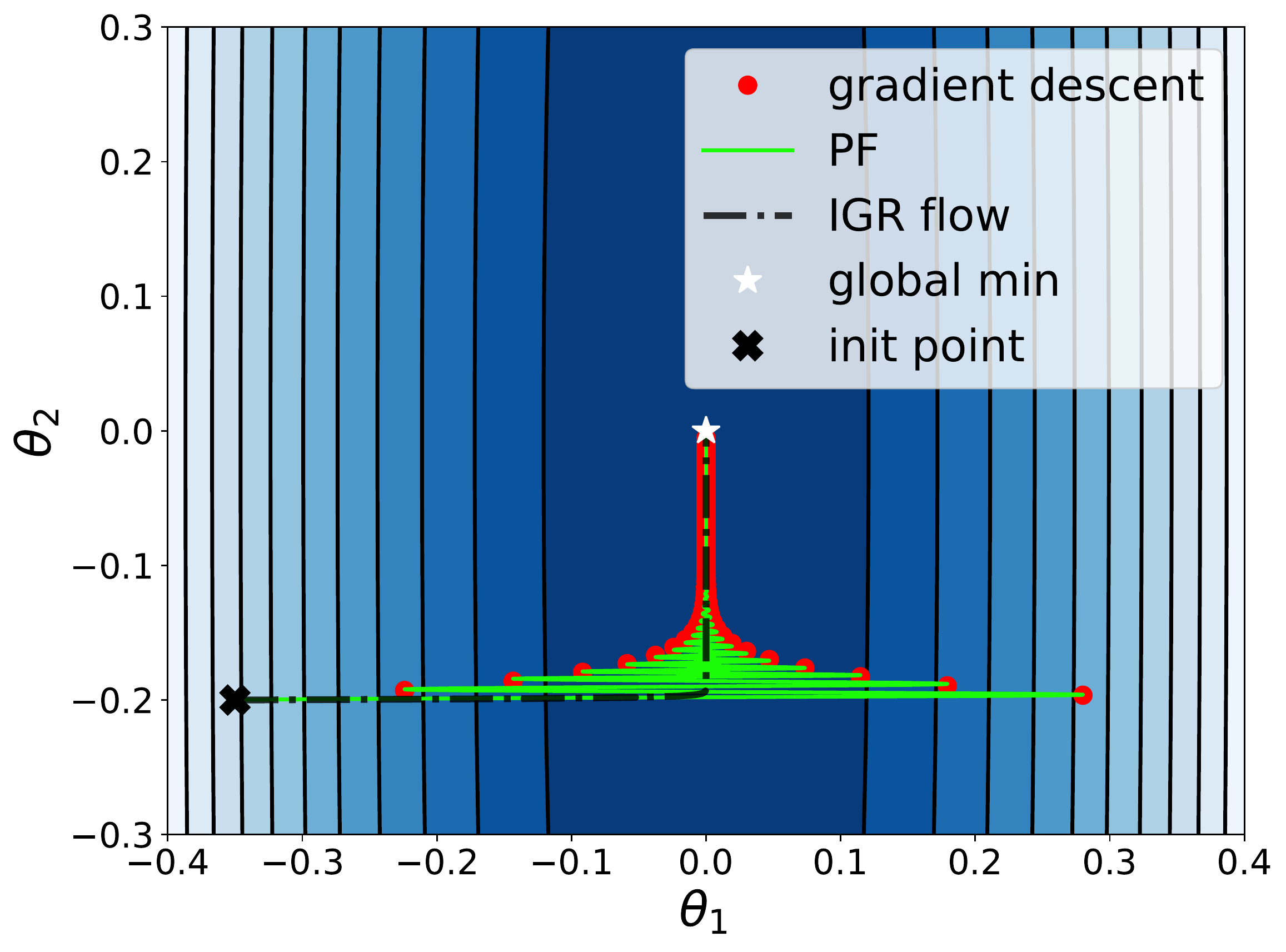}
 }\end{subfigure}
 \begin{subfigure}[$\lambda_0 > 2/h$ (divergence)]{
 \includegraphics[width=0.3\columnwidth]{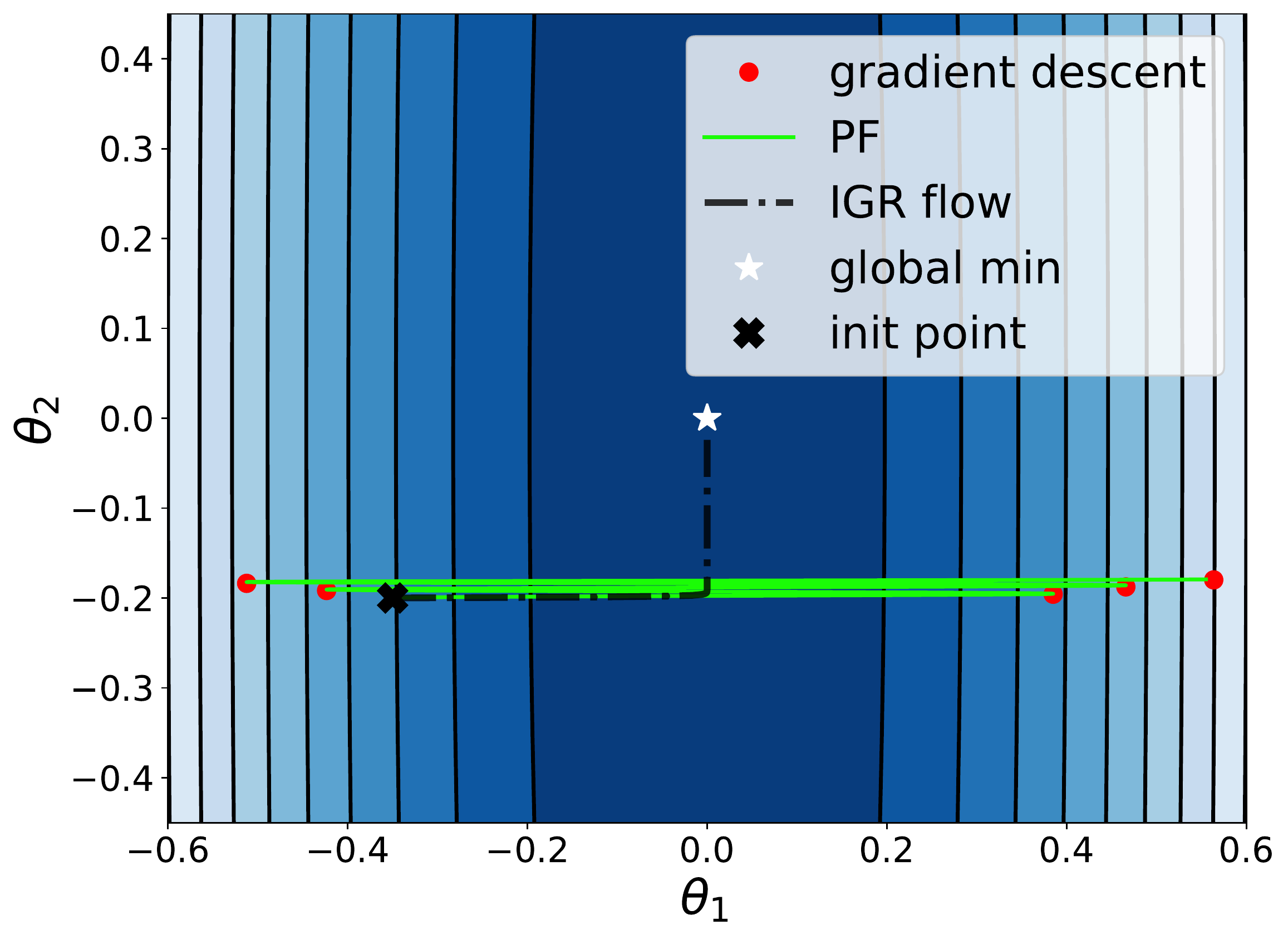}
 }\end{subfigure}
\caption[Quadratic losses in 2 dimensions. $E= \vtheta^T A \vtheta$ with  $A = ((1, 0.0), (0.0, 0.01))$, with learning rates $0.5$, $0.9$ and $1.05$.]{\textbf{Quadratic losses in 2 dimensions}. The PF captures the behavior of gradient descent exactly for quadratic losses, including oscillatory behavior and divergence. }
\label{fig:intuition_quadratic_2d}
\end{figure}

\begin{figure}[thb]
\centering
\begin{subfigure}[$\lambda_0 < 1/h$]{
 \includegraphics[width=0.31\columnwidth]{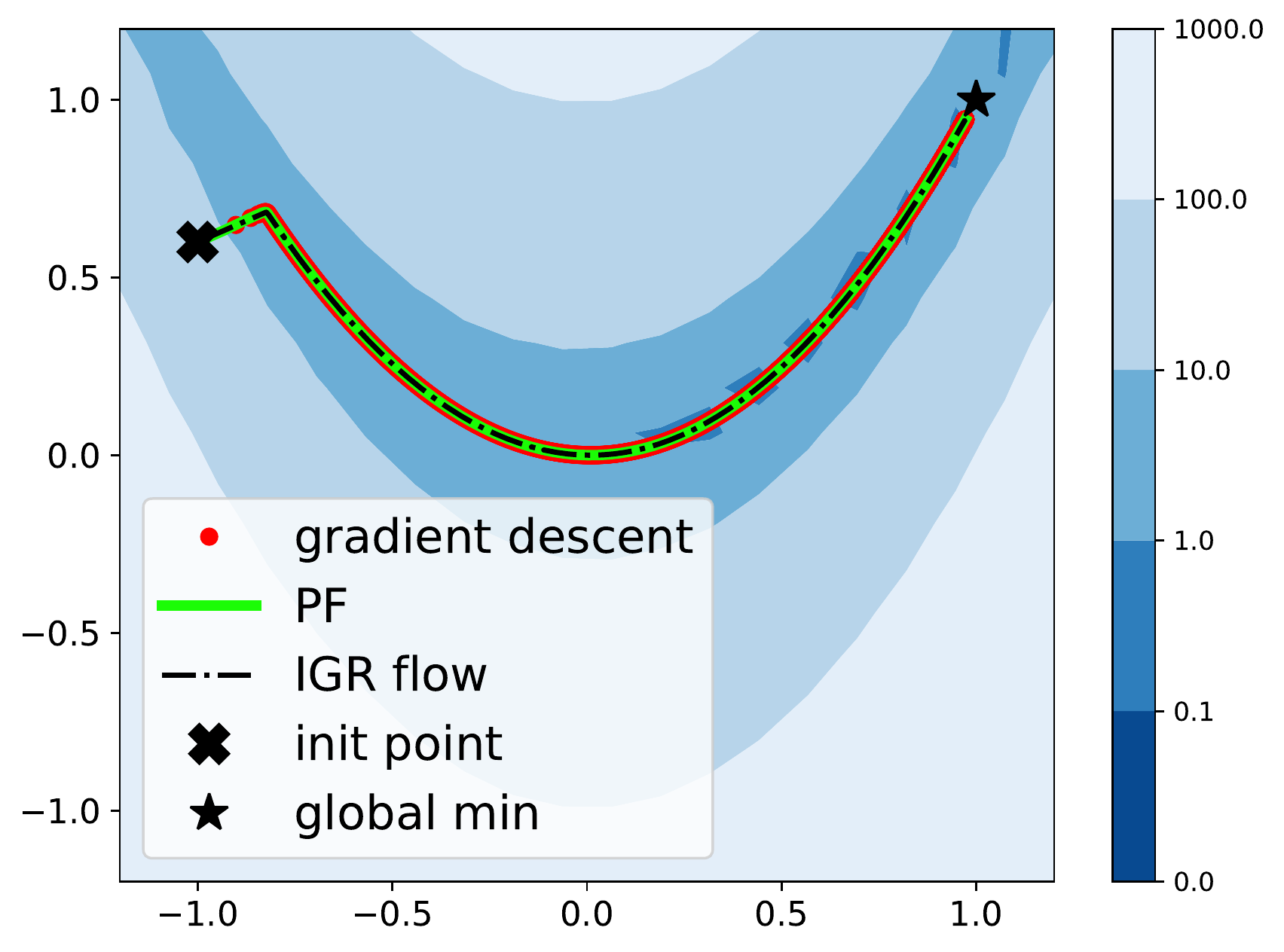}
 }\end{subfigure}
 \begin{subfigure}[$\lambda_0 < 2/h (\lambda_0 \approx 1.9/h)$]{
 \includegraphics[width=0.31\columnwidth]{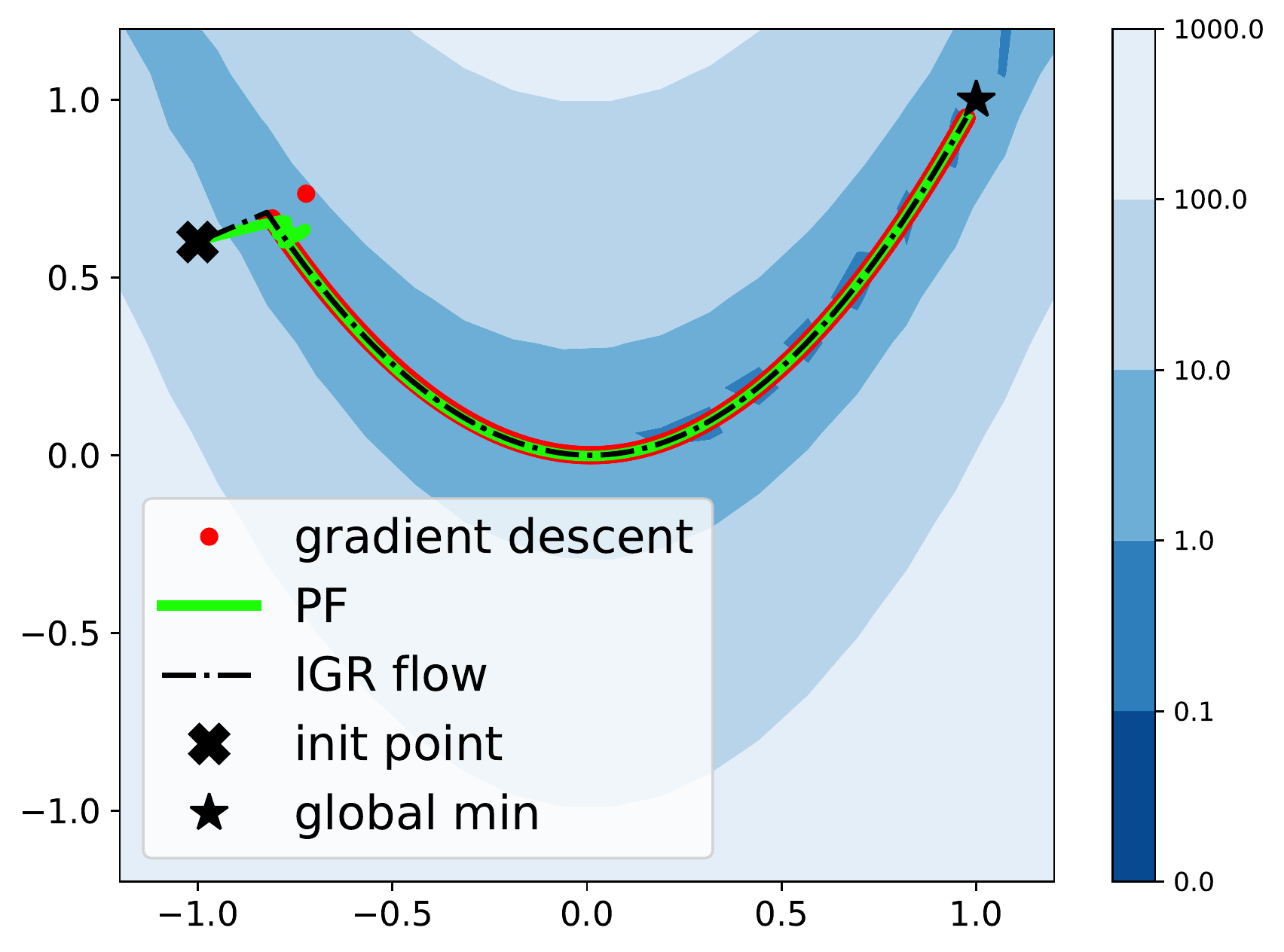}
 }\end{subfigure}
\begin{subfigure}[$\lambda_0 >> 2/h (\lambda_0 \approx 5/h)$]{
 \includegraphics[width=0.31\columnwidth]{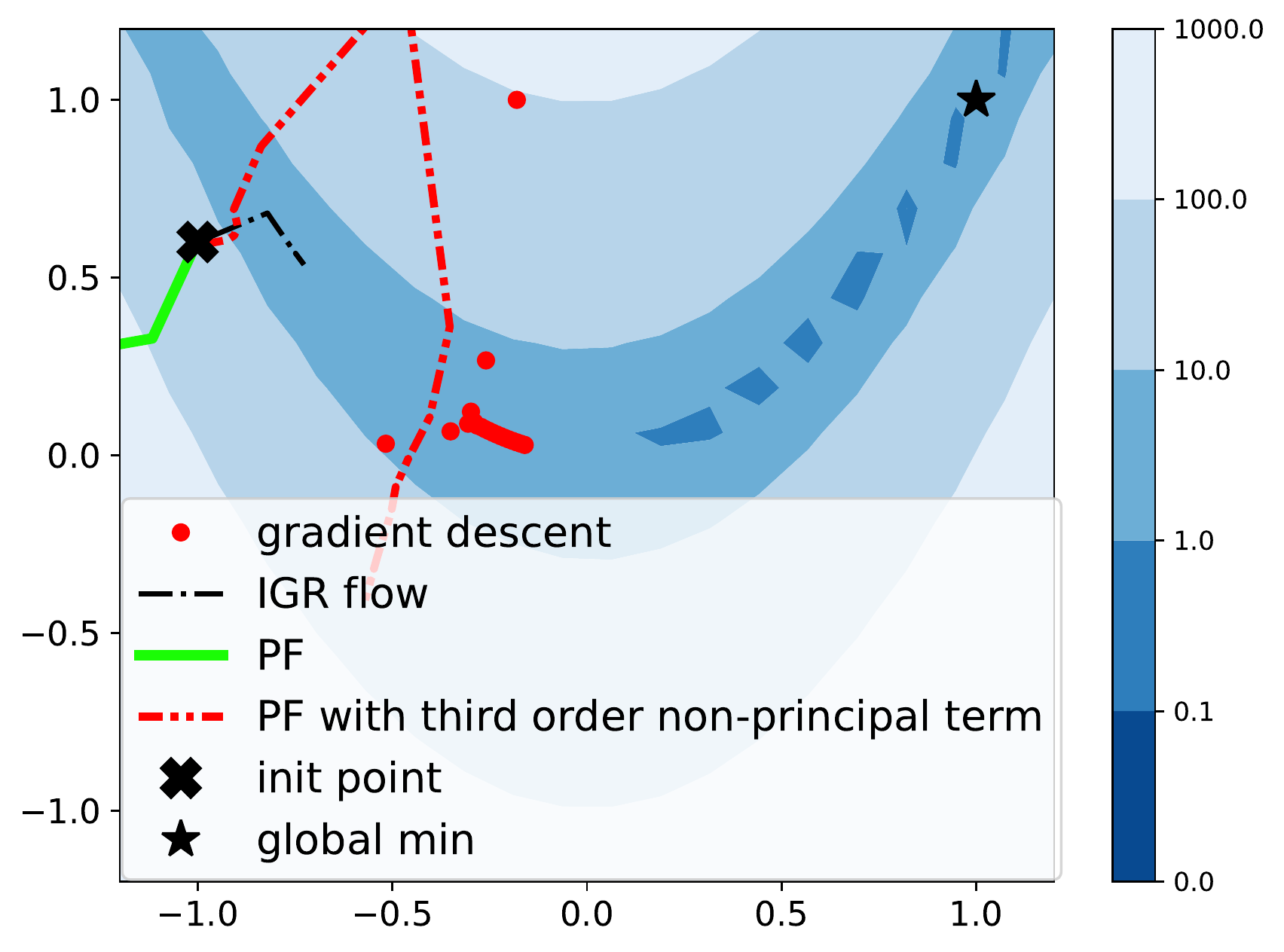}
 }\end{subfigure}
\caption[Banana function. Learning rates $0.0006$, $0.0017$ $0.005$.]{\textbf{Banana function}. The PF can capture instability and the gradient descent trajectory over many iterations  when $\lambda_0$ is close to $2/h$. When  $\lambda_0 >> 2/h$ (right) the PF does not track the GD trajectory over many gradient descent steps, but when including a non-principal term the flow is able to capture the general trajectory of gradient descent and unstable behavior of gradient descent.}
\label{fig:intuition_banana}
\end{figure}

\subsection{The stability analysis of the principal flow}
\label{sec:pf_stability_analysis}

We now perform stability analysis on the PF, to understand how it can be used to predict certain behaviors of gradient descent around critical points of the loss function $E$. Consider $\vtheta^*$ such a critical point, i.e $\nabla_{\vtheta} E(\vtheta^*) = \mathbf{0}$. 
For a critical point $\vtheta^*$ to be exponentially asymptotically attractive, all eigenvalues of the Jacobian evaluated at $\vtheta^*$ need to have strictly negative real part.

The PF has the following Jacobian at critical points (proof in Section~\ref{sec:jacobian} in the Appendix):

\begin{align}
J_{PF}(\vtheta^*) = \sum_{i=0}^{D-1} \frac{\log(1 - h \lambda_i^*)}{h} \vu_i^* {\vu_i^*}^T
\end{align}

where $\lambda_i^*$, $\vu_i^*$ are the eigenvalues and eigenvectors of the Hessian $\nabla_{\vtheta}^2E(\vtheta^*)$. 
We thus have that the eigenvalues of the Jacobian $J_{PF}(\vtheta^*)$ at the critical point $\vtheta^*$ are $\frac 1h \log(1 - h\lambda_i^*)$ for $i=1,\dots, D$.

\textbf{Local minima}.
Suppose that $\vtheta^*$ is a local minimum. Then all Hessian eigenvalues are non-negative $\lambda_i^* \ge 0$.
We perform the stability analysis in cases given by the value of $\lambda_i^*$, corresponding to the cases in Section~\ref{sec:principal_flow_eigendecom}:

$h < 1/\lambda_i^*$. The corresponding eigenvalue of the Jacobian $\frac 1h \log(1 - h\lambda_i^*)$ is negative, since ${0 < 1 - h\lambda_i^* < 1}$. The principal vector field is attractive in the corresponding eigenvector direction.

$h \in [1/\lambda_i^*,\, 2/\lambda_i^*)$. The corresponding eigenvalue of the Jacobian $\frac 1h \log(1 - h\lambda_i^*) = \frac 1 h \log(h\lambda_i^* - 1) + i\frac \pi h$ is complex, with negative real part since since $h\lambda_i^* -1 < 1$. The principal vector field is attractive in the corresponding eigenvector direction.

$h \ge 2/\lambda_i^*$.  The corresponding eigenvalue of the Jacobian $\frac 1h \log(1 - h\lambda_i^*) = \frac 1 h \log(h\lambda_i^* - 1) + i\frac \pi h$ is complex, with non-negative real part, since since $h\lambda_i^* - 1 \ge 1$. The principal vector field is not attractive in the corresponding eigenvector direction, and if $h > 2/\lambda_i^*$ it is repelled in the corresponding eigenvector direction.

The last case tells us that the PF is not always attracted to local minima, as it is not attractive in eigendrections where $h \ge 2/\lambda_i^*$. Thus \textbf{like gradient descent, the PF can be repelled around local minima for large learning rates}. This is in contrast to the NGF and the IGR flow, which always predict convergence around a local minimum: the eigenvalues of the NGF Jacobian are $-\lambda_i^*$, and for the IGR flow the eigenvalues are $-\lambda_i^* - \frac{h^2}{2} {\lambda_i^*}^2$, both are negative when $\lambda_i^*$ is positive. For derivations see Section~\ref{sec:jacobian_igr_ngf} in the Appendix.

\begin{remark} For quadratic losses, where the PF is exact, the results above recover the classical gradient descent result for quadratic losses namely that gradient descent convergences if $\lambda_0 < 2/h$, otherwise diverges.
\end{remark}

\textbf{Saddle points}.
Suppose that $\vtheta^*$ is a strict saddle point. In this case there exists $\lambda^*_s$ such that $\lambda^*_s < 0$. We want to analyse the behavior of the PF in the direction of the corresponding eigenvector $\vu_s^*$. In that case, $\log( 1 - h\lambda_s^*) > 0$ which entails that the PF is repelled in the eigendirections of strict saddle points. Note that this is also the case for the NGF since the corresponding eigenvalues of the Jacobian of the NGF would be $-\lambda_s^*$,  also positive. Unlike the NGF however, the subspace of eigendirections that the PF is repelled by can be larger since it includes also eigendirections where $\lambda_i^* > 2/h > 0$. 

\section{Predicting neural network gradient descent dynamics with the principal flow}
\label{sec:the_pf_and_nns}

Computing the PF on large neural networks during training is computationally prohibitive, as it requires finding all eigenvalues of the Hessian matrix once for each step of the flow simulation, corresponding to many eigen-decompositions per gradient descent step. 
To build intuition about the PF for neural networks, we start with a small MLP for a 2 dimensional input regression problem, with random inputs and labels. Here we can understand the behavior of the PF since we can compute its modified vector field exactly and compare it with the behavior of gradient descent. We show results in Figure~\ref{fig:intuition_nn_function_delta}, where we visualize the norm of the difference between gradient descent parameters at each iteration and the parameters produced by the continuous time flows we compare with. We observe that \textit{short term the principal flow is better than all other flows at tracking the behavior of gradient descent}. As the number of iterations increases however, the PF accumulates error in the case of $\lambda_0 > 2/h$; this is likely due to the fact that while gradient descent 
parameters are real, this is not the case for the PF, as discussed in Remark~\ref{rem:pf_multiple_ints}. Since we are primarily concerned with using the PF to understand gradient descent for a small number of iterations this will be less of a concern in our experimental settings. Additional results which confirm the PF is better than the other flows at tracking gradient descent on a bigger network trained the UCI breast cancer dataset~\citep{asuncion2007uci} are shown in Figure~\ref{fig:breast_cancer_principal_flow} in the Appendix.

\begin{remark}
\textbf{On the multiple iteration behavior of the PF}. We note that while gradient descent parameters are real for any iteration $\vtheta_t$, $\vtheta_{t+1}$, ... $\vtheta_{t+n}$ when we approximate the behavior of gradient descent by initializing $\vtheta(0) = \vtheta_t$ and running the PF for time $nh$, there is nothing enforcing that $\vtheta(h)$, ... $\vtheta(nh)$ will be real when the PF is complex valued ($\lambda_0 > 1/h$). We also note that in that case the symmetric Hessian is not Hermitian and the eigenvalues and eigenvector of the Hessian will not be real; furthermore, the eigenvectors need not form a basis\footnote{To avoid the concern around the eigenvectors of the Hessian no longer forming a basis, one can use the Jordan normal form instead, as we show in Section~\ref{sec:bea_pf_jordan}. We don't take this approach here as most of our following analysis is not affected, and is concerned with the behaviour of the PF around one gradient descent iteration. Furthermore, support of the Jordan normal form in code libraries is limited (especially for complex matrices), and we did not find this to be a significant issue in the experiments where we simulate the PF outside the quadratic case for a few iterations. We note, however, that mathematical analysis of long-term PF trajectories  for general functions should use the Jordan normal form.}. For long term trajectories (larger $n$), this can have an effect on long term error between gradient descent and PF trajectories, through an accumulating effect of the imaginary part in the PF. This can be mitigated by using the PF to understand the short term behavior of gradient descent (small $n$).
\label{rem:pf_multiple_ints}
\end{remark}

\begin{figure}[thb]
 \begin{subfigure}[$\lambda_0 < 1/h$]{
 \includegraphics[width=0.33\columnwidth]{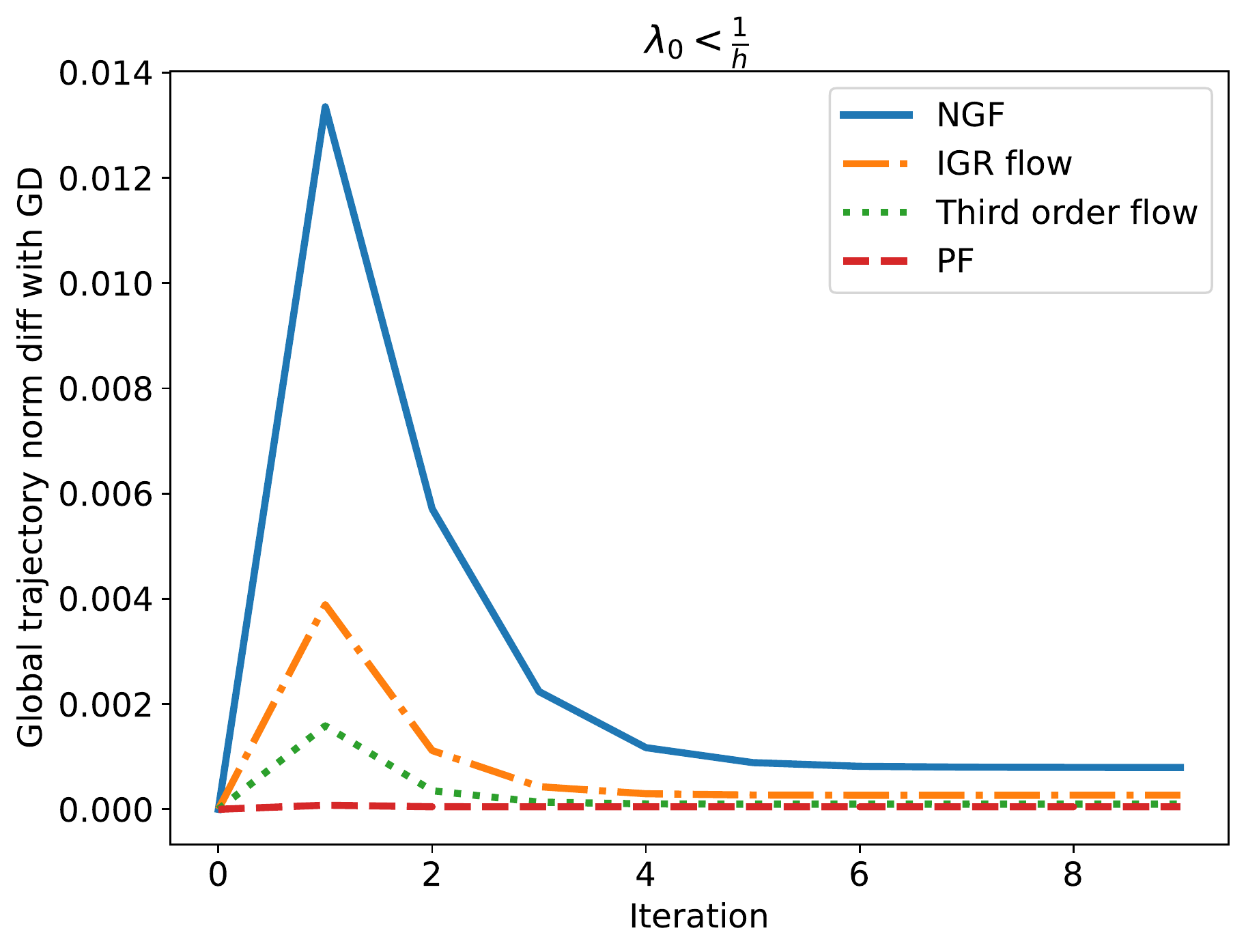}
 }\end{subfigure}
\begin{subfigure}[$1/h < \lambda_0 < 2/h$]{
 \includegraphics[width=0.33\columnwidth]{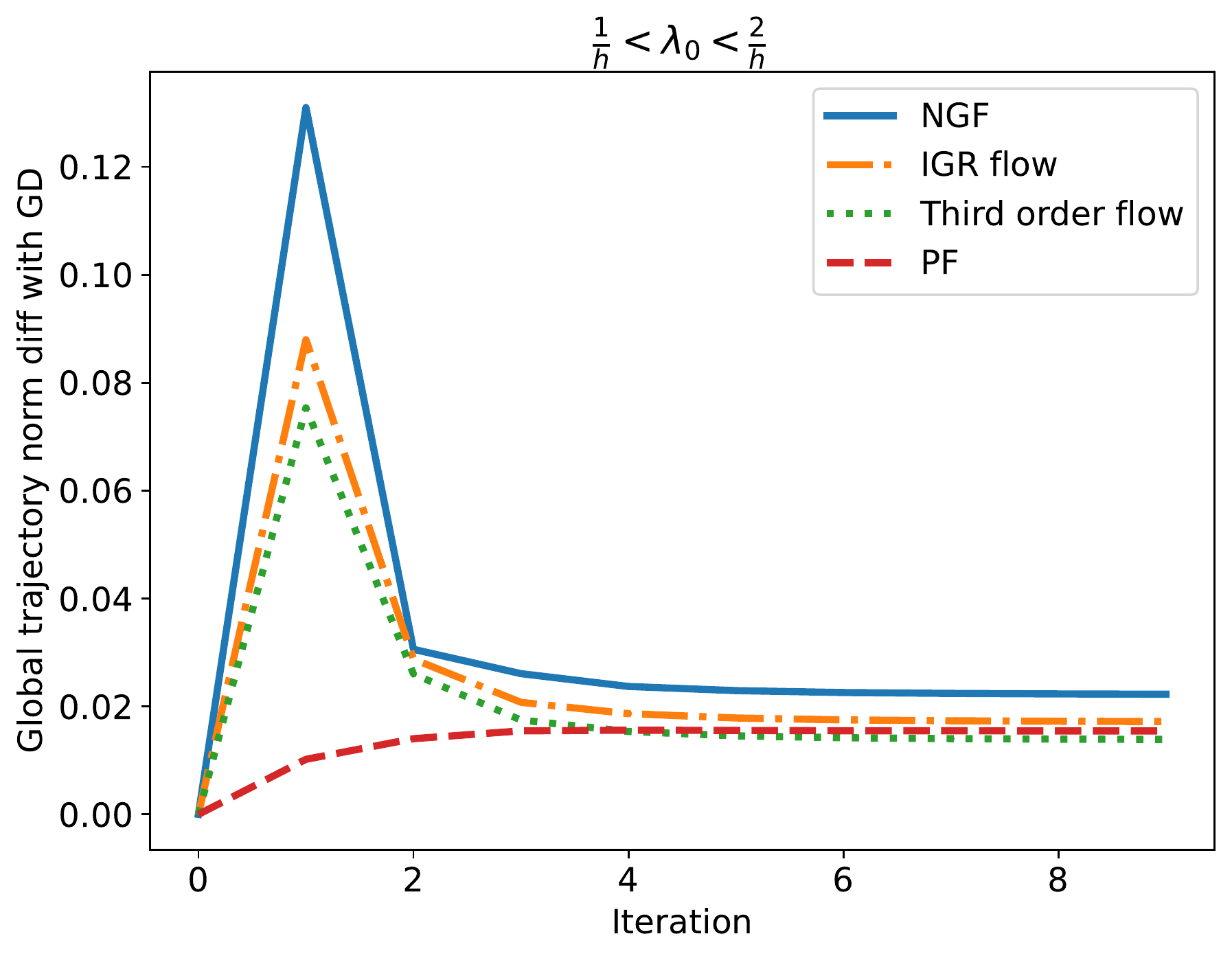}
 }\end{subfigure}
\begin{subfigure}[$\lambda_0 > 2/h$]{
 \includegraphics[width=0.33\columnwidth]{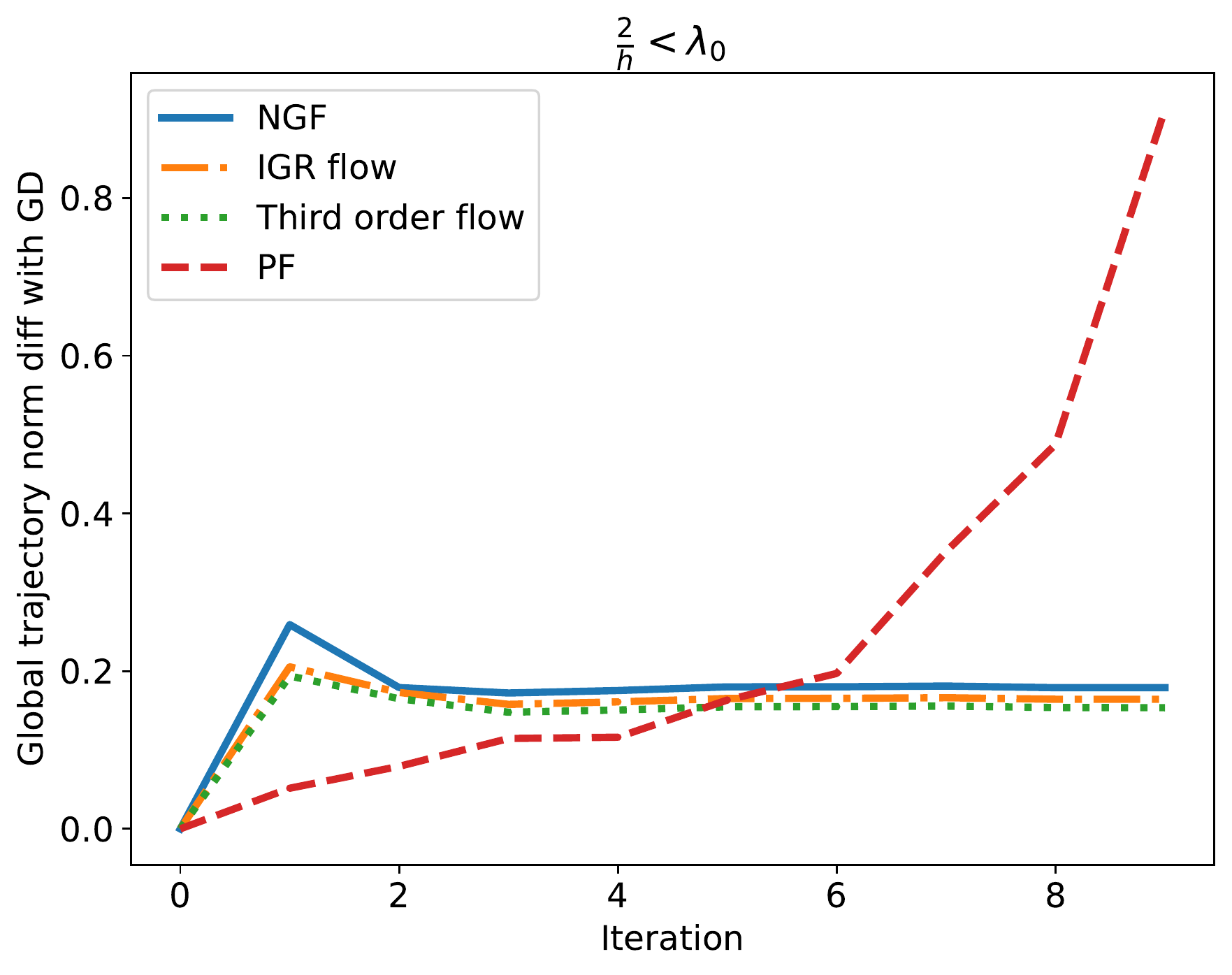}
 }\end{subfigure}
\caption[Error between gradient descent parameters and parameters obtained following continuous time flows for multiple iterations on a small MLP: $\norm{\vtheta_{t+n} - \vtheta(nh)}$. Input size 2, MLP with output units $10, 1$. Learning rates $0.1$, $0.2$ and $0.25$. We use a dataset with 5 examples where the input is generated from a Gaussian distribution with 2 dimensions and the targets are samples from a Gaussian distribution with 1 dimension. A mean square error loss is used.]{Error between gradient descent parameters and parameters obtained following continuous time flows for multiple iterations: $\norm{\vtheta_{n} - \vtheta(nh)}$ with $\vtheta(0) = \vtheta_0$. For small $n$, the PF is better at capturing the behavior of gradient descent across all cases.}
\label{fig:intuition_nn_function_delta}
\end{figure}

\subsection{Predicting $\nabla_{\vtheta} E ^T \vu_0$ using the principal flow}
\label{sec:nns_principal_flow}

For large neural networks,  instead of simulating the PF describing how the entire parameter vector changes in time we can use the PF to approximate changes in a scalar quantity only. This will allow us to compare the predictions of the PF against the predictions of the NGF and IGR flow on realistic settings. To do so, we first have to compute how the gradient changes in time:

\begin{corollary} If $\vtheta$ follows the PF, then:
$\dot{\left({\nabla_{\vtheta}E}\right)} =  \sum_{i=0}^{D-1} \frac{\log(1 - h \lambda_i)}{h} (\nabla_{\vtheta} E^T \vu_i)  \vu_i$.
\end{corollary}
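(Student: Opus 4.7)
The plan is to apply the chain rule: since $\nabla_{\vtheta} E$ is a function of $\vtheta$, and $\vtheta$ evolves in time according to the PF, we have
\begin{align}
\dot{\left(\nabla_{\vtheta} E\right)} \;=\; \frac{d}{dt}\,\nabla_{\vtheta} E(\vtheta(t)) \;=\; \nabla_{\vtheta}^2 E \cdot \dot{\vtheta}.
\end{align}
This reduces the corollary to a direct algebraic manipulation once we substitute the PF expression for $\dot\vtheta$.

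Next I would substitute the definition of the PF,
\begin{align}
\dot\vtheta \;=\; \sum_{i=0}^{D-1} \frac{\log(1-h\lambda_i)}{h\lambda_i}\,(\nabla_{\vtheta} E^T \vu_i)\,\vu_i,
\end{align}
into the expression above, and use the eigenrelation $\nabla_{\vtheta}^2 E\,\vu_i = \lambda_i \vu_i$ to push the Hessian through the sum. The factor $\lambda_i$ produced by the eigenrelation cancels the $\lambda_i$ in the denominator of $\frac{\log(1-h\lambda_i)}{h\lambda_i}$, yielding exactly $\sum_i \frac{\log(1-h\lambda_i)}{h}(\nabla_{\vtheta} E^T \vu_i)\,\vu_i$, which is the claimed identity. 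The well-definedness at $\lambda_i = 0$ follows from the same limit $\lim_{\lambda\to 0}\frac{\log(1-h\lambda)}{h\lambda} = -1$ used to justify the PF itself, so the $\lambda_i$ cancellation is benign.

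The only real subtlety, rather than obstacle, concerns the case where the PF is complex-valued ($\lambda_0 > 1/h$): strictly speaking one should view $\vtheta(t)$ as taking values in $\mathbb{C}^D$ so that $\nabla_{\vtheta} E$ and $\nabla_{\vtheta}^2 E$ are evaluated at complex arguments, and the eigendecomposition $\nabla_{\vtheta}^2 E = \sum_i \lambda_i \vu_i \vu_i^T$ must be interpreted accordingly (or replaced by the Jordan form, as the authors note in their footnote). Instantaneously, however, at any moment where such a decomposition is available the chain rule computation above goes through verbatim, so no additional work is required beyond invoking the same conventions already fixed for the PF.
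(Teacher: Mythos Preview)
Your proof is correct and follows exactly the approach the paper indicates: apply the chain rule $\dot{(\nabla_{\vtheta} E)} = \nabla_{\vtheta}^2 E\,\dot{\vtheta}$, substitute the PF, and use $\nabla_{\vtheta}^2 E\,\vu_i = \lambda_i \vu_i$ to cancel the $\lambda_i$ in the denominator. Your remarks on the $\lambda_i=0$ limit and the complex case are consistent with the conventions the paper already adopts.
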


This follows from applying the chain rule and using the definition of the PF. We contrast this with how the gradient evolves if the parameters follow the NGF:

\begin{corollary} If $\vtheta$ follows the NGF, then:
$\dot{\left({\nabla_{\vtheta}E}\right)} =  \sum_{i=0}^{D-1} -  \lambda_i (\nabla_{\vtheta} E^T \vu_i)  \vu_i$
\label{col:grad_ngf_ode}
\end{corollary}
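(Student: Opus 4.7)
The plan is to mirror the proof of the immediately preceding corollary (for the PF) but replace the PF vector field with the NGF vector field. The computation reduces to the chain rule combined with the spectral decomposition of the Hessian, so the obstacle is essentially bookkeeping rather than any substantive argument.

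First I would differentiate the gradient with respect to time using the chain rule, which gives
\begin{equation*}
\frac{d}{dt}\left(\nabla_{\vtheta} E(\vtheta(t))\right) \;=\; \nabla_{\vtheta}^2 E\,\dot{\vtheta}.
\end{equation*}
Substituting the NGF from Eq~\ref{eq:ngf}, namely $\dot{\vtheta} = -\nabla_{\vtheta} E$, this becomes $-\nabla_{\vtheta}^2 E\,\nabla_{\vtheta} E$.

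Next I would expand the Hessian in its spectral form $\nabla_{\vtheta}^2 E = \sum_{i=0}^{D-1} \lambda_i \vu_i \vu_i^T$ (this is precisely the same decomposition used throughout Section~\ref{sec:principal_flow_eigendecom} and in the derivation of the PF evolution of $\nabla_{\vtheta} E$). Applying it to $\nabla_{\vtheta} E$ gives $\sum_{i=0}^{D-1} \lambda_i (\vu_i^T \nabla_{\vtheta} E)\,\vu_i = \sum_{i=0}^{D-1} \lambda_i (\nabla_{\vtheta} E^T \vu_i)\,\vu_i$, and pulling the minus sign back in yields exactly the stated expression.

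The only conceptual point worth flagging, rather than a genuine obstacle, is the symmetry used to write $\vu_i^T \nabla_{\vtheta} E = \nabla_{\vtheta} E^T \vu_i$; this is immediate because the eigenvalues and eigenvectors used here are the standard real ones associated with the NGF regime (no complex eigenpairs appear, in contrast to the PF). Comparing term by term with the PF corollary then makes clear that the only difference is the replacement of the coefficient $\log(1 - h\lambda_i)/h$ by $-\lambda_i$, which is precisely the Taylor expansion of the former at $h = 0$ truncated to leading order, as expected.
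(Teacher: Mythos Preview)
Your proposal is correct and mirrors exactly what the paper does: the paper states that the PF corollary ``follows from applying the chain rule and using the definition of the PF,'' and the NGF version is the identical computation with the NGF vector field substituted instead. Your spectral-decomposition step is the same one used throughout Section~\ref{sec:principal_flow_eigendecom}, so there is no methodological difference.
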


\begin{corollary} If $\vtheta$ follows the IGR flow, then:
$\dot{\left({\nabla_{\vtheta}E}\right)} =  \sum_{i=0}^{D-1} - \left(\lambda_i + \frac{h}{2} \lambda_i^2 \right)  (\nabla_{\vtheta} E^T \vu_i)  \vu_i$
\end{corollary}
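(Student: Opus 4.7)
The plan is to apply the chain rule to $\nabla_{\vtheta} E$ viewed as a function of $\vtheta$, and then plug in the vector field defining the IGR flow. This mirrors the derivations already shown for the NGF case in Corollary~\ref{col:grad_ngf_ode} and for the PF, so the structure is essentially dictated by those earlier corollaries.

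First, I would write $\dot{\left(\nabla_{\vtheta} E\right)} = \frac{d \nabla_{\vtheta} E}{d \vtheta} \dot{\vtheta} = \nabla_{\vtheta}^2 E \, \dot{\vtheta}$, which holds along any smooth trajectory. Substituting the IGR vector field from Eq~\ref{eq:modified_flow_igr} gives
\begin{align}
\dot{\left(\nabla_{\vtheta} E\right)} = \nabla_{\vtheta}^2 E \left( -\nabla_{\vtheta} E - \frac{h}{2}\nabla_{\vtheta}^2 E \,\nabla_{\vtheta} E\right) = -\nabla_{\vtheta}^2 E\, \nabla_{\vtheta} E - \frac{h}{2}(\nabla_{\vtheta}^2 E)^2 \nabla_{\vtheta} E.
\end{align}
Then I would expand using the spectral decomposition $\nabla_{\vtheta}^2 E = \sum_i \lambda_i \vu_i \vu_i^T$ (and hence $(\nabla_{\vtheta}^2 E)^2 = \sum_i \lambda_i^2 \vu_i \vu_i^T$ since the $\vu_i$ are orthonormal), which immediately yields
\begin{align}
\dot{\left(\nabla_{\vtheta} E\right)} = -\sum_{i=0}^{D-1} \lambda_i (\nabla_{\vtheta} E^T \vu_i) \vu_i - \frac{h}{2}\sum_{i=0}^{D-1} \lambda_i^2 (\nabla_{\vtheta} E^T \vu_i) \vu_i,
\end{align}
and combining the sums recovers the stated coefficient $-(\lambda_i + \tfrac{h}{2}\lambda_i^2)$ for each eigendirection.

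There is no real obstacle: both $\lambda_i$ and $\vu_i$ depend on $\vtheta$ and hence on time, but their time derivatives never enter because we differentiate only $\nabla_{\vtheta} E$ (not the Hessian itself), so only a single application of the chain rule is needed. The only things worth checking are that the Hessian is symmetric (so it commutes with itself and has an orthonormal eigenbasis at the point being considered), and that the evaluation is understood to be at the current $\vtheta(t)$ so that $\lambda_i$, $\vu_i$, and $\nabla_{\vtheta} E$ are all taken at the same instant. Once those are noted, the statement follows by direct substitution.
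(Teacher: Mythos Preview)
Your proposal is correct and matches the paper's approach exactly: the paper states that these corollaries follow ``from applying the chain rule and using the definition of the PF,'' and you have simply written out that chain-rule computation for the IGR case, followed by the spectral decomposition of $\nabla_{\vtheta}^2 E$.
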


We would like to use the above to assess how $\nabla_{\vtheta} E^T \vu_i$ changes in time under the above flows and check their predictions empirically against results obtained when training neural networks with gradient descent. Since $\vu_i$ is an eigenvector of the Hessian it also changes in time according to the changes given by the corresponding flow, making $\dot{\left(\nabla_{\vtheta}E^T \vu_i\right)}$ difficult to calculate. Even when if we wrote an exact flow for $\dot{\left(\nabla_{\vtheta}E^T \vu_i\right)}$,  it would be computationally challenging to simulate it since finding the new values of $\vu_i$ would depend on the full Hessian and would lead to the same computational issues we are trying to avoid in the case of large neural networks. In order to mitigate these concerns, we will make the additional approximation that $\lambda_i$ and $\vu_i$ do not change inside an iteration which will allow us to  approximate changes to $\nabla_{\vtheta} E^T \vu_i$ and compare them against empirical observations. We note that we will not use this approximation for any other results.

\begin{remark}If we assume that $\lambda_i$, $\vu_i$ do not change between iterations, if $\vtheta$ follows the PF then $\dot{\left(\nabla_{\vtheta}E^T \vu_i\right)} = \frac{\log(1 - h \lambda_i)}{h} \nabla_{\vtheta} E^T \vu_i$.
\end{remark}

\begin{remark} If we assume that $\lambda_i$, $\vu_i$ do not change between iterations, if $\vtheta$ follows the NGF we can write $\dot{\left(\nabla_{\vtheta}E^T \vu_i\right)} = - \lambda_i \nabla_{\vtheta} E^T \vu_i$.
\end{remark} 

\begin{remark} If we assume that $\lambda_i$, $\vu_i$ do not change between iterations, if $\vtheta$ follows the IGR flow we can write $\dot{\left(\nabla_{\vtheta}E^T \vu_i\right)} = - \left(\lambda_i + \frac{h}{2} \lambda_i^2 \right)  \nabla_{\vtheta} E^T \vu_i$.
\end{remark} 

The above flows have the form $\dot{x} = c x$, with solution $x(t) = x(0) e^{ct}$.
We can thus test these solutions empirically by training neural networks with gradient descent with learning rate $h$ and at each step compute $\nabla_{\vtheta}E(\vtheta_t)^T (\vu_i)_{t-1}$ and compare it with the prediction $x(h)$ obtained from the solution from each flow initialized at the previous iteration, i.e. $x(0) = \nabla_{\vtheta}E(\vtheta_{t-1})^T (\vu_i)_{t-1}$.  We show results with a VGG model trained on CIFAR-10 in Figure~\ref{fig:prediction_grad_u}. The results show that the PF is substantially better than the NGF and IGR flow at predicting the behavior of $\nabla_{\vtheta} E^T \vu_0$. Since the NGF and the IGR flow solutions scale the initial value by the inverse of an exponential of magnitude given by $\lambda_0$ for large $\lambda_0$ this leads to a small prediction, which is not aligned with what is observed empirically. We also note that the higher the value of $\nabla_{\vtheta} E^T \vu_0$, the worse the prediction of the PF; these are the areas where the approximations made in the above remarks are likely not to hold due to large gradient norms.

\begin{figure}
 \includegraphics[width=0.32\columnwidth]{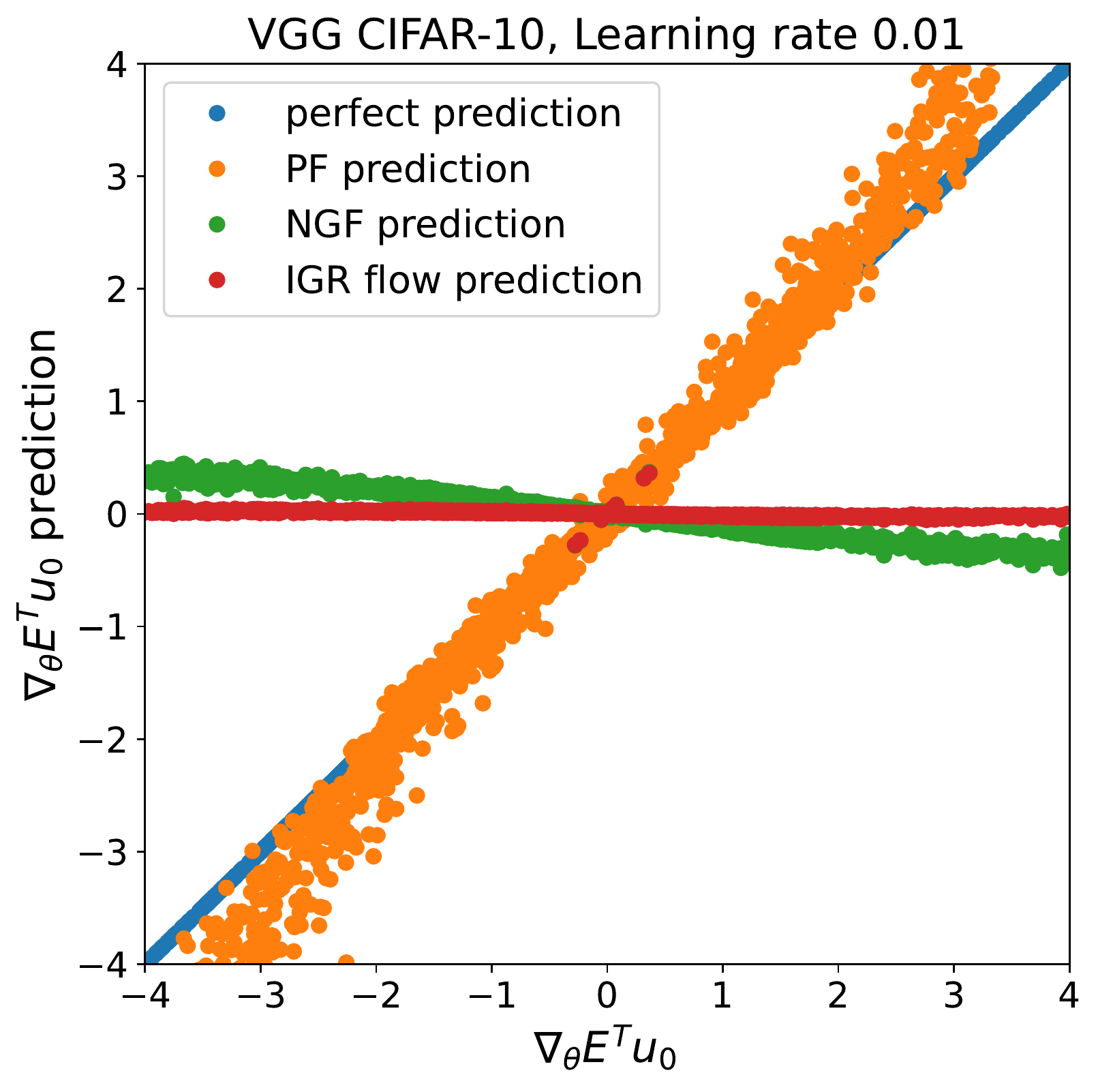}
 \includegraphics[width=0.32\columnwidth]{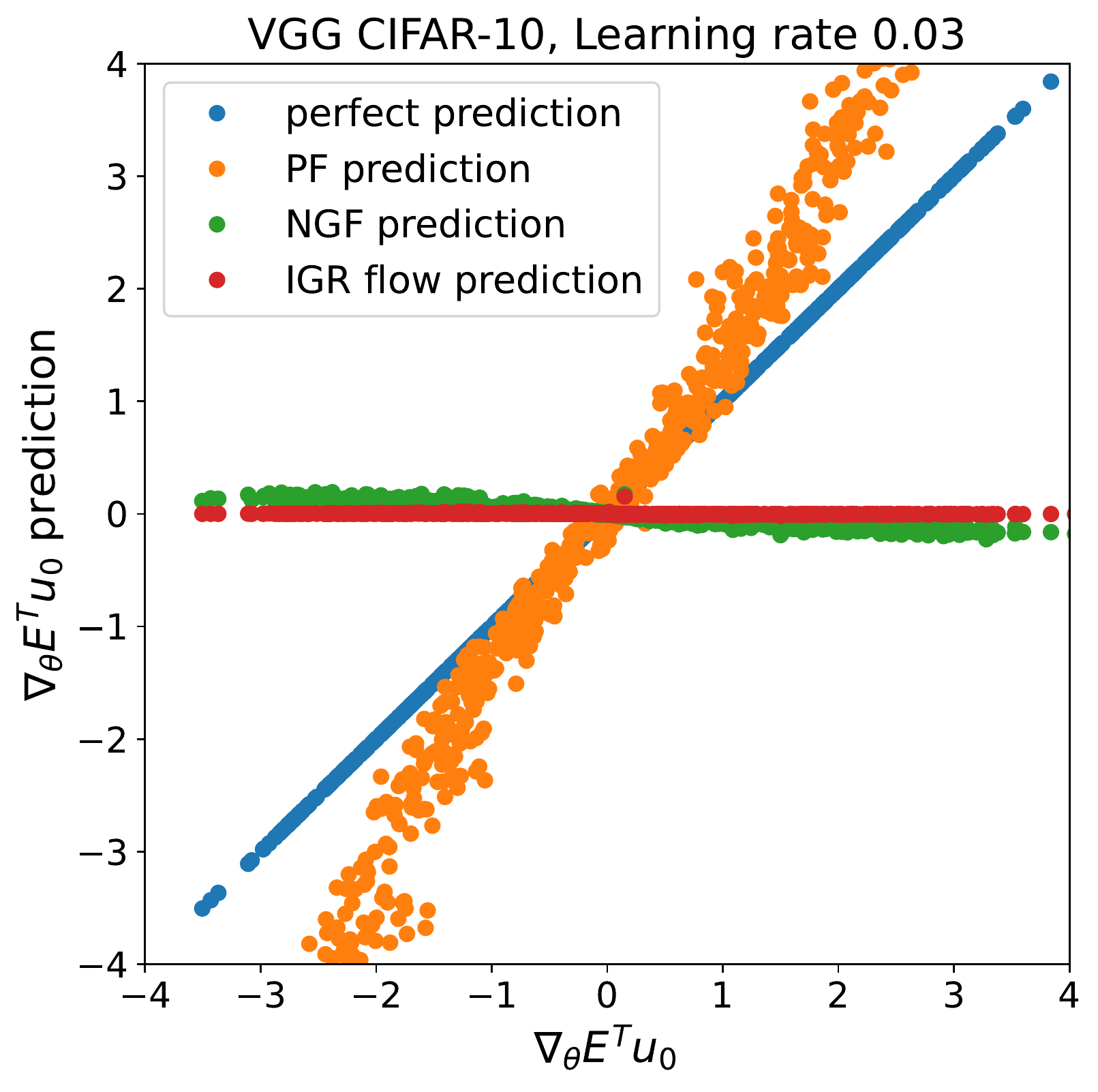}
 \includegraphics[width=0.32\columnwidth]{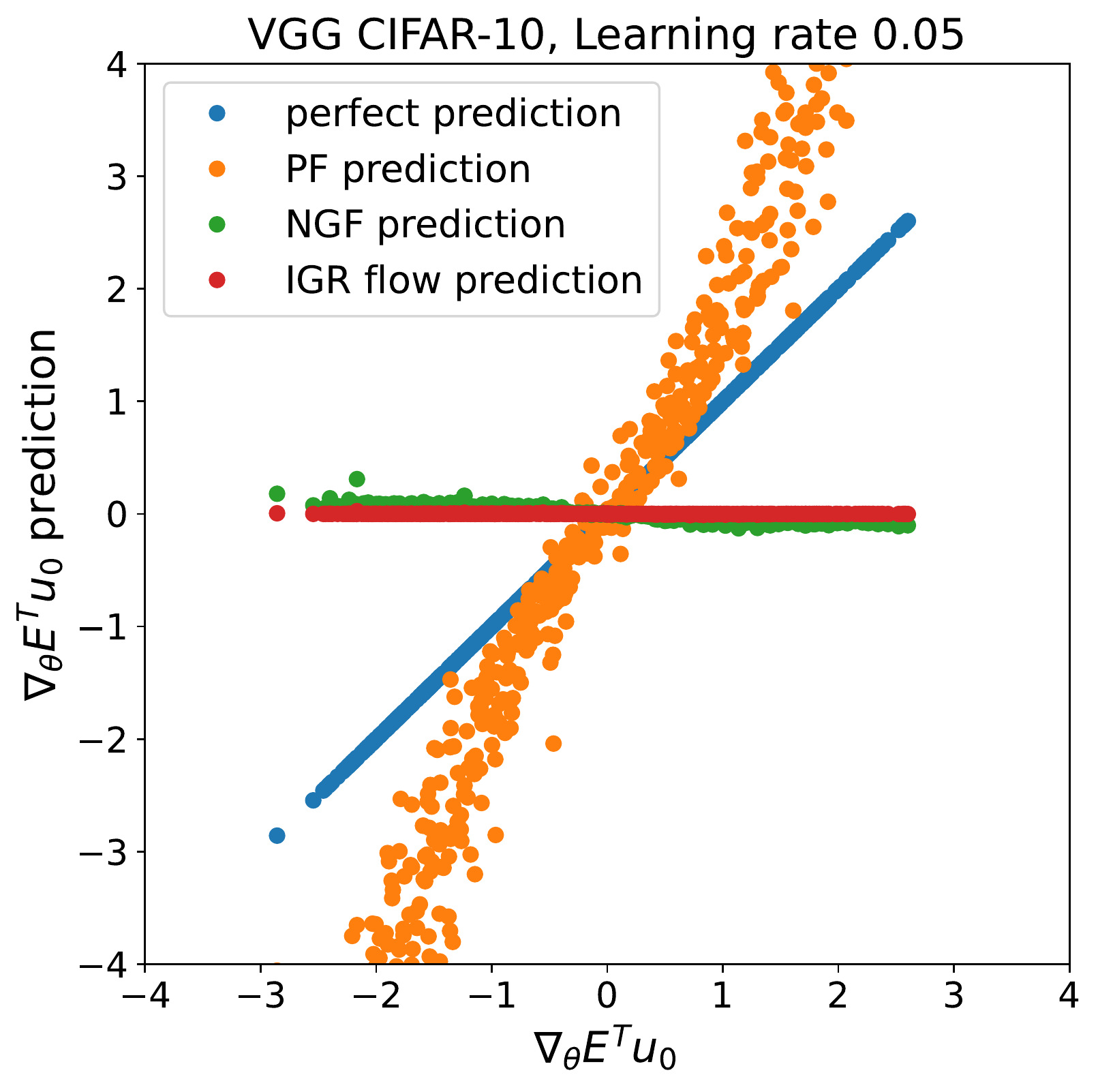}
\caption[Predictions of $\nabla_{\vtheta}E^T\vu_0$ using the principal flow on neural networks. Full batch training with a VGG network on CIFAR-10. Learning rates $0.01$, $0.03$  and $0.05$.]{Predictions of $\nabla_{\vtheta}E^T\vu_0$ according to the NGF, IGR flow and the PF. On the $x$ axis we plot the value of $\nabla_{\vtheta}E^T \vu_0$ as measured empirically in training, and on the $y$ axis we plot the corresponding prediction according to the flows from the value of the dot product at the previous iteration. The `exact match' line indicates a perfect prediction, the upper bound of performance. The PF performs best from all the compared flows, however for higher learning rates its performance degrades when $\nabla_{\vtheta}E^T \vu_0$ is large; this is due to the fact that the higher the learning rate and the higher the gradient norm, the more likely it is that the additional assumption we used that $\lambda_i, \vu_i$ do not change does not hold.}
\label{fig:prediction_grad_u}
\end{figure}

\subsection{\rebuttalrone{Around critical points: escaping sharp local minima and saddles}}

\rebuttalrone{The stability analysis we performed in Section~\ref{sec:pf_stability_analysis} showed the PF is repelled by local minima where $\lambda_0^* >2/h$: that is, even if the model is close to a sharp local minima (with $\lambda_0^* > 2/h$), that local minima will not be attractive and training will continue until a shallow minima is reached. We provide experimental evidence to support that hypothesis in the context of neural networks in Figure~\ref{fig:stability_analysis_nn} in the Appendix; these results are consistent with observations in the deep learning literature~\citep{jastrzkebski2018relation,cohen2021gradient}. Furthermore, while saddle points have long been considered a challenge with high dimensional optimisation~\citep{dauphin2014identifying} in practice gradient descent has not been observed to converge to saddles~\citep{lee2016gradient}. Our analysis suggests that saddles will be repelled not only in the direction of strictly negative eigenvalues, but also in the eigendirections with large positive eigenvalues when large learning rates are used; this can explain why neural networks do not converge to non-strict saddles which exist in deep neural landscapes~\citep{kawaguchi2016deep} but need not be repelling for the NGF and IGR flow (existing analyses of escaping saddle points by gradient descent apply only to strict saddles~\citep{du2017gradient,lee2016gradient}).}

\section{The principal flow, stability coefficients and edge of stability results}
\label{sec:instability_deep_learning}

\textbf{Edge of stability results.} 
 \citet{cohen2021gradient} did a thorough empirical study to show that when training deep neural networks with full batch gradient descent the largest eigenvalue of the Hessian, $\lambda_0$, keeps growing until reaching approximately $2/h$ (a phase of training they call \textit{progressive sharpening}), after which it remains in that area; for mean squared losses this continues indefinitely while for cross entropy losses they show it further decreases later in training. They also show that instabilities in training occur when $\lambda_0 >2/h$. Their empirical study spans neural architectures, data modalities and loss functions. We visualize the edge of stability behavior they observe in Figure~\ref{fig:reproduce_edge_of_stability}; since we use a cross entropy loss $\lambda_0$ decreases later in training.
 We also visualize that iterations where the loss increases compared to the previous iteration overwhelmingly occur when $\lambda_0 >2/h$.
 \citet{cohen2021gradient} also empirically observe that $\vtheta^T \vu_0$ has oscillatory behavior in the edge of stability area but is 0 or small outside it.

\begin{figure}[tb]
 \includegraphics[width=0.33\columnwidth]{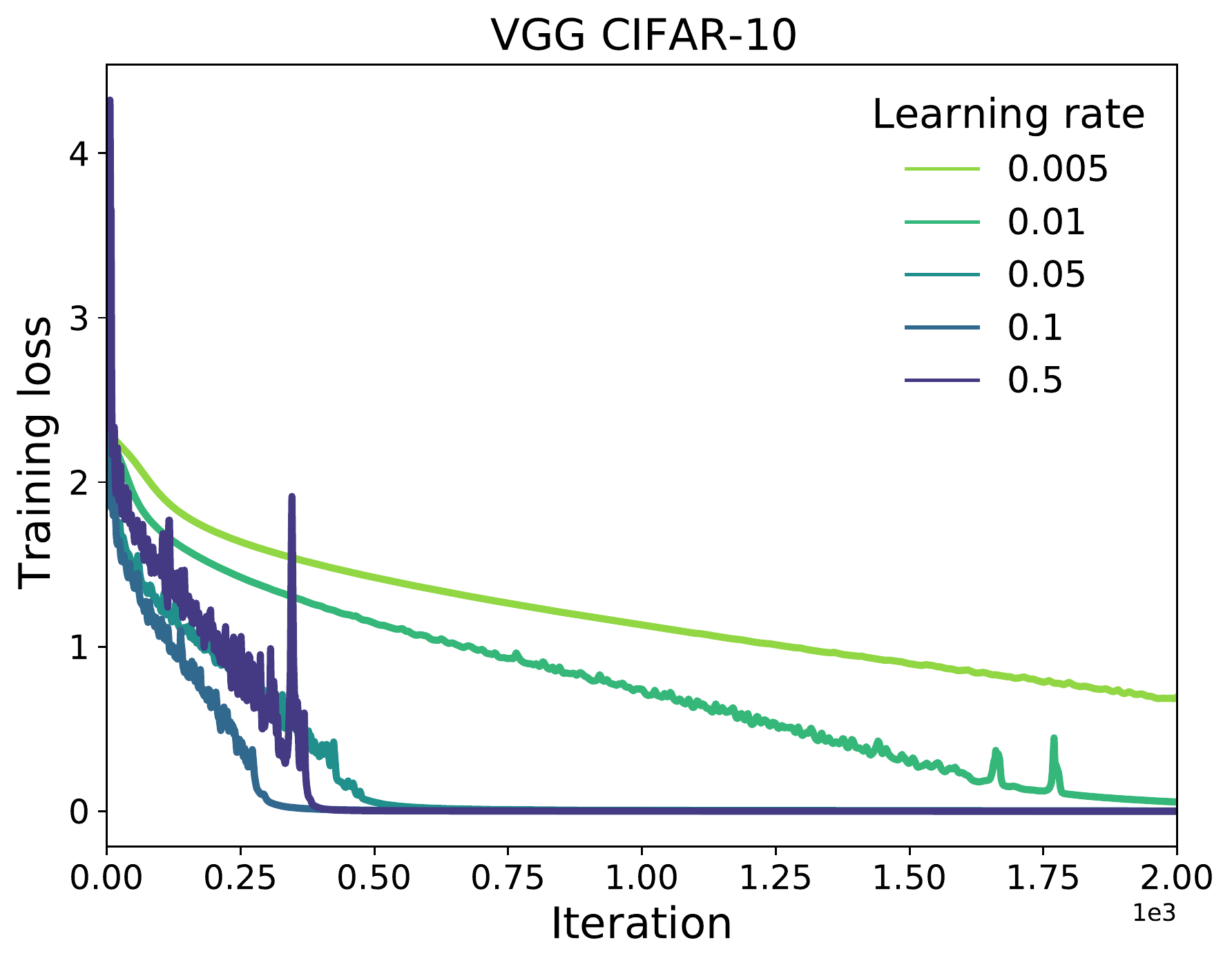}
 \includegraphics[width=0.34\columnwidth]{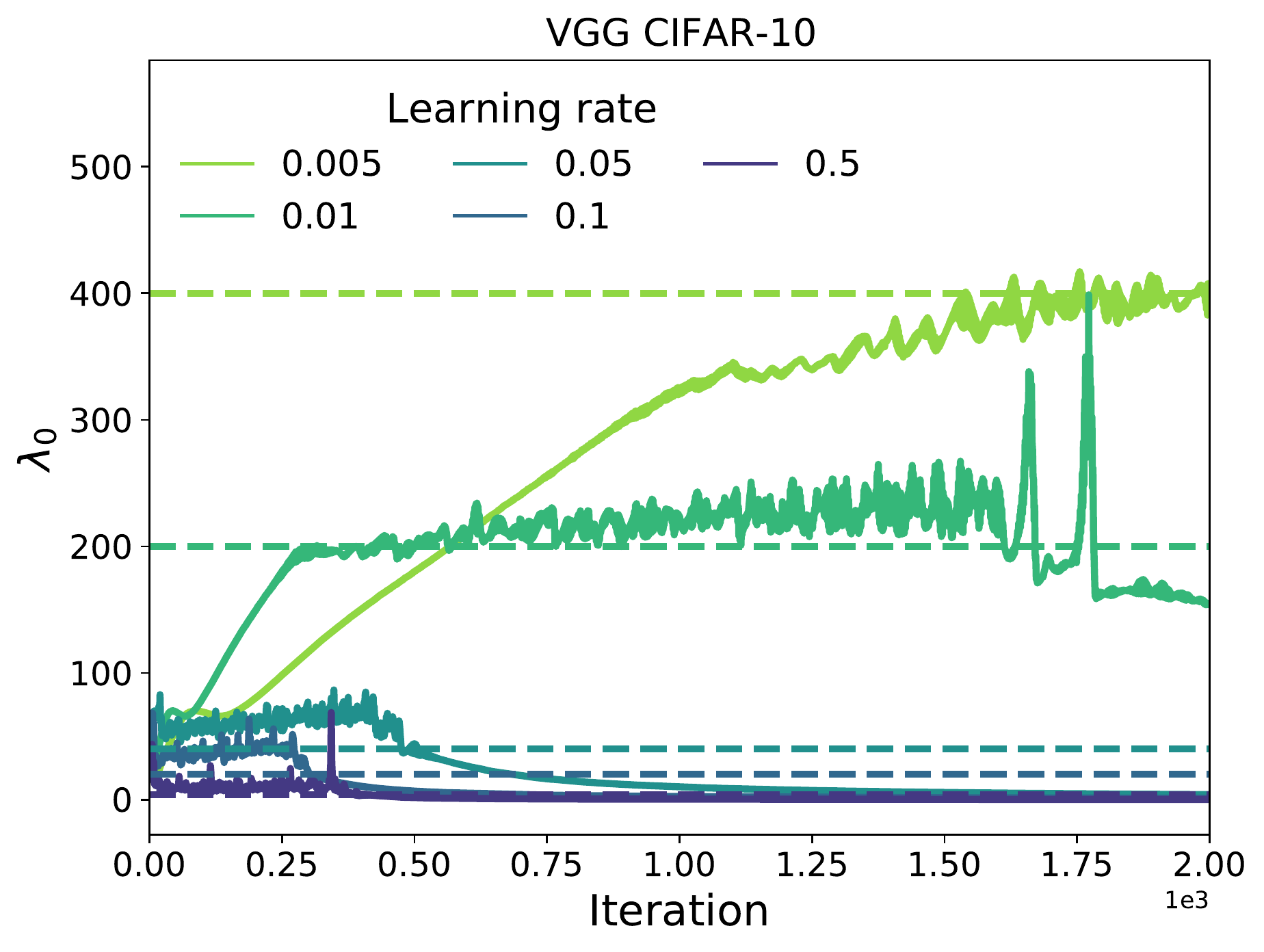}
 \includegraphics[width=0.32\columnwidth]{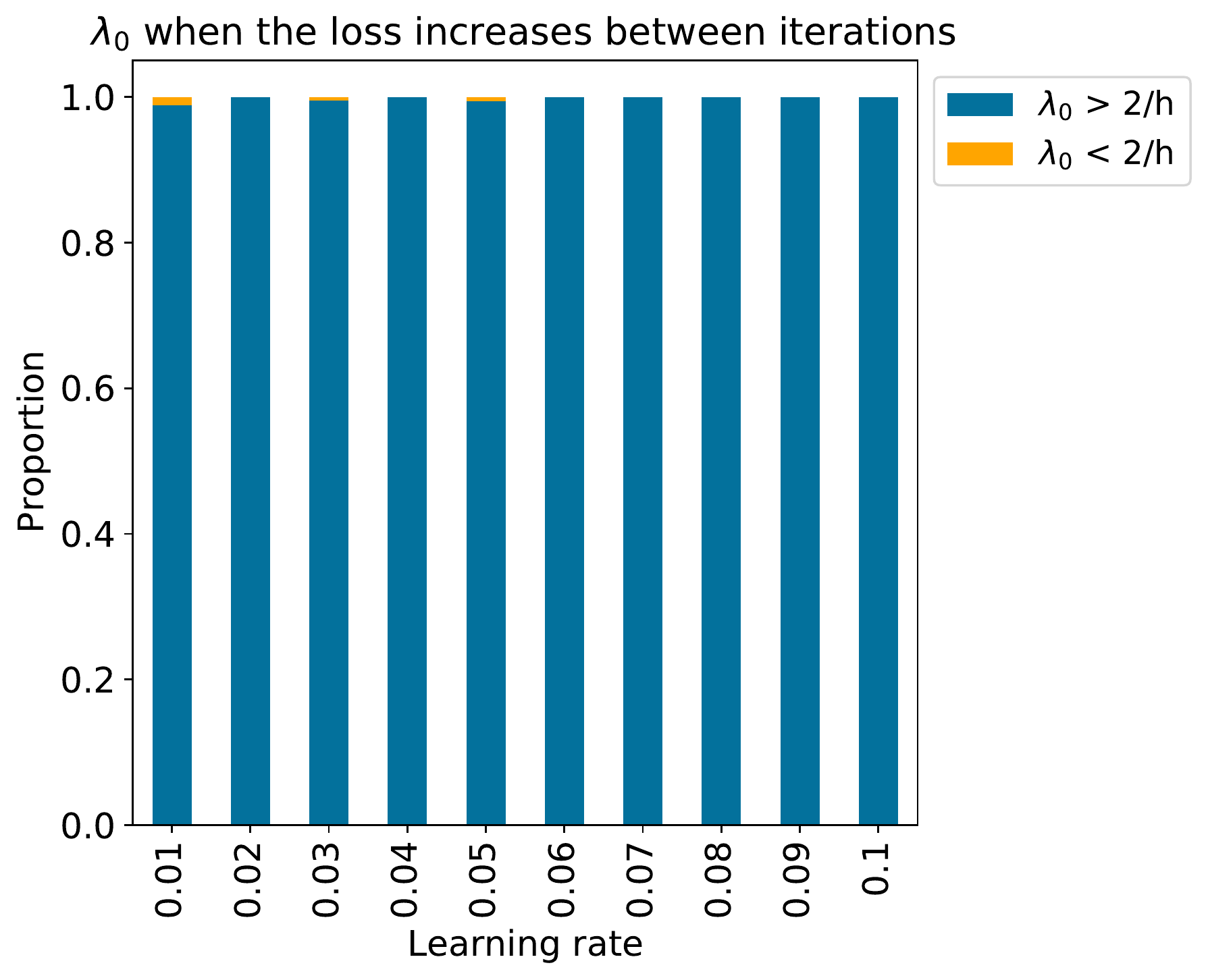}
\caption[Edge of stability on VGG networks on CIFAR-10. Full batch training.]{Edge of stability in neural networks~\citep{cohen2021gradient}: instability occurs when $\lambda_0 > 2/h$. }
\label{fig:reproduce_edge_of_stability}
\end{figure}

\rebuttalrone{\textbf{Continuous-time models of gradient descent at edge of stability.}} \rebuttalrone{To investigate if existing continuous time flows and the PF capture gradient descent behavior at the edge of stability we train a 5 layer MLP on the toy UCI Iris dataset~\citep{asuncion2007uci}; this simple setting allows for the computation of the full eigenspectrum of the Hessian. We show results in Figure~\ref{fig:model_of_gd_at_edge_of_stability}: 
the NGF and IGR flow have a larger error compared to the PF when predicting the parameters at the next gradient descent iteration in the edge of stability regime; the NGF and IGR flow  predict the loss will decrease, while the PF captures the loss increase observed when following gradient descent. 
As we remarked in Section~\ref{sec:motivation}, the NGF and the IGR flow do not capture instabilities when the eigenvalues of the Hessian are positive, which has been remarked to be largely the case for neural network training through empirical studies \citep{sagun2017empirical,ghorbani2019investigation,papyan2018full} and we observe here (Figure~\ref{fig:eigspectrum_small_mlp} in the Appendix).
We spend the rest of the section using the PF to understand and model edge of stability phenomena using a continuous time approach.}
\begin{figure}[tb]
 \includegraphics[width=0.33\columnwidth]{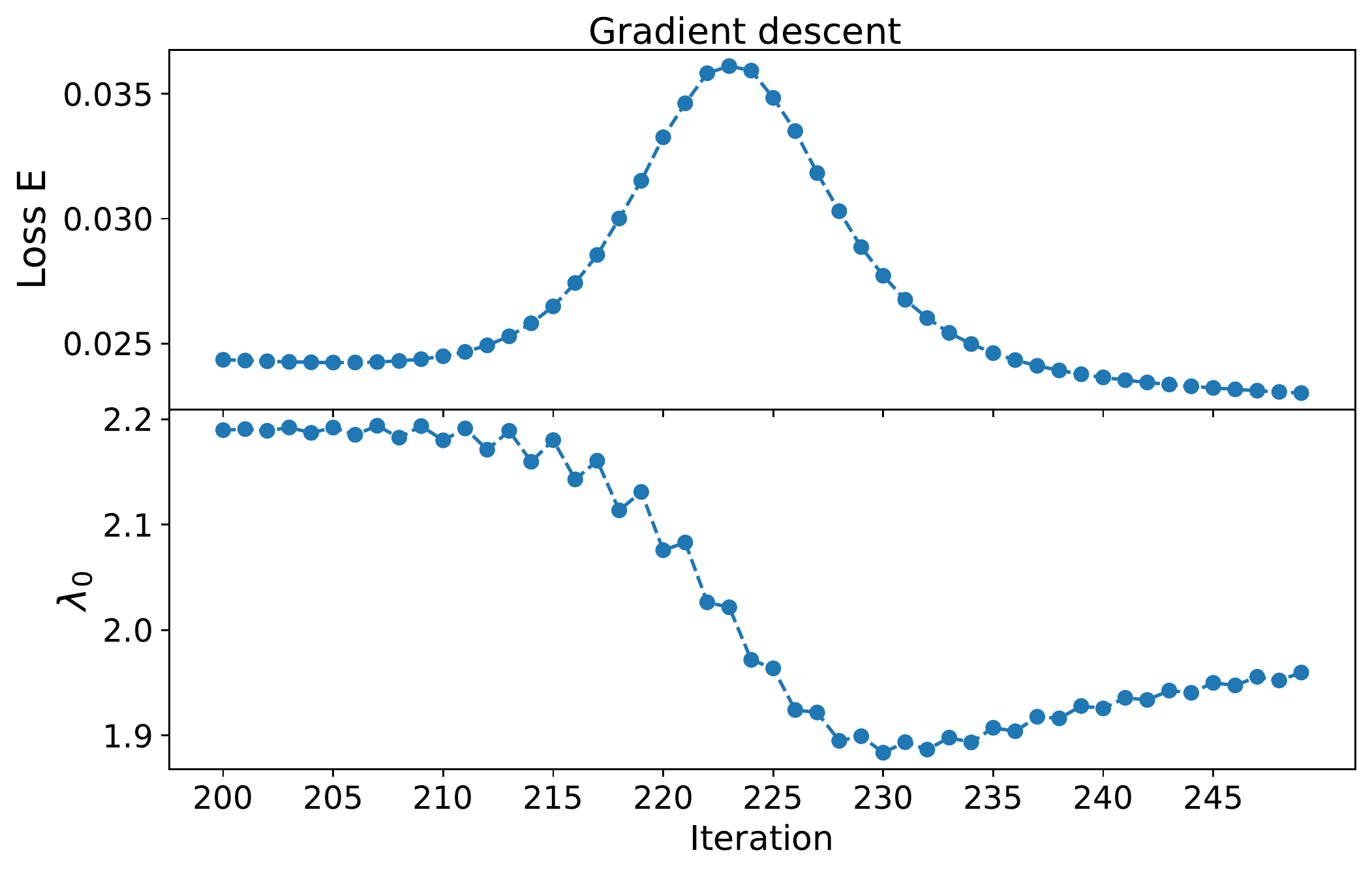}
 \includegraphics[width=0.32\columnwidth]{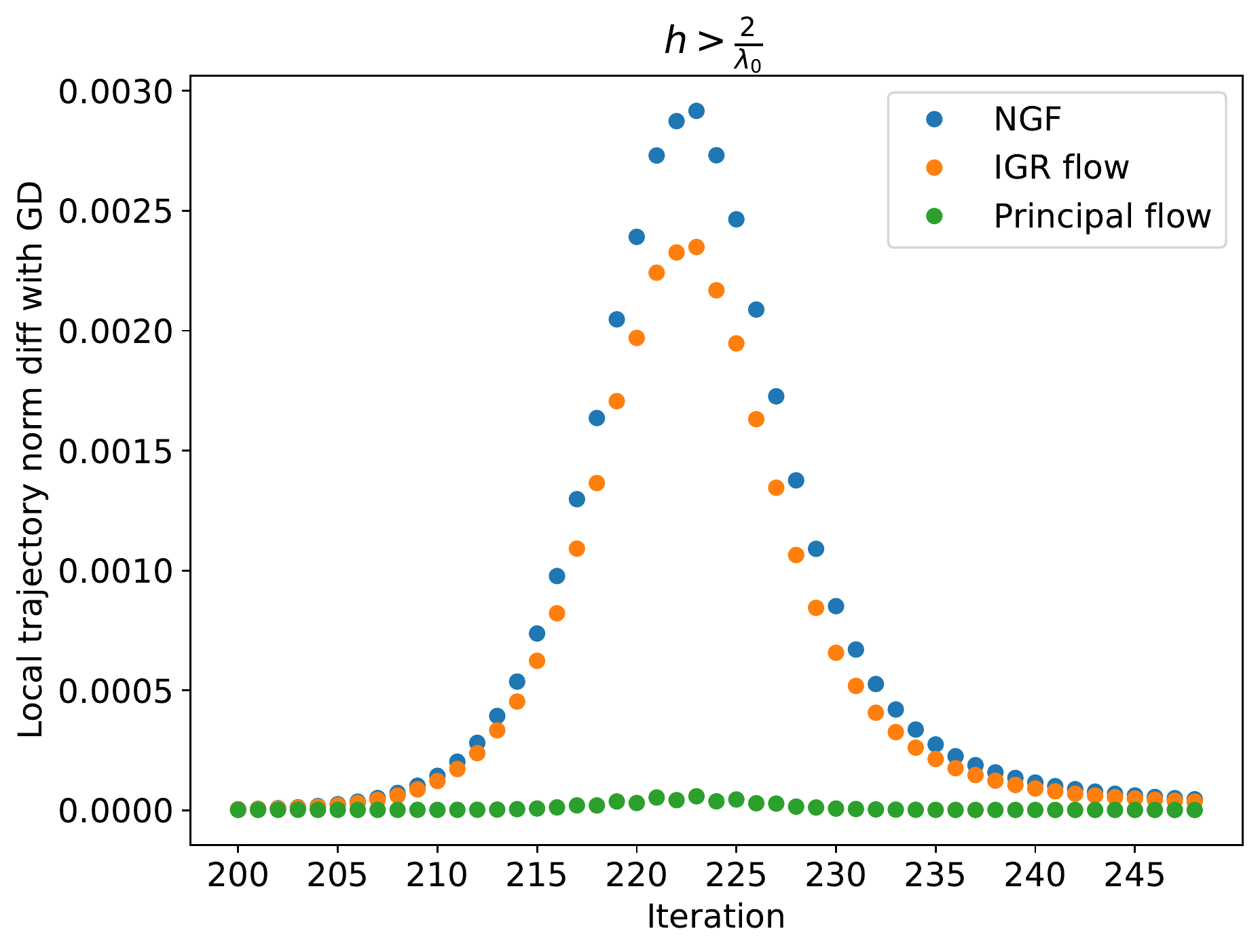}
 \includegraphics[width=0.34\columnwidth]{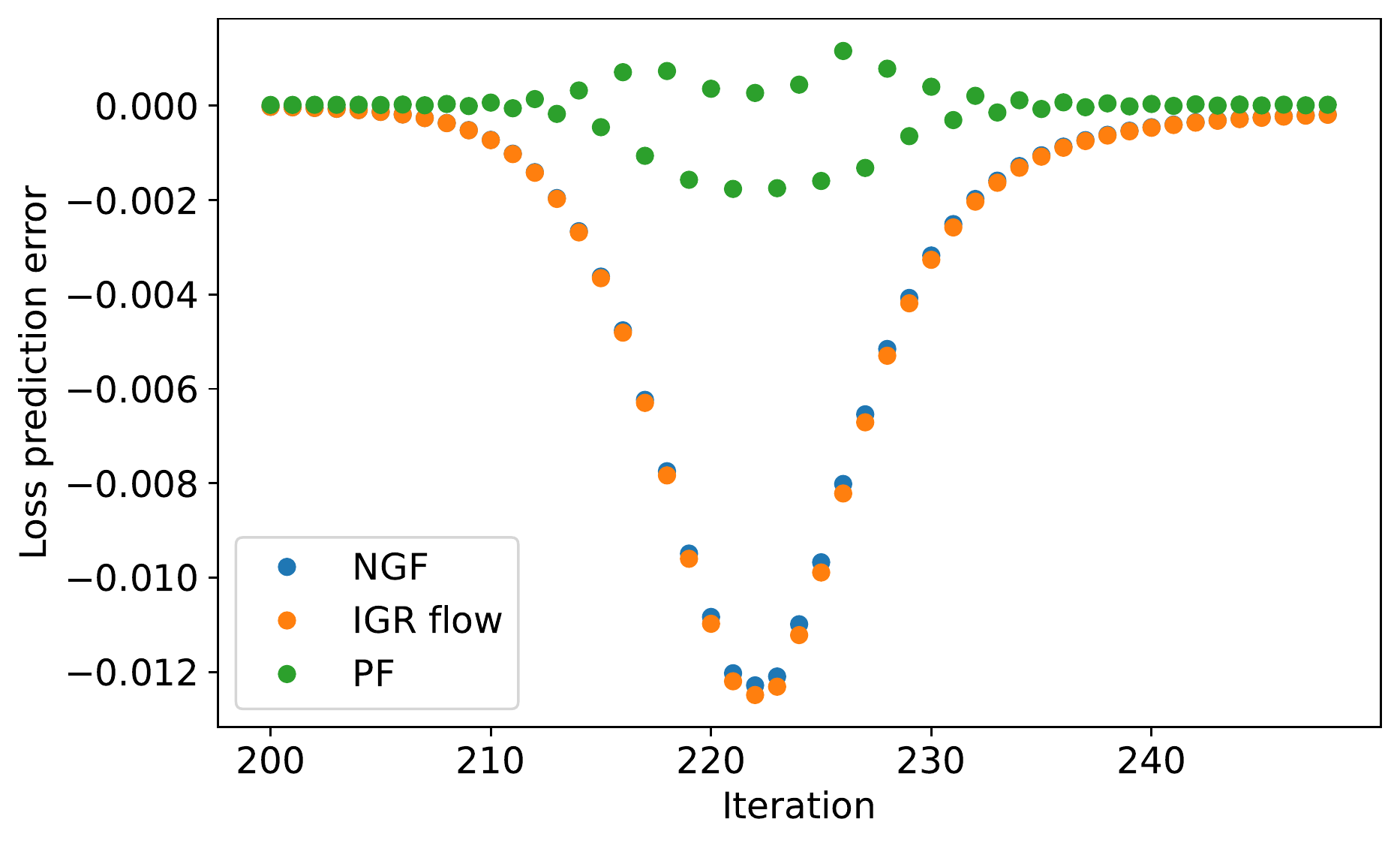}
\caption[Edge of stability on a small 5 layer MLP with 10 units per layer on the UCI Iris dataset. Full batch training. The learning rate used is 0.18.]{Comparing different continuous time models of gradient descent at the edge of stability area on a small 5 layer MLP, with 10 units per layer. We show the local parameter prediction error $\norm{\vtheta_t - \vtheta(h;\vtheta_{t-1})}$ for the NGF, IGR and PF flows (middle), as well as $E(\vtheta(h;\vtheta_{t-1})) - E(\vtheta_t)$ (right).}
\label{fig:model_of_gd_at_edge_of_stability}
\end{figure}

\textbf{Connection with the principal flow: stability coefficients.} The PF captures the key quantities 
observed in the edge of stability phenomenon: the eigenvalues of the Hessian $\lambda_i$ and the threshold $2/h$. These quantities appear in the PF via the stability coefficient $sc_i = \frac{\log(1-h\lambda_i)}{h\lambda_i}\nabla_{\vtheta} E^T \vu_i = \alpha_{PF}(\lambda_i h) \nabla_{\vtheta} E^T \vu_i$ of eigendirection $\vu_i$. Through the PF, by connecting the case analysis in Section~\ref{sec:principal_flow_eigendecom} with existing and new empirical observations, we can shed light on the edge of stability behavior in deep learning. 

\textit{First phase of training (progressive sharpening): $\lambda_0 < 2/h$}. This entails $Re[sc_i] = Re[\alpha_{PF}(h \lambda_i)] \le 0, \forall i$ (Real stable and complex stable cases of the analysis in Section~\ref{sec:principal_flow_eigendecom}). $\sign(\alpha_{NGF}) = \sign(\alpha_{PF}) =-1$ and following the PF minimises $E$ or its real part (Eq~\ref{eq:changes_in_e}). 
To understand the behavior of $\lambda_0$, we now have to make use of empirical observations about the behavior of the NGF early in the training of neural networks.
It has been empirically observed that in early areas of training, $\lambda_0$ increases here when following the NGF~\citep{cohen2021gradient}; we further show this in Figure~\ref{fig:mnist_gradient_flow} in the Appendix. Since in this part of training gradient descent follows closely the NGF, it exhibits similar behavior and $\lambda_0$ increases. We show this case in Figure~\ref{fig:early_training_eigen_loss}. 

\textit{Second phase of training (edge of stability) $\lambda_0 \ge 2/h$}. This entails $Re[sc_0(\vtheta)] = Re[\alpha_{PF}(h \lambda_i)] \ge 0$. (Unstable complex case of the analysis in Section~\ref{sec:principal_flow_eigendecom}). We can no longer say that following the PF minimizes E. $\sign(\alpha_{NGF}(h \lambda_0)) \neq \sign(Re[(\alpha_{PF}(h \lambda_0)])$, since $\alpha_{NGF}(h \lambda_0) = -1$ and $\sign(Re[(\alpha_{PF}(h \lambda_0)])>0$ meaning that in that direction gradient descent resembles the positive gradient flow $\dot{\vtheta} = \nabla_{\vtheta} E$ rather than the NGF. The positive gradient flow component can cause instabilities, and the strength of the instabilities depends on the stability coefficient $sc_0 = \alpha_{PF}(h \lambda_0) \nabla_{\vtheta} E^T \vu_0$. We show in Figures~\ref{fig:early_edge_eigen_loss} and \ref{fig:instabilities_resnet} how the behavior of the loss and $\lambda_0$ are affected by the behavior of the positive gradient flow when $\lambda_0 > 2/h$.

\begin{figure}[tb]
\begin{subfigure}[Early training]{
 \includegraphics[width=0.45\columnwidth]{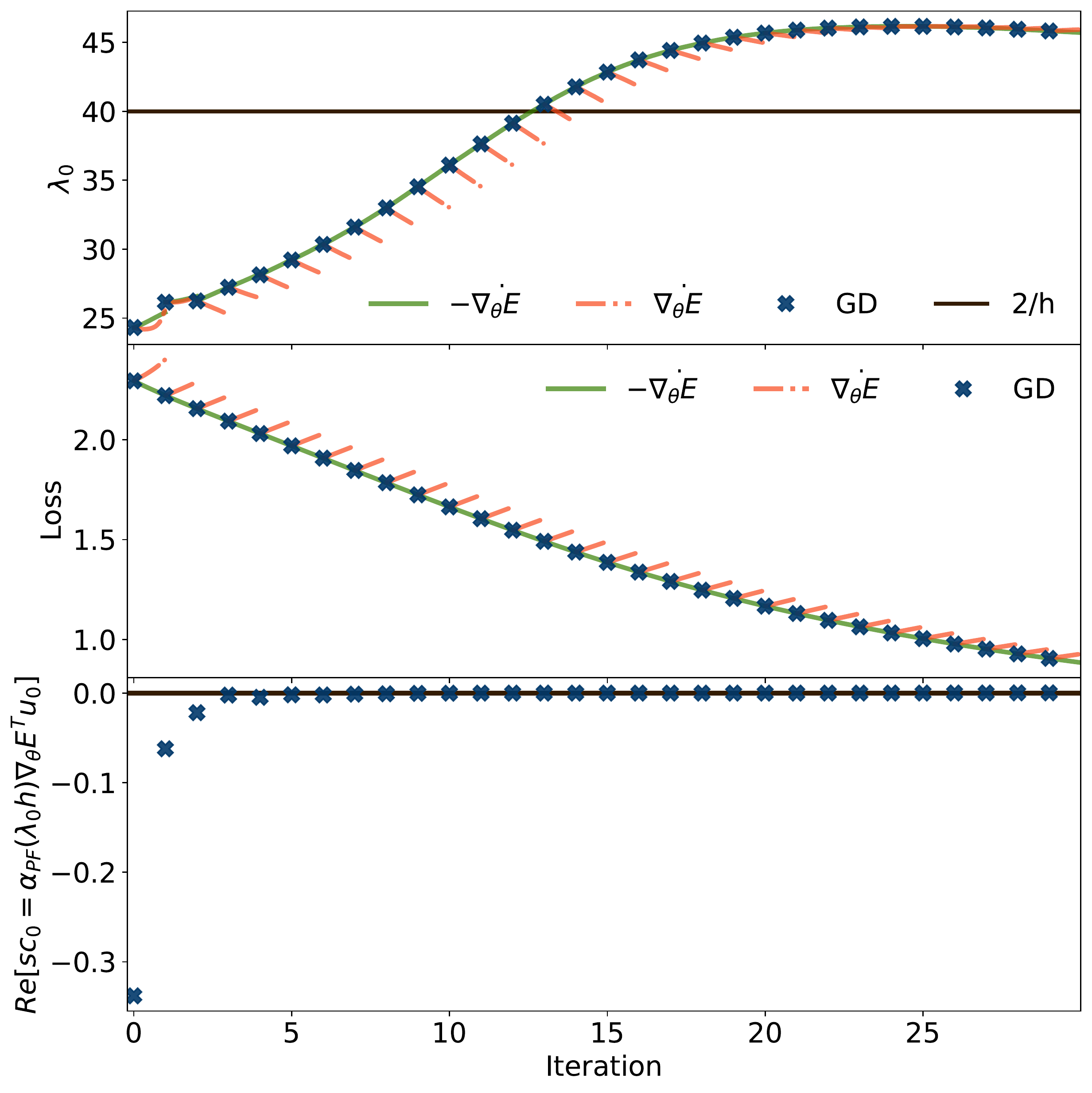}
 \label{fig:early_training_eigen_loss}
 }\end{subfigure}
\begin{subfigure}[Edge of stability.]{
 \includegraphics[width=0.45\columnwidth]{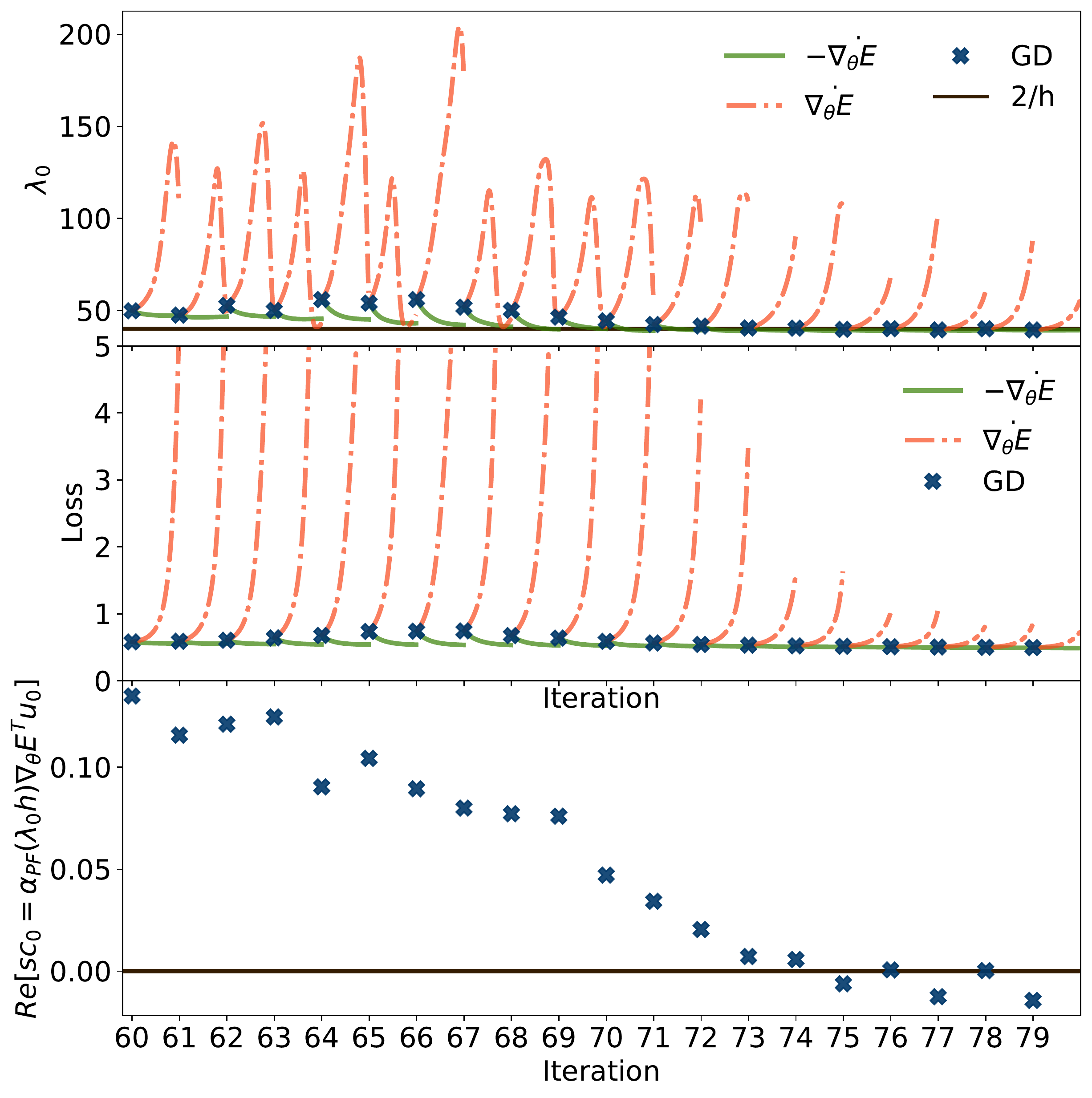}
 \label{fig:early_edge_eigen_loss}
}\end{subfigure}
 \caption[Edge of stability results. MNIST MLP with 4 layers. Learning rate $0.05$. For each model we approximate the NGF and the positive gradient flow initialized at the current gradient descent parameters and measure the value od the loss function and $\lambda_0$.]{Understanding the edge of stability results using the PF on a 4 layer MLP: we plot the behavior of the NGF $\dot{\vtheta} = - \nabla_{\vtheta} E$ and the positive gradient flow $\dot{\vtheta} = \nabla_{\vtheta} E$ initialized at each gradient descent iteration parameters, and see that the behavior of gradient descent is connected to the behavior of the respective flow through the stability coefficient. \rebuttalrone{\textit{Figure~\ref{fig:early_training_eigen_loss} shows that even when $\lambda_0 > 2/h$, if the real part of the stability coefficient $sc_0$ is negative or close to 0, there are no instabilities in the loss and the eigenvalue $\lambda_0$ keeps increasing, as it does when following the NGF in that region.}}}
\label{fig:edge_of_stability_results_local}
\end{figure}

\textbf{More than $\lambda_0$: the importance of stability coefficients}. 
While the sign of the real part of the stability coefficient $sc_0$ is determined by $\lambda_0$, its magnitude is modulated by the dot product $\nabla_{\vtheta} E^T \vu_0$, since $sc_0 = \alpha_{PF}(h \lambda_0) \nabla_{\vtheta} E^T \vu_0$.
The magnitude of $\nabla_{\vtheta} E^T \vu_0$ plays an important role, 
since if $\lambda_0$ is the only eigenvalue greater than $2/h$ training is stable if $\nabla_{\vtheta}E ^T \vu_0 = 0$, as we observe in Figure~\ref{fig:edge_of_stability_results_local}. 
\textit{To understand instabilities, we have to look at stability coefficients, not only eigenvalues.}
 We show in Figure~\ref{fig:instabilities_short} how the instabilities in training can be related with the stability coefficient $sc_0$: the increases in loss occur when the corresponding $Re[sc_0]$ is positive and large. 
 In Figure~\ref{fig:instabilities_resnet} we show results with the behavior of $\lambda_0$: $\lambda_0$ increases or decreases based on the behavior of the corresponding flow and the strength of the stability coefficient and that gets reflected in instabilities in the loss function;
 specifically when $\lambda_0 > 2/h$, we use the positive gradient flow and see how the strength of its fluctuations affect the changes both in the loss value and $\lambda_0$ of gradient descent.
 We show additional results in Figures~\ref{fig:edge_of_stability_results_lambda} and~\ref{fig:edge_of_stability_results_cifar_resnet} in the Appendix.

\begin{figure}[tb]
\centering
 \includegraphics[width=0.45\columnwidth]{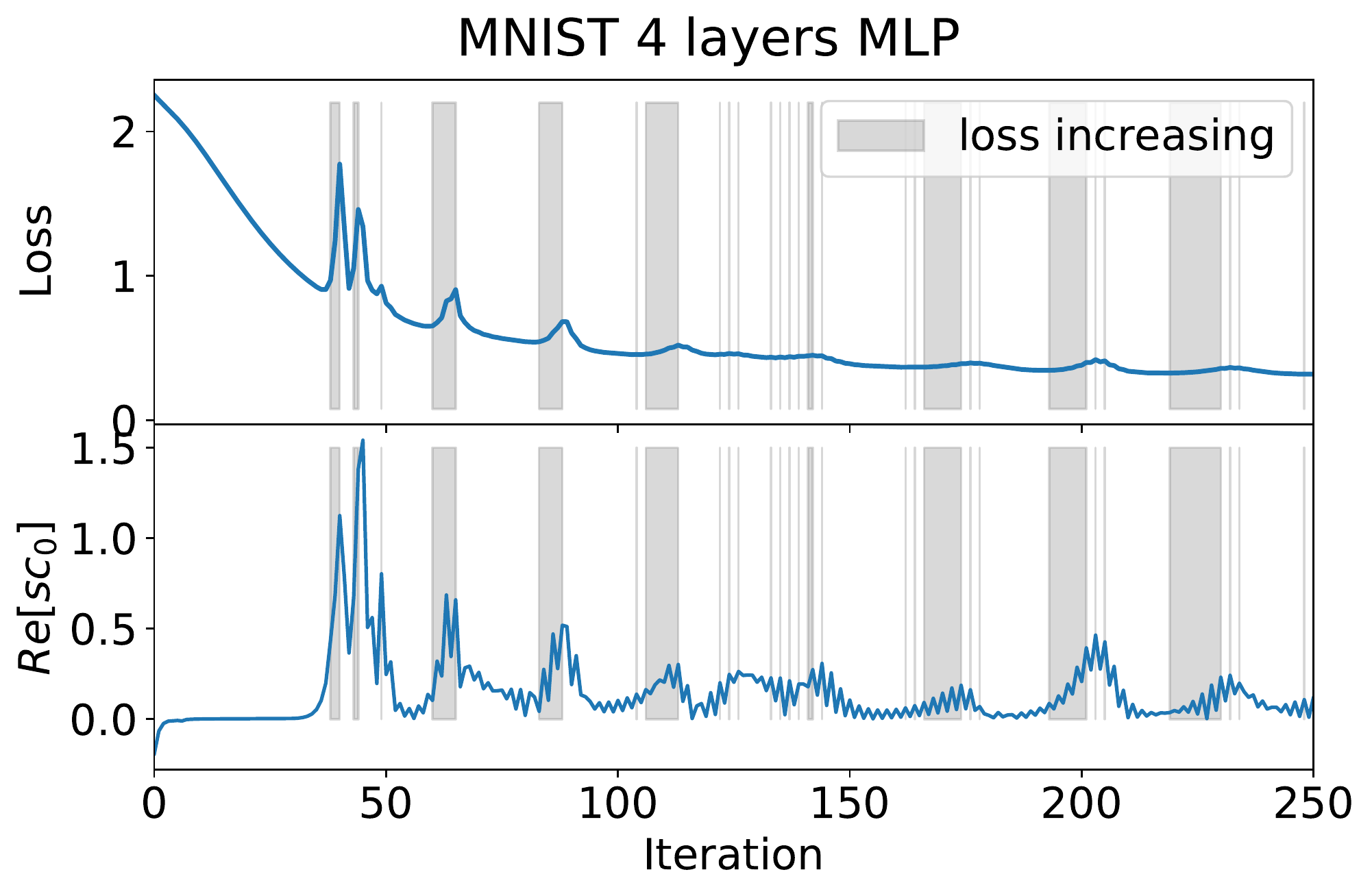}
 \hspace{2em}
 \includegraphics[width=0.45\columnwidth]{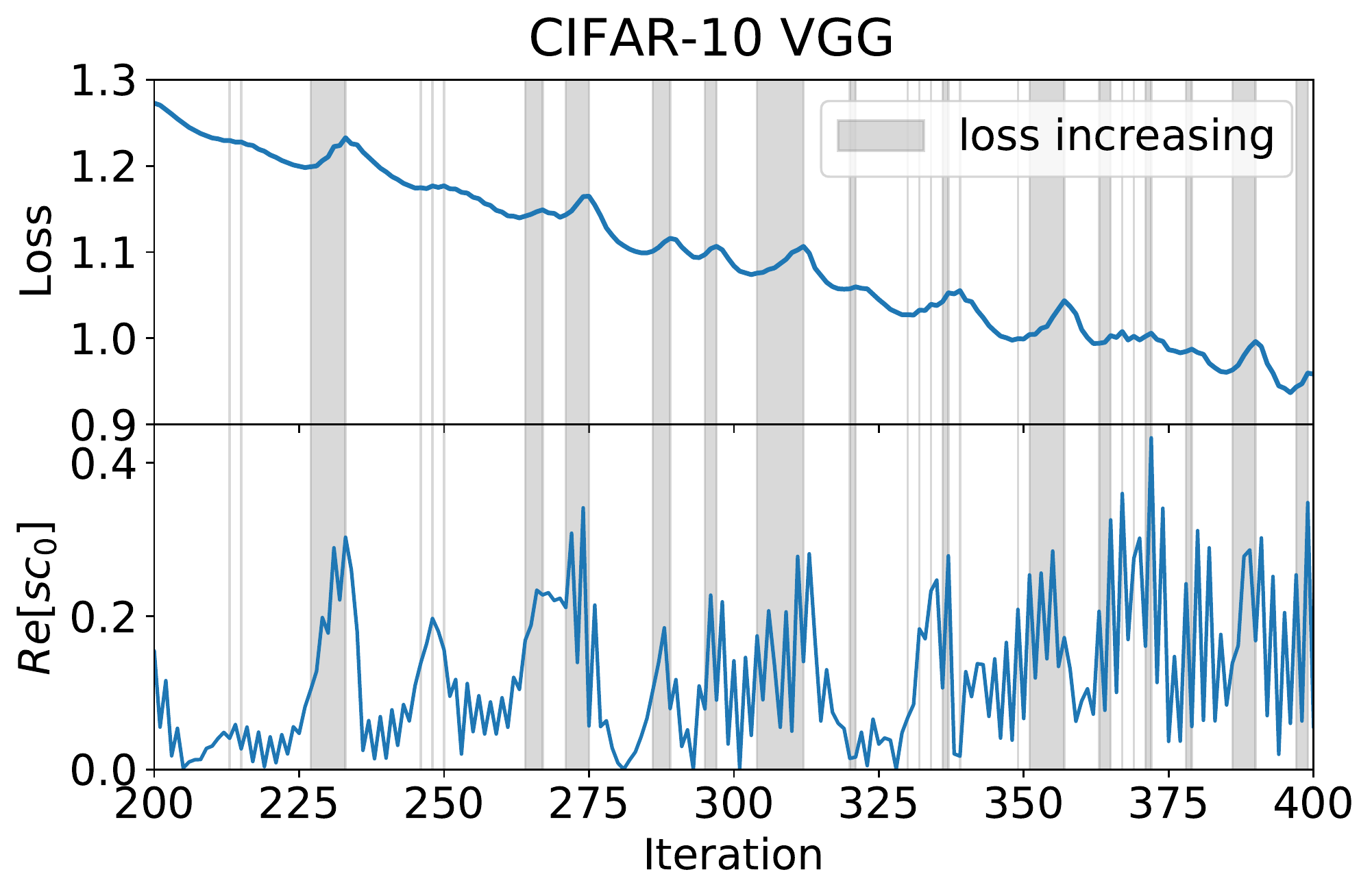}
\caption[Using the PF and stability coefficient to understand instabilities in deep learning. MNIST results use a 4 layer MLP and a $0.05$ learning rate. CIFAR-10 MLP use a VGG network and a $0.02$ learning rate.]{\rebuttalrone{The loss function and stability coefficients}: areas where the loss increases correspond to areas where the $sc_0$ is large. The highlighted areas correspond to regions where the loss increases.}
\label{fig:instabilities_short}
\end{figure}

\begin{figure}[tb]
\centering
 \includegraphics[width=0.9\columnwidth]{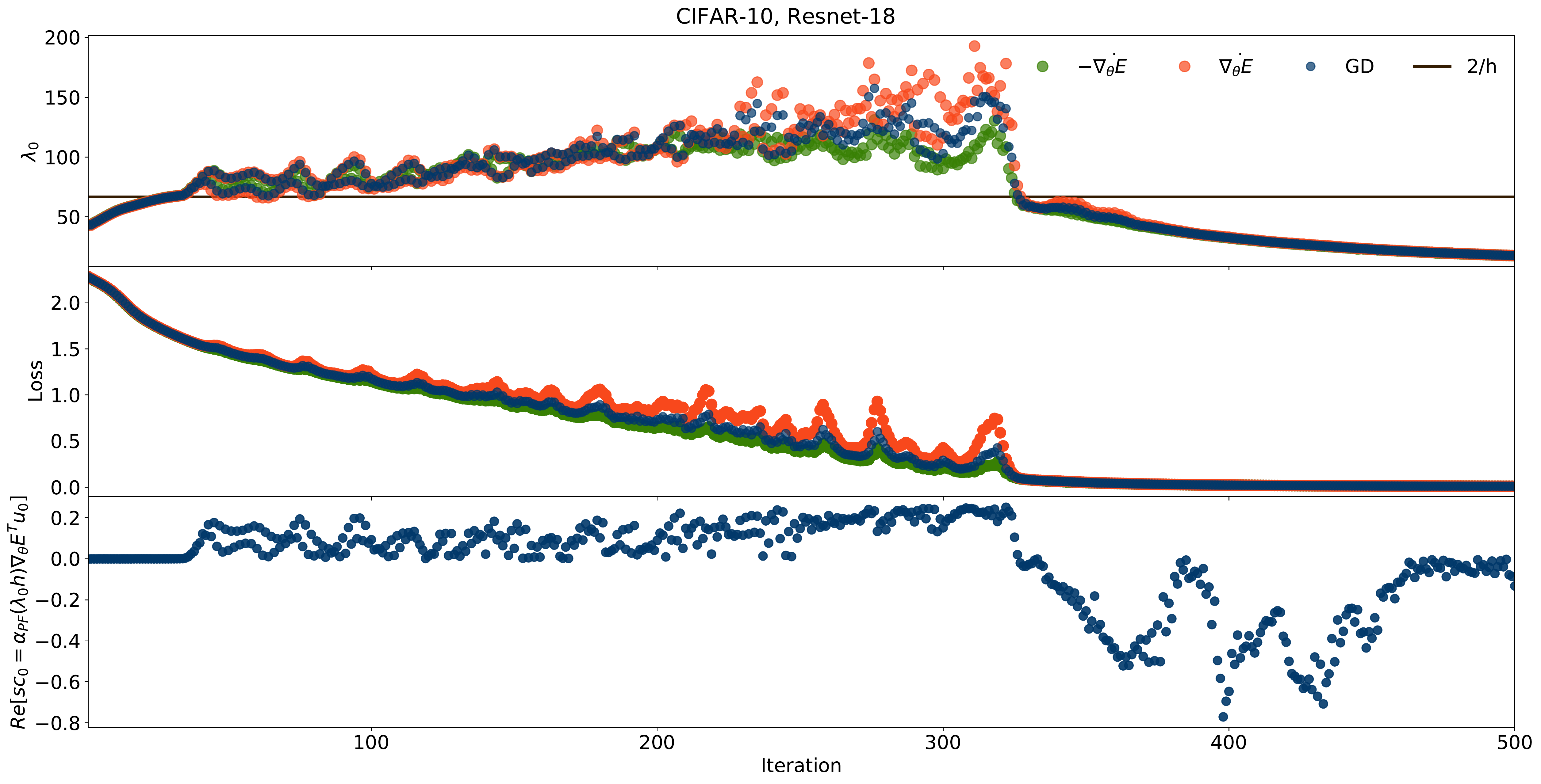}
\caption[Understanding the behavior of the loss through the PF and the behavior of $\lambda_0$. Using a Resnet-18 trained on CIFAR-10 using a learning rate of $0.03$. For each model we approximate the NGF and the positive gradient flow initialized at the current gradient descent parameters and measure the value of the loss function and $\lambda_0$.]{\rebuttalrone{Loss instabilities, $\lambda_0$ and stabilitiy coefficients for CIFAR-10.} Together with the behavior of gradient descent, we plot the behavior of the NGF and positive gradient flow initialized at $\vtheta_t$ and simulated for time $h$ for each iteration $t$. \rebuttalrone{The analysis we performed based on the PF suggests that when $Re[sc_0] > 0$ and large we should expected gradient descent to exhibit behaviors close to those of the positive gradient flow. What we observe empirically is that increases in loss value of gradient descent are proportional to the increase of the positive gradient flow in that area (can be seen best between iterations 200 and 350); the same behavior can be seen in relation to the eigenvalue $\lambda_0$. }}
\label{fig:instabilities_resnet}
\end{figure}

\textbf{Is one eigendirection enough to cause instability?} 
One question that arises from the PF is whether the leading eigendirection $\vu_0$ can be sufficient to cause instabilities, especially in the context of deep networks with millions of parameters. To assess this we train a model with gradient descent until it reaches the edge of stability ($\lambda_0 \approx 2/h$), after which we simulate the continuous flow $\dot{\vtheta} = \nabla_{\vtheta}E^T \vu_0 \vu_0 + \sum_{i=1}^{D-1} -\nabla_{\vtheta}E^T \vu_i \vu_i$. 
The coefficients of the modified vector field of this flow are negative for all eigendirections except from $\vu_0$, which is positive; this is also the case for the PF when $\lambda_0$ is the only eigenvalue greater than $2/h$.
In Figure~\ref{fig:instabilities_change_learning_rate_loss} we empirically show that a positive coefficient for $\vu_0$ can be responsible for an increase in loss value and a significant change in $\lambda_0$ in neural network training.

\begin{figure}[tb]
\centering
 \includegraphics[width=0.4\columnwidth]{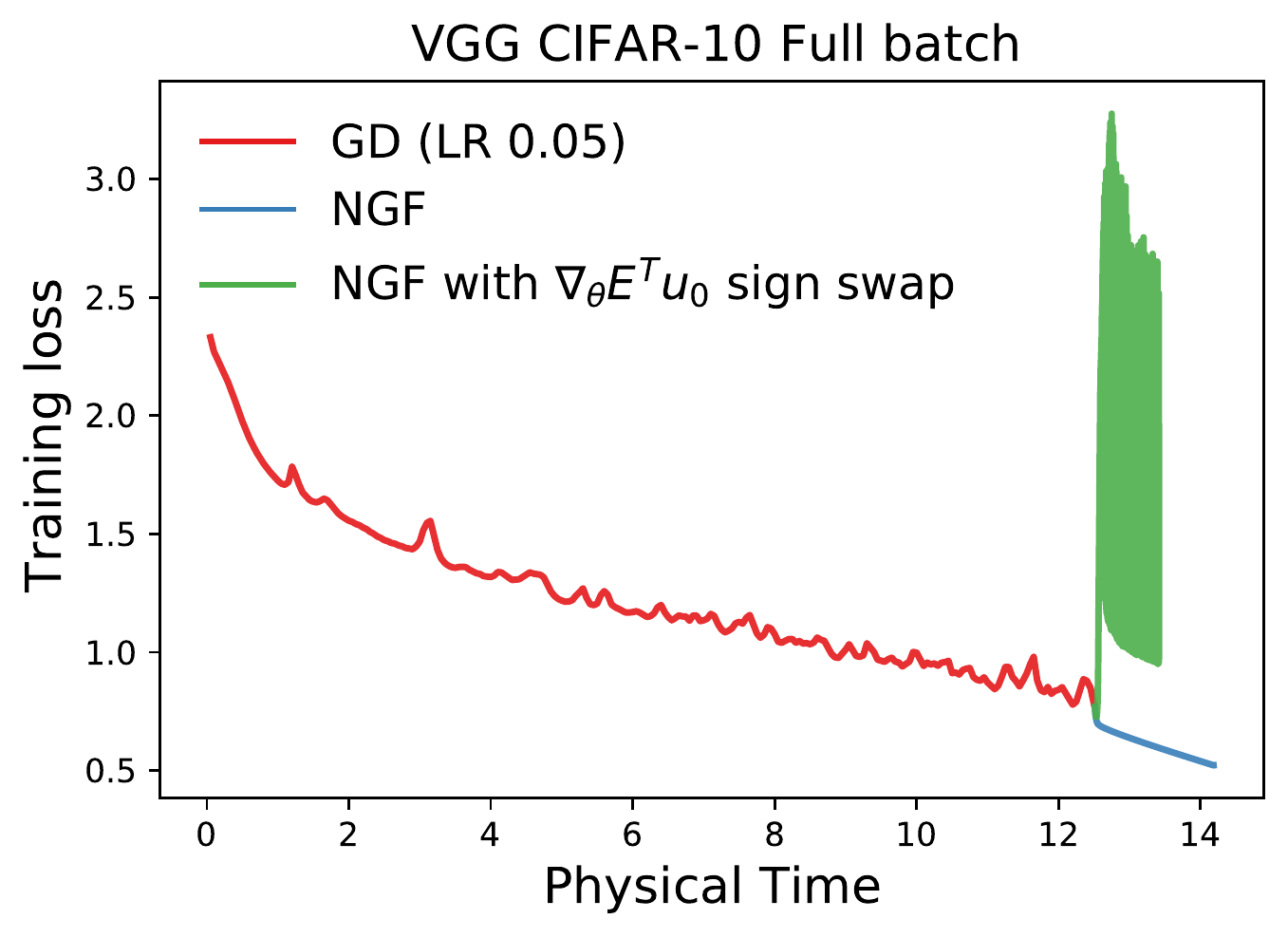}
 \hspace{3em}
 \includegraphics[width=0.4\columnwidth]{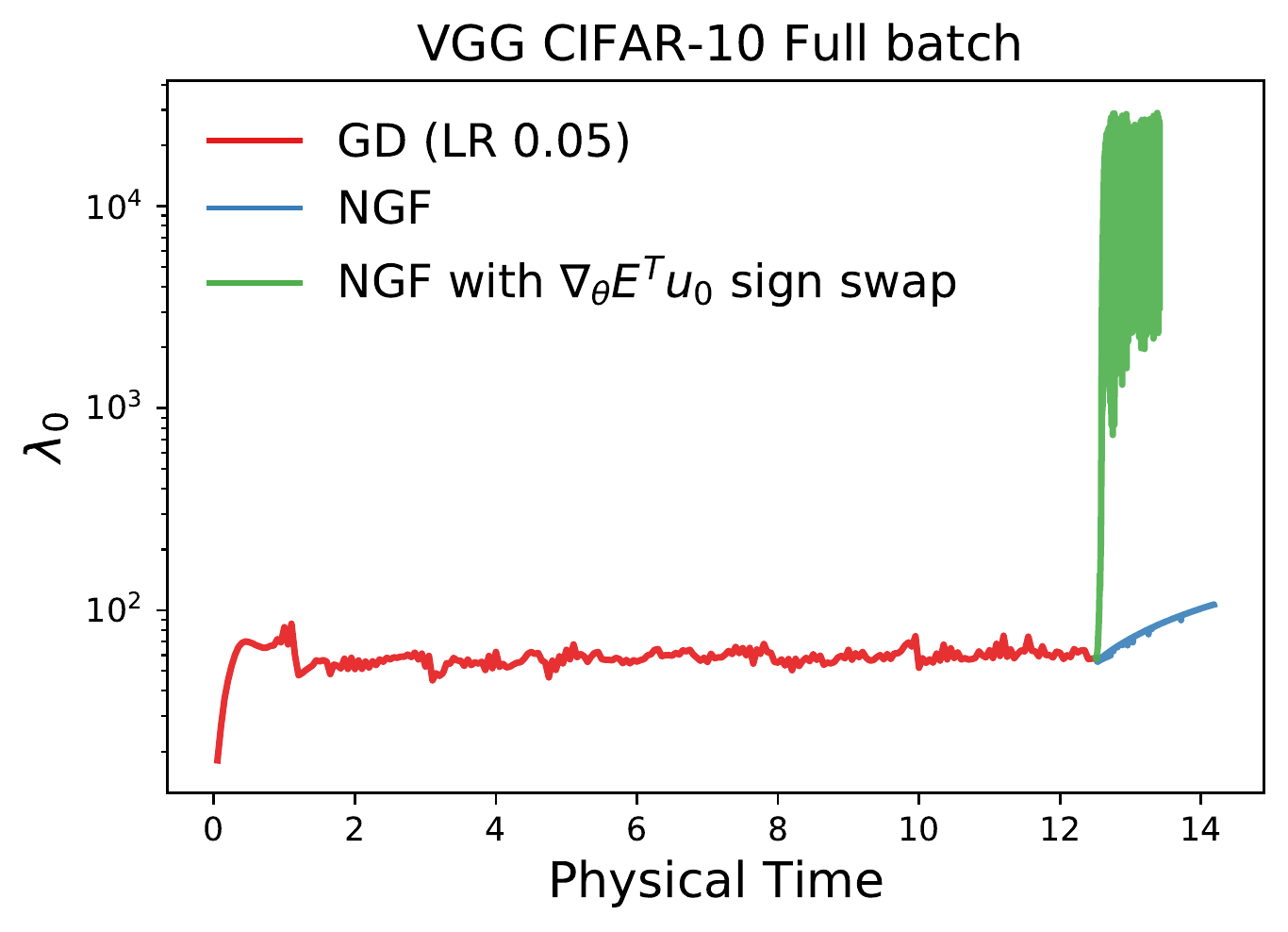}
\caption[Assessing whether 1 dimension is enough to cause instability in a continuous time flow. We train a model with gradient descent until it reaches the edge of stability ($\lambda_0 \approx 2/h$), after which we use a approximate the continuous flow $\dot{\vtheta} = \nabla_{\vtheta}E^T \vu_0 \vu_0 + \sum_{i=1}^{D-1} -\nabla_{\vtheta}E^T \vu_i \vu_i$. The model is a VGG model trained on CIFAR-10 with learning rate $0.05$ and after the edge of stability is reached the flow is approximated using Euler steps of size $5e-5$.]{One eigendirection is sufficient to lead to instabilities. To create a situation similar to that of the PF, we construct a flow given by the NGF in all eigendirections but $\vu_0$; in the direction of $\vu_0$, we change the sign of the flow. This leads to the flow $\dot{\vtheta} = \nabla_{\vtheta}E^T \vu_0 \vu_0 + \sum_{i=1}^{D-1} -\nabla_{\vtheta}E^T \vu_i \vu_i$. We show this flow can be very unstable when initialised in an edge of stability area.}
\label{fig:instabilities_change_learning_rate_loss}
\end{figure}

\textbf{Decreasing the learning rate.} \citet{cohen2021gradient} show that if the edge of stability behavior is reached and the learning rate is decreased, the training stabilizes and $\lambda_0$ keeps increasing (Figure~\ref{fig:instabilities_change_learning_rate_discrete} in the Appendix). The PF tells us that decreasing the learning rate entails going from $Re[sc_0] \ge 0$  to $Re[sc_0] \le 0$ since $\lambda_0 <2/h$ after the learning rate change. Since all stability coefficients are now negative, this reduces instability. The increase in $\lambda_0$ is likely due to the behavior of the NGF in that area (as can be seen in Figure~\ref{fig:instabilities_change_learning_rate_loss} when changing from gradient descent training to the NGF in an edge of stability area leads to an increase of $\lambda_0$).

\textbf{The behavior of $\nabla_{\vtheta}E^T \vu_0$.} The PF also allows us to explain the unstable behavior of $\nabla_{\vtheta}E^T \vu_0$ around edge of stability areas. 
As done in Section~\ref{sec:nns_principal_flow}, we assume that $\lambda_i, \vu_i$ do not change substantially between iterations and write $\dot{\nabla_{\vtheta}E^T \vu_i} = \frac{\log(1 - h \lambda_i)}{h} \nabla_{\vtheta} E^T \vu_i$ under the PF, with solution $(\nabla_{\vtheta}E^T \vu_i)(t) = (\nabla_{\vtheta}E^T \vu_i)(0) e^{\frac{\log(1 - h \lambda_i)}{h}t}$. 
This solution has different behavior depending on the value of $\lambda_0$ relative to $2/h$: decreasing below $2/h$ and increasing above $2/h$. 
We show this theoretically predicted behavior in Figure~\ref{fig:instability_largest_dot_prod}, 
alongside empirical behavior showcasing the fluctuation of $\nabla_{\vtheta}E^T \vu_0$ in the edge of stability area, which confirms the theoretical prediction. We also compute the prediction error of the proposed flow and show it can capture the dynamics of $\nabla_{\vtheta}E^T \vu_0$ closely in this setting. We present a discrete time argument for this observation in Section~\ref{sec:changes_in_dot_prod_discrete}. \rebuttalrthree{We note that the stable behavior early in training together with the oscillatory behavior of $\nabla_{\vtheta}E^T \vu_0$ in the edge of stability area which we predict and observe can explain the results of~\citet{cohen2021gradient} on the behavior of $\vtheta^T \vu_0$, since $\vtheta$ accumulates changes given by gradient updates.}

\begin{figure}[thb]
\centering
 \includegraphics[width=0.3\columnwidth]{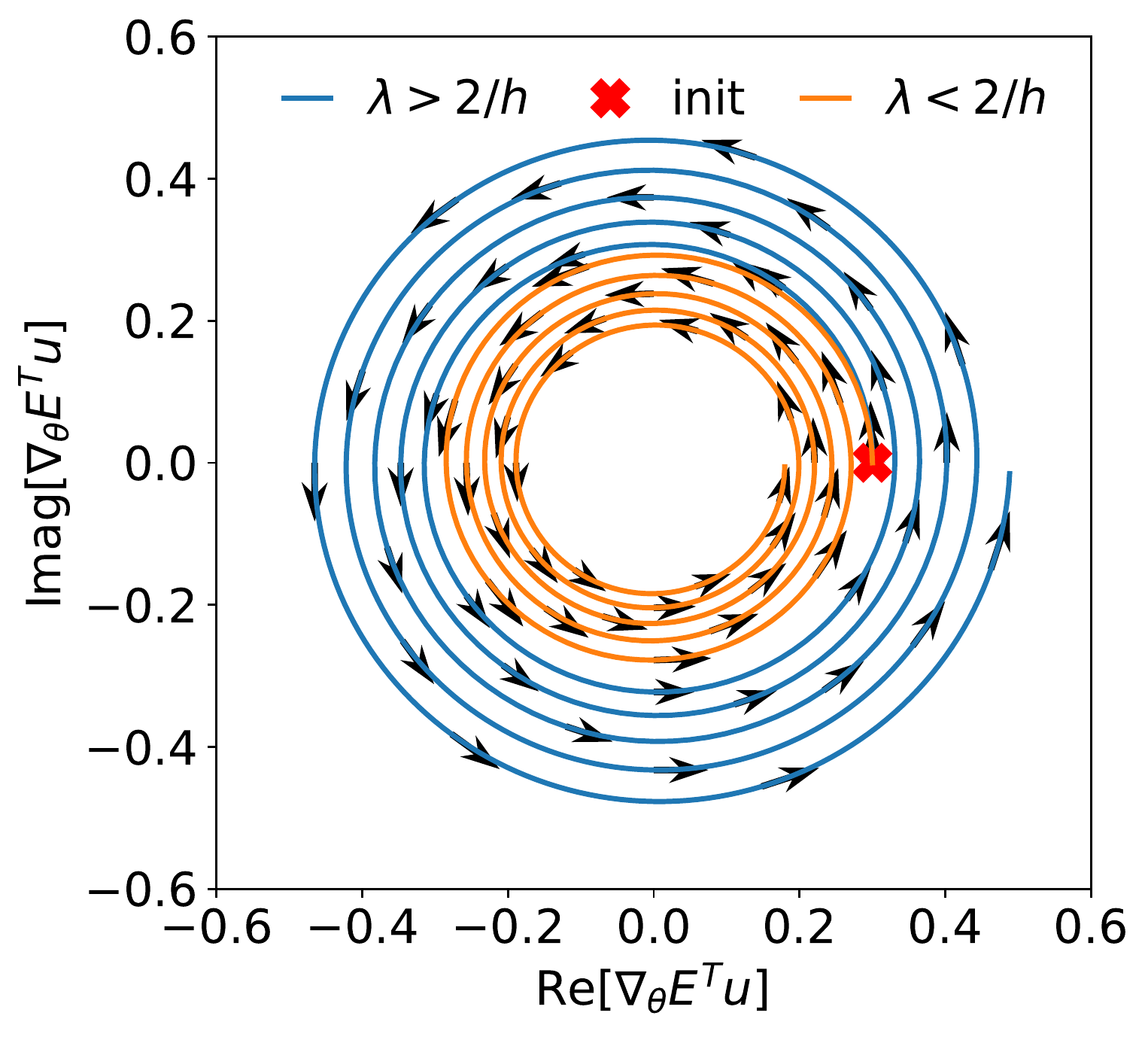}
  \includegraphics[width=0.3\columnwidth]{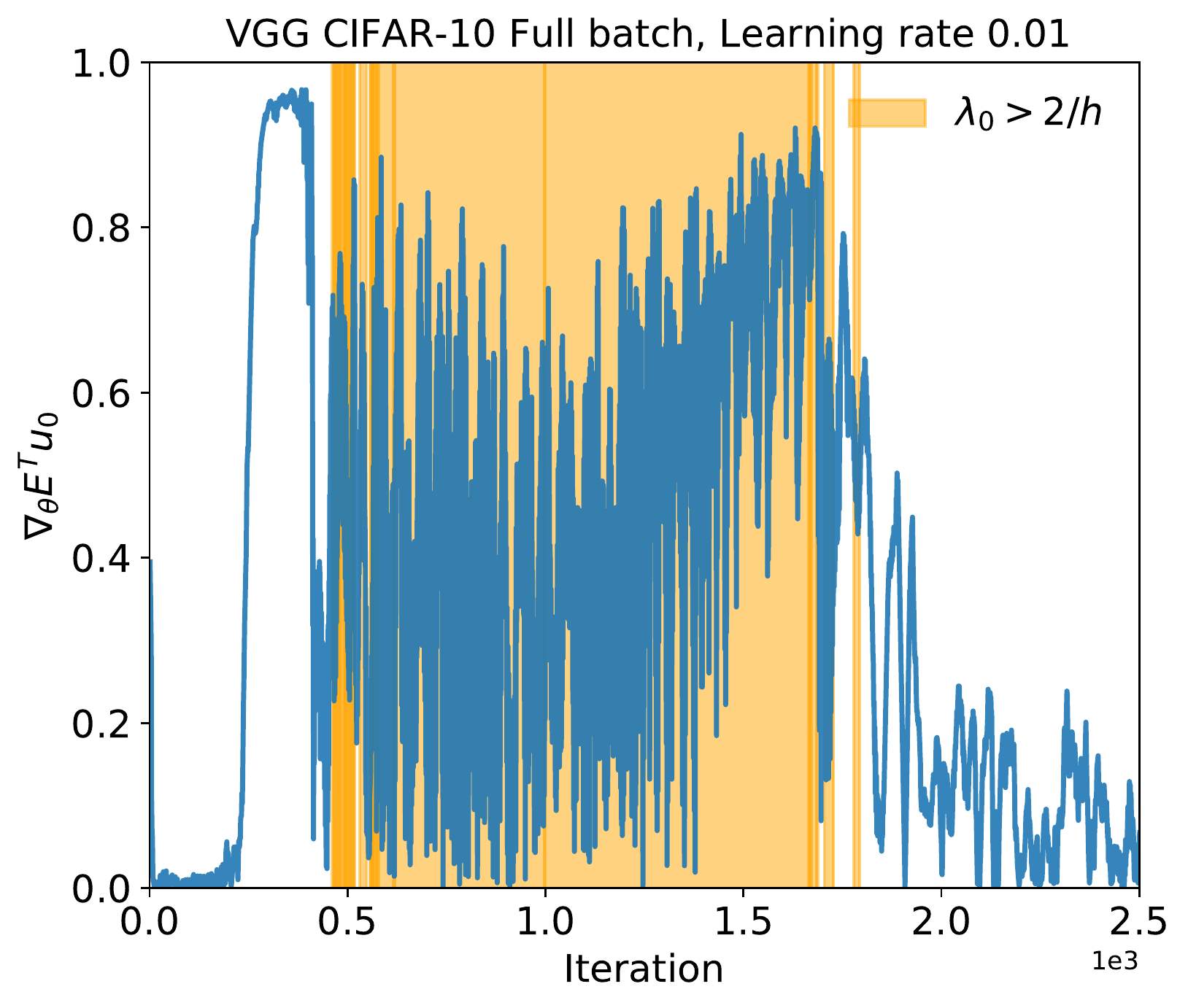}
  \includegraphics[width=0.3\columnwidth]{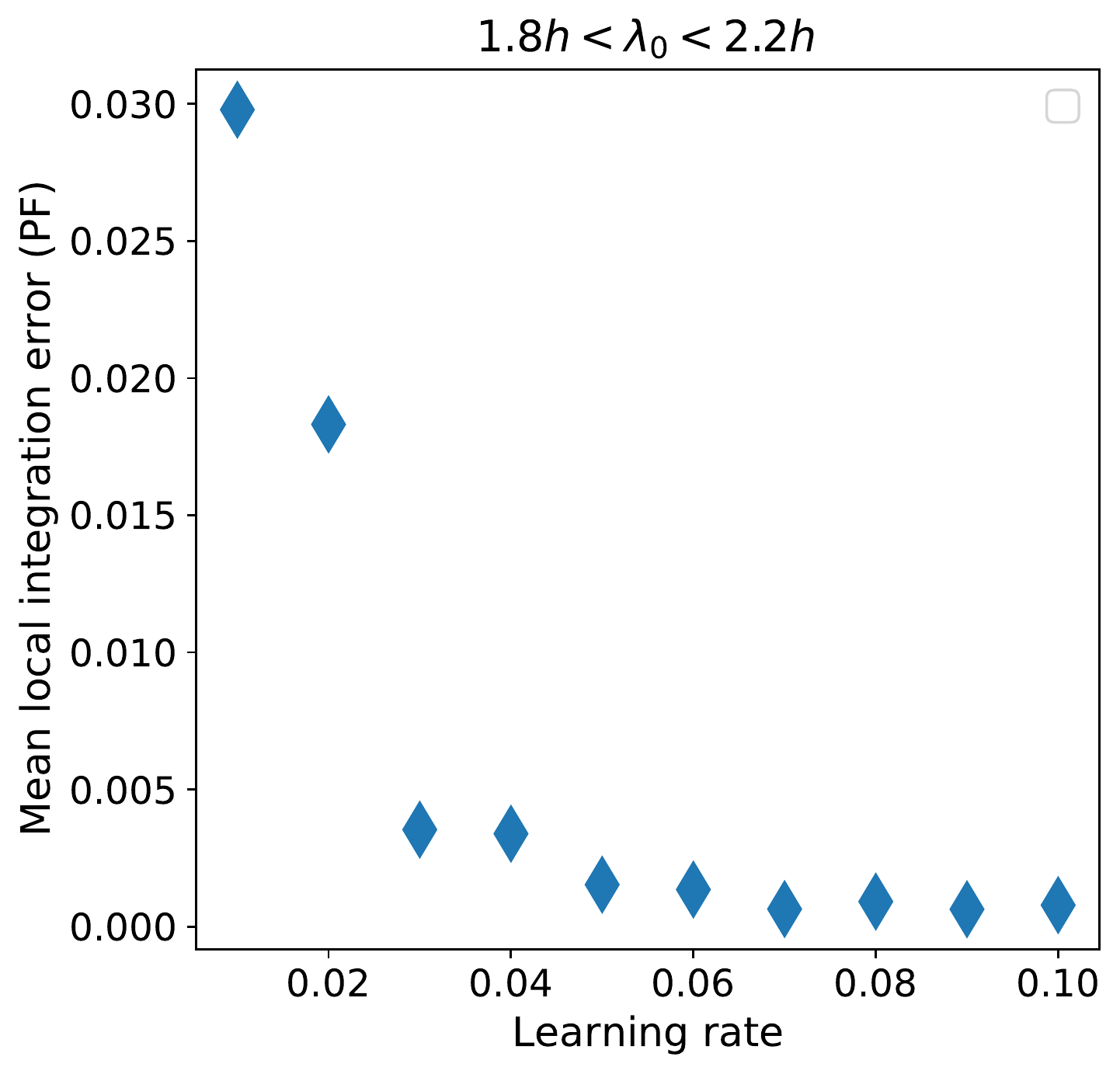}
\caption[The unstable dynamics of $\nabla_{\vtheta}E^T u$ in the edge of stability area. The model results are obtained from a VGG model trained on CIFAR-10 with a learning rate of $0.01$.]{\rebuttalrone{Predicting the unstable dynamics of $\nabla_{\vtheta}E^T u$  in the edge of stability area ($\lambda \approx 2/h)$ using the PF}. Left: the predicted behavior of $\nabla_{\vtheta}E^T u$ under $\dot{\nabla_{\vtheta}E^T \vu_i} = \frac{\log(1 - h \lambda_i)}{h} \nabla_{\vtheta} E^T \vu_i$, with an inflection point at $\lambda = 2/h$. Middle: empirical behavior of $\nabla_{\vtheta}E^T u$ for a model shows instabilities in the edge of stability area (highlighted). Right: the approximation made to derive the flow is suitable around $\lambda \approx 2/h$.}
\label{fig:instability_largest_dot_prod}
\end{figure}

\rebuttalrthree{\textbf{Why not more instability?} To determine why there isn't more instability in the edge of stability area we have to consider that neural networks are not quadratic, which has two effects. Firstly, when following the PF the landscape changes slightly locally; this leads to changes in stability coefficients and thus the behavior of gradient descent as we have consistently seen in the experiments in this section. Secondly, non-principal  terms can have an effect; while we do not know all non-principal terms in Section~\ref{sec:non_principal_ap} in the Appendix we provide a justification for why the non-principal term we do know (Eq~\ref{eq:pf_with_non_principal}) can have a stabilizing effect by inducing a regularisation pressure to minimise $\lambda_i (\nabla_{\vtheta} E^T \vu_i)^2$ in certain parts of the training landscape.}

In this section we have shown the PF closely  predicts the behavior of gradient descent in neural network training. \rebuttalrone{This has led to additional insights, including the importance of stability coefficients in determining instabilities in gradient descent (Figures~\ref{fig:edge_of_stability_results_local}, \ref{fig:instabilities_short}, \ref{fig:instabilities_resnet}), causally showing one eigendirection is sufficient to cause instability (Figure~\ref{fig:instabilities_change_learning_rate_loss}) and change and being able to closely predict the behavior of the dot product between the gradient and the largest eigenvector (Figure~\ref{fig:instability_largest_dot_prod}). This evidence suggests that the PF captures significant aspects of the behavior of gradient descent in deep learning; this is likely due to the specific structure of neural network models.}
While we take a continuous time approach, a discrete time approach can be used to motivate some of our observations (Section \ref{sec:discrete}); this is complementary to our approach but nonetheless related, since it also does not account for higher order derivatives of the loss and further suggests the strength of a quadratic approximation of the loss in the case of neural networks, as observed by \citet{cohen2021gradient}.

\section{Stabilizing training by adjusting discretization drift}
\label{sec:stabilising_training}

The PF allows us to understand not only how gradient descent differs from the trajectory given by the NGF, but also when they follow each other very closely. Understanding when gradient descent behaves like the NGF flow \rebuttalrone{reveals when the existing analyses of gradient descent using the NGF discussed in Section \ref{sec:motivation} are valid}. \rebuttalrone{It also has practical implications, since in areas where gradient descent follows the NGF closely} training can be sped up by increasing the learning rate. Prior works have empirically observed that gradient descent follows the NGF early in neural network training~\citep{cohen2021gradient} and this observation can be used to explain why decaying learning rates ~\citep{cosine_lr_decay} or learning rate warm up \citep{he2019bag} are successful when training neural networks: having a high learning rate in areas where the drift is small will not cause instabilities and can speed up training while decaying the learning rate avoids instabilities later in training when the drift is larger. 

\subsection{$\nabla_{\vtheta}^2 E \nabla_{\vtheta} E$ determines discretization drift}

\label{sec:h_g}

In previous sections we have seen that the Hessian plays an important role in defining the PF \rebuttalrone{ and in training instabilities. We now want to quantify the difference between the NGF and the PF in order to understand when the NGF can be used as a model of gradient descent. We find that:} 
\begin{remark} In a region of the space where $\nabla_{\vtheta}^2 E \nabla_{\vtheta} E= \mathbf{0}$ the PF is the same as the NGF.
\end{remark}

 To see why, we can expand
\begin{align}
\nabla_{\vtheta}^2 E \nabla_{\vtheta} E= \sum_{i=0}^{D-1} \lambda_i \nabla_{\vtheta} E ^T \vu_i \vu_i.
\label{eq:expansion}
\end{align}
If $\nabla_{\vtheta}^2 E \nabla_{\vtheta} E = \mathbf{0}$ we have that $\lambda_j \nabla_{\vtheta} E ^T \vu_j = 0, \forall j \in \{1, .., D\}$, thus either $\lambda_j = 0$ leading to $\alpha_{NGF}(h\lambda_j) = \alpha_{PF}(h\lambda_j) = -1$ or $\nabla_{\vtheta} E ^T \vu_j = 0$. 
Then $\dot{\vtheta} =  \sum_{i=0}^{D-1} \alpha_{PF}(h\lambda_i)(\nabla_{\vtheta} E^T \vu_i)  \vu_i = \sum_{i=0}^{D-1} \alpha_{NGF}(h\lambda_i) (\nabla_{\vtheta} E^T \vu_i)  \vu_i$.

\rebuttalrone{Thus comparing the PF  with the NGF reveals an important quantity: $\nabla_{\vtheta}^2 E \nabla_{\vtheta} E$. Further investigating this quantity reveals it has a connection with the total drift, since:}

\begin{theorem} The discretization drift (error between gradient descent and the NGF) after 1 iteration ${\vtheta_{t} = \vtheta_t - h \nabla_{\vtheta} E(\vtheta_{t-1})}$ is $\frac{h^2}{2} \nabla_{\vtheta}^2 E(\vtheta')  \nabla_{\vtheta} E(\vtheta')$ for a set of parameters $\vtheta'$ in the neighborhood of $\vtheta_{t-1}$.
\label{thm:total_drift}
\end{theorem}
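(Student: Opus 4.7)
The plan is to recover the drift by Taylor-expanding the NGF solution $\vtheta(\tau)$ with $\vtheta(0)=\vtheta_{t-1}$ about $\tau=0$ up to second order with a Lagrange remainder, and then subtracting the Euler step $\vtheta_t = \vtheta_{t-1} - h\nabla_{\vtheta}E(\vtheta_{t-1})$. The zeroth and first order terms of this expansion will exactly reproduce the gradient descent update, so the drift $\vtheta(h) - \vtheta_t$ will equal the remainder, which we will identify with the claimed expression.

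First I would compute the derivatives of $\vtheta(\tau)$ along the NGF trajectory. By definition $\dot{\vtheta}(\tau) = -\nabla_{\vtheta}E(\vtheta(\tau))$, so differentiating once more and applying the chain rule gives $\ddot{\vtheta}(\tau) = -\nabla_{\vtheta}^2 E(\vtheta(\tau))\,\dot{\vtheta}(\tau) = \nabla_{\vtheta}^2 E(\vtheta(\tau))\,\nabla_{\vtheta}E(\vtheta(\tau))$, where the last step uses the NGF equation itself. Evaluating at $\tau=0$ yields $\dot{\vtheta}(0) = -\nabla_{\vtheta}E(\vtheta_{t-1})$. Next I would invoke Taylor's theorem with Lagrange remainder (valid because $E$ is assumed smooth enough that $\ddot{\vtheta}$ is continuous on $[0,h]$) to obtain $\vtheta(h) = \vtheta_{t-1} + h\,\dot{\vtheta}(0) + \tfrac{h^2}{2}\ddot{\vtheta}(\xi)$ for some $\xi \in [0,h]$. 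Substituting and subtracting the gradient descent iterate gives $\vtheta(h) - \vtheta_t = \tfrac{h^2}{2} \nabla_{\vtheta}^2 E(\vtheta(\xi))\nabla_{\vtheta}E(\vtheta(\xi))$, and setting $\vtheta' = \vtheta(\xi)$ finishes the argument: since $\xi\in[0,h]$, the point $\vtheta'$ lies on the NGF segment from $\vtheta_{t-1}$ and hence in a neighborhood of $\vtheta_{t-1}$ that shrinks with $h$.

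The only subtle point I anticipate is the justification of the Lagrange remainder in the vector-valued setting: strictly speaking the mean-value point could differ across components, so one either states the claim component-wise and then groups the components into a single $\vtheta'$ via an intermediate value argument, or (cleaner) uses the integral form $\vtheta(h) - \vtheta_t = \int_0^h (h-s)\,\ddot{\vtheta}(s)\,ds$ and applies the mean value theorem for integrals together with continuity of $\tau\mapsto \nabla_{\vtheta}^2 E(\vtheta(\tau))\nabla_{\vtheta}E(\vtheta(\tau))$ on $[0,h]$ to produce a single $\vtheta' = \vtheta(\xi)$. Beyond this bookkeeping the proof is routine and no other obstacle is expected.
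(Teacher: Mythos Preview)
Your proposal is correct and matches the paper's own proof essentially line for line: Taylor-expand the NGF solution to second order with Lagrange remainder, identify $\ddot{\vtheta}(\tau)=\nabla_{\vtheta}^2 E(\vtheta(\tau))\nabla_{\vtheta}E(\vtheta(\tau))$ via the chain rule, and set $\vtheta'=\vtheta(\xi)$. Your remark on the vector-valued remainder is a technical caveat the paper does not address, so your write-up is, if anything, slightly more careful.
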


This follows from the Taylor reminder theorem in mean value form (proof in Section~\ref{sec:proofs_total_per_iteration_drift}). This leads to:

\begin{corollary} In a region of space where $\nabla_{\vtheta}^2 E \nabla_{\vtheta} E=\mathbf{0}$ gradient descent follows the NGF.
\end{corollary}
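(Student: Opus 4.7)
The plan is to apply Theorem~\ref{thm:total_drift} directly and then invoke the hypothesis that $\nabla_{\vtheta}^2 E \nabla_{\vtheta} E = \mathbf{0}$ throughout the region. Concretely, fix any iteration and write $\vtheta_t = \vtheta_{t-1} - h \nabla_{\vtheta} E(\vtheta_{t-1})$; let $\vtheta(h)$ denote the NGF trajectory initialized at $\vtheta(0) = \vtheta_{t-1}$ evaluated at time $h$. Theorem~\ref{thm:total_drift} gives us an intermediate point $\vtheta'$ in a neighborhood of $\vtheta_{t-1}$ such that the discretization drift equals $\tfrac{h^2}{2} \nabla_{\vtheta}^2 E(\vtheta')\, \nabla_{\vtheta} E(\vtheta')$. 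Under the hypothesis of the corollary, as long as $\vtheta'$ lies inside the region where $\nabla_{\vtheta}^2 E \nabla_{\vtheta} E = \mathbf{0}$, the right-hand side vanishes, and hence $\vtheta_t = \vtheta(h)$, i.e.\ one discrete gradient descent step coincides exactly with the NGF trajectory.

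To extend this from a single step to ``following the NGF'' along the whole trajectory, I would then iterate the argument: assuming $\vtheta_{t-1}$ and the associated intermediate points stay inside the specified region, the previous paragraph shows $\vtheta_t = \vtheta(h;\vtheta_{t-1})$, so $\vtheta_t$ is again in the region and we can repeat. By induction, for any $n$ such that the NGF trajectory from $\vtheta_0$ through $\vtheta(nh)$ together with the neighborhoods used by Theorem~\ref{thm:total_drift} remain inside the region, we obtain $\vtheta_n = \vtheta(nh)$. The only genuine subtlety is this containment condition --- the region must be large enough (or the learning rate $h$ small enough) that the Taylor remainder's intermediate point $\vtheta'$ stays inside it; this is automatic if one interprets ``region'' as an open set on which the identity $\nabla_{\vtheta}^2 E \nabla_{\vtheta} E \equiv \mathbf{0}$ holds and the gradient descent iterates remain within this set. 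No additional computation is needed beyond Theorem~\ref{thm:total_drift} and the observation from Eq~\ref{eq:expansion} that vanishing of $\nabla_{\vtheta}^2 E \nabla_{\vtheta} E$ forces either $\lambda_j = 0$ or $\nabla_{\vtheta} E^T \vu_j = 0$ in every eigendirection, which is exactly what makes the PF and NGF coefficients agree as already remarked.
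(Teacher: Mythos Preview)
Your proposal is correct and follows essentially the same approach as the paper: apply Theorem~\ref{thm:total_drift} to conclude the per-iteration drift equals $\tfrac{h^2}{2}\nabla_{\vtheta}^2 E(\vtheta')\nabla_{\vtheta} E(\vtheta')$, which vanishes by hypothesis. Your treatment is in fact more careful than the paper's one-line justification, since you explicitly address the induction over iterations and the containment condition on $\vtheta'$.
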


Thus \rebuttalrone{the PF revealed $\nabla_{\vtheta}^2 E \nabla_{\vtheta} E$  as a core quantity in the discretisation drift of gradient descent. To further see the connection between with the PF consider that}
$\norm{\nabla_{\vtheta}^2 E \nabla_{\vtheta} E}^2= \norm{ \sum_{i=0}^{D-1}\lambda_i \nabla_{\vtheta} E ^T \vu_i \vu_i}^2 = \sum_{i=0}^{D-1}\norm{\lambda_i \nabla_{\vtheta} E ^T \vu_i}^2$; the higher each term in the sum, the higher the difference between the NGF and the PF.
To measure the connection between per iteration drift and  $\norm{\nabla_{\vtheta}^2 E \nabla_{\vtheta} E}$  in neural network training we approximate it via $\norm{\vtheta_{t} - \widetilde{NGF}(\vtheta_{t-1}, h)}$ where $\widetilde{NGF}$ is the numerical approximation to the NGF initialised at $\vtheta_{t-1}$. 
Results in Figures~\ref{fig:h_g_drift} and~\ref{fig:h_g_drift_spearman} show the strong correlation between per iteration drift and $\norm{\nabla_{\vtheta}^2 E \nabla_{\vtheta} E}$ throughout training and across learning rates. Since Theorem~\ref{thm:total_drift} tells us the form of the drift but not the exact value of $\vtheta'$, we have used $\vtheta_{t-1}$ instead to evaluate $\norm{\nabla_{\vtheta}^2 E \nabla_{\vtheta} E}$ and thus some error exists.

Understanding this connection is advantageous since computing discretization drift is computationally expensive as it requires simulating the continuous time NGF but computing $\norm{\nabla_{\vtheta}^2 E \nabla_{\vtheta} E}$ via Hessian-vector products is cheaper and approximations are available, such as 
$\nabla_{\vtheta}^2 E \nabla_{\vtheta} E \approx \frac{\nabla_{\vtheta} E(\vtheta + \epsilon \nabla_{\vtheta} E) - \nabla_{\vtheta} E(\vtheta)}{\epsilon}$
which only requires an additional backward pass~\citet{geiping2021stochastic}.

\begin{figure}[tb]
\centering
\includegraphics[width=0.31\columnwidth]{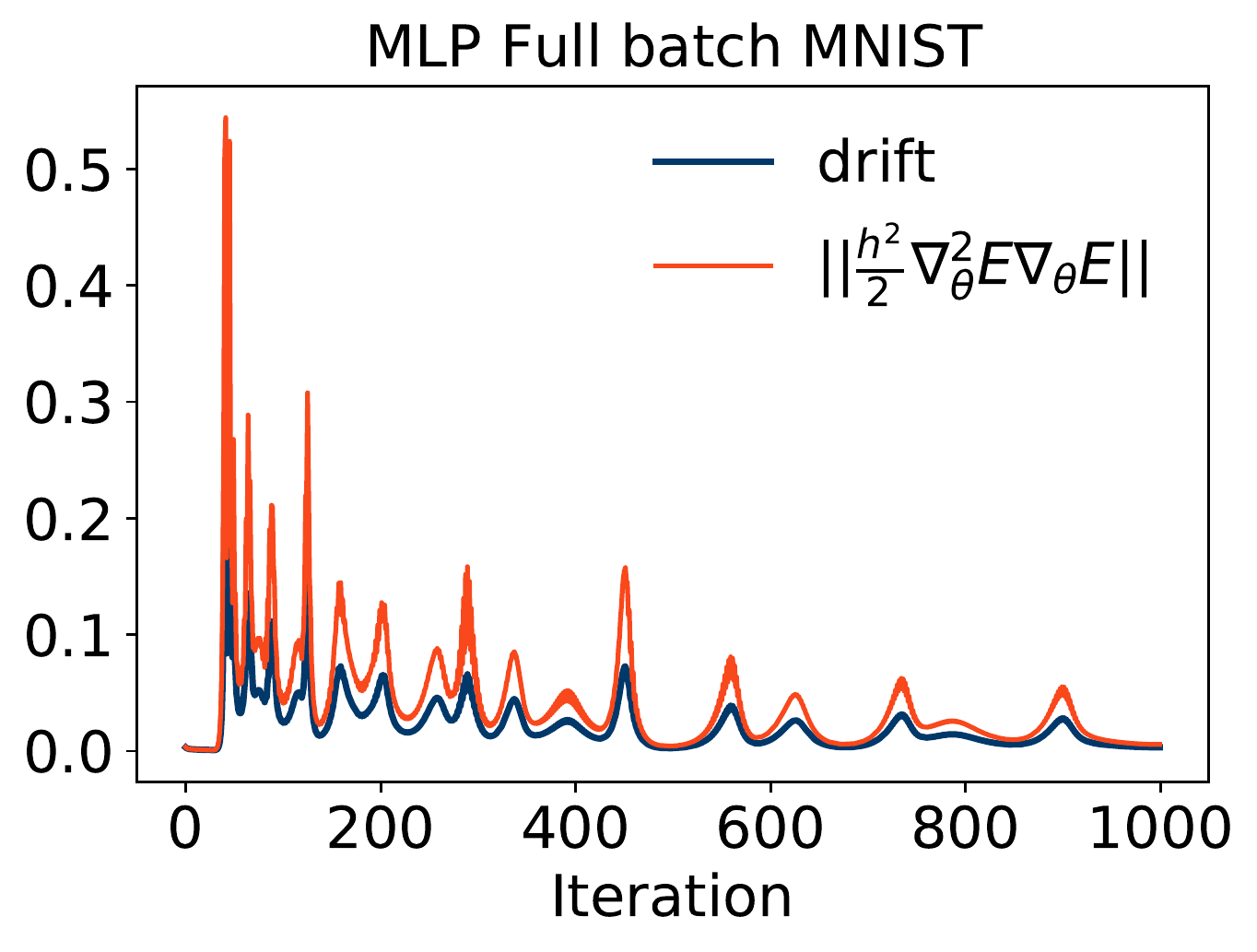}
 \includegraphics[width=0.31\columnwidth]{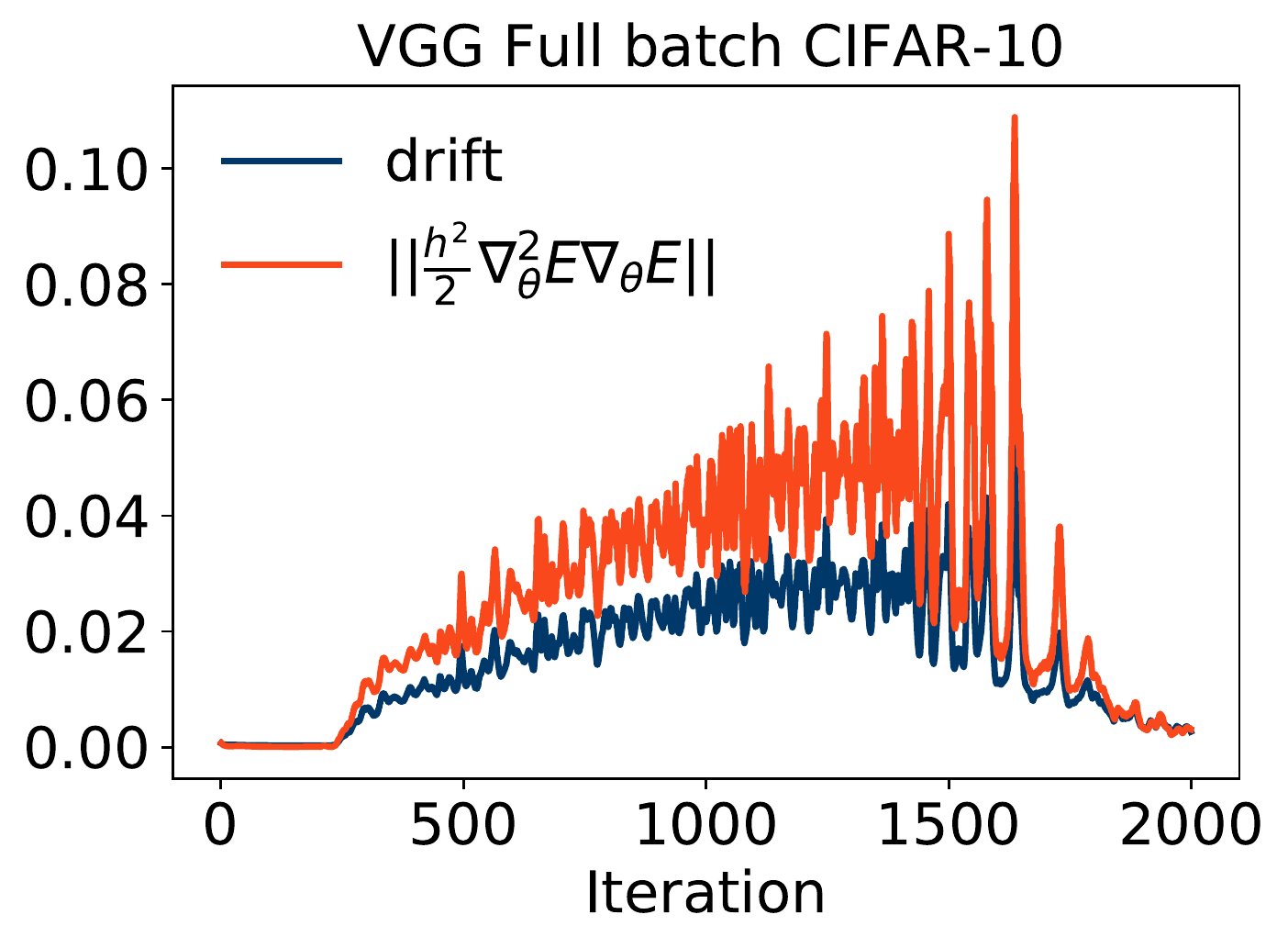}
 \includegraphics[width=0.31\columnwidth]{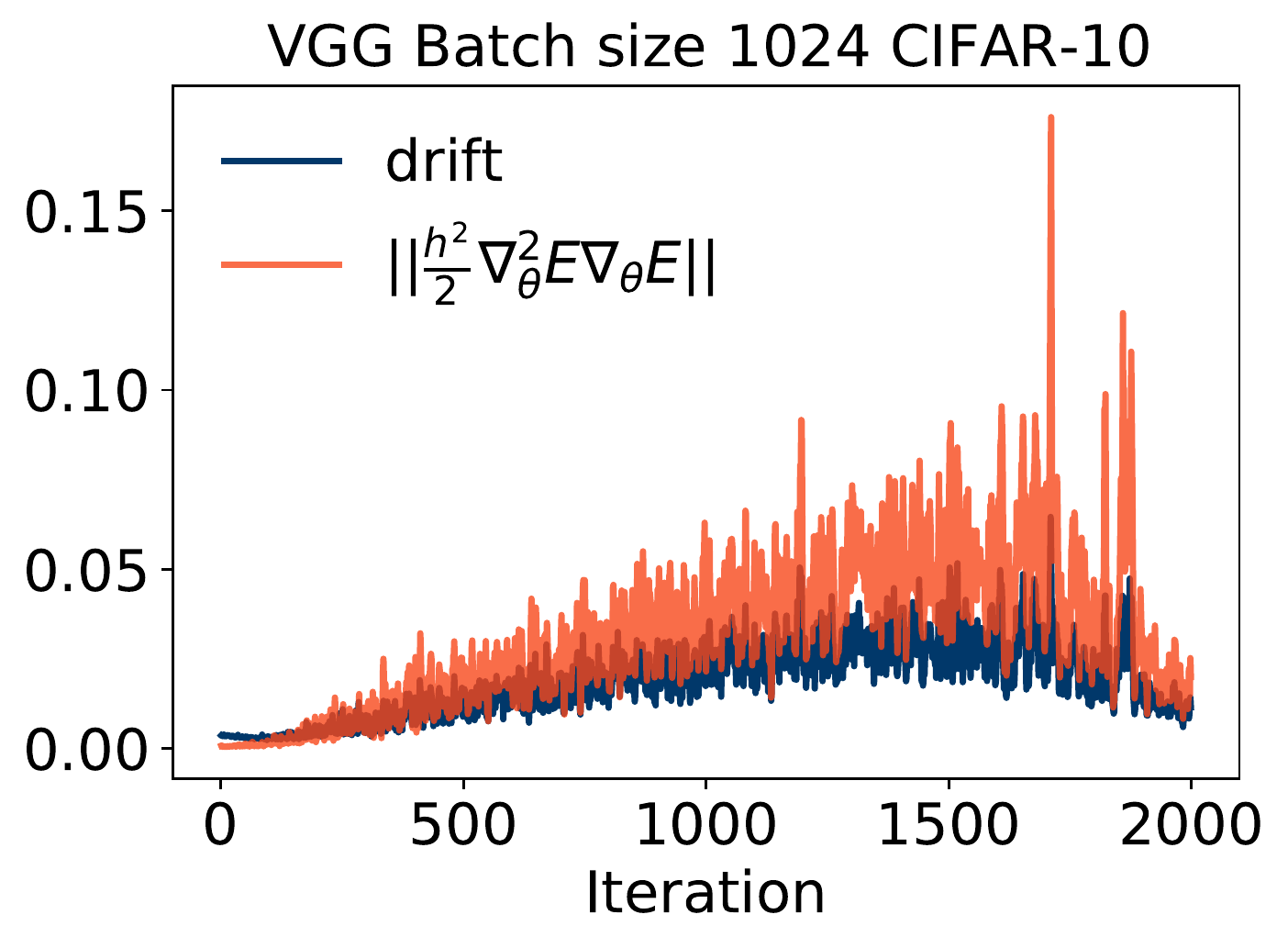}
\caption[$||\nabla_{\vtheta}^2 E \nabla_{\vtheta} E||$ and the per iteration drift as measured during training. The plots are obtained on an MNIST MLP with 4 layers and 100 units per layer, and a VGG model trained on CIFAR-10. The learning rates used are $0.05$, $0.01$ and $0.01$ respectively.]{Connection between $||\nabla_{\vtheta}^2 E \nabla_{\vtheta} E||$ and the per iteration drift as measured during training.}
\label{fig:h_g_drift}
\end{figure}

\begin{figure}[tb]
\centering
\includegraphics[width=0.31\columnwidth]{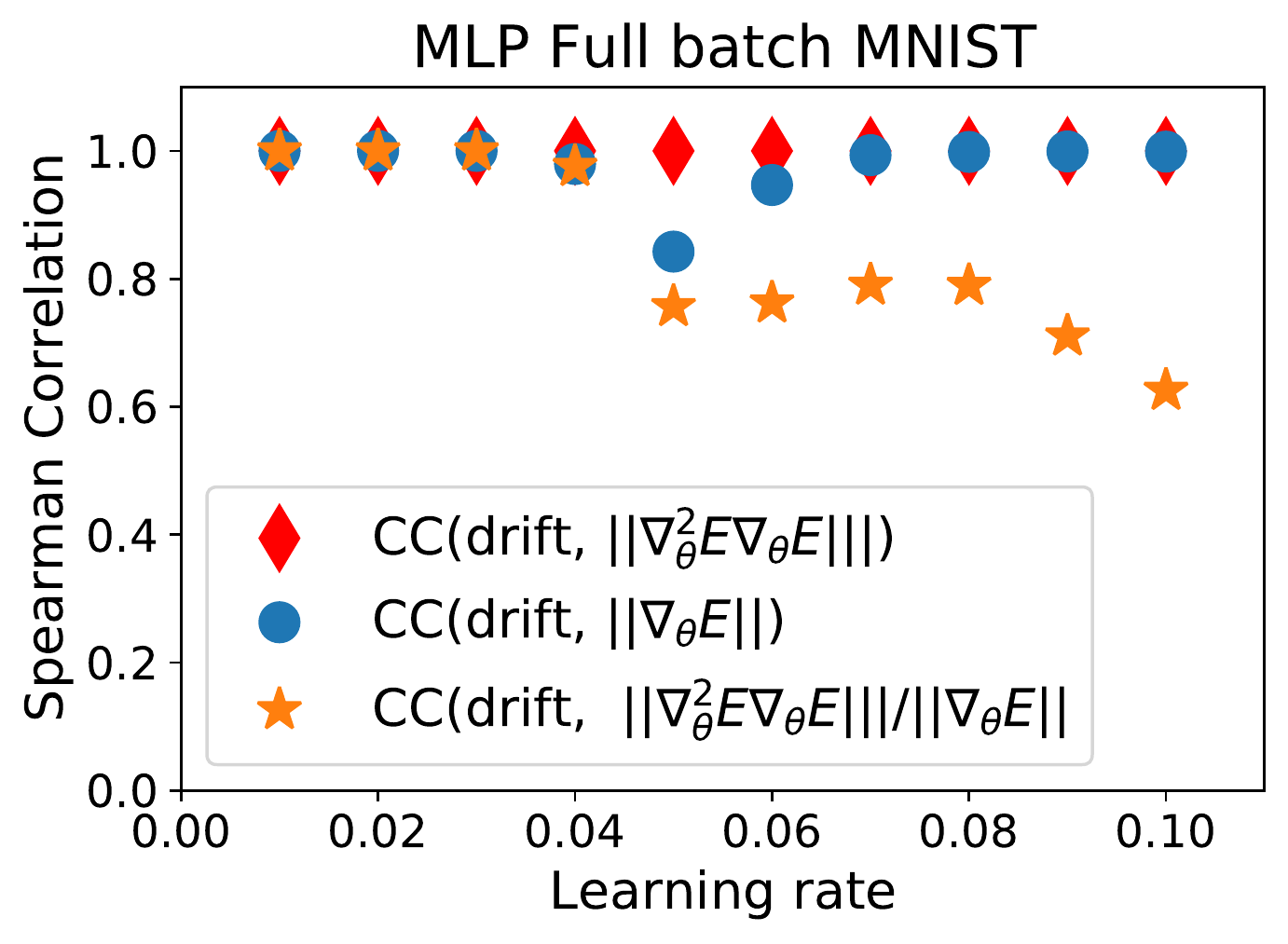}
 \includegraphics[width=0.31\columnwidth]{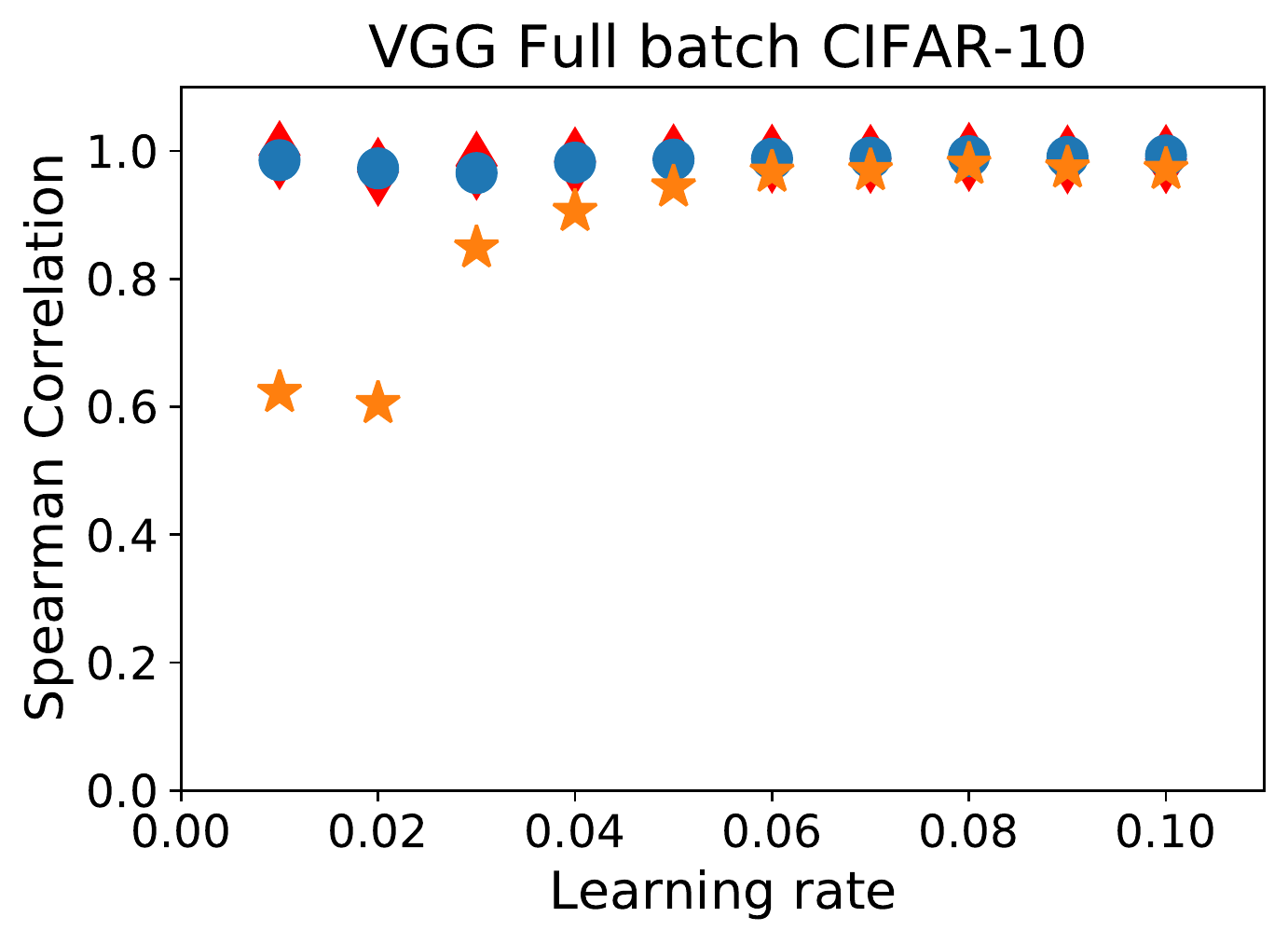}
 \includegraphics[width=0.31\columnwidth]{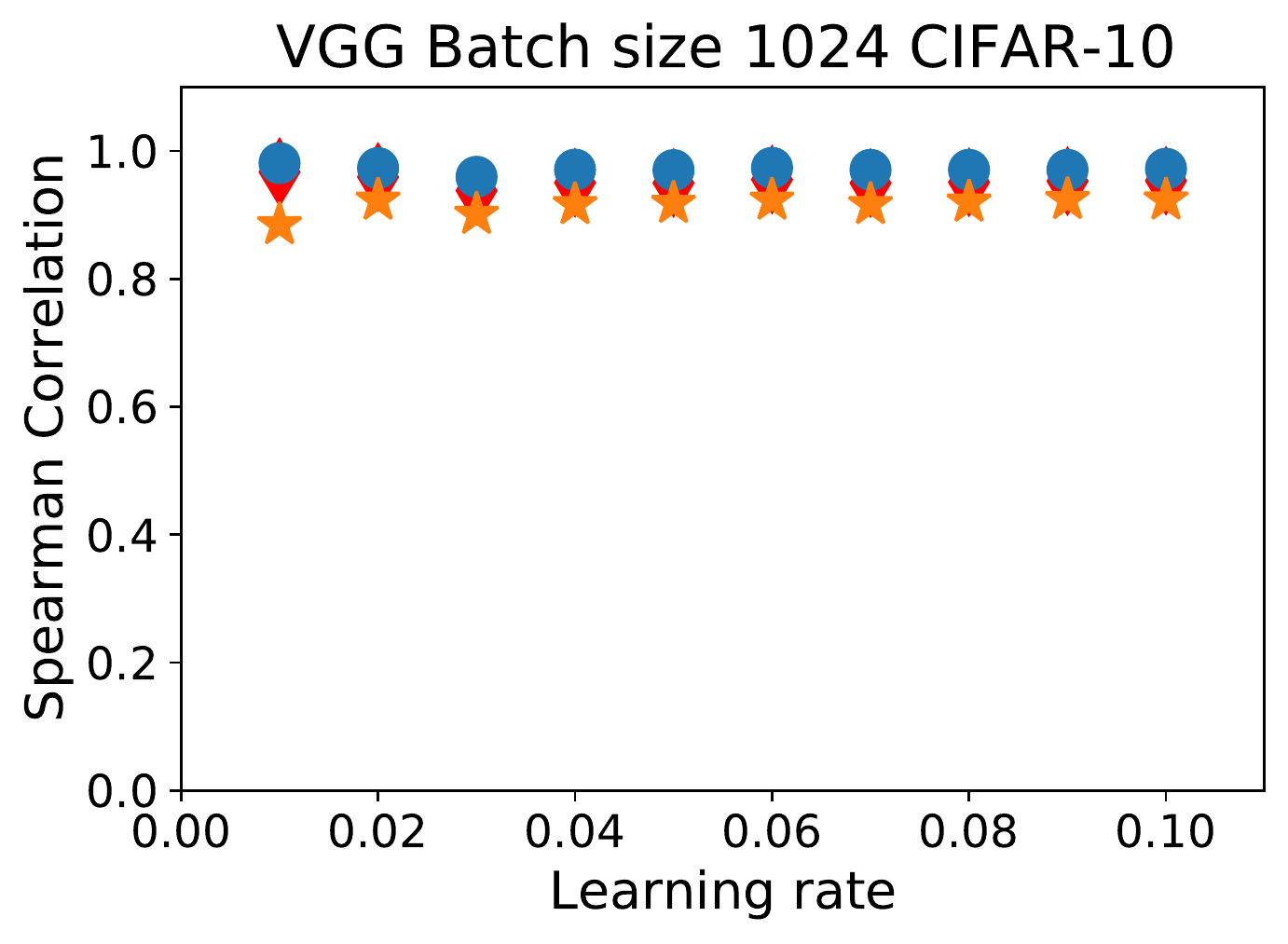}
\caption[Correlation between $||\nabla_{\vtheta}^2 E \nabla_{\vtheta} E||$ and the iteration drift. The plots are obtained on an MNIST MLP with 4 layers and 100 units per layer, and a VGG model trained on CIFAR-10. Results are obtained over a learning rate sweep of 10 learning rates $0.01$, $0.02$... $0.1$.]{Correlation between $||\nabla_{\vtheta}^2 E \nabla_{\vtheta} E||$ and the per iteration drift.  Since  $||\nabla_{\vtheta}^2 E \nabla_{\vtheta} E|| = \left(||\nabla_{\vtheta}^2 E \nabla_{\vtheta} E||\right) / || \nabla_{\vtheta} E|| || \nabla_{\vtheta} E||$, we plot the correlation with the individual terms as well.}
\label{fig:h_g_drift_spearman}
\end{figure}

\subsection{Drift adjusted learning rate (DAL)}
\label{sec:dal}

A natural question to ask is how to use the correlation between  $\norm{\nabla_{\vtheta}^2 E \nabla_{\vtheta} E}$ and the iteration drift to improve training stability; $\norm{\nabla_{\vtheta}^2 E \nabla_{\vtheta} E}$ captures all the quantities we have shown to be relevant to instability highlighted by the PF: $\lambda_i$ and $\nabla_{\vtheta} E ^T \vu_i$ (Eq.~\ref{eq:expansion}).
One way to use this information is to adapt the learning rate of the gradient descent update, such as using $\frac 2 {\norm{\nabla_{\vtheta}^2 E \nabla_{\vtheta} E}}$ as the learning rate. 
This learning rate slows down training when the drift is large --- areas where instabilities are likely to occur --- and it speeds up training in regions of low drift --- areas where instabilities are unlikely to occur.
Computing the norm of the update provided by this learning rate shows a challenge however since $2/\norm{\nabla_{\vtheta}^2 E \nabla_{\vtheta} E} \ge \frac{2}{\lambda_0 \norm{\nabla_{\vtheta} E}}$; this implies that when using this learning rate the norm of the gradient descent update will never be 0 and thus training will not result in convergence.
Furthermore, the magnitude of the parameter update will be independent of the gradient norm.
To reinstate the gradient norm, we propose using the learning rate
\begin{align}
h(\vtheta) = \frac{2}{\norm{\nabla_{\vtheta}^2 E \nabla_{\vtheta} E}/ \norm{\nabla_{\vtheta} E}} = \frac{2}{\norm{\nabla_{\vtheta}^2 E \hat{\vg}(\vtheta)}} \label{eq:dal}
\end{align}
where $\hat{\vg}(\vtheta)$ is the unit normalised gradient $\nabla_{\vtheta} E / \norm{\nabla_{\vtheta} E}$. We will call this learning rate \textbf{DAL} (Drift Adjusted Learning rate).
As shown in Figure~\ref{fig:h_g_drift},  $\norm{\nabla_{\vtheta}^2 E \hat{\vg}(\vtheta)}$ has a strong correlation with the per iteration drift.
 \rebuttalrthree{Another interpretation of DAL can be provided through a signal to noise perspective: the size of the learning signal obtained by minimising $E$ is that of the update $h \norm{\nabla_{\vtheta}E}$, while the norm of the noise coming from the drift can be approximated as $\frac{h^2}{2} \norm{\nabla_{\vtheta}^2E \nabla_{\vtheta}E}$, thus the `signal to noise ratio' can be approximated as $h \norm{\nabla_{\vtheta}E}/({\frac{h^2}{ 2} \norm{\nabla_{\vtheta}^2E \nabla_{\vtheta}E}}) =2/({h \norm{\nabla_{\vtheta}^2 E \hat{\vg}(\vtheta)}})$, which when using DAL (Eq~\ref{eq:dal}) is 1; thus DAL can be seen as balancing the gradient signal and the regularising drift noise in gradient descent training.}

We use DAL to set the learning rate and show results  across architectures, models and datasets in Figures~\ref{fig:dal} (with additional results in Figure~\ref{fig:imagenet_lr_scaling_across_batch_sizes} in the Appendix). \textit{Despite not requiring a learning rate sweep, DAL is stable compared to using fixed learning rates}. 
To provide intuition about DAL, 
we show the learning rate and the update norm in Figure~\ref{fig:l_r_scaling_quantities}:
for DAL the learning rate decreases in training after which it slowly increases when reaching areas with low drift. Compared to larger learning static learning rates where the update norm can increase in the edge of stability area with DAL the update norm steadily decreases in training.

\begin{figure}[tb]
\centering
 \includegraphics[width=0.32\columnwidth]{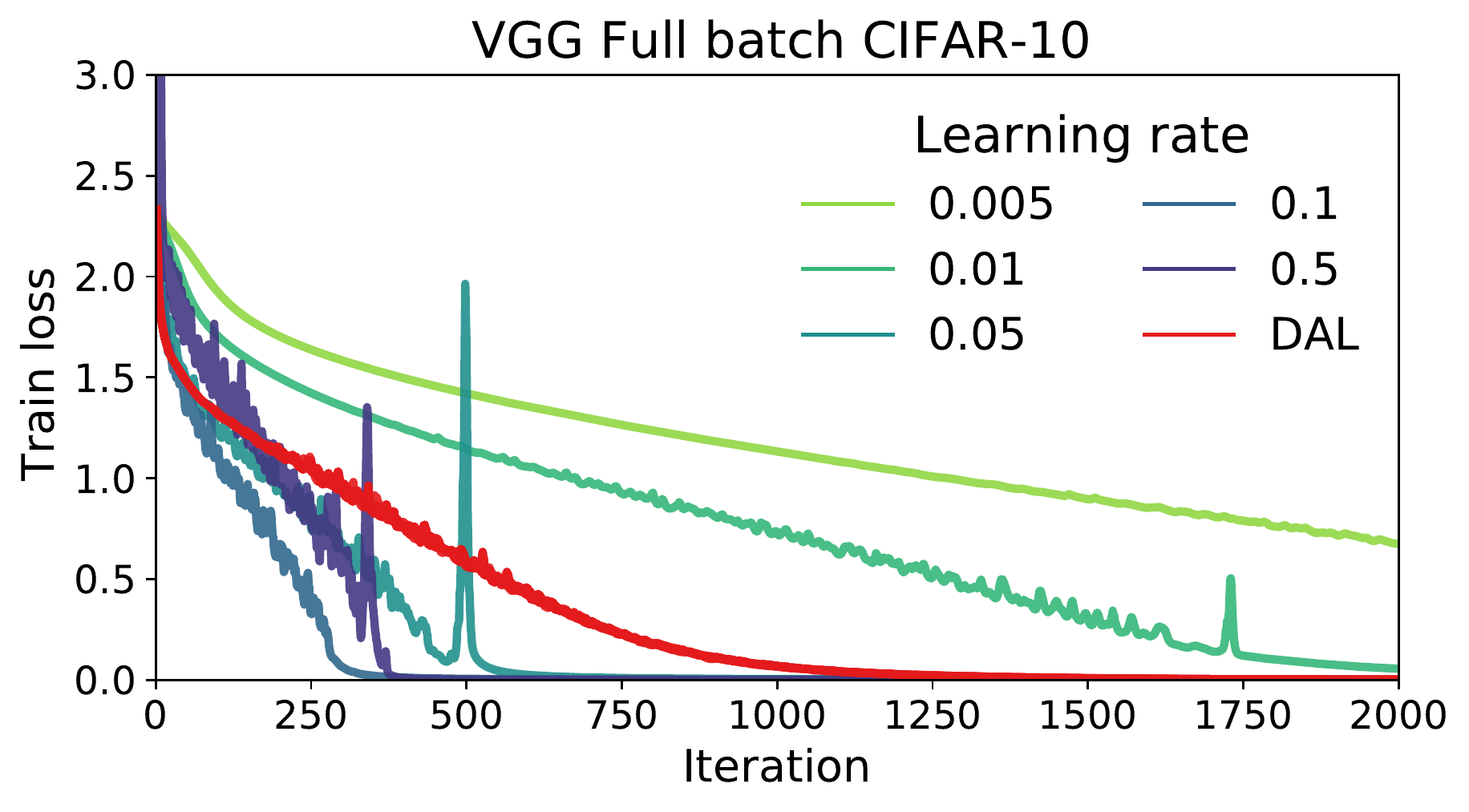}
 \includegraphics[width=0.32\columnwidth]{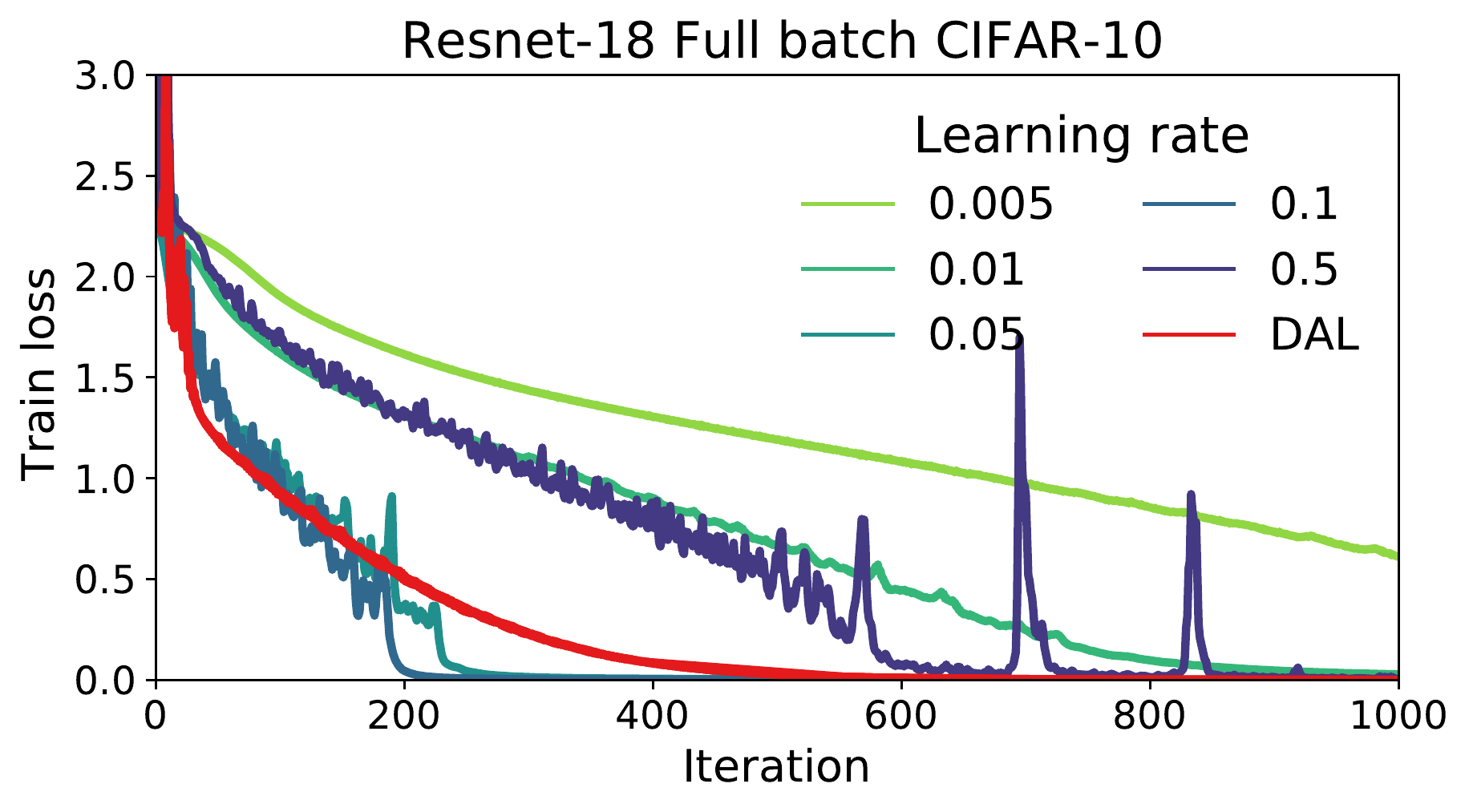}
  \includegraphics[width=0.32\columnwidth]{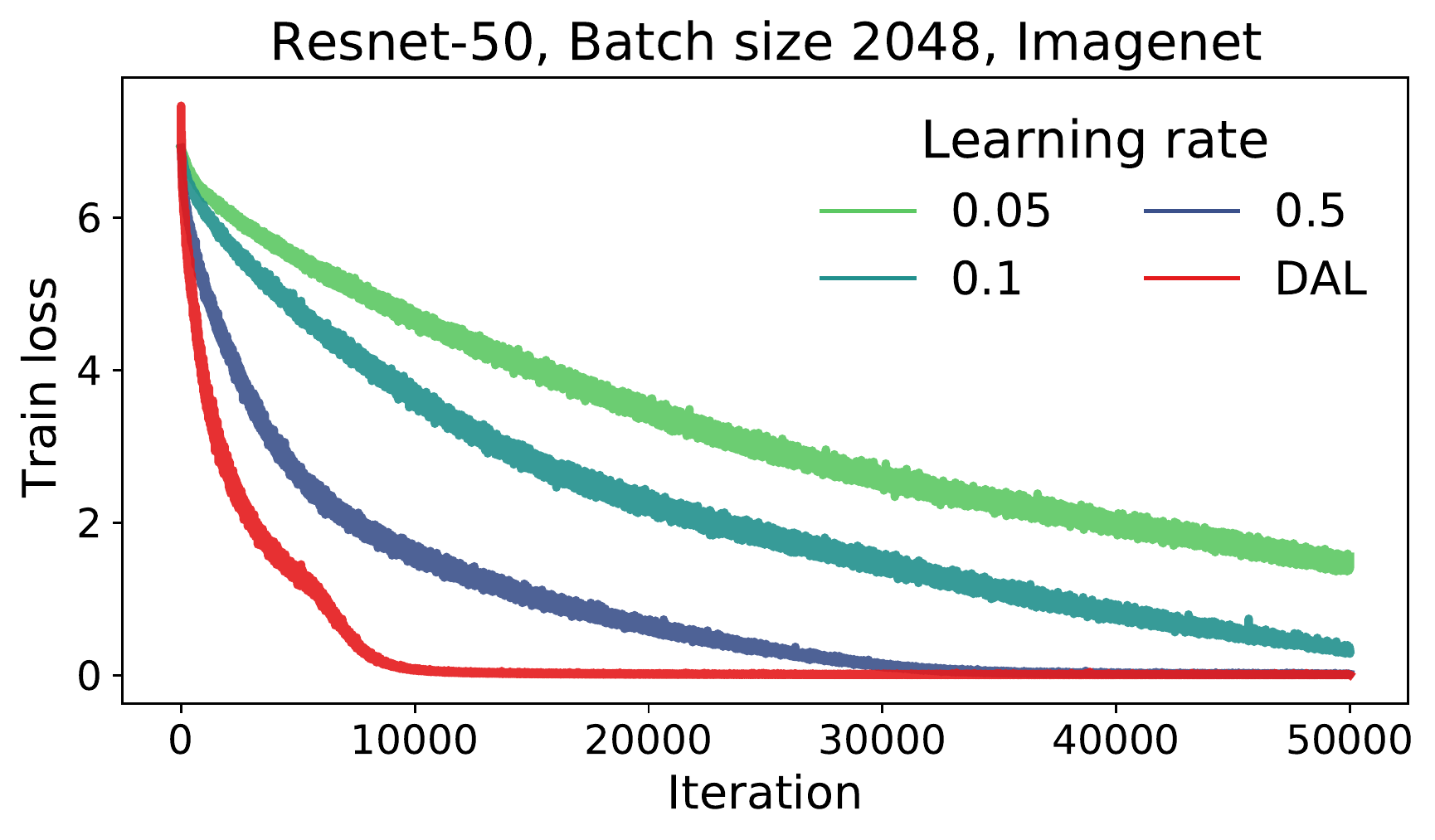}
\caption[Models trained using a learning rate sweep or DAL on CIFAR-10 and Imagenet. Models are VGG, Resnet-18 and Resnet-50 respectively.]{DAL: using the learning rate $\frac{2}{\norm{\nabla_{\vtheta}^2 E \hat{\vg}(\vtheta)}}$ results in improved stability without requiring a hyperparameter sweep. }
\label{fig:dal}
\end{figure}

\begin{figure}[tb]
\centering
 \includegraphics[width=0.32\columnwidth]{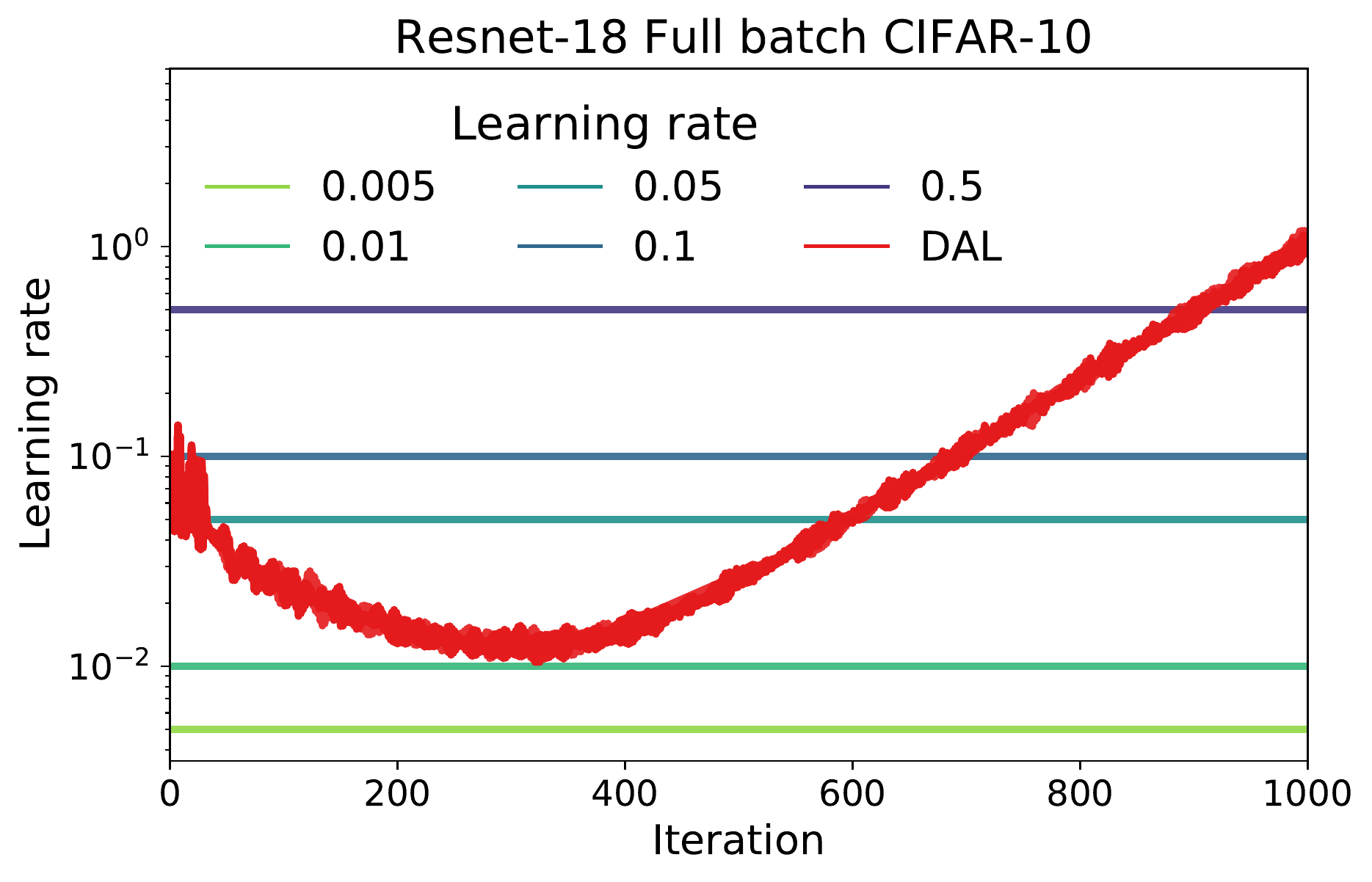}
 \includegraphics[width=0.32\columnwidth]{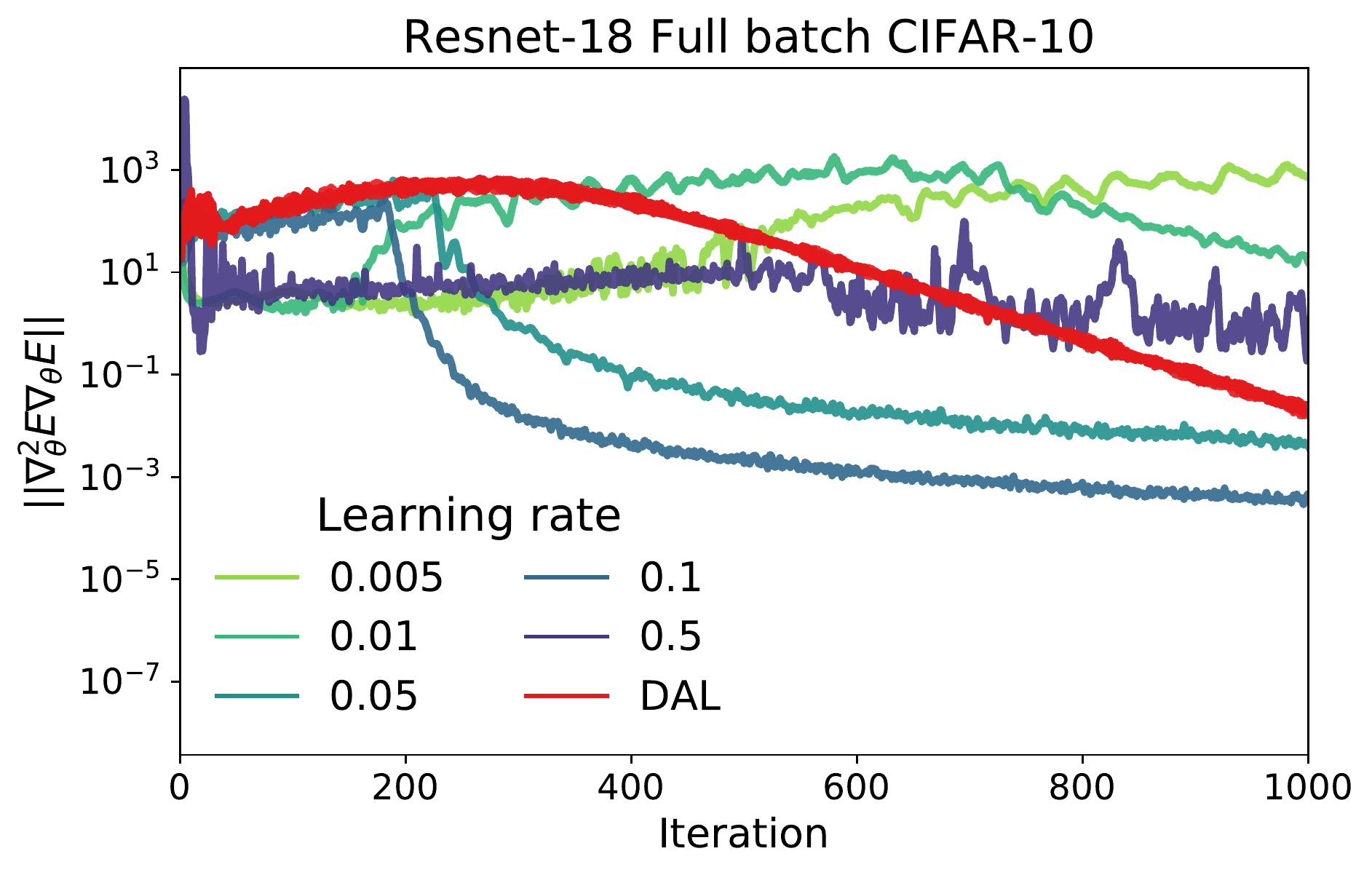}
 \includegraphics[width=0.32\columnwidth]{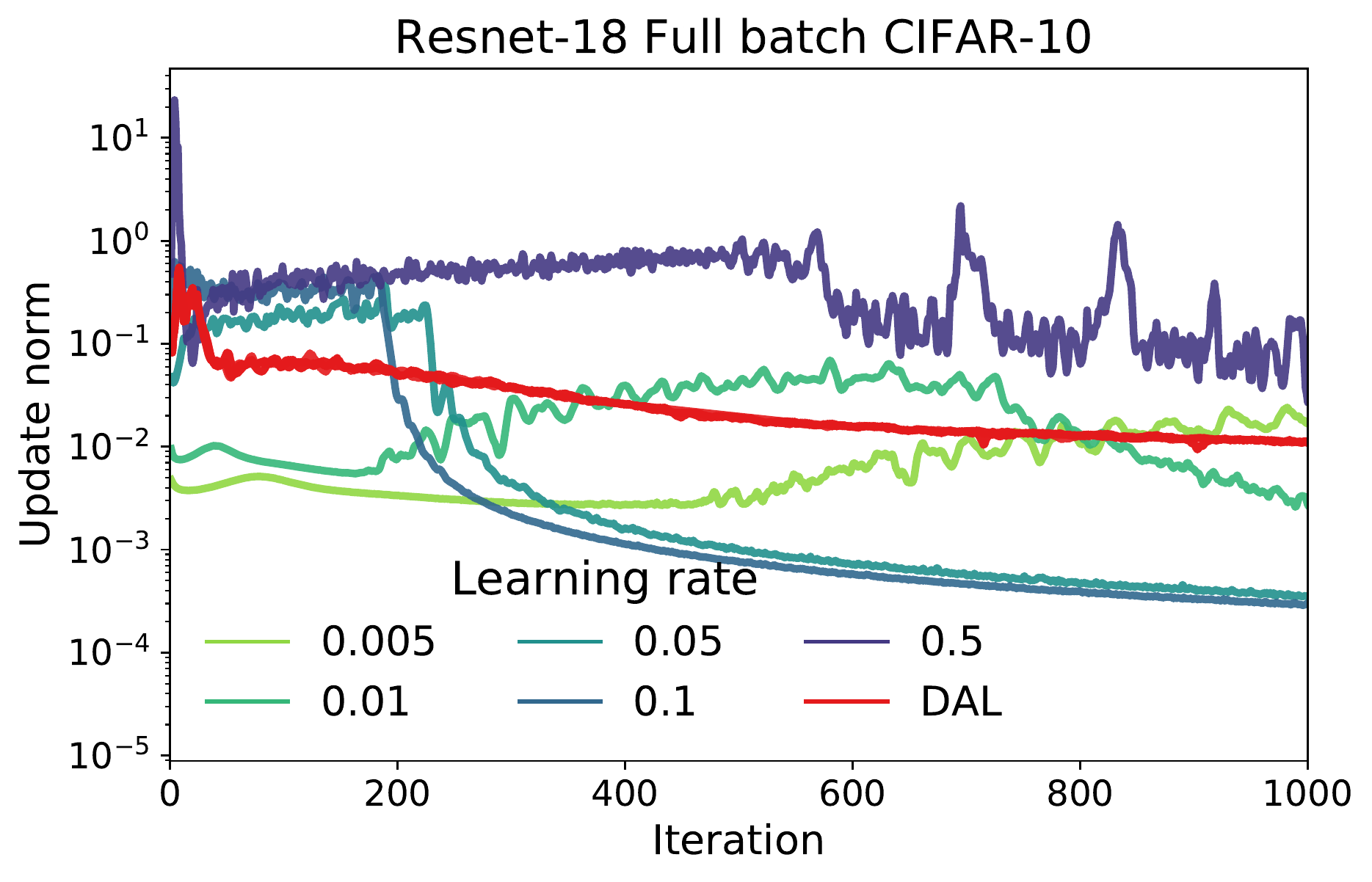}
\caption[Key quantities in DAL versus fixed learning rate training: learning rate, and update norms. Results obtained on a Resnet-18 model trained on CIFAR-10.]{Key quantities in DAL versus fixed learning rate training: learning rate, and update norms.}
\label{fig:l_r_scaling_quantities}
\end{figure}

\subsection{The trade-off between stability and performance}
\label{sec:trade_off}

Since we are interested in understanding the  optimisation dynamics of gradient descent, we have so far focused on training performance. We now try to move our attention to test performance and generalization. Previous works~\citep{li2019towards,igr,jastrzebski2019break} have shown that higher learning rates lead to better generalization performance. We now try to further connect this information with the per iteration drift and the PF.
To do so, we use learning rates with various degrees of sensitivity to iteration drift using DAL-$p$:
\begin{align}
 h_p(\vtheta) = \frac{2}{\left(\norm{\nabla_{\vtheta}^2 E \hat{\vg}(\vtheta)}\right)^p}
\end{align}

The higher $p$, the slower the training and less drift there is; the lower $p$, there is more drift. We start with extensive experiments with $p=0.5$, which we show in Figure~\ref{fig:l_r_scaling_sqrt}, and show more results in Figure~\ref{fig:imagenet_lr_sqrt_scaling_across_batch_sizes}. Compared to $p=1$ (DAL), there is faster training but at times also more instability. Performance on the test set shows that DAL-$0.5$ performs as well or better than when using fixed learning rates.

\begin{remark}
We find that across datasets and batch sizes, DAL-$0.5$ performs best in terms of the stability generalization trade-off and in these settings can be used as a drop in replacement for a learning rate sweep.
\end{remark}

\begin{figure}[tb]
  \includegraphics[width=0.33\columnwidth]{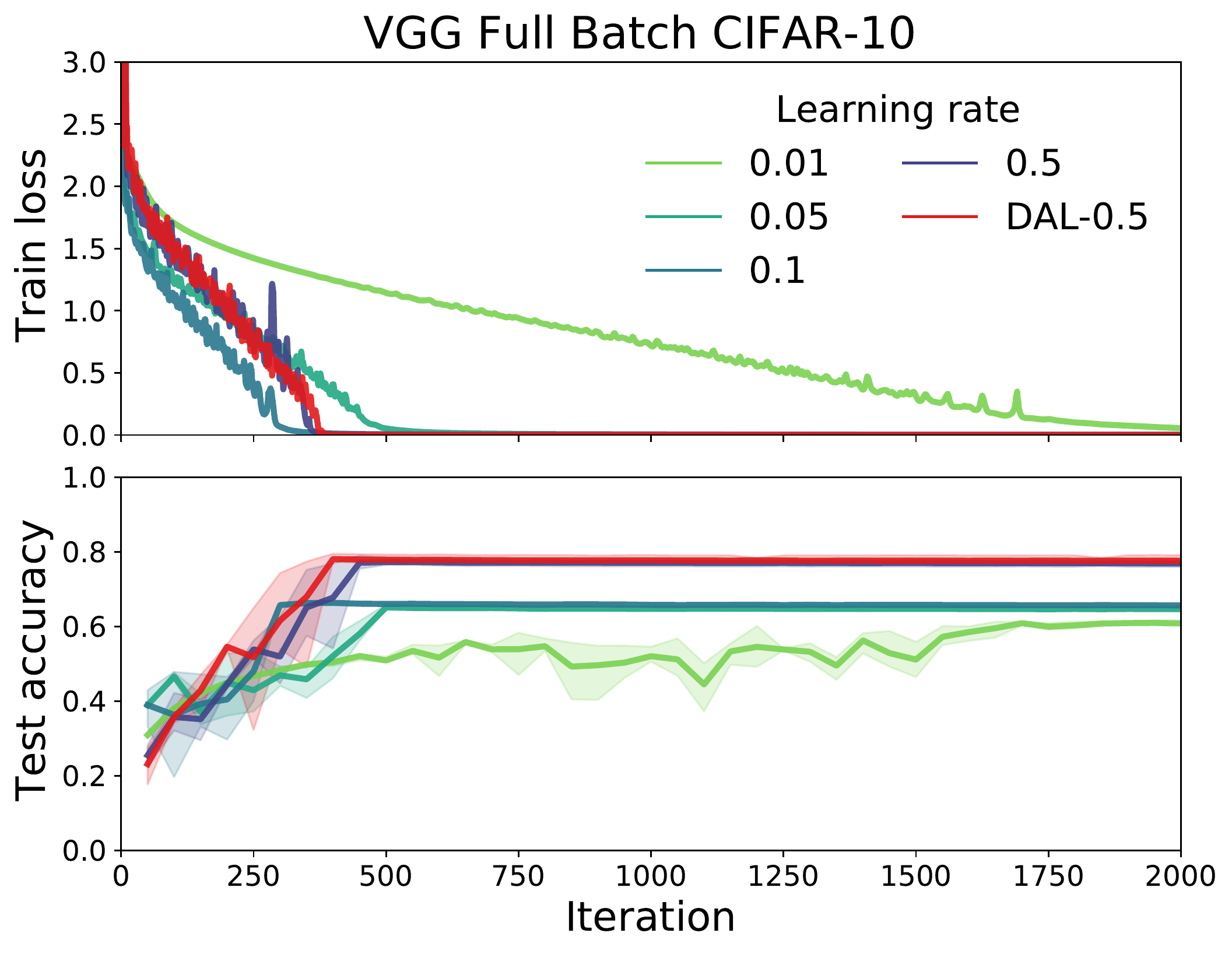}
  \includegraphics[width=0.33\columnwidth]{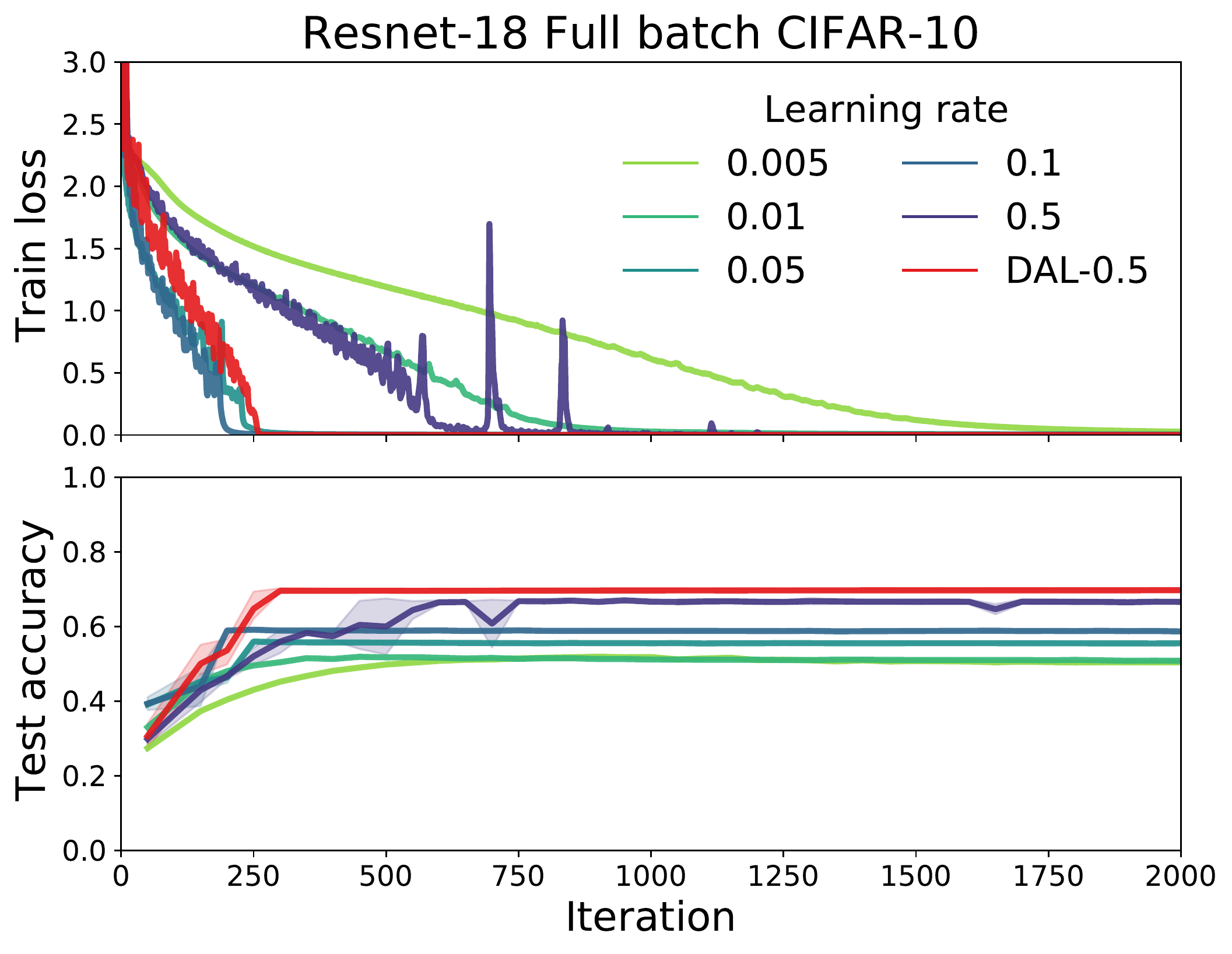}
  \includegraphics[width=0.33\columnwidth]{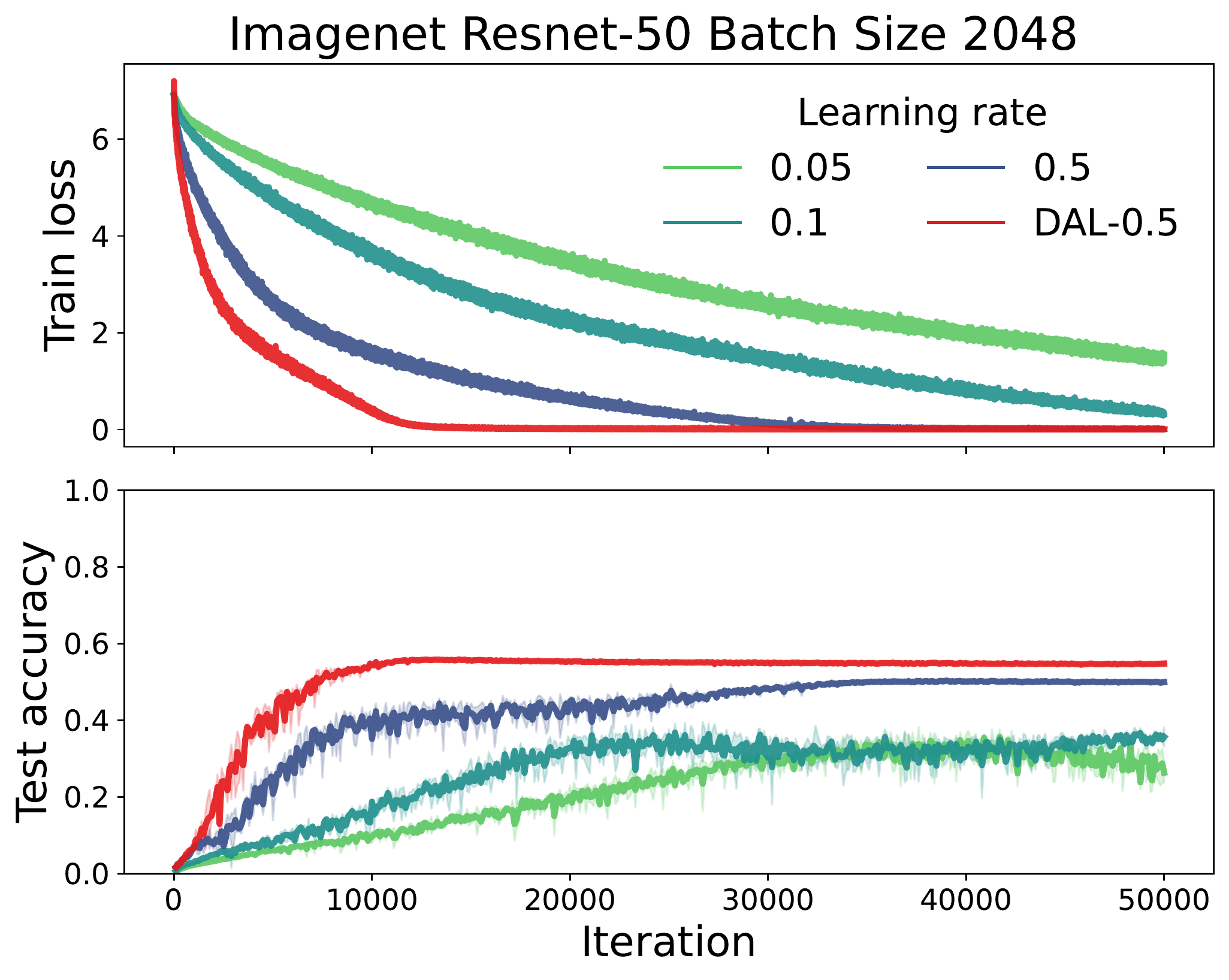}
  \caption[DAL-0.5 results on a CIFAR-10 and Imagenet using  VGG, Resnet-18 and Resnet-50 model respectively.]{DAL-0.5: increased training speed and generalization compared to a sweep of fixed learning rates.}
\label{fig:l_r_scaling_sqrt}
\end{figure}

To further investigate the connection between drift and test set performance, we perform a set of sweeps over the power $p$ and show results in Figure~\ref{fig:power_sweeps}. These results show that the higher the drift (the smaller $p$), the more generalization; additional results across batch sizes showing the same trend are shown in Figure~\ref{fig:power_sweeps_batch_sizes_vgg} in the Appendix. We also show in Figure~\ref{fig:stability_lambda_performance} the correlation between mean per iteration drift and test accuracy both for learning rate and DAL-$p$ sweeps. The results consistently show that the higher the mean iteration drift, the higher the test accuracy. We also show that the mean iteration drift has a connection to the largest eigenvalue $\lambda_0$: the higher the drift, the smaller $\lambda_0$. These results add further evidence to the idea that discretization drift is beneficial for generalization performance in the deep learning setting. We also notice that DAL-$p$ with smaller values of $p$ leads to a small $\lambda_0$ compared to vanilla gradient descent even when large learning rates are used for the latter; this could explain its generalisation capabilities as lower sharpness has been connected to generalisation in previous works~\citep{keskar2016large,jastrzkebski2018relation,foret2020sharpness}. \rebuttalrone{To consolidate these results, we use the method of~\citet{li2018visualizing} to visualise the loss landscape learned by DAL-$p$ compared to that learned using gradient descent, and observe that \textit{even when reporting similar accuracies, DAL-$p$ converges to a flatter landscape}; this is observed even when small batch sizes are used. Results are shown in Figures~\ref{fig:DAL_p_64_landscape}, \ref{fig:DAL_p_full_batch_landscape}, \ref{fig:DAL_p_imagenet_landscape} in the Appendix.}

\begin{figure}[tb]
  \includegraphics[width=0.33\columnwidth]{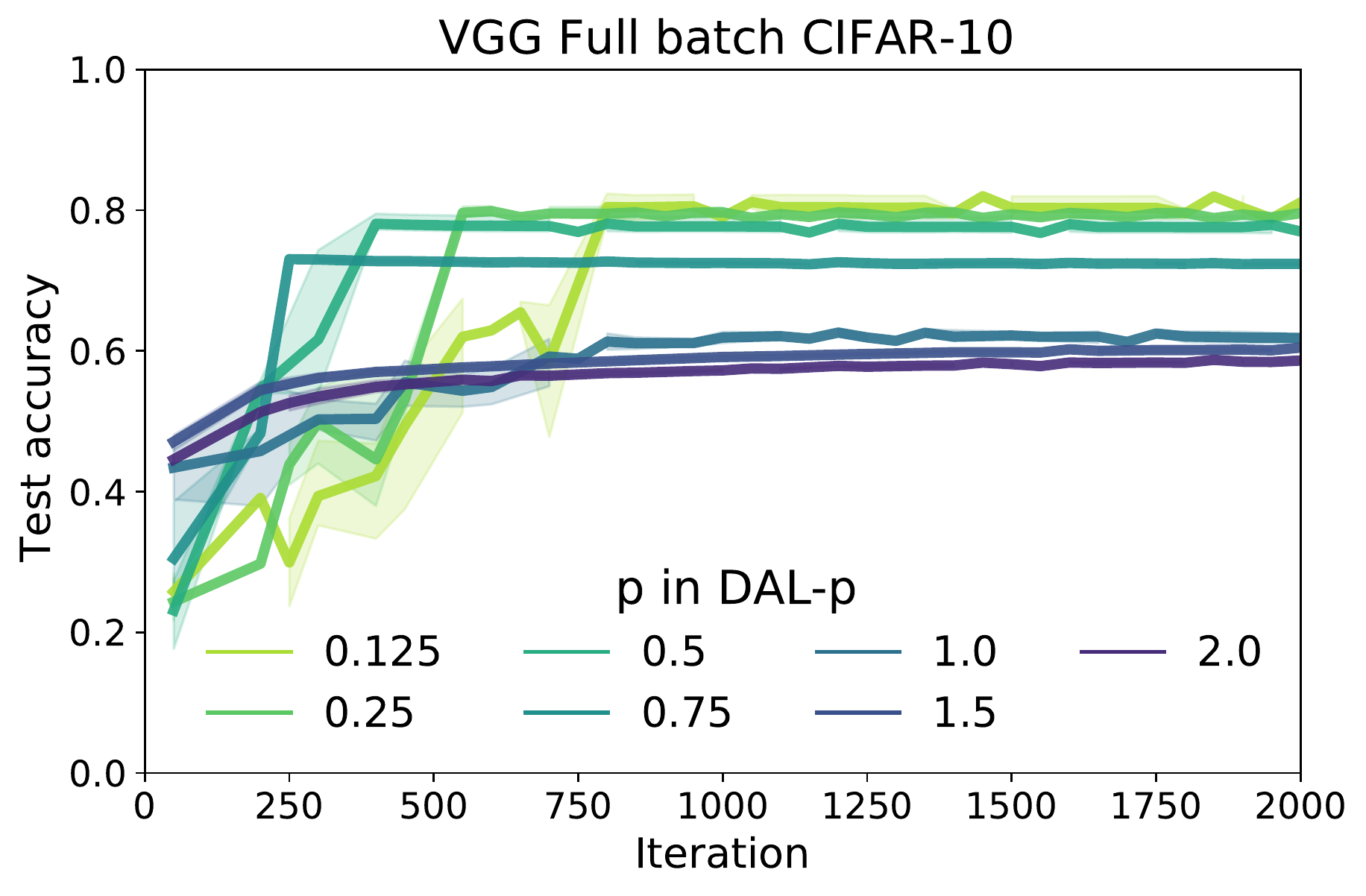}
 \includegraphics[width=0.33\columnwidth]{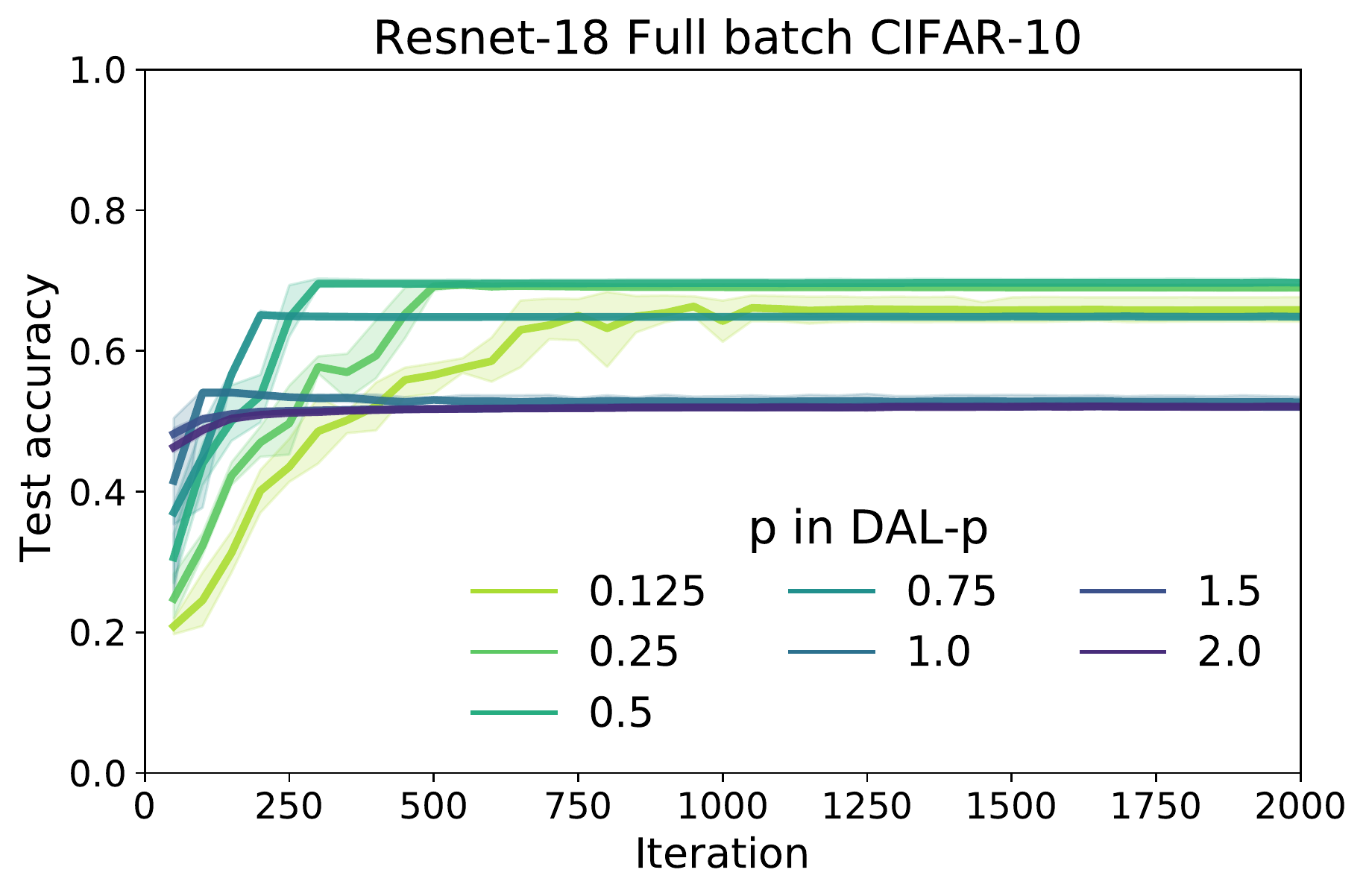}
 \includegraphics[width=0.33\columnwidth]{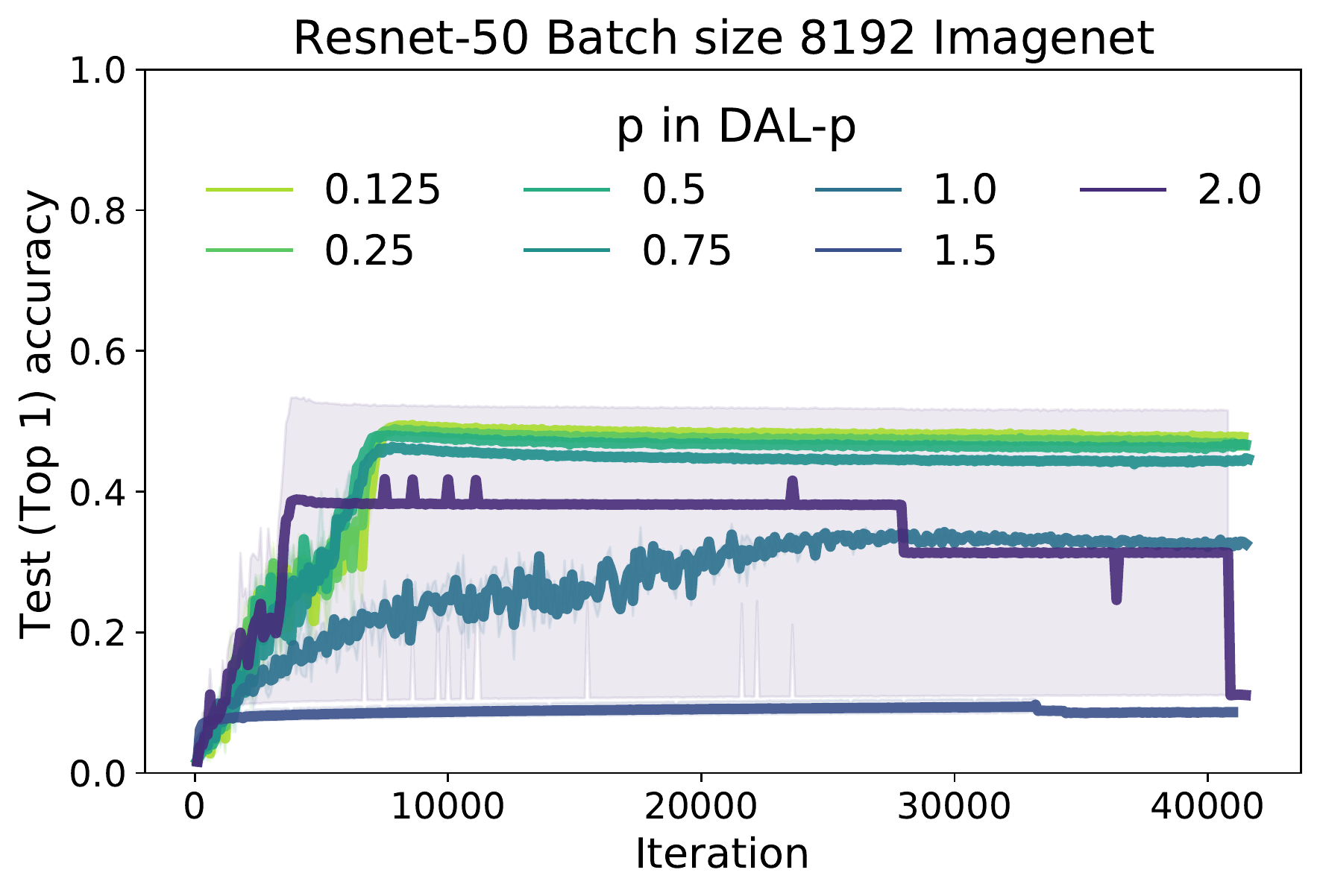}
\caption[DAL-$p$ sweep on full batch training on CIFAR-10 with VGG and Resnet-18 models and a Resnet-50 model trained on Imagenet.]{DAL-$p$ sweep: discretization drift helps test performance at the cost of stability. Corresponding training curves and loss functions are present in the Figure~\ref{fig:power_sweeps_all} in the Appendix; results showing the same trends across various batch sizes are shown in Figure~\ref{fig:power_sweeps_batch_sizes_vgg}.}
\label{fig:power_sweeps}
\end{figure}

\begin{figure}[tb]
\begin{subfigure}[Fixed learning rate sweep.]{
 \includegraphics[width=0.24\columnwidth]{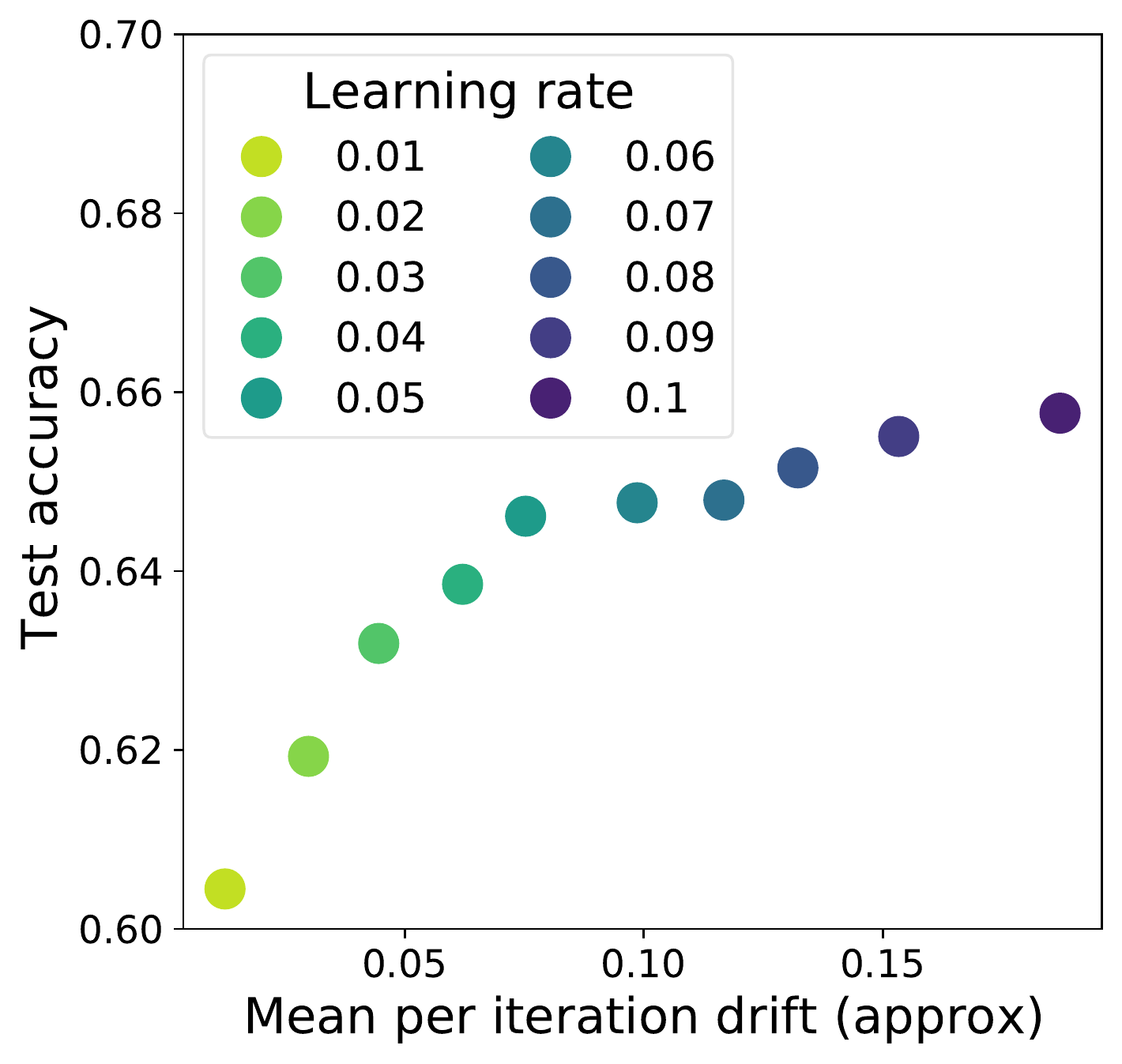}
  \includegraphics[width=0.24\columnwidth]{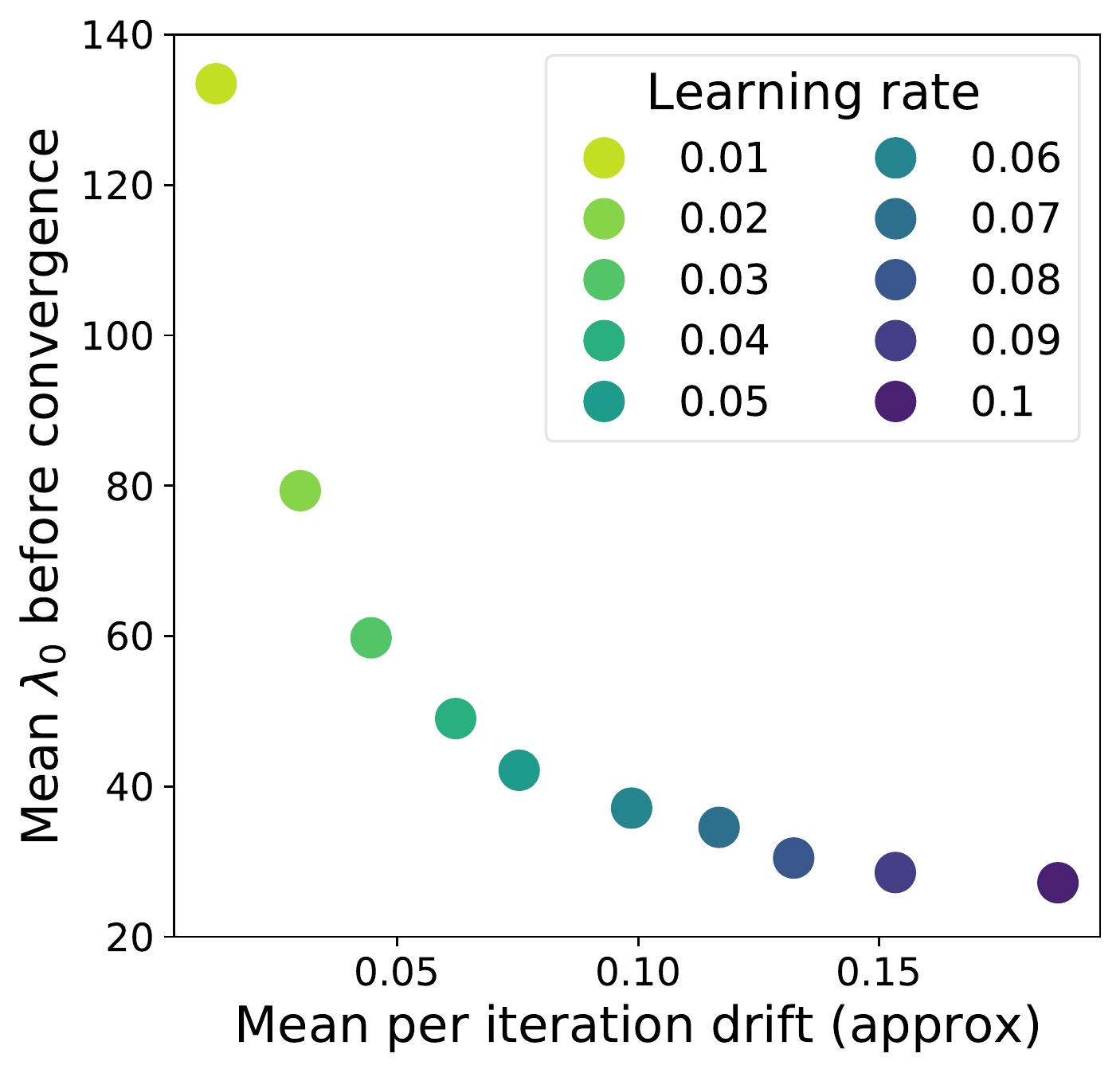}
}\end{subfigure}
\begin{subfigure}[DAL-$p$ sweep.]{
  \includegraphics[width=0.24\columnwidth]{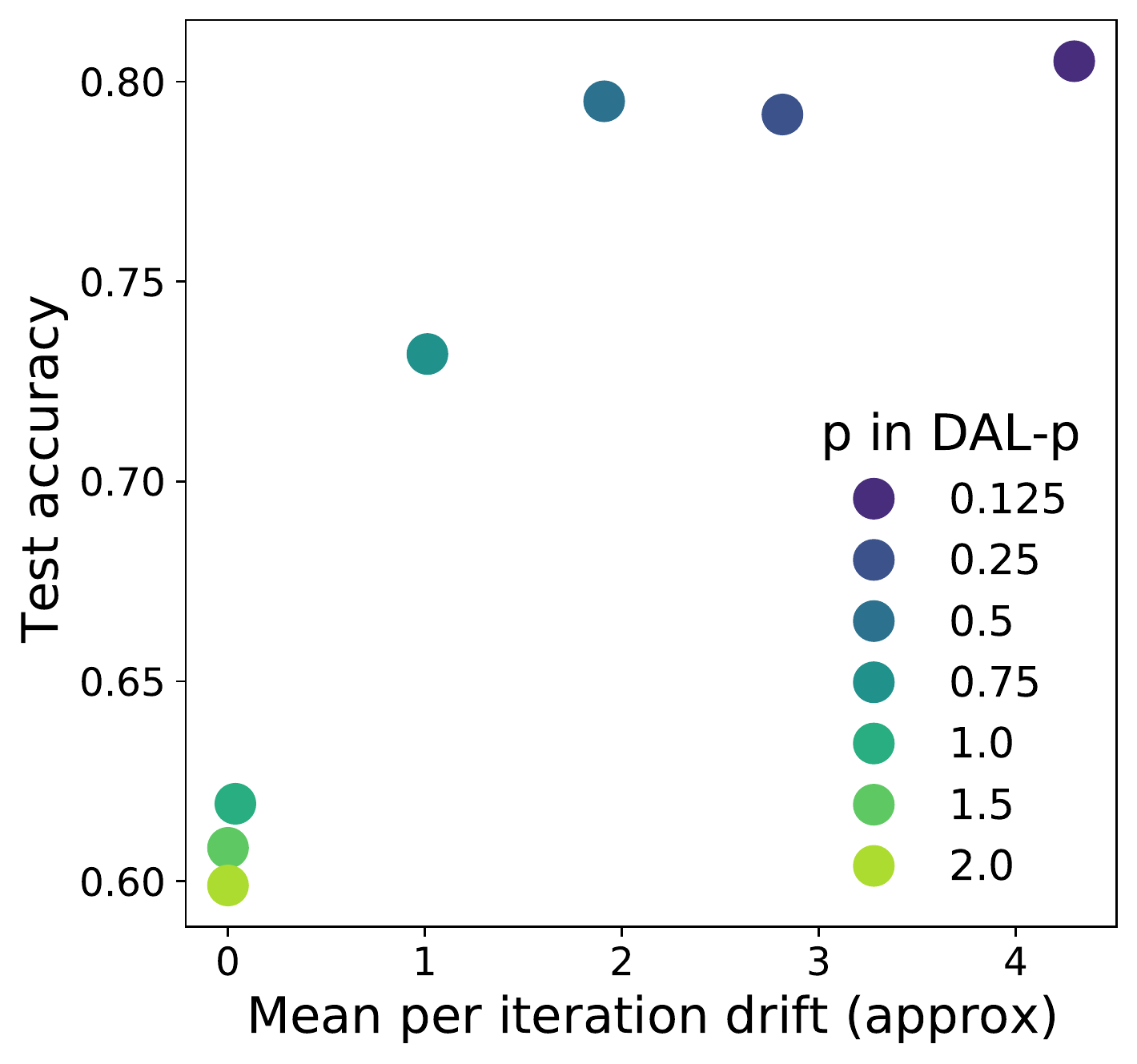}
  \includegraphics[width=0.24\columnwidth]{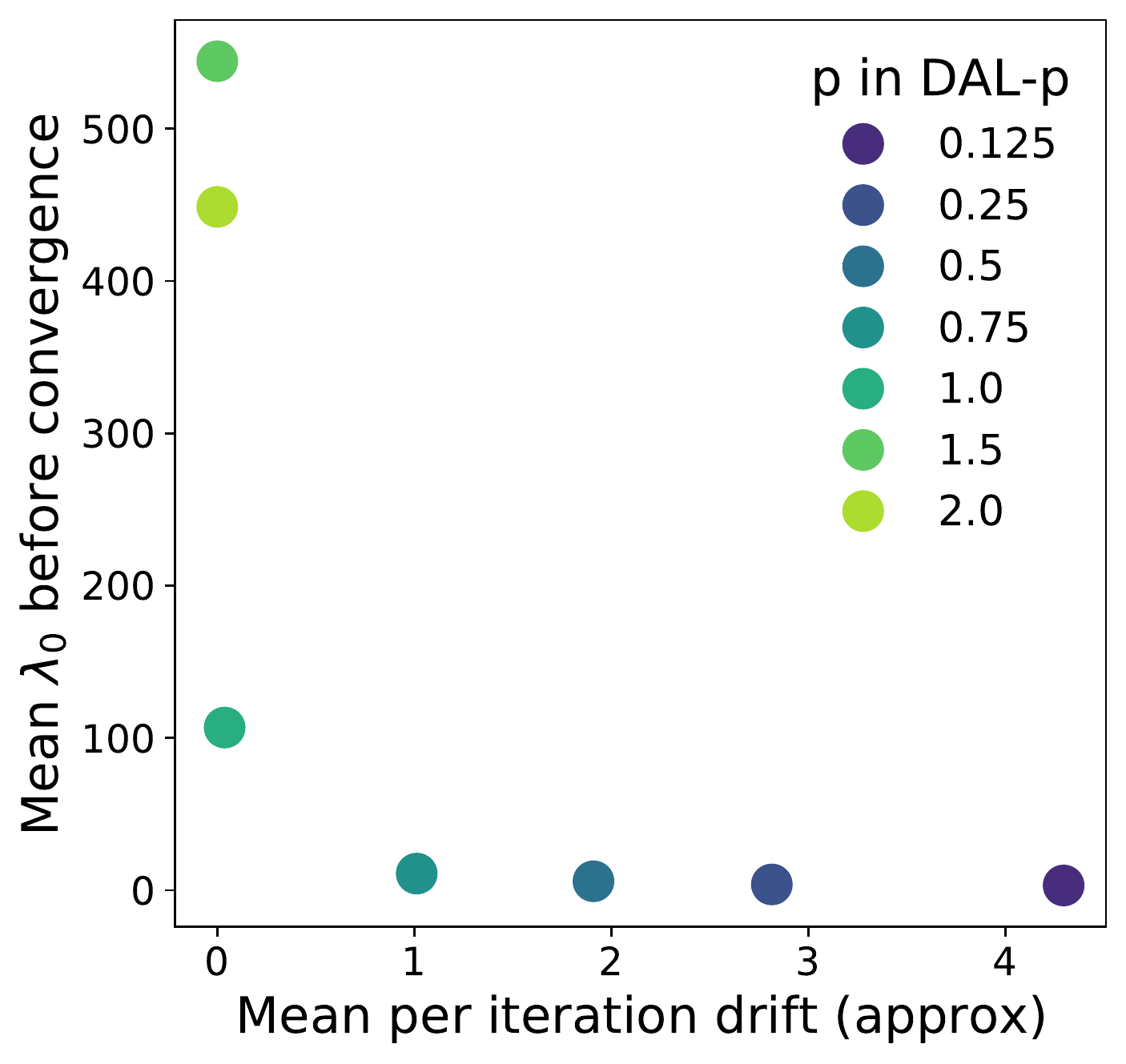}
}\end{subfigure}
\caption[VGG model trained on CIFAR-10 using full batch gradient descent, either with a fixed learning rate in a sweep or a DAL-$p$ sweep. The drift per iteration is approximated using $h^2/2 \nabla_{\vtheta}E^2 \nabla_{\vtheta}E$ or $h(\vtheta)^2/2 \nabla_{\vtheta}E^2 \nabla_{\vtheta}E$ (the adaptive learning rate) in case of DAL-p.]{The correlation between drift, test set performance and $\lambda_0$ in full batch training on CIFAR-10. The same pattern can be seen in SGD results in Figure \ref{fig:drift_lambda_sgd_connection}.} 
\label{fig:stability_lambda_performance}
\end{figure}

\rebuttalrthree{Inspired by understanding when the PF is close to the NGF, in this section we investigated the total discretisation drift of gradient descent. This led us to DAL-$p$, a method to automatically set the learning rate based on approximation to the per iteration drift of gradient descent; we have seen that DAL produces stable training and further connected discretisation drift, generalisation and flat landscapes as measured by leading Hessian eigenvalues.}

\section{Future work}
\label{sec:future_work}

\textbf{Beyond gradient descent}.
In this work we focused on understanding vanilla gradient descent. 
Understanding discretization drift via the PF can be beneficial for improving other gradient based optimization algorithms as well,
as we briefly illustrate for momentum updates with decay $m$ and learning rate $h$:
\begin{align}
\vv_t &= m \vv_{t-1} - h \nabla_{\vtheta} E(\vtheta_{t-1}); \hspace{10em}
\vtheta_t = \vtheta_{t-1} + \vv_t 
\end{align}

We can scale $\nabla_{\vtheta} E(\vtheta_{t-1})$ in the above not by a fixed learning rate $h$, but by adjusting the learning rate according to the approximation to the drift. This has two advantages: it removes the need for a learning rate sweep and it uses local landscape information in adapting the moving average, such that in areas of large drift the contribution is decreased, while it is increased in areas where the drift is small (a more formal justification is provided in Section~\ref{sec:proofs_total_per_iteration_drift}). This leads to the following updates:
\begin{align}
\vv_t &= m \vv_{t-1} - \frac{1}{2 ||{\nabla_{\vtheta}^2 E(\vtheta_{t-1})} \hat{\vg}(\vtheta)(\vtheta_{t-1})||}\nabla_{\vtheta} E(\vtheta_{t-1}) \hspace{10em}
\vtheta_t = \vtheta_{t-1} + \vv_t 
\end{align}

As with DAL-$p$, we can use powers to control the stability performance trade-off: the lower $p$, the more the current update contribution is reduced in high drift (instability) areas. We tested this approach on Imagenet and show results in Figure \ref{fig:momentum_imagenet}. The results show that integrating drift information improves the speed of convergence compared to standard gradient descent (Figure~\ref{fig:power_sweeps}), and leads to more stable training compared to using a fixed learning rate. We present additional experimental results in the Appendix.

\begin{figure}[thb]
 \includegraphics[width=0.45\columnwidth]{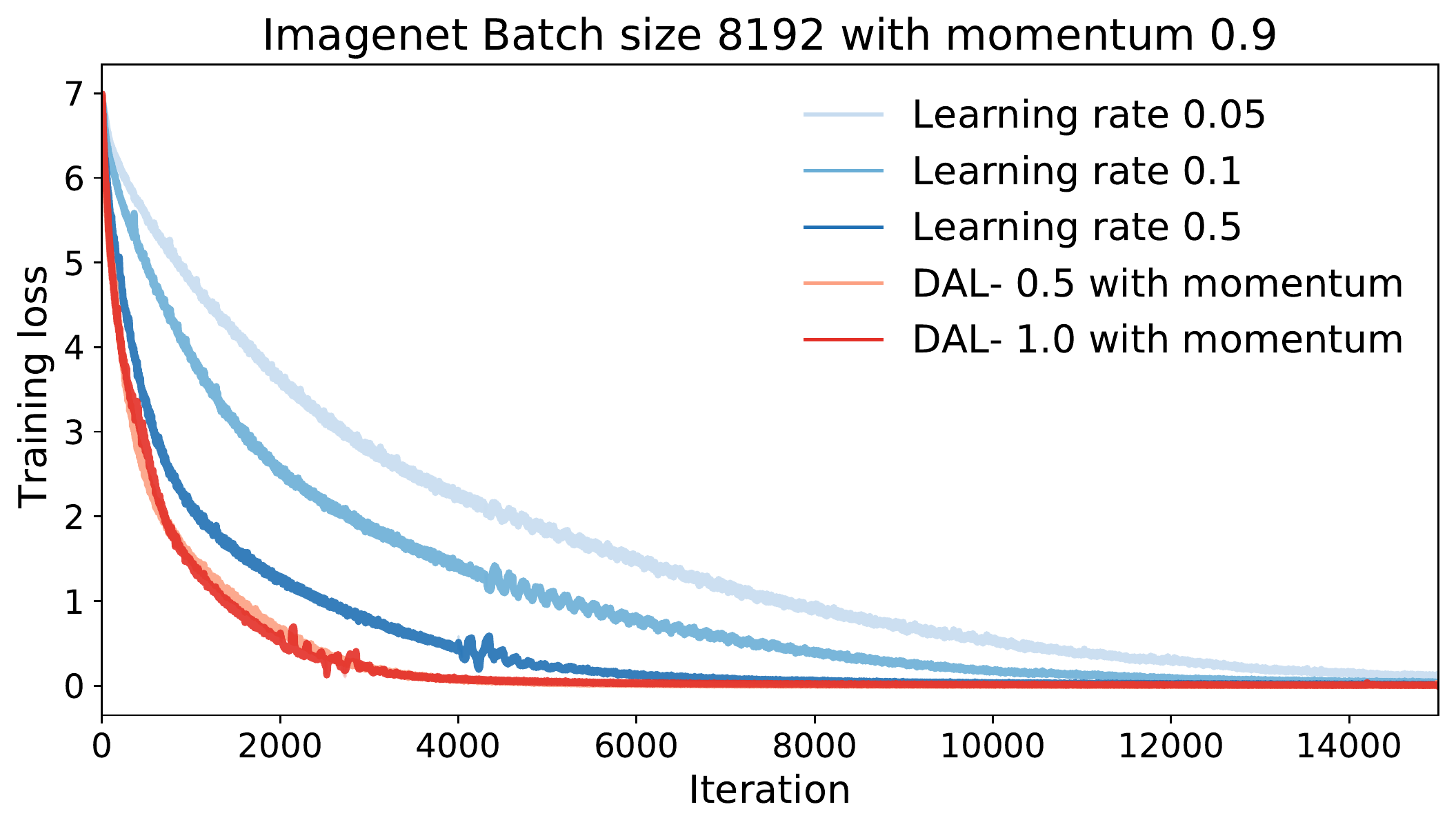}
  \includegraphics[width=0.45\columnwidth]{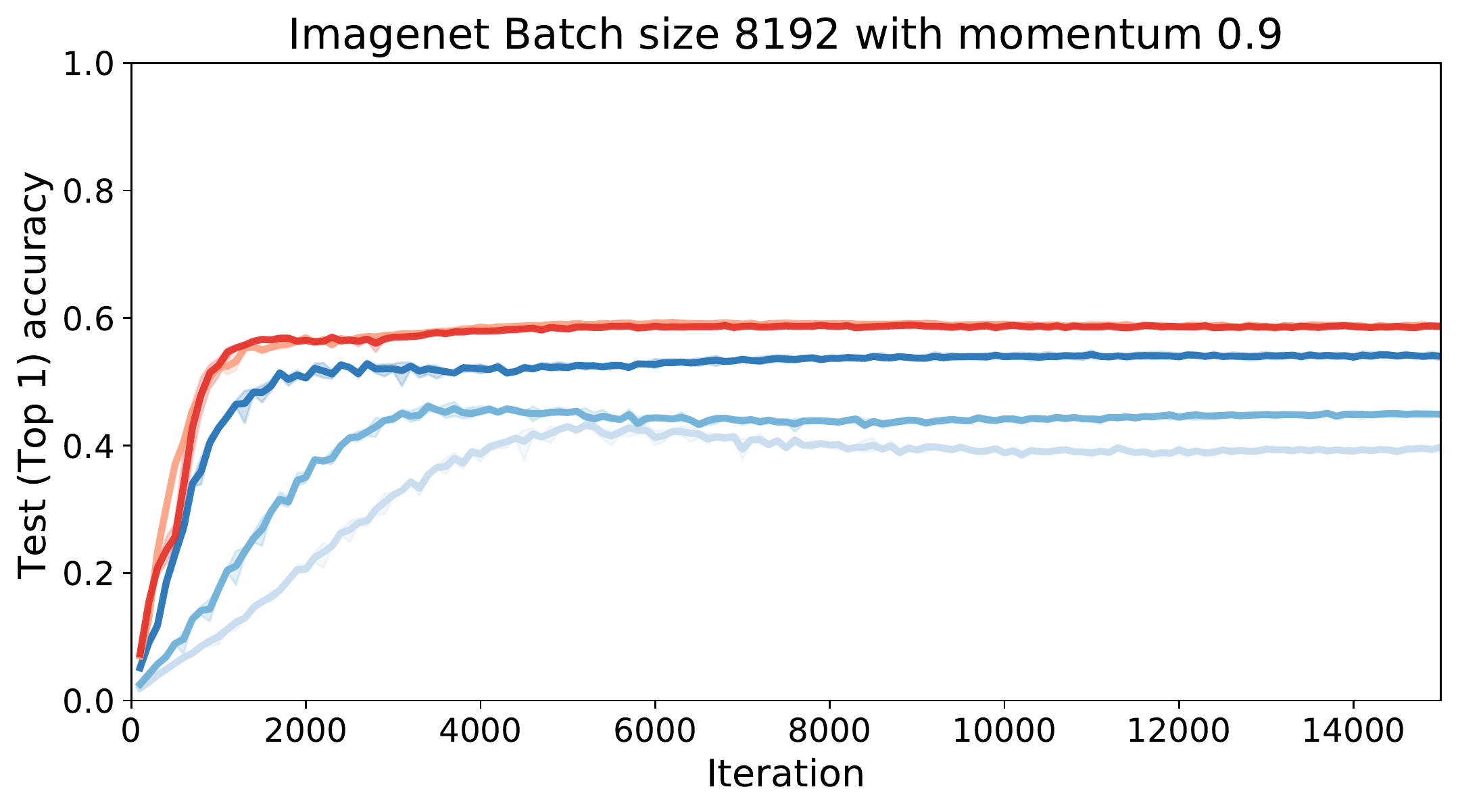}
\caption[Resnet-50 results with $0.9$ momentum and a learning rate sweep or DAL. Models trained on Imagenet.]{DAL with momentum: integrating drift information results in faster and more stable training compared to a fixed learning rate sweep. Compared to vanilla gradient descent there is also a significant performance and convergence speed boost.}
\label{fig:momentum_imagenet}
\end{figure}

Just as momentum is a common staple of optimization algorithms, so are adaptive schemes such as Adam~\citep{kingma2014adam} and Adagrad~\citep{duchi2011adaptive}, which adjust the step taken for each parameter independently. We can also use the knowledge from the PF to set a per parameter learning rate: instead of using $\norm{{\nabla_{\vtheta}^2 E(\vtheta_{t-1})} \nabla_{\vtheta} E(\vtheta_{t-1})}$ to set a global learning rate, we can use the per parameter information provided by $\nabla_{\vtheta}^2 E(\vtheta_{t-1}) \nabla_{\vtheta} E(\vtheta_{t-1})$ to adapt the learning rate of each parameter. 
We present preliminary results in the Appendix (Figures~\ref{fig:vgg_lr_scaling_per_parameter} and~\ref{fig:imagenet_lr_scaling_per_parameter}). The above two approaches (momentum and per-parameter learning rate adaptation) can be combined, bringing us closer to the most commonly used deep learning optimization algorithms.
While we do not explore this avenue here, we are hopeful that this understanding of discretization drift can be leveraged further to stabilize and improve deep learning optimization. 

\textbf{Non-principal terms}.
This work focuses on understanding the effects of the PF on the behavior of gradient descent.  
The principal terms however are not the only terms in the discretization drift: we have found one non-principal term (Eq~\ref{eq:third_order_modified_vector_field}) and have seen that it can have a stabilising effect (Figure~\ref{fig:intuition_banana}).
We provide a preliminary explanation for the stabilising effect of this non-principal term together with results measuring its value in neural network training in Section~\ref{sec:non_principal_ap} in the Appendix. \rebuttalrone{One  promising avenue of non-principal terms is theoretically modelling the change of the eigenvalues $\lambda_i$ in time;} \rebuttalrthree{another promising direction is that of implicit regularisation: while existing work which uses BEA in deep learning has found important implicit regularisation effects~\citep{igr,igr_sgd,rosca2021discretization}, we have shown here that considering only effects of $\mathcal{O}(h^3)$ is not sufficient to capture the intricacies of gradient descent, which suggests that other implicit regularisation effects could be uncovered using the non-principal terms.}

\rebuttalrthree{\textbf{Neural network theory}}.
\rebuttalrthree{Many theoretical works studying at gradient descent in the neural network context use the NGF~\citep{du2018algorithmic,elkabetz2021continuous,symmetry,jacot2018neural}. We posit that replacing NGF in these theoretical contexts with PF may yield interesting results. In contrast to the NGF, the PF allows the incorporation of the learning rate into the analysis, and unlike existing continuous time models of gradient descent, it can model unstable behaviors observed in the discrete case.}
\rebuttalrthree{An example can be seen using the Neural Tangent Kernel: \citet{jacot2018neural} model gradient descent using the NGF to show that in the infinite wide limit gradient descent for neural networks follows kernel gradient descent. The PF can be incorporated in this analysis either by replacing the NGF with the PF as a model of gradient descent or by studying the difference in the PF for infinitely wide and finite width networks, since discretisation drift could be responsible for the observed gap between finite and infinite networks in the large learning rate case \citep{lee2020finite}.}

\section{Related work}

\textbf{Modified flows for deep learning optimization}.
~\cite{igr} found the first order correction modified flow for gradient descent using BEA and uncovered its regularization effects; they were the first to show the power of BEA in the deep learning context. ~\citet{igr_sgd} find the first order error correction term in expectation during one epoch of stochastic gradient descent. 
 Modified flows have also been used for other optimizers than vanilla gradient descent: \citet{franca2020,shi2021understanding} compare momentum and Nesterov accelerated momentum; \citet{symmetry} study the symmetries of deep neural networks and use modified vector fields to show commonly used discrete updates break conservation laws present when using the NGF (for gradient descent they use the IGR flow while for momentum and weight decay they introduce different flows); \citet{kovachki2021continuous} use modified flows to understand the behavior of momentum by approximating Hamiltonian systems; \citet{dissipative} construct optimizers controlling their stability and convergence rates while \citet{stochastic_adaptive_sgd} construct optimizers with adaptive learning rates in the context of stochastic differential equations. In the context of two-player games, ~\citet{rosca2021discretization} compute the first order BEA correction terms while~\citet{chavdarova2021last} use high-resolution differential equations to shed light on the properties of different saddle point optimizers.

In concurrent work~\citet{miyagawatoward} use BEA to find a modified flow coined `Equations of Motion' (EOM) to describe gradient descent and find higher order terms, including non-principal terms; their focus is however on EOM(1), which is the IGR flow, which they use to understand scale and translation invariant layers. Their approach does not expand to complex space and does not capture the instabilities studied here (see also the discussion on the difference between the full modified flow provided by BEA and the PF in Section~\ref{sec:principal_flow}).

\textbf{Edge of stability and the importance of the Hessian}.
There have been a number of empirical studies on the Hessian in gradient descent. \cite{cohen2021gradient} observed the edge of stability behavior and performed an extensive study which led to many empirical observations used in this work. ~\citet{jastrzkebski2018relation} performed a similar study in the context of stochastic gradient descent.~\citet{sagun2017empirical,ghorbani2019investigation,papyan2018full} approximate the entire spectrum of the Hessian, and show that there are only a few negative eigenvalues, plenty of eigenvalues centered around 0, and a few positive eigenvalues with large magnitude. Similarly, ~\cite{gur2018gradient} discuss how gradient descent operates in a small subspace. ~\cite{lewkowycz2020large} discuss the large learning rate catapult in deep learning when the largest eigenvalue exceeds $2/h$.~\cite{gilmer2021loss} assess the effects of the largest Hessian eigenvalue in a large number of empirical settings. 

There have been a series of concurrent works aimed at theoretically explaining the empirical results above. \citet{ahn2022understanding} connect the edge of stability behavior with what they coin as the `relative progress ratio':
$\frac{E(\vtheta - h \nabla_{\vtheta} E) - E(\vtheta)}{h \norm{\nabla_{\vtheta} E}^2}$, which they empirically show is 0 in stable areas of training and 1 in the edge of stability areas. To see the connection between the relative progress ratio and the quantities discussed in this paper, one can perform a Taylor expansion on ${\frac{E(\vtheta - h \nabla_{\vtheta} E) - E(\vtheta)}{h \norm{\nabla_{\vtheta} E}^2} \approx \frac{- h \nabla_{\vtheta} E^T \nabla_{\vtheta} E + h^2/2 \nabla_{\vtheta} E^T \nabla_{\vtheta}^2 E \nabla_{\vtheta} E}{h \norm{\nabla_{\vtheta} E}^2} = -1 + h/2 \frac{\nabla_{\vtheta} E^T \nabla_{\vtheta}^2 E \nabla_{\vtheta} E}{\norm{\nabla_{\vtheta} E}^2}}$. 
While this ratio is related to the quantities we discuss, we also note significant differences: it is a scalar, and not a parameter length vector and thus does not capture per eigendirection behavior as we see with the stability coefficients (Section~\ref{sec:instability_deep_learning}). \citet{arora2022understanding} prove the edge of stability result occurs under certain conditions either on the learning rate or on the loss function. \citet{ma2022multiscale} empirically observe the multi-scale structure of the loss landscape in neural networks and use it to theoretically explain the edge of stability behavior of gradient descent. \citet{chen2022gradient} use low dimensional theoretical insights around a local minima to understand the edge of stability behavior. \citet{damian2022self} use a cubic Taylor expansion to show that gradient descent follows the trajectory of a projected method which ensures that $\lambda_0 < 2/h$ and $\nabla_{\vtheta} E^T \vu = 0$; their work is what inspired us to write the third order non-principal term in the form of Eq~\ref{eq:non_principal_minimisation} in the Appendix, after we had previously noted its stabilizing properties.
These important works are complementary to our own work; they do not use continuous time approaches and tackle primarily the edge of stability problem or its subcases, while we focus on understanding gradient descent and applying that understanding broadly, including but not limited to the edge of stability phenomenon.

\rebuttalrthree{\textbf{Discrete models of gradient descent}. The desire to understand learning rate specific behavior in gradient descent has been a motivation in the construction of discrete time analyses. These analyses have provided great insights, from studying noise in the stochastic gradient descent setting~\citep{liu2021noise,ziyin2021strength}, the study of overparametrized neural models and their convergence~\citep{gunasekar2018implicit,du2019gradient,allen2019convergence}, providing examples when gradient descent can converge to local maxima~\citep{ziyin2021sgd}, the importance of width for proving convergence in deep linear networks~\citep{du2019width}. We differ from these studies both in motivation and execution: we are looking for a continuous time flow which will increase the applicability of continuous time analysis of gradient descent. We do so by incorporating discretisation drift using BEA and showing that the resulting flow is a useful model of gradient descent, which captures instabilities and escape of local minima and saddle points.}

\textbf{Understanding the difference between the negative gradient flow and gradient descent}. \citet{elkabetz2021continuous} recently examined the differences between gradient descent and the NGF in the deep learning context; their work examines the importance of the Hessian in determining when gradient descent follows the NGF. Their theoretical results show that neural networks are roughly convex and thus for reasonably sized learning rates one can expect that gradient descent follows the NGF flow closely. Their results complement ours and their approach might be extended to help us understand why the PF is sufficient to shed light on many instability observations in the neural network training.

\textbf{Second-order optimization.} By using second order information (or approximations thereof) to set the learning rate, DAL is related to second-order approaches used in deep learning. 
Many second-order methods can be seen as approximates of Newton's method $\vtheta_t = \vtheta_{t-1} - \nabla^2_{\vtheta} E ^{-1}(\vtheta_{t-1}) \nabla_{\vtheta} E(\vtheta_{t-1})$.
Since computing the inverse of Hessian can be prohibitively expensive for large models, many practical methods approximate it with tractable alternatives~\citep{martens2015optimizing}. 
\citet{foret2020sharpness} propose an optimisation scheme directly aimed at minimising sharpness, and show this can improve generalisation.

\textbf{Connection between drift and generalization}.
We have made the connection between increased drift and increased generalization. This connection was first made by ~\citet{igr} through the IGR flow. Generalization has also been connected to the largest eigenvalue $\lambda_0$\citep{hochreiter1997flat,keskar2016large,jastrzkebski2018relation,lewkowycz2020large}; recently \citet{kaur2022maximum} however showed a more complex picture, primarily in the context of stochastic gradient descent. The largest eigenvalue could be a confounder to the drift as we have observed in Section~\ref{sec:trade_off}; we hope that future work can deepen these connections.

\section{Conclusion}

We have expanded on previous works which used Backward Error Analysis in deep learning to find a new continuous time flow, called \textbf{the Principal Flow}, to analyze the behavior of gradient descent. \rebuttalrthree{Unlike existing flows, the principal flow operates in complex space which enables it to better capture the behavior of gradient descent compared to existing flows, including but not limited to instability and oscillatory behavior}. 
\rebuttalrthree{We use the form of the Principal Flow to find new quantities relevant to the stability of gradient descent, and shed light on newly observed empirical phenomena, such as the edge of stability results.}
After understanding the core quantities connected to instabilities in deep learning we  devised an automatic learning rate schedule, DAL, which exhibits stable training. We concluded by cementing the connection between large discretization drift and increased generalization performance.
\rebuttalrthree{We ended by highlighting future work avenues including incorporating the principal flow in existing theoretical analyses of gradient descent which use the negative gradient flow, incorporating our understanding of the drift of gradient descent in other optimization approaches and specializing the PF for neural network function approximators.}

\vspace{1em}
\textbf{Acknowledgments}. We would like to thank the TMLR anonymous reviewers and the TMLR Action Editor for their useful feedback and comments. We would also like to thank Soham De and Michael Munn for discussions and feedback; and Frederic Besse, Marc Deisenroth, Patrick Cole, Shakir Mohamed and Timothy Lillicrap for their support.

\clearpage
\bibliography{references}

\clearpage
\appendix

\appendixpage
\startcontents[sections]
\printcontents[sections]{l}{1}{\setcounter{tocdepth}{2}}

\section{Proofs}
\label{sec:all_proofs}

\subsection{BEA proof structure}
\label{sec:bea_proof_structure}

\begin{figure}[h]
\begin{tikzpicture}[every text node part/.style={align=center,inner sep=0,outer sep=0}][overlay]
\coordinate (theta_t_minus_1) at (0,0);
\coordinate (theta_t) at (2.2,-1);

\node(draw) at ($(theta_t_minus_1) + (+0.3,-0.32)$) {$\vtheta_{t-1}$};
\node(draw) at ($(theta_t) + (0.5,-0.2)$) {$\vtheta_{t}$};
\node(draw) at ($(theta_t) + (5.8,-0.2)$) {$= {\color{purple} \vtheta_{t-1}}  {\color{blue}- h \nabla_{\vtheta} E(\vtheta_{t-1})}$};

\coordinate (first_time_transition) at ($(theta_t_minus_1) + (+0.55,1.6)$);

\coordinate (mod_cont_theta_t) at (2.2,0.8);
n!
\node(draw) at ($(mod_cont_theta_t) + (5.8,-0.)$) {$\tilde{\vtheta}(h) \hspace{1.8em}= \sum_{p=0}^{\infty} \frac{h^p}{p!} \tilde{\vtheta}^{(p)}  = {\color{purple} \vtheta_{t-1}} {\color{blue}- h \nabla_{\vtheta} E(\vtheta_{t-1})} +  \sum_{i=2}^{n+1} h^i \underbrace{l_i(\vtheta_{t-1})}_{{\color{red}{\mathbf{0}}}} + {\color{red}\mathcal{O}(h^{n+2})} $};

\draw [thick,dashed] (theta_t_minus_1) -- (theta_t);

\draw [thick]  (theta_t_minus_1) to[out=50,in=180]  node[near end,above] {$\hatdotvtheta$} (mod_cont_theta_t);

\draw [
    thick,
    decoration={
        brace,
        mirror,
        raise=0.1cm
    },
    decorate
] (theta_t) -- (mod_cont_theta_t)
node [pos=0.5,anchor=west,xshift=0.15cm,yshift=-0.0cm] { \scriptsize ${\color{red}\mathcal{O}(h^{n+2})}$};
\end{tikzpicture}
 \caption{BEA finds continuous modified flows which describe the gradient descent update with learning rate $h$ with an error of $\mathcal{O}(h^{n+2})$. We identify $f_1, \cdots f_n$ so that terms of order $\mathcal{O}(h^p), 2 \le  p \le n+1$ in $\tilde{\vtheta}(h)$ are $\mathbf{0}$.}
  \label{fig:bea_proof_structure}
\end{figure}

\textbf{General structure.}
The goal of BEA is to find the functions $f_1$, ... $f_{n}$ such that the flow
\begin{align}
    \hatdotvtheta =  - \nabla_{\vtheta} E + h f_1(\vtheta) + \cdots + h^n f_n(\vtheta)
\end{align}

has an error $\| \vtheta_t - \tilde{\vtheta}(h)\|$ of order $\mathcal{O}(h^{n+2})$ after 1 gradient descent step of learning rate $h$. To do so requires multiple steps (visualized in Figure~\ref{fig:bea_proof_structure}): 
\begin{enumerate}
    \item Expand $\tilde{\vtheta}(h)$ via a Taylor expansion in $h$: $\tilde{\vtheta}(h) = \sum_{p=0}^{\infty} \frac{h^p}{p!} \tilde{\vtheta}^{(p)}$;
    \item Expand each $\tilde{\vtheta}^{(p)}$ up to order $\mathcal{O}(h^{n+2-p})$ as a function of $f_1$, ... $f_{n}$ via the chain rule;
    \item Group together terms of the same order in $h$ in the expansion, up to order $n+2$. 
    \begin{align}
    \tilde{\vtheta}(h) = \sum_{i=0}^{n+1} h^i l_i(\vtheta) + \mathcal{O}(h^{n+2}) = \vtheta_{t-1} - h \nabla_{\vtheta} E(\vtheta_{t-1}) + \sum_{i=2}^{n+1} h^i l_i(\vtheta_{t-1}) + \mathcal{O}(h^{n+2})
    \end{align}
    \item Compare the above update with the gradient descent update $\vtheta_t = \vtheta_{t-1} - h \nabla_{\vtheta} E(\vtheta_{t-1})$ and conclude that $l_i=\mathbf{0}$, $\forall i \in \{2, n + 1\}$. Use this to identify $f_1$, ... $f_n$.
\end{enumerate}

\textbf{Notation and context}: all proofs below apply to general Euler updates not only gradient descent. We thus assume an update function $f$ with the Euler step $\vtheta_t = \vtheta_{t-1} + h f(\vtheta_{t-1})$. We can then use BEA to find the higher order correction terms describing the Euler discrete update up to a certain order, and replace $f = - \nabla_{\vtheta} E$ to obtain the corresponding results for gradient descent. When we perform a Taylor expansion in $h$ we often drop in notation the evaluation at $h=0$ and we make that implicit.

\subsection{Third order flow}
\label{sec:third_order_flow_proof}

\begin{theorem} The modified flow 
\begin{align}
 \dot{\vtheta} = f - h \frac{1}{2} \nabla_{\vtheta} f f + h^2 \left(\frac{1}{3}  (\nabla_{\vtheta} f)^2 f  + \frac {1} {12} f^T (\nabla^2_{\vtheta} f) f\right)
 \label{eq:third_order_euler_app}
\end{align}

with $\vtheta(0) = \vtheta_{t-1}$ follows an Euler update $\vtheta_{t} = \vtheta_{t-1} + hf(\vtheta_{t-1})$ with an error of $\mathcal{O}(h^4)$ after 1 gradient descent step.
\end{theorem}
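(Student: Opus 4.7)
The plan is to follow the four-step BEA recipe laid out in Section~\ref{sec:bea_proof_structure}: expand $\tilde{\vtheta}(h)$ as a Taylor series in $h$ around $\vtheta_{t-1}$, write each derivative $\tilde{\vtheta}^{(p)}$ as a function of $f$ (and its derivatives) by repeatedly applying the chain rule to the ODE defining the modified flow, regroup by powers of $h$, and check that the coefficients of $h^2$ and $h^3$ in $\tilde{\vtheta}(h) - \vtheta_{t-1} - hf(\vtheta_{t-1})$ both vanish, so that the remaining error is $\mathcal{O}(h^4)$. Write the modified field as $\dot{\vtheta} = f + h f_1 + h^2 f_2$ with $f_1 = -\tfrac{1}{2}\nabla f\,f$ and $f_2 = \tfrac{1}{3}(\nabla f)^2 f + \tfrac{1}{12}\, f^\top (\nabla^2 f)\, f$, so only these two $f_i$ must be checked.

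First I would compute $\tilde{\vtheta}^{(1)} = f + h f_1 + h^2 f_2$ directly, and $\tilde{\vtheta}^{(2)}$ by differentiating once along the flow:
\begin{equation*}
\tilde{\vtheta}^{(2)} = \nabla f\,\dot{\vtheta} + h \nabla f_1\,\dot{\vtheta} + h^2 \nabla f_2\,\dot{\vtheta} = \nabla f\, f + h\bigl(\nabla f\, f_1 + \nabla f_1\, f\bigr) + \mathcal{O}(h^2).
\end{equation*}
Substituting $f_1 = -\tfrac{1}{2}\nabla f\, f$ and expanding $\nabla f_1 = -\tfrac{1}{2}\bigl((\nabla^2 f)\,f + (\nabla f)^2\bigr)$ yields, after collecting, $\tilde{\vtheta}^{(2)} = \nabla f\,f + h\bigl(-(\nabla f)^2 f - \tfrac{1}{2} f^\top(\nabla^2 f) f\bigr) + \mathcal{O}(h^2)$. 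Next I would compute $\tilde{\vtheta}^{(3)} = \tfrac{d}{dt}\tilde{\vtheta}^{(2)}$ up to order $\mathcal{O}(h)$, which only needs the leading-order ODE $\dot{\vtheta} = f + \mathcal{O}(h)$: differentiating $\nabla f\,f$ along $f$ gives $(\nabla^2 f)(f,f) + (\nabla f)^2 f$, so $\tilde{\vtheta}^{(3)} = (\nabla f)^2 f + f^\top(\nabla^2 f) f + \mathcal{O}(h)$.

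Finally I would assemble everything into the series
\begin{equation*}
\tilde{\vtheta}(h) = \vtheta_{t-1} + h \tilde{\vtheta}^{(1)} + \tfrac{h^2}{2}\tilde{\vtheta}^{(2)} + \tfrac{h^3}{6}\tilde{\vtheta}^{(3)} + \mathcal{O}(h^4)
\end{equation*}
and collect by power of $h$. The $h^1$ term is $f$. The $h^2$ coefficient is $f_1 + \tfrac{1}{2}\nabla f\, f$, which vanishes by the choice of $f_1$ (this is the IGR computation already done in Section~\ref{sec:bea}). The $h^3$ coefficient works out to $f_2 + \tfrac{1}{2}\bigl(-(\nabla f)^2 f - \tfrac{1}{2} f^\top(\nabla^2 f) f\bigr) + \tfrac{1}{6}\bigl((\nabla f)^2 f + f^\top(\nabla^2 f) f\bigr)$, and substituting the stated $f_2$ one checks the $(\nabla f)^2 f$ terms combine as $\tfrac{1}{3} - \tfrac{1}{2} + \tfrac{1}{6} = 0$ and the $f^\top(\nabla^2 f) f$ terms as $\tfrac{1}{12} - \tfrac{1}{4} + \tfrac{1}{6} = 0$. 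Hence $\tilde{\vtheta}(h) = \vtheta_{t-1} + h f(\vtheta_{t-1}) + \mathcal{O}(h^4) = \vtheta_t + \mathcal{O}(h^4)$, proving the claim.

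The main obstacle is bookkeeping in the $h^3$ contributions: three distinct sources ($f_2$ from $\tilde{\vtheta}^{(1)}$, the chain-rule pieces in $\tilde{\vtheta}^{(2)}$, and the second chain-rule iteration in $\tilde{\vtheta}^{(3)}$) all contribute terms of the forms $(\nabla f)^2 f$ and $f^\top(\nabla^2 f) f$, and the symmetry of the Hessian-type term $f^\top(\nabla^2 f) f$ must be handled carefully when differentiating $\nabla f\, f$. Once those are tracked correctly, the cancellation is a short arithmetic check with the rational coefficients $\tfrac{1}{3}$ and $\tfrac{1}{12}$ precisely tuned to kill both groups simultaneously.
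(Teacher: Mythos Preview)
Your proposal is correct and follows essentially the same BEA route as the paper: Taylor expand $\tilde{\vtheta}(h)$ to order $h^3$, compute $\tilde{\vtheta}^{(2)}$ and $\tilde{\vtheta}^{(3)}$ by repeated chain rule along $\dot{\vtheta}=f+hf_1+h^2f_2$, and match the $h^2$ and $h^3$ coefficients against the Euler step. The only organizational difference is that the paper keeps $f_1,f_2$ symbolic in the expansion and then \emph{solves} for them from the vanishing conditions, whereas you plug in the stated $f_1=-\tfrac{1}{2}\nabla f\,f$ and $f_2=\tfrac{1}{3}(\nabla f)^2 f+\tfrac{1}{12}f^\top(\nabla^2 f)f$ from the outset and \emph{verify} the cancellations $\tfrac{1}{3}-\tfrac{1}{2}+\tfrac{1}{6}=0$ and $\tfrac{1}{12}-\tfrac{1}{4}+\tfrac{1}{6}=0$; for the theorem as stated (which gives the flow explicitly) your verification is entirely appropriate and the underlying computation is the same.
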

\begin{proof}

Since we are using BEA, we wil be looking for functions $f_1$ and $f_2$ such that the modified flow:
\begin{align}
 \dot{\vtheta} = f + h f_1 + h^2 f_2
 \label{eq:bea_structure_third_order}
\end{align}

follow the steps of GD with an error up to $\mathcal{O}(h^4)$.
We now perform a Taylor expansion of step size $h$ of the above modified flow to be able to see the displacement in that time up to order $\mathcal{O}(h^4)$.

We obtain (all function evaluations of $f$  and $f_i$ are at $\vtheta_{t-1}$ which we omit for simplicity, and annotate proof steps, CR denotes Chain Rule): 
\begin{align} 
   \vtheta&(h) = \vtheta(0) + h \dot{\vtheta}(0) + \frac{1}{2} h^2 \dot{\dot{\vtheta}}(0) + \frac{1}{6} h^3 \dot{\dot{\dot{\vtheta}}}(0) + \mathcal{O}(h^4)\\
                     &= \vtheta_{t-1} + h \left[ (f + h f_1 + h^2 f_2) \right] + \frac{1}{2} h^2 \frac{d}{dt}(f + h f_1 + h^2 f_2) + \frac{1}{6} h^3 \dot{\dot{\dot{\vtheta}}} + \mathcal{O}(h^4)  \owntag{using Eq \ref{eq:bea_structure_third_order}} \\
                     &= \vtheta_{t-1} + h f + h^2 f_1 + h^3 f_2 + \frac{1}{2} h^2 (\nabla_{\vtheta} (f + h f_1 + h^2 f_2))(f + h f_1 + h^2 f_2) + \frac{1}{6} h^3 \dot{\dot{\dot{\vtheta}}} + \mathcal{O}(h^4) \owntag{CR} \\
                    &= \vtheta_{t-1} + h f + h^2 f_1 + h^3 f_2 + \frac{1}{2} h^2 (\nabla_{\vtheta} f f + h  \nabla_{\vtheta} f f_1 + h  \nabla_{\vtheta}f_1 f) + \frac{1}{6} h^3 \dot{\dot{\dot{\vtheta}}} + \mathcal{O}(h^4) \\
                     &= \vtheta_{t-1} + h f + h^2 f_1 + h^3 f_2 + \frac{1}{2} h^2 (\nabla_{\vtheta} f f + h  \nabla_{\vtheta} f f_1 + h  \nabla_{\vtheta}f_1 f) + \frac{h^3}{6} \frac{d}{dt} \left[\nabla_{\vtheta} f f + h  \nabla_{\vtheta} f f_1 + h  \nabla_{\vtheta}f_1 f\right] + \mathcal{O}(h^4) \\
                    &= \vtheta_{t-1} + h f + h^2 f_1 + h^3 f_2 + \frac{1}{2} h^2 (\nabla_{\vtheta} f f + h  \nabla_{\vtheta} f f_1 + h  \nabla_{\vtheta}f_1 f) + \frac{1}{6} h^3 \frac{d}{dt} \left[\nabla_{\vtheta} f f\right] + \mathcal{O}(h^4) \\
                    &= \vtheta_{t-1} + h f + h^2 f_1 + h^3 f_2 + \frac{1}{2} h^2 (\nabla_{\vtheta} f f + h  \nabla_{\vtheta} f f_1 + h  \nabla_{\vtheta}f_1 f) + \frac{1}{6} h^3  \nabla_{\vtheta}\left[\nabla_{\vtheta} f f\right](f + h f_1 + h^2 f_2) + \mathcal{O}(h^4)  \owntag{CR} \\
                    &= \vtheta_{t-1} + h f + h^2 f_1 + h^3 f_2 + \frac{1}{2} h^2 (\nabla_{\vtheta} f f + h  \nabla_{\vtheta} f f_1 + h  \nabla_{\vtheta}f_1 f) + \frac{1}{6} h^3  \nabla_{\vtheta}\left[\nabla_{\vtheta} f f\right] f + \mathcal{O}(h^4)
\label{eq:modified_flow_exp}
\end{align}

If we match the terms with the Euler update, we obtain that:

\begin{align}
f_1 = -\frac{1}{2} \nabla_{\vtheta} f f
\label{eq:f_1_third_order}
\end{align}
and

\begin{align}
f_2 &= - \frac{1}{2}  \left[\nabla_{\vtheta} f  f_1  + \nabla_{\vtheta}f_1 f \right] - \frac{1}{6} \nabla_{\vtheta}\left[\nabla_{\vtheta} f f\right]f \\
f_2 &= - \frac{1}{2}  \left[\nabla_{\vtheta} f (-\frac{1}{2} \nabla_{\vtheta} f f)  + \nabla_{\vtheta} (-\frac{1}{2} \nabla_{\vtheta} f f) f\right] - \frac{1}{6} \nabla_{\vtheta}\left[\nabla_{\vtheta} f f\right] f \\
f_2 &=  \frac{1}{4}  (\nabla_{\vtheta} f)^2 f + \frac 1 4 \nabla_{\vtheta} (\nabla_{\vtheta} f f) f - \frac{1}{6} \nabla_{\vtheta}\left[\nabla_{\vtheta} f f\right] f \\
f_2 &=  \frac{1}{4}  (\nabla_{\vtheta} f)^2 f + \frac {1} {12}  \nabla_{\vtheta} (\nabla_{\vtheta} f f) f \\
f_2 &=  \frac{1}{4}  (\nabla_{\vtheta} f)^2  f + \frac {1} {12}  \left({(\nabla_{\vtheta} f)}^2 f + f^T \nabla^2_{\vtheta} f  \right) f\\
f_2 &=  \frac{1}{3}  (\nabla_{\vtheta} f)^2  f + \frac {1} {12} f^T (\nabla^2_{\vtheta} f)  f
\label{eq:f_2_third_order}
\end{align}

Replacing $f_1$ and $f_2$ in Eq~\ref{eq:bea_structure_third_order} leads to the desired Eq~\ref{eq:third_order_euler_app}.

\end{proof}

In the case of gradient descent, substituting $f = -\nabla_{\vtheta} E$ in $f_1$ and $f_2$, we obtain (Eq~\ref{eq:third_order_modified_vector_field}):

\begin{align}
\dot{\vtheta} = -\nabla_{\vtheta}E   -\frac{h}{2} \nabla_{\vtheta}^2E  \nabla_{\vtheta}E - h^2 \left( \frac{1}{3}  (\nabla_{\vtheta}^2E)^2 \nabla_{\vtheta} E + \frac {1} {12} \nabla_{\vtheta}E^T (\nabla_{\vtheta}^3E) \nabla_{\vtheta}E\right)
\end{align}

\subsection{Higher order terms of the form $(\nabla_{\vtheta} f)^n f$ or $\left(\nabla_{\vtheta}^2 E\right)^n \nabla_{\vtheta} E$}
\label{sec:higher_order_proofs}

\begin{theorem}
The modified flow with an error of order $\mathcal{O}(h^{p+1})$ to the Euler update  ${\vtheta_t = \vtheta_{t-1} + h f(\vtheta_{t-1})}$
has the form:
\begin{align}
    \dot{\vtheta} = \sum_{n=0}^{p} \frac{(-1)^{n}}{n+1} (\nabla_{\vtheta} f)^n f + \mathcal{C}(\nabla_{\vtheta}^2 f)
\end{align}

where $\mathcal{C}(\nabla_{\vtheta}^2 f)$ denotes a class of functions defined as a sum of individual terms each containing higher than first order derivatives applied to $f$.

\label{thm:general_bea}
\end{theorem}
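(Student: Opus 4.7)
My plan is to prove the statement by induction on $p$, following the BEA proof template laid out in Section~\ref{sec:bea_proof_structure} and mirroring the explicit base-case computations for the IGR flow ($f_1$) and the third order flow ($f_2$) already recorded in the excerpt. The induction hypothesis I would carry is the sharper statement that each correction term admits the decomposition $f_k = \frac{(-1)^k}{k+1}(\nabla_\vtheta f)^k f + C_k$, where $C_k \in \mathcal{C}(\nabla_\vtheta^2 f)$ is a sum of monomials each containing at least one derivative of $f$ of order $\ge 2$; the theorem then follows by summing $\sum_{k=0}^{p} h^k f_k$. The base cases $k=0,1,2$ are verified directly: $f_0 = f$ (trivially $\frac{(-1)^0}{1}(\nabla_\vtheta f)^0 f$), $f_1 = -\tfrac12 \nabla_\vtheta f \, f$ from Eq~\ref{eq:f_1_third_order}, and $f_2 = \tfrac{1}{3}(\nabla_\vtheta f)^2 f + \tfrac{1}{12} f^{T}(\nabla_\vtheta^2 f)\,f$ from Eq~\ref{eq:f_2_third_order}, so in each case the principal part has exactly the claimed coefficient and the remainder lies in $\mathcal{C}(\nabla_\vtheta^2 f)$.

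For the inductive step I would expand the ansatz $\dot{\tilde\vtheta} = f + \sum_{k=1}^{p} h^k f_k$ as $\tilde\vtheta(h) = \sum_{j=0}^{p+1} \frac{h^j}{j!}\tilde\vtheta^{(j)}(0) + \mathcal{O}(h^{p+2})$, using $\tilde\vtheta^{(j+1)} = \nabla_\vtheta(\tilde\vtheta^{(j)}) \cdot \dot{\tilde\vtheta}$ recursively to rewrite each derivative as a polynomial in $f$, the unknowns $f_k$, and their gradients. Matching the coefficient of $h^{p+1}$ against the Euler step $\vtheta_t = \vtheta_{t-1} + h f(\vtheta_{t-1})$ (which has no $h^{p+1}$ contribution) yields a linear equation that isolates $f_p$ as an algebraic combination of $f$ and the previously determined $f_0,\dots,f_{p-1}$ and their derivatives, exactly as in Eqs~\ref{eq:modified_flow_exp}--\ref{eq:f_2_third_order}. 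The induction hypothesis lets me replace each lower $f_k$ by its principal part plus a $\mathcal{C}(\nabla_\vtheta^2 f)$ remainder; any summand that inherits such a remainder, or that acquires a factor $\nabla_\vtheta^2 f$ through an application of the chain rule (via $\nabla_\vtheta(\nabla_\vtheta f \cdot g) = (\nabla_\vtheta^2 f)\, g + \nabla_\vtheta f \cdot \nabla_\vtheta g$), is absorbed into $\mathcal{C}(\nabla_\vtheta^2 f)$.

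What survives after this projection is precisely the contribution one would obtain by pretending $f$ is linear, i.e.\ $f(\vtheta) = A\vtheta$ with $A := \nabla_\vtheta f$ treated as a constant matrix so that $\nabla_\vtheta^2 f \equiv 0$. In that degenerate case the Euler map is $\vtheta_t = (I+hA)\vtheta_{t-1}$ and the exact interpolating vector field is $\dot\vtheta = \tfrac{1}{h}\log(I+hA)\,\vtheta$, whose Taylor expansion $\tfrac{\log(1+z)}{z} = \sum_{n\ge 0} \tfrac{(-1)^n}{n+1} z^n$ pins the coefficient of $h^p (\nabla_\vtheta f)^p f$ in $f_p$ to exactly $\tfrac{(-1)^p}{p+1}$, closing the induction. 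The main obstacle is making this linear-projection step rigorous: one must verify that stripping all occurrences of $\nabla_\vtheta^2 f$ and higher derivatives commutes with the BEA expansion order by order, i.e.\ that no cancellation inside $\mathcal{C}(\nabla_\vtheta^2 f)$ can leak back into the principal part. This is a structural combinatorial check on the chain-rule bookkeeping rather than a new calculation, and once it is in place the closed form of $\log(I+hA)/h$ supplies every principal coefficient simultaneously, so the induction does not require re-deriving $f_p$ from scratch at each order.
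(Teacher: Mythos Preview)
Your proposal is correct and shares the same inductive backbone as the paper's proof: both establish that each BEA correction has the form $f_k = c_k(\nabla_\vtheta f)^k f + \mathcal{C}(\nabla_\vtheta^2 f)$ by replacing the lower $f_j$'s via the induction hypothesis and absorbing every term that picks up a $\nabla_\vtheta^2 f$ factor (either inherited from a $C_j$ or produced by the product rule) into $\mathcal{C}(\nabla_\vtheta^2 f)$. The paper carries this out more explicitly, introducing the auxiliary coefficients $A_{k,m}^p$ (Lemma~\ref{lemma:theta_n_expansion} and Lemma~\ref{lemma:induction}) to track the Taylor bookkeeping.

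Where you genuinely diverge is in identifying the constants $c_k$. The paper extracts from the induction an explicit recurrence
\[
c_n \;=\; -\sum_{k=2}^{n+1}\frac{1}{k!}\sum_{l_1+\cdots+l_k=n-k+1} c_{l_1}\cdots c_{l_k},
\]
and then solves it by passing to the generating function $c(x)=\sum_n c_n x^n$, which is shown to satisfy $e^{xc(x)}=1+x$ and hence $c(x)=\log(1+x)/x$ (Lemma~\ref{lemma:c_n_recurrence_proof}). You bypass both the recurrence and the generating-function computation by observing that the projection ``set $\nabla_\vtheta^2 f=0$'' reduces the problem to the linear case $f(\vtheta)=A\vtheta$, where the exact interpolating flow $\dot\vtheta=\tfrac{1}{h}\log(I+hA)\,\vtheta$ is known in closed form and delivers all $c_k$ at once. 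Your route is shorter and conceptually cleaner---it explains \emph{why} the logarithm appears rather than discovering it at the end of a generating-function manipulation---while the paper's route has the advantage of making the recurrence among the $c_k$'s explicit, which could be useful if one wanted to compute or bound individual terms without appealing to the closed form. The ``commutation'' obstacle you flag is exactly what the paper's Lemma~\ref{lemma:induction} handles by its line-by-line chain-rule accounting; your argument that it is a structural check (the BEA construction is a universal differential polynomial in $f$, so specialization to linear $f$ is a well-defined ring homomorphism on the space of such polynomials) is sound, but you should state explicitly that the grading forces $(\nabla_\vtheta f)^k f$ to be the \emph{unique} monomial of the correct weight with no higher derivatives, so that a single scalar $c_k$ really does capture the entire principal part.
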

\begin{proof}
Consider the modified flow given by BAE:
\begin{align}
\dot{\vtheta} = \tilde{f} = f + h f_1 + h^2 f_2 + ... + h^n f_n + ...
\label{eq:modified_bea_app}
\end{align}

 For the proof we need to show that the only terms not involving higher order derivatives are of the form $(\nabla_{\vtheta} f)^{n}f $  and we need to find the coefficients of $(\nabla_{\vtheta} f)^{n} f$ in $f_n$, which we will call $c_n$. We already know that $c_0 = 1$, $c_1 = -\frac{1}{2}$ (Eq.~\ref{eq:f_1_third_order}) and $c_3 = \frac 1 3$ (Eq.~\ref{eq:f_2_third_order}).
We use the general structure provided in Section~\ref{sec:bea_proof_structure}.

We start by  expanding $\vtheta(h)$ via Taylor expansion (Step 1 in the proof structure in Section~\ref{sec:bea_proof_structure}):
    \begin{align}
\vtheta(h) = \vtheta(0) + h \dot{\vtheta} + \sum_{k=2}^{\infty} \frac{1}{k!} \vtheta^{(k)} 
                 = \vtheta_{t-1} +  h f(\vtheta_{t-1}) + \sum_{n=1}^{\infty} h^{n+1} f_n + \sum_{k=2}^{\infty} \frac{h^{k}}{k!}  \vtheta^{(k)} 
\label{eq:modified_ode_expansion}
\end{align}

We now express in ${\vtheta}^{(k)}$ in terms of the wanted quantities $f_n$ via  Lemma~\ref{lemma:theta_n_expansion}:
    \begin{align}\vtheta^{(k)} = \sum_{m \ge 0}h^m  \frac{d^{k-2}}{dt^{k-2}} \sum_{i+j=m}  \nabla_{\vtheta} f_j f_i \hspace{1em} k\ge2
    \label{eq:theta_k_expansion_result}
    \end{align}

By replacing Eq~\ref{eq:theta_k_expansion_result} in Eq~\ref{eq:modified_ode_expansion} we obtain (Step 2 in the proof structure in Section~\ref{sec:bea_proof_structure}):
       \begin{align}
       \vtheta(h) &= \vtheta_{t-1} +  h f(\vtheta_{t-1}) + \sum_{n=1}^{\infty} h^{n+1} f_n(\vtheta_{t-1}) + \sum_{k=2}^{\infty} \frac{h^k}{k!} \sum_{m \ge 0}h^m  \frac{d^{k-2}}{dt^{k-2}} \sum_{i+j=m}  \nabla_{\vtheta} f_j(\vtheta_{t-1}) f_i(\vtheta_{t-1})  \\
       & =  \vtheta_{t-1} +  h f(\vtheta_{t-1}) + \sum_{n=1}^{\infty} h^{n+1} f_n(\vtheta_{t-1}) +  \sum_{m \ge 0} \sum_{k=2}^{\infty} \frac{h^{k+m }}{k!}  \frac{d^{k-2}}{dt^{k-2}} \sum_{i+j=m}  \nabla_{\vtheta} f_j(\vtheta_{t-1}) f_i(\vtheta_{t-1})
       \label{eq:theta_h_exp_f_i}
\end{align}

 We note that due to the derivative wrt to $t$ and a dependence of $h$ in the modified vector field, there will be a dependence of $h$ in $\frac{d^{k-2}}{dt^{k-2}} \sum_{i+j=m}  \nabla_{\vtheta} f_j f_i$, since the chain rule will expose $\frac{d \vtheta}{d t}$ which depends on $h$ (Eq~\ref{eq:modified_bea_app}). To highlight this dependence on $h$ we introduce the notation:
       \begin{align} 
        \frac{d^{k-2}}{dt^{k-2}} \sum_{i+j=m}  \nabla_{\vtheta} f_j f_i = \sum_p h^p A_{k,m}^{p}
        \label{eq:f_n_based_on_a}
     \end{align}

    where $A_{k,m}^{p}$ is defined to be the term of order $h^p$ in $\frac{d^{k-2}}{dt^{k-2}} \sum_{i+j=m}  \nabla_{\vtheta} f_j f_i$; thus by definition $A_{k,m}^{p}$ does not depend on $h$. Replacing the above in Eq~\ref{eq:theta_h_exp_f_i}, we conclude Step 3 outlined in the proof structure in Section~\ref{sec:bea_proof_structure}:
    \begin{align}
       \vtheta(h)  =  \vtheta_{t-1} +  h f(\vtheta_{t-1}) + \sum_{n=1}^{\infty} h^{n+1} f_n(\vtheta_{t-1}) +  \sum_{m \ge 0} \sum_{k=2}^{\infty} \frac{h^{k+m }}{k!} \sum_p  h^p A_{k,m}^{p}(\vtheta_{t-1})
\end{align}

Since the Euler update $\vtheta_t = \vtheta_{t-1} +hf(\vtheta_{t-1})$ does not involve terms of order higher than $h$ the $f_n$ terms are obtained by cancelling the terms of order $\mathcal{O}(h^{n+1})$  in $\vtheta(h)$ (Step 4 outlined in the proof structure in Section~\ref{sec:bea_proof_structure}), we have that for $k+m+p = n+1$:
     \begin{align}
        f_n = - \sum_{m \ge 0} \sum_{k=2}^{\infty} \frac{1}{k!}A_{k,m}^{n+1-(k+m)}
     \end{align}

   Since $A_{k,m}^{n+1-(k+m)}$ depends on the functions $f_n$, we have found a recursive relation for $f_n$. We now use induction to show that
     \begin{align}
      f_n = c_n  (\nabla_{\vtheta} f)^{n} f + \mathcal{C}(\nabla^2_{\vtheta} f)
     \end{align}
    where $\mathcal{C}(\nabla^2_{\vtheta} f)$ denotes a class of functions defined as a sum of individual terms each containing higher than first order derivatives applied to $f$  (Lemma~\ref{lemma:induction}). The same proof shows the recurrence relation for $c_n$:
    \begin{align}
        c_n = - \sum_{k=2}^{n+1} \frac{1}{k!} \sum_{l_1+ l_2+ ...+l_k = n - k + 1} c_{l_1} c_{l_2} ... c_{l_k}
    \end{align}

with solution $c_n = \frac{(-1)^{n}}{n+1}$ (Lemma~\ref{lemma:c_n_recurrence_proof})

We have found that each $f_n$ can be written as $\frac{(-1)^{n}}{n+1} (\nabla_{\vtheta} f)^n f + \mathcal{C}(\nabla_{\vtheta}^2 f)$. Replacing this in Eq.\ref{eq:modified_bea_app} concludes the proof.

\end{proof}

In the case of gradient descent, where $f = - \nabla_{\vtheta} E$ we have that the modified flow has the form 
\begin{align}
    \dot{\vtheta} &= \sum_{n=0}^{p} \frac{(-1)^{n}}{n+1}  (-\nabla_{\vtheta} f)^n (- \nabla_{\vtheta} E) + \mathcal{C}(\nabla_{\vtheta}^2 f) \\
                &= \sum_{n=0}^{p} \frac{(-1)^{n}}{n+1}  (-\nabla_{\vtheta}^2 E)^n (- \nabla_{\vtheta} E) + \mathcal{C}(\nabla_{\vtheta}^2 f) \\
                &= \sum_{n=0}^{p} \frac{-1}{n+1}  (\nabla_{\vtheta}^2 E)^n (\nabla_{\vtheta} E) + \mathcal{C}(\nabla_{\vtheta}^2 f) 
\end{align}

which is the statement of Theorem~\ref{thm:order_n_flow} in the main paper.

\begin{lemma} 
    \begin{align}
    \vtheta^{(k)} = \sum_{m \ge 0}h^m  \frac{d^{k-2}}{dt^{k-2}} \sum_{i+j=m}  \nabla_{\vtheta} f_j f_i \hspace{1em} k\ge2
    \end{align}
\label{lemma:theta_n_expansion}
\end{lemma}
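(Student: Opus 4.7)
The plan is to establish the lemma by first proving the $k=2$ base case directly from the modified flow equation $\dot{\vtheta} = \tilde f = \sum_{n\ge 0} h^n f_n$, and then noting that all higher $k$ cases follow simply by differentiating the $k=2$ identity $(k-2)$ more times with respect to $t$.

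For the base case, I would differentiate $\dot{\vtheta} = \sum_{n\ge 0} h^n f_n(\vtheta)$ once in $t$ and apply the chain rule to each $f_n$, using that $\vtheta$ itself satisfies the modified flow. This gives
\begin{align}
\ddot{\vtheta} \;=\; \sum_{n\ge 0} h^n \nabla_{\vtheta} f_n \cdot \dot{\vtheta} \;=\; \sum_{n\ge 0} h^n \nabla_{\vtheta} f_n \sum_{i\ge 0} h^i f_i.
\end{align}
Collecting terms of the same order in $h$ and re-indexing the inner sum over pairs $(i,j)$ with $i+j=m$ yields
\begin{align}
\ddot{\vtheta} \;=\; \sum_{m\ge 0} h^m \sum_{i+j=m} \nabla_{\vtheta} f_j \, f_i,
\end{align}
which is exactly the claimed expression for $k=2$ (the operator $\tfrac{d^{0}}{dt^{0}}$ being the identity).

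For the inductive step from $k$ to $k+1$ (or more directly, for arbitrary $k\ge 2$), I would simply apply $\tfrac{d^{k-2}}{dt^{k-2}}$ to both sides of the $k=2$ identity. Since $h$ is a fixed parameter independent of $t$, the time derivative commutes with multiplication by $h^m$ and with the (formal) sum over $m$, so
\begin{align}
\vtheta^{(k)} \;=\; \frac{d^{k-2}}{dt^{k-2}} \ddot{\vtheta} \;=\; \sum_{m\ge 0} h^m \, \frac{d^{k-2}}{dt^{k-2}} \sum_{i+j=m} \nabla_{\vtheta} f_j \, f_i,
\end{align}
which is the stated identity.

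The only delicate point is the interchange of $\tfrac{d^{k-2}}{dt^{k-2}}$ with the infinite sum over $m$; this is the step I expect to flag explicitly. In the BEA setting, however, the modified vector field is treated as a formal power series in $h$ (as the authors emphasize around Eq.~\ref{eq:bea_series}), so term-by-term differentiation is valid as an identity of formal series. Alternatively, one can simply interpret both sides as formal power series in $h$ whose coefficients are smooth functions of $\vtheta$, in which case the interchange is definitional and no analytic convergence issue arises.
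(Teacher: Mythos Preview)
Your proof is correct and follows essentially the same approach as the paper: compute $\ddot{\vtheta}$ by the chain rule as $\nabla_{\vtheta}\tilde f\,\tilde f$, expand this as a Cauchy product in $h$ to get the $k=2$ case, and then apply $\tfrac{d^{k-2}}{dt^{k-2}}$ for general $k$. Your explicit remark about interpreting the interchange of differentiation and summation as an identity of formal power series in $h$ is a welcome addition that the paper leaves implicit.
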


\begin{proof}

\begin{align}
\dot{\vtheta} &= \tilde{f} \\
\dot{\dot{\vtheta}} &= \nabla_{\vtheta}\tilde{f}  \tilde{f} \\
{\vtheta}^{(k)} &= \frac{d^{k-2}}{dt^{k-2}} \dot{\dot{\vtheta}}= \frac{d^{k-2}}{dt^{k-2}}  \nabla_{\vtheta}\tilde{f} \tilde{f}
\end{align}

We have that
\begin{align}
 \nabla_{\vtheta}\tilde{f} \tilde{f} = \left( \sum_i h^i \nabla_{\vtheta} f_i \right) \left( \sum_j h^j f_j \right)= \sum_{m \ge 0} h^m \sum_{i+j=m}  \nabla_{\vtheta} f_i f_j
\end{align}

which concludes the proof.
We note that this is an adaptation of Lemma A.2 from~\citep{igr}. We need this adaptation to avoid assuming that $f$  is a symmetric vector field - i.e that $\tilde{f} \nabla_{\vtheta}\tilde{f} = \frac{1}{2} \nabla_{\vtheta} \norm{\tilde{f}}^2$.
\end{proof}

\begin{lemma} By induction we have that:
    \begin{align}
      f_n = c_n  (\nabla_{\vtheta} f)^{n} f + \mathcal{C}(\nabla^2_{\vtheta} f)
     \end{align}
with 
\begin{align}
c_n = - \sum_{k=2}^{n+1} \frac{1}{k!} \sum_{l_1+ l_2+ ...+l_k = n - k + 1} c_{l_1} c_{l_2} ... c_{l_k}
\end{align}
\label{lemma:induction}
\end{lemma}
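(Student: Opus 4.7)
The plan is to use strong induction on $n$, with the recurrence relation
\[
 f_n = -\sum_{m\ge 0}\sum_{k=2}^{\infty}\frac{1}{k!}\,A_{k,m}^{n+1-(k+m)}
\]
from the proof of Theorem~\ref{thm:general_bea} (Eq.~\ref{eq:f_n_based_on_a}). The base case $n=0$ is $f_0 = f$, so $c_0=1$; the case $n=1$ is the IGR flow, giving $c_1=-1/2$; both can be read off Eqs.~\ref{eq:f_1_third_order}--\ref{eq:f_2_third_order}. The inductive step requires one structural fact and one combinatorial collection.

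The structural fact is that if $g$ is any vector field of the form $g = c\,(\nabla_\vtheta f)^a f + R$ with $R \in \mathcal{C}(\nabla_\vtheta^2 f)$, then the Jacobian satisfies $\nabla_\vtheta g = c\,(\nabla_\vtheta f)^{a+1} + R'$ with $R' \in \mathcal{C}(\nabla_\vtheta^2 f)$. This follows by the product/chain rule: differentiating $(\nabla_\vtheta f)^a f$ componentwise, any derivative that lands on one of the $a$ copies of $\nabla_\vtheta f$ produces $\nabla_\vtheta^2 f$ and so contributes to $\mathcal{C}(\nabla_\vtheta^2 f)$; the only surviving ``clean'' term is the one in which the derivative lands on the trailing $f$, yielding $(\nabla_\vtheta f)^{a+1}$. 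Consequently, the operator $L_s(\cdot) := \nabla_\vtheta(\cdot)\,f_s$ (which is what $h^s$-order terms of $d/dt$ along the modified flow look like) acts on a clean monomial $(\nabla_\vtheta f)^a f$ by
\[
 L_s\bigl((\nabla_\vtheta f)^a f\bigr) = c_s\,(\nabla_\vtheta f)^{a+s+1} f \;+\; (\text{terms in } \mathcal{C}(\nabla_\vtheta^2 f)),
\]
using the inductive hypothesis on $f_s$ for $s<n$.

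Applying this to $A_{k,m}^p$, which is the $h^p$-coefficient of $\frac{d^{k-2}}{dt^{k-2}}\sum_{i+j=m}\nabla_\vtheta f_j\,f_i$, we expand $d/dt$ along the modified flow as $\sum_s h^s L_s$ and pick one $L_{s_r}$ per derivative with $s_1+\cdots+s_{k-2}=p$. The initial expression $\nabla_\vtheta f_j\,f_i$ has clean part $c_i c_j (\nabla_\vtheta f)^{i+j+1} f$, and each subsequent $L_{s_r}$ multiplies the coefficient by $c_{s_r}$ and raises the exponent by $s_r+1$. Collecting, the clean part of $A_{k,m}^p$ is a scalar multiple of $(\nabla_\vtheta f)^{m+k-1+p} f$, and with the constraint $p = n+1-(k+m)$ this exponent equals exactly $n$. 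The combinatorial coefficient is $\sum c_i c_j c_{s_1}\cdots c_{s_{k-2}}$ over all $(k-2)+2 = k$ nonnegative indices summing to $m+p = n-k+1$. Substituting back into the expression for $f_n$ and renaming $(i,j,s_1,\ldots,s_{k-2})$ as $(l_1,\ldots,l_k)$ yields the claimed recurrence
\[
 c_n = -\sum_{k=2}^{n+1}\frac{1}{k!}\sum_{l_1+\cdots+l_k=n-k+1} c_{l_1}c_{l_2}\cdots c_{l_k},
\]
while all other terms live in $\mathcal{C}(\nabla_\vtheta^2 f)$, completing the induction.

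The main obstacle is the structural fact above: one has to make sure that no ``hidden'' cancellation among the $\mathcal{C}(\nabla_\vtheta^2 f)$ remainders can produce extra $(\nabla_\vtheta f)^n f$ contributions. This is handled cleanly by the closure property that $\mathcal{C}(\nabla_\vtheta^2 f)$ is stable under multiplication by arbitrary polynomials in $\nabla_\vtheta f$, $f$ and under $\nabla_\vtheta$; hence every term arising from an impure factor stays impure, and the $c_n$ recurrence captures every contribution to the clean monomial.
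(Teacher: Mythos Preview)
Your proposal is correct and follows essentially the same approach as the paper: both use the recurrence $f_n = -\sum_{m,k}\frac{1}{k!}A_{k,m}^{n+1-(k+m)}$, apply the induction hypothesis to $f_i,f_j$ and to the flow appearing through each $d/dt$, and track the unique ``clean'' monomial $(\nabla_\vtheta f)^n f$ modulo $\mathcal{C}(\nabla_\vtheta^2 f)$. The only difference is expository---you package the repeated chain-rule step into the operator $L_s(\cdot)=\nabla_\vtheta(\cdot)\,f_s$ and invoke closure of $\mathcal{C}(\nabla_\vtheta^2 f)$ once, whereas the paper unrolls the $k-2$ time derivatives explicitly line by line; the combinatorics and the resulting recurrence for $c_n$ are identical.
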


\begin{proof}

\textbf{Base step}: This is true for $n=1, 2$ and $3$, for which we computed all terms in the BEA expansion.

\textbf{Induction step}:
We know that (Eq~\ref{eq:f_n_based_on_a}):
    \begin{align}
        f_n = - \sum_{m \ge 0} \sum_{k=2}^{\infty} \frac{1}{k!}A_{k,m}^{n+1-(k+m)}
        \label{eq:f_n_as_function_a}
     \end{align}

where  $A_{k,m}^{p}$ is defined as the coefficient of $h^p$ in:
\begin{align}
        \frac{d^{k-2}}{dt^{k-2}} \sum_{i+j=m}  \nabla_{\vtheta} f_j f_i = \sum_p h^p A_{k,m}^{p}
\end{align}

 Under the induction hypothesis we can use that 
 \begin{align}
      f_i = c_i  (\nabla_{\vtheta} f)^{i} f + \mathcal{C}(\nabla^2_{\vtheta} f) \hspace{2em} \forall i < n
\end{align}

And thus that the flow is of the form:
 \begin{align}
\dot{\vtheta} =  f + \sum_{i=1}^{n-1} c_i(\nabla_{\vtheta} f)^{i} f + \mathcal{C}(\nabla^2_{\vtheta} f) + \mathcal{O}(h^n)
\label{eq:flow_under_ih}
 \end{align}

To find $f_n$ (Eq~\ref{eq:f_n_as_function_a}) we need to find $A_{k,m}^{n+1-(k+m)}$ with $k\ge 2$ and $m\ge 0$. We thus need to find  $A_{k,m}^{p}$ with $p \le n-1$ (since $p \le (n+1) - (k+m) \text{ and } k\ge 2, m\ge 0 \implies p \le n -1$). We do so expanding $\frac{d^{k-2}}{dt^{k-2}} \sum_{i+j=m}  \nabla_{\vtheta} f_j f_i$ under the induction step. Since $i+j = m$ and $k+m \le n+1$ and $k\ge 2$, we have that $m \le n-1$ and thus $i, j \le n-1$ and thus we can apply the induction hypothesis for $f_i$ and $f_j$.

We annotate the steps with IH (Induction Hypothesis, often by using the form in Eq. \ref{eq:flow_under_ih}) and CR (Chain Rule) and S (Simplifying or grouping terms of $\mathcal{C}(\nabla^2_{\vtheta} f))$ by using that products and sum of terms in this function class also belong in this function class).

\begin{align}
&\frac{d^{k-2}}{dt^{k-2}} \sum_{i+j=m}  \nabla_{\vtheta} f_j f_i \\
&= 
 \frac{d^{k-2}}{dt^{k-2}} \sum_{i+j=m} \left(\nabla_{\vtheta} (c_j (\nabla_{\vtheta} f)^{j} f + \mathcal{C}(\nabla^2_{\vtheta} f)\right)(c_i (\nabla_{\vtheta} f)^{i} f+ \mathcal{C}(\nabla^2_{\vtheta} f)) \hspace{5em} \owntag{IH}\\
&= \frac{d^{k-2}}{dt^{k-2}} \sum_{i+j=m} \left(c_j (\nabla_{\vtheta} f)^{j+1} +c_j  f^T \nabla_{\vtheta} (\nabla_{\vtheta} f)^{j}  +   \mathcal{C}(\nabla^2_{\vtheta} f)\right)(c_i (\nabla_{\vtheta} f)^{i} f + \mathcal{C}(\nabla^2_{\vtheta} f)) \owntag{CR} \\
&= \frac{d^{k-2}}{dt^{k-2}} \sum_{i+j=m} \left(c_j (\nabla_{\vtheta} f)^{j+1} +   \mathcal{C}(\nabla^2_{\vtheta} f)\right)(c_i  (\nabla_{\vtheta} f)^{i} f + \mathcal{C}(\nabla^2_{\vtheta} f)) \owntag{S} \\
&= \frac{d^{k-2}}{dt^{k-2}} \sum_{i+j=m} c_i c_j (\nabla_{\vtheta} f)^{m+1} f+ 
    \mathcal{C}(\nabla^2_{\vtheta} f)  \owntag{S} \\
    &= \frac{d^{k-3}}{dt^{k-3}} \sum_{i+j=m} \frac{d}{dt} c_i c_j (\nabla_{\vtheta} f)^{m+1} f+ 
    \mathcal{C}(\nabla^2_{\vtheta} f) \\
        &= \frac{d^{k-3}}{dt^{k-3}} \sum_{i+j=m} \frac{d}{d\vtheta} \left(c_i c_j (\nabla_{\vtheta} f)^{m+1} f +  
    \mathcal{C}(\nabla^2_{\vtheta} f) \right)\frac{d\vtheta}{d t} \owntag{CR}\\
&= \frac{d^{k-3}}{dt^{k-3}}  \sum_{i+j=m}  \frac{d}{d \vtheta} \left(c_i c_j (\nabla_{\vtheta} f)^{m+1} f + \mathcal{C}(\nabla^2_{\vtheta} f) \right) \left(\sum_{l_1=0}^{n-1} h^{l_1} f_{l_1} + \mathcal{O}(h^n)\right)\owntag{Eq. \ref{eq:flow_under_ih}} \\
&= \frac{d^{k-3}}{dt^{k-3}}  \sum_{i+j=m}  \frac{d}{d \vtheta} \left(c_i c_j (\nabla_{\vtheta} f)^{m+1} f + \mathcal{C}(\nabla^2_{\vtheta} f) \right) (\sum_{l_1=0}^{n-1} h^{l_1} f_{l_1})   + \mathcal{O}(h^n)  \\
&= \frac{d^{k-3}}{dt^{k-3}}  \sum_{i+j=m}   \left(c_i c_j (\nabla_{\vtheta} f)^{m+2} + \mathcal{C}(\nabla^2_{\vtheta} f) \right) (\sum_{l_1=0}^{n-1} h^{l_1} f_{l_1})  + \mathcal{O}(h^n) \owntag{CR, S}\\
&= \frac{d^{k-3}}{dt^{k-3}}  \sum_{i+j=m}   \left(c_i c_j (\nabla_{\vtheta} f)^{m+2} + \mathcal{C}(\nabla^2_{\vtheta} f)\right) \left(\sum_{l_1=0}^{n-1} h^{l_1} c_{l_1}  (\nabla_{\vtheta} f) ^{l_1} f + \mathcal{C}(\nabla^2_{\vtheta} f )\right) + \mathcal{O}(h^n) \owntag{IH}\\
&= \frac{d^{k-3}}{dt^{k-3}}  \sum_{i+j=m} (\sum_{l_1=0}^{n-1} h^{l_1} c_i c_j c_{l_1} (\nabla_{\vtheta} f)^{l_1+m+2} f + \mathcal{C}(\nabla^2_{\vtheta} f)) + \mathcal{O}(h^n)
\owntag{S} \\
&= \frac{d^{k-4}}{dt^{k-4}}  \sum_{i+j=m} (\sum_{l_1=0}^{n-1} h^{l_1} c_i c_j c_{l_1}  \frac{d}{d \vtheta} \left( (\nabla_{\vtheta} f)^{l_1+m+2} f + \mathcal{C}(\nabla^2_{\vtheta} f))\right)(\sum_{l_2=0}^{n-1} h^{l_2} f_{l_2})  + \mathcal{O}(h^n) \owntag{CR, Eq. \ref{eq:flow_under_ih}}\\
&= \frac{d^{k-4}}{dt^{k-4}}  \sum_{i+j=m} (\sum_{l_1=0}^{n-1} h^{l_1} c_i c_j c_{l_1} \left(( \nabla_{\vtheta} f )^{l_1+m+3} + \mathcal{C}(\nabla^2_{\vtheta} f)\right) \left(\sum_{l_2=0}^{n-1} h^{l_2} c_{l_2}  (\nabla_{\vtheta}f)^{l_2} f + \mathcal{C}(\nabla^2_{\vtheta} f)\right)  + \mathcal{O}(h^n) \owntag{CR, IH}\\
&= \frac{d^{k-4}}{dt^{k-4}}  \sum_{i+j=m} \sum_{l_1=0}^{n-1} \sum_{l_2=0}^{n-1} h^{l_1+l_2} c_i c_j c_{l_1} c_{l_2}  (\nabla_{\vtheta} f)^{l_1+ l_2+ m+3} f + \mathcal{C}(\nabla^2_{\vtheta} f) + \mathcal{O}(h^n) \owntag{S}\\
&= \sum_{i+j=m} \sum_{l_1=0}^{n-1} \sum_{l_2=0}^{n-1}  ... \sum_{l_{k-2}=0}^{n-1} h^{l_1+l_2 + ...+l_{k-2}} c_i c_j c_{l_1} c_{l_2} ... c_{l_k-2} (\nabla_{\vtheta}f)^{l_1+ l_2+ ...+{l_{k-2}} + m+k-1} f + \mathcal{C}(\nabla^2_{\vtheta} f)  + \mathcal{O}(h^n) \\
&= \sum_{i+j=m} \sum_{\substack{0 \le l_1,..., l_{k-2}\le n-1,\\ l_1 + l_2 +...l_{k-2} = p}}  h^{p} c_i c_j c_{l_1} c_{l_2} ... c_{l_k-2}  (\nabla_{\vtheta} f) ^{p+  m+k-1} f+ \mathcal{C}(\nabla^2_{\vtheta} f)   + \mathcal{O}(h^n)
\end{align}

By extracting the terms of $\mathcal{O}(h^p)$ with $p \le n -1$ from the above we can now conclude that (note that since we are now only concerned with $p \le n -1$ we drop the $l_i \le n-1$ since it is implied from $ l_1 + l_2 +...l_{k-2} = p$):
\begin{align}
A_{k,m}^{p} = \sum_{i+j=m} \sum_{\substack{l_1,..., l_{k-2}\ge 0,\\ l_1 + l_2 +...l_{k-2} = p}}  c_i c_j c_{l_1} c_{l_2} ... c_{l_k-2}  (\nabla_{\vtheta} f)^{p+ m+k-1} f+ \mathcal{C}(\nabla^2_{\vtheta} f) \hspace{2em} \forall p \le n-1
\end{align}

Replacing this in $f_n$:
     \begin{align}
       f_n &= - \sum_{m \ge 0} \sum_{k=2}^{\infty} \frac{1}{k!}A_{k,m}^{n+1-(k+m)} \\
            &= - \sum_{m \ge 0} \sum_{k=2}^{\infty} \frac{1}{k!} \sum_{i+j=m} \sum_{\substack{l_1,..., l_{k-2}\ge 0,\\ l_1 + l_2 +...l_{k-2} = n+1-(k+m)}}  c_i c_j c_{l_1} c_{l_2} ... c_{l_k-2}  (\nabla_{\vtheta}f)^{n+1-(k+m)+ k+m-1} f+ \mathcal{C}(\nabla^2_{\vtheta} f)\\
             &= - \sum_{m \ge 0} \sum_{k=2}^{\infty} \frac{1}{k!} \sum_{i+j=m} \sum_{\substack{l_1,..., l_{k-2}\ge 0,\\ l_1 + l_2 +...l_{k-2} = n+1-(k+m)}}  c_i c_j c_{l_1} c_{l_2} ... c_{l_k-2}  (\nabla_{\vtheta} f)^n f+ \mathcal{C}(\nabla^2_{\vtheta} f)\\
               &= -  \sum_{k=2}^{\infty} \frac{1}{k!} \sum_{m \ge 0} \sum_{\substack{l_1,..., l_{k}\ge 0,\\ l_1 + l_2 +...l_k = n+1-k}}  c_{l_1} c_{l_2} ... c_{l_k} (\nabla_{\vtheta} f)^n f + \mathcal{C}(\nabla^2_{\vtheta} f)
     \end{align}
We can now say that $f_n = c_n  (\nabla_{\vtheta} f)^{n} f + \mathcal{C}(\nabla^2_{\vtheta} f)$. Not only that, we have now found the recurrence relation we were seeking:
\begin{align}
c_n = - \sum_{k=2}^{n+1} \frac{1}{k!} \sum_{l_1+ l_2+ ...+l_k = n - k + 1} c_{l_1} c_{l_2} ... c_{l_k}
\label{eq:rec_coeffs}
\end{align}

\end{proof}

\begin{lemma} The solution to the recurrence relation
\begin{align}
c_n = - \sum_{k=2}^{n+1} \frac{1}{k!} \sum_{l_1+ l_2+ ...+l_k = n - k + 1} c_{l_1} c_{l_2} ... c_{l_k}
\end{align}
with $c_0 = 1$ is 
\begin{align}
c_n = \frac{(-1)^{n}}{n+1}
\end{align}
\label{lemma:c_n_recurrence_proof}
\end{lemma}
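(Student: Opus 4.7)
The plan is to solve this recurrence via generating functions. Define the formal power series $C(h) = \sum_{n \geq 0} c_n h^n$. The convolution structure $\sum_{l_1+\cdots+l_k = n-k+1} c_{l_1}\cdots c_{l_k}$ is exactly the coefficient $[h^{n+1-k}]\,C(h)^k$, so the recurrence for $n \geq 1$ reads
\begin{equation*}
c_n = -\sum_{k=2}^{n+1} \frac{1}{k!}\,[h^{n+1-k}]\,C(h)^k.
\end{equation*}

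First, I would multiply both sides by $h^n$ and sum over $n \geq 1$. The left-hand side collapses to $C(h) - c_0 = C(h) - 1$. On the right, the substitution $j = n+1-k$ (with $j \geq 0$, $k \geq 2$) decouples the double sum into
\begin{equation*}
-\sum_{k \geq 2} \frac{h^{k-1}}{k!} \sum_{j \geq 0} h^j\,[h^j]\,C(h)^k = -\frac{1}{h}\sum_{k \geq 2} \frac{(hC(h))^k}{k!} = -\frac{1}{h}\bigl(e^{h C(h)} - 1 - hC(h)\bigr).
\end{equation*}
Equating the two sides and clearing $h$, the equation $h(C(h)-1) = -(e^{hC(h)} - 1 - hC(h))$ simplifies neatly (the $hC(h)$ terms on both sides cancel) to the functional equation $e^{hC(h)} = 1+h$, i.e.\ $hC(h) = \log(1+h)$.

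Finally, expanding $\log(1+h) = \sum_{n \geq 1} \frac{(-1)^{n-1}}{n} h^n$ and dividing by $h$ gives $C(h) = \sum_{n \geq 0} \frac{(-1)^n}{n+1}\,h^n$, so identifying coefficients yields $c_n = \frac{(-1)^n}{n+1}$ as claimed. As a sanity check this recovers $c_0 = 1$, $c_1 = -\tfrac{1}{2}$, and $c_2 = \tfrac{1}{3}$, which matches the values computed from the NGF, IGR flow, and the third order flow respectively. The main subtlety is purely bookkeeping: one must be careful that the recurrence is asserted only for $n \geq 1$ (for $n=0$ the sum over $k$ is empty) so that $c_0 = 1$ enters as an initial condition rather than a derived value, and one must track the index shift $j = n+1-k$ so that the double sum cleanly factors into $e^{hC(h)}$; once those are in place the derivation of the functional equation is essentially forced.
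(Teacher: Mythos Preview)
Your proposal is correct and follows essentially the same generating-function approach as the paper: both define the ordinary generating function of the $c_n$, exploit the convolution structure to arrive at the functional equation $e^{hC(h)} = 1+h$, and read off $c_n = (-1)^n/(n+1)$ from the Taylor series of $\log(1+h)/h$. Your use of coefficient-extraction notation and the index shift $j = n+1-k$ is a slightly cleaner bookkeeping of the same sum manipulation the paper carries out explicitly.
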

\begin{proof}

We will use generating functions to solve the recurrence. Let $c(x)$ be the generating function of $c_n$:
\begin{align}
c(x) = c_0  + \sum_{n=1}^{\infty} c_n x^n
\end{align}

We have that:
\begin{align}
c(x) &= 1 - \sum_{n=1}^{\infty} x^n \left(\sum_{k=2}^{n+1}\frac{1}{k!} \sum_{l_1+ l_2+ ...+l_k = n - k + 1} c_{l_1} c_{l_2} ... c_{l_k}\right)\\
     &= 1 -\frac{1}{x} \sum_{n=1}^{\infty} x^{n+1} \left(\sum_{k=2}^{n+1}\frac{1}{k!} \sum_{l_1+ l_2+ ...+l_k = n - k + 1} c_{l_1} c_{l_2} ... c_{l_k}\right)\\
        &= 1 -\frac{1}{x} \sum_{n=1}^{\infty}  \left(\sum_{k=2}^{n+1}\frac{x^k}{k!} x^{n+1-k} \sum_{l_1+ l_2+ ...+l_k = n - k + 1} c_{l_1} c_{l_2} ... c_{l_k}\right)\\
     &= 1 - \frac{1}{x} \sum_{n=1}^{\infty} \left(\sum_{k=2}^{n+1}\frac{x^k}{k!} \sum_{l_1+ l_2+ ...+l_k = n - k + 1} c_{l_1}x^{l_1} c_{l_2}x^{l_2} ... c_{l_k} x^{l_k}\right)\\
     &= 1 - \frac{1}{x} \sum_{k=2}^{\infty}\frac{x^k}{k!}\left( \sum_{l_1=0}^{\infty}\sum_{l_2=0}^{\infty}\sum_{l_k=0}^{\infty} c_{l_1}x^{l_1} c_{l_2}x^{l_2} ... c_{l_k} x^{l_k}\right) \label{eq:sum_transition}\\
     &= 1 - \frac{1}{x} \sum_{k=2}^{\infty}\frac{x^k}{k!}\left( \sum_{l_1=0}^{\infty} c_{l_1}x^{l_1} \sum_{l_2=0}^{\infty}c_{l_2}x^{l_2} \sum_{l_k=0}^{\infty}  c_{l_k} x^{l_k}\right)\\
      &= 1 - \frac{1}{x} \sum_{k=2}^{\infty}\frac{x^k}{k!} c(x)^k\\
      &= 1 - \frac{1}{x} (e^{x c(x)} -1 - x c(x))
      \label{eq:c_x_rec}
\end{align}

Where Eq \ref{eq:sum_transition} can be obtained via the following reasoning:
\begin{align}
& \sum_{k=2}^{\infty}\frac{x^k}{k!}\left( \sum_{l_1=0}^{\infty}\sum_{l_2=0}^{\infty}\sum_{l_k=0}^{\infty} c_{l_1}x^{l_1} c_{l_2}x^{l_2} ... c_{l_k} x^{l_k}\right) \\
& = \sum_{k=2}^{\infty}\frac{x^k}{k!}\left(\sum_{m=0}^{\infty} \sum_{l_1 + l_2 +...l_k = m} c_{l_1}x^{l_1} c_{l_2}x^{l_2} ... c_{l_k} x^{l_k}\right)  \\
& = \sum_{k=2}^{\infty}\frac{x^k}{k!}\left(\sum_{n=k-1}^{\infty} \sum_{l_1 + l_2 +...l_k = n+1-k} c_{l_1}x^{l_1} c_{l_2}x^{l_2} ... c_{l_k} x^{l_k}\right)  \\
& = \sum_{k=2}^{\infty}\sum_{n=k-1}^{\infty}\frac{x^k}{k!} \sum_{l_1 + l_2 +...l_k = n+1-k} c_{l_1}x^{l_1} c_{l_2}x^{l_2} ... c_{l_k} x^{l_k}  \\
& = \sum_{n=1}^{\infty}\sum_{k=2}^{n+1}\frac{x^k}{k!} \sum_{l_1 + l_2 +...l_k = n+1-k} c_{l_1}x^{l_1} c_{l_2}x^{l_2} ... c_{l_k} x^{l_k}
\end{align}

Solving for $c(x)$ in Eq~\ref{eq:c_x_rec} we obtain: 
\begin{align}
c(x) &=  1 - \frac{1}{x} (e^{x c(x)} -1 - x c(x)) \\
c(x) &=  1 - \frac{1}{x} e^{x c(x)} + \frac 1 x + c(x) \\
0 &=  1 - \frac{1}{x} e^{x c(x)} + \frac 1 x\\
 e^{x c(x)} &=  x + 1\\
 x c(x) &=  \log(x + 1)\\
 c(x) &=  \frac{\log(x + 1)}{x}
\end{align}

We now perform the series expansion for $c(x)$:
\begin{align}
 c(x) =  \frac{\log(x + 1)}{x}
      =  \frac{ \sum_{n=1}^{\infty} \frac{(-1)^{n-1}}{n} x^n}{x}
      =  \sum_{n=1}^{\infty} \frac{(-1)^{n-1}}{n} x^{n-1}
      =  \sum_{n=0}^{\infty} \frac{(-1)^{n}}{n+1} x^{n}
\end{align}

We thus have that $c_n = \frac{(-1)^{n}}{n+1}$ which finishes the proof.

\end{proof}

\subsection{The Jacobian of the principal flow at critical points}
\label{sec:jacobian}

Under the principal flow, we have that the Jacobian at a critical point $\vtheta^*$ with the eigenvalues and eigenvector of $\nabla_{\vtheta}^2E(\vtheta^*)$ denoted as $\lambda_i^*$ and $\vu_i^*$, respectively. We will use that $\nabla_{\vtheta}E(\vtheta^*) = \myvecsym{0}$.
 
\begin{align}
J_{PF}(\vtheta^*) &= \sum_{i=0}^{D-1} \frac{d (\nabla_{\vtheta} E^T \vu_i)  \frac{\log(1 - h \lambda_i)}{h \lambda_i} \vu_i}{d\vtheta} (\vtheta^*) \\
            &= \sum_{i=0}^{D-1} \frac{d (\nabla_{\vtheta} E^T \vu_i)  \frac{\log(1 - h \lambda_i)}{h \lambda_i}}{d\vtheta} (\vtheta^*) {\vu_i^*}^T + \sum_{i=0}^{D-1} \underbrace{ \nabla_{\vtheta} E(\vtheta^*)^T}_{\mathbf{0}}  \vu_i^*  \frac{\log(1 - h \lambda_i^*)}{h \lambda_i^*} \frac{d \vu_i}{d\vtheta}\\
              &= \sum_{i=0}^{D-1} \frac{d (\nabla_{\vtheta} E^T \vu_i)  \frac{\log(1 - h \lambda_i)}{h \lambda_i}}{d\vtheta} (\vtheta^*) {\vu_i^*}^T \\
              &= \sum_{i=0}^{D-1} \frac{\log(1 - h \lambda_i^*)}{h \lambda_i^*} \frac{d (\nabla_{\vtheta} E^T \vu_i) }{d\vtheta}(\vtheta^*) {\vu_i^*}^T  + \sum_{i=0}^{D-1} (\underbrace{\nabla_{\vtheta} E(\vtheta^*)^T}_{\mathbf{0}} \vu_i^*) \frac{d   \frac{\log(1 - h \lambda_i)}{h \lambda_i}}{d\vtheta} {\vu_i^*}^T \\
                &= \sum_{i=0}^{D-1} \frac{\log(1 - h \lambda_i^*)}{h \lambda_i^*} \frac{d (\nabla_{\vtheta} E^T \vu_i) }{d\vtheta}(\vtheta^*) {\vu_i^*}^T\\
                &= \sum_{i=0}^{D-1} \frac{\log(1 - h \lambda_i^*)}{h \lambda_i^*} \left(\frac{d \nabla_{\vtheta} E}{d\vtheta}(\vtheta^*)\right)^T \vu_i^* {\vu_i^*}^T +  \sum_{i=0}^{D-1} \frac{\log(1 - h \lambda_i^*)}{h \lambda_i^*} \left(\frac{d \vu_i }{d\vtheta}(\vtheta^*)\right)^T  \underbrace{\nabla_{\vtheta} E(\vtheta^*)}_{\mathbf{0}}  {\vu_i^*}^T \\
                &= \sum_{i=0}^{D-1} \frac{\log(1 - h \lambda_i^*)}{h \lambda_i^*} \nabla_{\vtheta}^2E(\vtheta^*) \vu_i^* {\vu_i^*}^T \\
                &= \sum_{i=0}^{D-1} \frac{\log(1 - h \lambda_i^*)}{h \lambda_i^*} \lambda_i^* \vu_i^* {\vu_i^*}^T \\
                &= \sum_{i=0}^{D-1} \frac{\log(1 - h \lambda_i^*)}{h} \vu_i^* {\vu_i^*}^T
\end{align}

This shows that the eigenvalues of the Jacobian for the PF are $\frac{\log(1 - h \lambda_i^*)}{h}$, thus local minima where $\lambda_i^* \ge 0 $ are only attractive if $h \lambda_i^* <2$.

\subsubsection{Comparison with the Jacobian of other flows at critical points}
\label{sec:jacobian_igr_ngf}

We show here the Jacobian of the NGF:
\begin{align}
J_{NGF}(\vtheta^*) &= - \nabla_{\vtheta}^2 E(\vtheta^*) = - \sum_{i=0}^{D-1} \lambda_i^* \vu_i^* {\vu_i^*}^T
\end{align}

and thus the eigenvalues of the Jacobian for the NGF are $-\lambda_i^*$. Thus local minima where $\lambda_i^* > 0 \implies -\lambda_i^* < 0$  are attractive for the NGF.

For the IGR flow, we have:
\begin{align}
J_{IGR}(\vtheta^*) &= -\frac{d \left(\nabla_{\vtheta} E + \frac{h^2}{2} \nabla_{\vtheta}^2 E \nabla_{\vtheta} E \right)}{d\vtheta}\left( \vtheta^*\right) \\
                  &= -\frac{d \nabla_{\vtheta} E}{d \vtheta}\left( \vtheta^*\right) - \frac{h^2}{2} \frac{d \nabla_{\vtheta}^2 E \nabla_{\vtheta} E }{d\vtheta}\left( \vtheta^*\right) \\
                  &= - \sum_{i=0}^{D-1} \lambda_i^* \vu_i^* {\vu_i^*}^T - \frac{h^2}{2} \left( \nabla_{\vtheta} E^T \nabla_{\vtheta}^3 E   + \nabla_{\vtheta}^2 E \frac{d \nabla_{\vtheta} E}{d\vtheta} \right) \left( \vtheta^*\right)\\ 
                 &= - \sum_{i=0}^{D-1} \lambda_i^* \vu_i^* {\vu_i^*}^T  - \frac{h^2}{2}  \underbrace{ \nabla_{\vtheta} E(\vtheta^*)^T}_{\mathbf{0}}\nabla_{\vtheta}^3 E(\vtheta^*)   - \frac{h^2}{2}  \nabla_{\vtheta}^2 E \left( \vtheta^*\right) \nabla_{\vtheta}^2 E\left( \vtheta^*\right) \\
                  &= - \sum_{i=0}^{D-1} \lambda_i^* \vu_i^* {\vu_i^*}^T - \frac{h^2}{2}  \nabla_{\vtheta}^2 E \left( \vtheta^*\right) \nabla_{\vtheta}^2 E\left( \vtheta^*\right) \\
                  &= - \sum_{i=0}^{D-1} \lambda_i^* \vu_i^* {\vu_i^*}^T - \sum_{i=0}^{D-1} \frac{h^2}{2} {\lambda_i^*}^2 \vu_i^* {\vu_i^*}^T\\
                  &= - \sum_{i=0}^{D-1} \left(\lambda_i^* + \frac{h^2}{2} {\lambda_i^*}^2\right) \vu_i^* {\vu_i^*}^T
\end{align}

and thus the eigenvalues of the Jacobian for the IGR flow are $- \left(\lambda_i^* + \frac{h^2}{2} {\lambda_i^*}^2\right)$. Thus local minima where $\lambda_i^* > 0 \implies - \left(\lambda_i + \frac{h^2}{2} {\lambda_i^*}^2\right) < 0$  are attractive for the IGR flow.

\subsection{Linearization results around critical points}
\label{sec:linearization}

Assume we are around a critical point $\vtheta^*$, i.e. $\nabla_{\vtheta} E(\vtheta^*)= \mathbf{0}$. Via the Hartman–Grobman theorem~\citep{grobman,hartman1960lemma} we can write:
\begin{align}
 \nabla_{\vtheta} E \approx  \nabla_{\vtheta} E(\vtheta^*) + \nabla_{\vtheta}^2 E(\vtheta^*) (\vtheta - \vtheta^*) = \nabla_{\vtheta}^2 E(\vtheta^*) (\vtheta - \vtheta^*)
\label{eq:critical_point_approx}
\end{align}

Replacing this in the gradient descent updates:
\begin{align}
\vtheta_t = \vtheta_{t-1} - h \nabla_{\vtheta} E(\vtheta_{t-1}) \implies \vtheta_t - \vtheta^* \approx  (I - h \nabla_{\vtheta}^2 E(\vtheta^*)) (\vtheta_{t-1} - \vtheta^*) \approx  (I - h \nabla_{\vtheta}^2 E(\vtheta^*))^t (\vtheta_{0} - \vtheta^*)
\end{align}

Since $\left(I - h \nabla_{\vtheta}^2 E(\vtheta^*)\right)^t (\vtheta_{0} - \vtheta^*) = \sum_{i=0}^{D-1} (1 - h \lambda_i^*)^t {\vu_i^*} {\vu_i^*}^T (\vtheta_{0} - \vtheta^*)$, gradient descent converges in the limit of $t \to \infty$ when $|1 - h \lambda_i^*|< 1$, i.e. $\lambda_i < 2/h$. Thus, we obtain the same conclusion as obtained with the PF around a local minima.

We now also show how we can use these results to derived a specialized version of the PF, one that is only valid around critical points and requires the additional assumption that the Hessian of the critical point is invertible. 

We start with:
\begin{align}
\vtheta_t - \vtheta^*  \approx (I - h \nabla_{\vtheta}^2 E(\vtheta^*))^t (\vtheta_{0} - \vtheta^*) = \sum_{i=0}^{D-1} (1 - h \lambda_i^*)^t {\vu_i^*} {\vu_i^*}^T (\vtheta_{0} - \vtheta^*)
\end{align}

Under the above approximation we have:
\begin{align}
\vtheta_t = \vtheta^* + \sum_{i=0}^{D-1} (1 - h \lambda_i^*)^t {\vu_i^*} {\vu_i^*}^T  (\vtheta_{0} - \vtheta^*)
\label{eq:linearisation_expansion_theta_t}
\end{align}

which we can write in continuous form as (we use that iteration $n$ is at time $t =nh$): 
\begin{align}
\vtheta(t) &= \vtheta^* + \sum_{i=0}^{D-1} (1 - h \lambda_i^*)^{t/h} {\vu_i^*} {\vu_i^*}^T (\vtheta_{0} - \vtheta^*) \\          & =\vtheta^* + \sum_{i=0}^{D-1} e^{\log(1 - h \lambda_i^*){t/h}} {\vu_i^*} {\vu_i^*}^T (\vtheta_{0} - \vtheta^*)
\end{align}

and taking the derivative wrt to $t$:
\begin{align}
\dot{\vtheta} = \sum_{i=0}^{D-1} \frac{\log(1 - h \lambda_i^*)}{h} \underbrace{e^{\log{(1 - h \lambda_i^*)} {t/h}} {\vu_i^*} {\vu_i^*}^T (\vtheta_{0} - \vtheta^*)}_{Eq. \ref{eq:linearisation_expansion_theta_t}} = \sum_{i=0}^{D-1} \frac{\log(1 - h \lambda_i^*)}{h} (\vtheta - \vtheta^*)
\end{align}

From here, if we assume the Hessian at the critical point is invertible we can write from the approximation above (Eq. \ref{eq:critical_point_approx}) that 
$\vtheta - \vtheta^* = (\nabla_{\vtheta}^2 E)^{-1} \nabla_{\vtheta} E = \sum_{i=0}^{D-1} \frac{1}{\lambda_i^*} \nabla_{\vtheta} E^T {\vu_i^*} {\vu_i^*}^T$ and replacing this in the above we obtain:

\begin{align}
\dot{\vtheta} = \sum_{i=0}^{D-1} \frac{\log(1 - h \lambda_i^*)}{h \lambda_i^*}  \nabla_{\vtheta} E^T {\vu_i^*} {\vu_i^*}^T
\end{align}

which is the principal flow.

\textbf{Differences with the BEA.} The above analysis only applies around a critical point. What is crucial about the BEA derivation is that it allows us to specify the form of the modified flow for \textit{any iteration of gradient descent},
something we leveraged intensely in our experiments.

\subsection{The solution of the principal flow for quadratic losses}

We derive the solution of the PF in the case of quadratic losses - Remark~\ref{remark:quadractic_remark} in the main paper. We have $E(\vtheta) = \frac{1}{2} \vtheta^t \vA \vtheta + \vb^T \vtheta$  with $\vA$ symmetric. $\nabla_{\vtheta} E = \vA \vtheta + \vb$ and $\nabla_{\vtheta}^2 E = \vA$. Thus $\lambda_i$ and $\vu_i$ are the eigenvalues and eigenvectors of $\vA$ and do not change based on $\vtheta$. Replacing this in the PF leads to:
\begin{align}
\dot{\vtheta} &= \sum_{i=0}^{D-1} \frac{\log (1 - h\lambda_i)}{h \lambda_i} \left(A \vtheta + b \right)^T \vu_i  \vu_i \\
 &= \sum_{i=0}^{D-1} \frac{\log (1 - h\lambda_i)}{h \lambda_i} (A \vtheta )^T \vu_i \vu_i + \sum_{i=0}^{D-1} \frac{\log (1 - h\lambda_i)}{h \lambda_i} b^T \vu_i  \vu_i \\
  &= \sum_{i=0}^{D-1} \frac{\log (1 - h\lambda_i)}{h \lambda_i} \vtheta^T A \vu_i \vu_i + \sum_{i=0}^{D-1} \frac{\log (1 - h\lambda_i)}{h \lambda_i} b^T \vu_i  \vu_i \\
    &= \sum_{i=0}^{D-1} \frac{\log (1 - h\lambda_i)}{h \lambda_i}\lambda_i \vtheta^T  \vu_i \vu_i + \sum_{i=0}^{D-1} \frac{\log (1 - h\lambda_i)}{h \lambda_i} b^T \vu_i  \vu_i \\
    &= \sum_{i=0}^{D-1} \frac{\log (1 - h\lambda_i)}{h } \vtheta^T  \vu_i \vu_i + \sum_{i=0}^{D-1} \frac{\log (1 - h\lambda_i)}{h \lambda_i} b^T \vu_i  \vu_i
\end{align}

From here
\begin{align}
\dot{(\vtheta^T \vu_i)} &= \frac{\log (1 - h\lambda_i)}{h } \vtheta^T  \vu_i +  \frac{\log (1 - h\lambda_i)}{h \lambda_i} b^T \vu_i 
\end{align}

The solution of this ODE is $(\vtheta^T \vu_i)(t)  = e^{\frac{\log (1 - h\lambda_i)}{h } t} \vtheta_0^T  \vu_i + t \frac{\log (1 - h\lambda_i)}{h \lambda_i} \vb^T \vu_i$.

Since $\vu_i$ form a basis of $\mathbb{R}^D$, we can now write
\begin{align}
\vtheta(t) &= \sum_{i=0}^{D-1} \vtheta(t)^T \vu_i \vu_i  \\
          &= \sum_{i=0}^{D-1} \left(e^{\frac{\log (1 - h\lambda_i)}{h } t} \vtheta_0^T  \vu_i + t \frac{\log (1 - h\lambda_i)}{h \lambda_i} b^T \vu_i \right) \vu_i 
\end{align}

\subsection{Complex matrices and the Jordan normal form}
\label{sec:bea_pf_jordan}

The approach we took in deriving the PF had two steps: first, we developed the principal series with BEA (Theorem~\ref{thm:order_n_flow}) and second, we used the principal series given by BEA to find the PF based on the eigen-decomposition of the Hessian (Corollary~\ref{col:principal_series} and Theorem~\ref{thm:principal_ode}). The first step can be readily translated to games as it applies to any vector field, using  Theorem~\ref{thm:general_bea}, which Theorem~\ref{thm:order_n_flow} is a corollary of. From there, we derived the PF. Since the eigenvalues of a Hessian $\nabla_{\vtheta}^2 E$ can be complex, the PF, and the parameters $\vtheta$, can be complex valued.  In that case, the eigenvectors $\vu_i$ of $\nabla_{\vtheta}^2 E$ no longer need to form a basis. If we want to retrain that property, we have to work with the Jordan normal for of  $\nabla_{\vtheta}^2 E$ instead. We show how to derive that here. Consider the Jordan normal form of $\nabla_{\vtheta}^2 E$:

\begin{align}
\nabla_{\vtheta}^2 E = \vP^{-1} \vJ \vP
\end{align}
where $\vJ$ is a block diagonal matrix. We then have
\begin{align}
\nabla_{\vtheta}^2 E^p = \vP^{-1} \vJ^p \vP,
\end{align}
leading to
\begin{align}
  \dot{\vtheta
  }
   &=  \sum_{p=0}^{\infty} \frac{-1}{p+1} \vP^{-1} (h\vJ)^p \vP \nabla_{\vtheta} E + \mathcal{C}(\nabla_{\vtheta}^3 E) ,
\end{align}
We are interested in finding a form for the series
\begin{align}
\sum_{p=0}^{\infty} \frac{-1}{p+1} (h \vJ)^p ,
\end{align}
which if converges can be written as another block diagonal matrix,
\begin{align}
\frac{1}{h} \log(\vI - h \vJ) \vJ^{-1}.
\end{align}
This result leads to the following flow:
\begin{align}
  \dot{\vtheta}
   &= \frac{1}{h}  \vP^{-1}  \log(\vI - h\vJ) \vJ^{-1} \vP \nabla_{\vtheta} E + \mathcal{C}(\nabla_{\vtheta}^3 E) 
\end{align}
We note that while the Jordan normal form of $\nabla_{\vtheta}^2 E$ is not unique due to permutations of the order of the blocks in $\vJ$, $\vP^{-1}  \log(\vI - h\vJ) \vJ^{-1} \vP$ will be the same regardless of the block order chosen.

We note that when the Hessian is real, this reverts to the eigenvalue decomposition. Thus, this does not affect out results in the main paper, as the analysis was done around a gradient descent iteration, and stability analysis is done around a real equilibruim.

\subsection{Comparison with a discrete time approach}
\label{sec:discrete}

\subsubsection{Changes in loss function}
\label{sec:changes_in_loss_discrete}

We aim to obtain similar intuition to what we have obtained from the PF by discretising the NGF using Euler steps. We have:

\begin{align}
 E(\vtheta_{t+1}) &\approx  E(\vtheta_{t}) - h \nabla_{\vtheta} E(\vtheta_t)^T\nabla_{\vtheta} E(\vtheta_t) + h^2/2 \nabla_{\vtheta} E(\vtheta_t)^T \nabla_{\vtheta}^2 E(\vtheta_t) \nabla_{\vtheta} E(\vtheta_t) \\
                 &\approx  E(\vtheta_{t}) - h \nabla_{\vtheta} E(\vtheta_t)^T\nabla_{\vtheta} E(\vtheta_t) + h^2/2 \sum_i \lambda_i  \left(\nabla_{\vtheta} E(\vtheta_t)^T \vu_i \right)^2\\
                  &\approx  E(\vtheta_{t}) - h \sum_i \left(\nabla_{\vtheta} E(\vtheta_t)^T \vu_i \right)^2 + h^2/2 \sum_i \lambda_i  \left(\nabla_{\vtheta} E(\vtheta_t)^T \vu_i \right)^2\\
                   &\approx  E(\vtheta_{t}) +  \sum_i (1 - h/(2\lambda_i))\left(\nabla_{\vtheta} E(\vtheta_t)^T \vu_i \right)^2
\end{align}

thus under this approximation loss to decrease between iterations one requires $1 - h/(2\lambda_i) \le 0$. This is consistent with the observations of the loss obtained using the PF in Eq~\ref{eq:changes_in_e}.

\subsubsection{The dynamics of $\nabla_{\vtheta} E^T \vu_i$}
\label{sec:changes_in_dot_prod_discrete}

We now show to obtain the approximated dynamics of $\nabla_{\vtheta} E^T \vu_i$ obtained from the PF in Section~\ref{sec:instability_deep_learning} using a discrete time approach.

We have that:
\begin{align}
\nabla_{\vtheta} E(\vtheta_{t+1}) &\approx \nabla_{\vtheta} E\left(\vtheta_t - h \nabla_{\vtheta} E(\vtheta_t)\right) \\
                        &\approx \nabla_{\vtheta} E(\vtheta_t) - h \nabla_{\vtheta}^2 E(\vtheta_t)\nabla_{\vtheta} E(\vtheta_t)
\end{align}

If we assume that the Hessian eigenvectors do not change between iterations, we have:
\begin{align}
\nabla_{\vtheta} E(\vtheta_{t+1}) ^T {(\vu_{i})}_{t+1} \approx \nabla_{\vtheta} E(\vtheta_t)^T {(\vu_{i})}_{t+1}  - h \nabla_{\vtheta}^2 E(\vtheta_t)\nabla_{\vtheta} E(\vtheta_t) {(\vu_{i})}_{t+1}  =   ( 1 - h \lambda_i )\nabla_{\vtheta} E(\vtheta_t)^T {(\vu_{i})}_{t+1} 
\end{align}

and obtain the same behavior predicted in Section~\ref{sec:instability_deep_learning}.

\subsubsection{\rebuttalrthree{The connection between DAL and Taylor expansion optimal learning rate}}

\rebuttalrthree{In Section~\ref{sec:dal} introduced DAL as a way to control the drift of gradient descent; to do so, we set the learning rate of gradient descent to be inverse proportional to $\norm{\nabla_{\vtheta}^2 E\nabla_{\vtheta}E}/\norm{\nabla_{\vtheta}E} = \norm{\nabla_{\vtheta}^2 E \hat{\vg}}$}.

\rebuttalrthree{The learning rate set by DAL is similar, but distinct to that obtained by using a Taylor expansion of the loss $E$, as follows:
\begin{align}
E(\vtheta - h \nabla_{\vtheta}E) = E(\vtheta) - h \nabla_{\vtheta}E^T \nabla_{\vtheta}E + \frac{h^2}{2} \nabla_{\vtheta}E ^T \nabla_{\vtheta}^2 E \nabla_{\vtheta}E + \mathcal{O}(h^3)
\end{align}
}

\rebuttalrthree{
solving for the optimal $h$ in the above leads to:
\begin{align}
h = \frac{\nabla_{\vtheta}E^T \nabla_{\vtheta}E}{\nabla_{\vtheta}E ^T \nabla_{\vtheta}^2 E \nabla_{\vtheta}E} = \frac{1}{\hat{\vg}^T \nabla_{\vtheta}^2 E \hat{\vg}}
\end{align}
}

\rebuttalrthree{
While this optimal learning rate contains similarities to DAL, since $\hat{\vg}^T \nabla_{\vtheta}^2 E \hat{\vg} = \sum_i \lambda_i (\hat{\vg}^T \vu_i)^2$ and $\norm{\nabla_{\vtheta}^2 E \hat{\vg}} = |{\sum_i \lambda_i \hat{\vg}^T \vu_i}|$.
}

\rebuttalrthree{
We also note a few significant differences:
\begin{itemize}
    \item Since DAL is derived from discretisation drift, we can use it in a variety of settings. We have shown in Section~\ref{sec:future_work} how DAL can be adapted to be used with momentum, where the main idea is to increase the contribution of local gradients to the momentum moving average when the drift is large, and decrease it otherwise. 
    \item While we focused on using one learning rate for all parameters in the main paper, and thus use the norm operator in DAL, this is not necessary. Instead we can use the \textit{per parameter drift} to construct per parameter adaptive learning rates. We show preliminary results in Figures~\ref{fig:vgg_lr_scaling_per_parameter} and~\ref{fig:imagenet_lr_scaling_per_parameter}.
    \item The aim of DAL is to control the discretisation drift of gradient descent and show that this can affect both the stability and performance of gradient descent, through DAL-p. Beyond providing a training tool, DAL-p is another tool to use to understand gradient descent. Importantly, the ability to control the drift also enhances the generalisation capabilities of the method.
\end{itemize}
}

\subsection{Multiple gradient descent steps}
\label{sec:multiple_steps_proof}

We now show that if we use BEA to construct a flow that has an error of order $\mathcal{O}(h^p)$
after 1 gradient descent step, then the error will be of the same order after 2 steps. The same argument can be applied to multiple steps, though the error scales proportionally to the number of steps and the bound becomes vacuous as the number of steps increases.
We have $\dot{\vtheta}$ with $\vtheta(0) = \vtheta_{t-1}$ tracks $\vtheta_{t} = \vtheta_{t-1} - h \nabla_{\vtheta} E(\vtheta_{t-1})$  with an error $\mathcal{O}(h^p)$ (i.e $\norm{\vtheta(h) - \vtheta_t}$ is $\mathcal{O}(h^p)$),  then $\dot{\vtheta}$ tracks two steps of gradient descent $\vtheta_{t} = \vtheta_{t-1} - h \nabla_{\vtheta} E(\vtheta_{t-1})$ and $\vtheta_{t+1} = \vtheta_t - h \nabla_{\vtheta} E(\vtheta_t)$  with an error of the same order $\mathcal{O}(h^p)$ (i.e. $\norm{\vtheta(2h) - \vtheta_{t+1}}$ is $\mathcal{O}(h^p)$). We prove this below. For the purpose of this proof we use the following notation $\vtheta(h; \vtheta')$ is the value of the flow at time $h$ with initial condition $\vtheta'$. Making the initial condition explicit is necessary for the proof.

\begin{align}
 \norm{\vtheta(2h; \vtheta_{t-1}) - \vtheta_{t+1}} 
 &= \norm{\vtheta(2h; \vtheta_{t-1}) - \vtheta(h; \vtheta_{t}) + \vtheta(h; \vtheta_{t}) - \vtheta_{t+1}} \\
 & \le \norm{\vtheta(2h; \vtheta_{t-1}) - \vtheta(h; \vtheta_{t})} + \norm{\vtheta(h; \vtheta_{t}) - \vtheta_{t+1}}\\
 &\le \norm{\vtheta(2h; \vtheta_{t-1}) - \vtheta(h; \vtheta_{t})} + \mathcal{O}(h^p) \\
 & =  \norm{\vtheta(h; \vtheta(h; \vtheta_{t-1})) - \vtheta(h; \vtheta_{t})} + \mathcal{O}(h^p)
\end{align}

We thus have to bound how the flow changes after time $h$ when starting with two different initial conditions $\vtheta(h; \vtheta_{t-1})$ and $\vtheta_{t})$. We also know that $\norm{\vtheta(h; \vtheta_{t-1}) - \vtheta_{t}}$ is $\mathcal{O}(h^p)$. Thus by expanding the Taylor series and the mean value theorem we obtain:

\begin{align}
 \norm{\vtheta(h; \vtheta(h; \vtheta_{t-1})) - \vtheta(h; \vtheta_{t})} &= \norm{\sum_{i=0}^\infty \frac{h^i}{i!} \vtheta^{(i)}\left(\vtheta(h; \vtheta_{t-1})\right) -\sum_{i=0}^\infty \frac{h^i}{i!} \vtheta^{(i)}(\vtheta_t))} \\
  & = \norm{\sum_{i=0}^\infty \frac{h^i}{i!} \left(\vtheta^{(i)}\left(\vtheta(h; \vtheta_{t-1})\right) - \vtheta^{(i)}(\vtheta_t))\right)} \\
  &= \norm{\sum_{i=0}^\infty \frac{h^i}{i!} \frac{d}{d\vtheta}\vtheta^{(i)}(\vtheta') \left(\vtheta(h; \vtheta_{t-1}) - \vtheta_t\right)} \\
    &\le \sum_{i=0}^{\infty} \frac{h^i}{i!} \left|\frac{d}{d\vtheta}\vtheta^{(i)}(\vtheta')\right| \underbrace{\norm{(\vtheta(h; \vtheta_{t-1})) - \vtheta_t}}_{\mathcal{O}(h^p)} = \mathcal{O}(h^p)
\end{align}

This tells us that we can construct a bound:
\begin{align}
 \norm{\vtheta(2h; \vtheta_{t-1}) - \vtheta_{t+1}} \le \mathcal{O}(h^p) + \mathcal{O}(h^p) = \mathcal{O}(h^p)
\end{align}

\textbf{Dependence on the number of steps}.
While the order in learning rate is the same, 
we note that as the number of steps increases the errors are likely to accumulate with the number of steps (for $n$ discrete steps we will sum $n$ terms of order $\mathcal{O}(h^p)$ in the above bound). For example when taking the number of steps $n \rightarrow \infty$ the above no longer provides a bound.

\textbf{A comment on stochastic gradient descent.}
Using BEA we can construct flows which approximate one step of a discrete update. The same flows will approximate multiple discrete steps with the same level of error.
In the case of deep learning however, it is common to use stochastic gradient descent where individual discrete gradient updates minimize different losses depending on the data batch:  $\vtheta_{t} = \vtheta_{t-1} - h \nabla_{\vtheta} E(\vtheta_{t-1}; x_{t-1})$ and $\vtheta_{t+1} = \vtheta_t - h \nabla_{\vtheta} E(\vtheta_t; \vx_{t})$. Thus if we perform BEA on each stochastic gradient descent update separately, we obtain different modified flows.
To use BEA in the case of stochastic gradient descent we have to choose one of the following options: expand the two discrete gradient descent updates into one update and perform BEA on the combined discrete update; work in expectation over batches in an epoch as done in ~\citet{igr_sgd}; or focus on each gradient descent update independently, and try to understand the per iteration behavior of gradient descent.
In this work, we took the latter approach and thus for simplicity we excluded the dependence of the function $E$ on the data batch.

\subsection{Approximations to per iteration drift for gradient descent and momentum}
\label{sec:proofs_total_per_iteration_drift}

We now prove Thm~\ref{thm:total_drift}, on the per iteration drift of gradient descent. We apply the Taylor reminder theorem on the NGF initialised at gradient descent iteration parameters $\vtheta_{t-1}$. We have that there exists $h'$ such that 
\begin{align}
\vtheta(h)   = & \vtheta(0) + h \dot{\vtheta}(0) + \frac{h^2}{2}\dot{\dot{\vtheta}}(h') \\
     = & \vtheta(0) - h \nabla_{\vtheta} E {\vtheta(0)} + \frac{h^2}{2} \nabla_{\vtheta}^2 E(\vtheta(h')) \nabla_{\vtheta} E(\vtheta(h')) \\
           =&  \vtheta_{t-1} - h \nabla_{\vtheta} E (\vtheta_{t-1}) + \frac{h^2}{2} \nabla_{\vtheta}^2 E(\vtheta') \nabla_{\vtheta} E(\vtheta')
\end{align}

with $h' \in (0, h)$. The above proof measures the drift of gradient descent, which we then used to construct DAL.
In Section~\ref{sec:future_work}, we also used DAL for momentum, and used an intuitive justification. We now show a more theoretical justification, by measuring the total per iteration drift of momentum. 
To do so, we use the following flow to describe momentum, provided by ~\citet{symmetry}:
\begin{align}
\dot{\vtheta} = - \frac{1}{1-m}\nabla_{\vtheta} E
\end{align}

and we measure the discretization drift of a momentum update compared to this flow.

If we apply the same strategy as above, we have that 
\begin{align}
\vtheta(h) &= \vtheta(0) + h \dot{\vtheta}(0) + \frac{h^2}{2}\dot{\dot{\vtheta}}(h') \\
&= \vtheta(0) - \frac{h}{1-m} \nabla_{\vtheta} E {\vtheta(0)} + \frac{h^2}{2 (1-m)^2} \nabla_{\vtheta}^2 E(\vtheta(h')) \nabla_{\vtheta} E(\vtheta(h')) \\
        &= \vtheta_{t-1} - \frac{h}{1-m} \nabla_{\vtheta} E (\vtheta_{t-1}) + \frac{h^2}{2 (1-m)^2}  \nabla_{\vtheta}^2 E(\vtheta') \nabla_{\vtheta} E(\vtheta')
\end{align}

 When we compare this with the discrete update 
\begin{align}
    \vtheta_t = \vtheta_{t-1} + m \vv_{t-1} - h \nabla_{\vtheta} E(\vtheta_{t-1})
\end{align}
and obtain that the per iteration drift is
\begin{align}
& \left(\vtheta_{t-1} - \frac{h}{1-m} \nabla_{\vtheta} E (\vtheta_{t-1}) + \frac{h^2}{2 (1-m)^2}  \nabla_{\vtheta}^2 E(\vtheta') \nabla_{\vtheta} E(\vtheta') \right) - \left( \vtheta_{t-1} + m \vv_{t-1} - h \nabla_{\vtheta} E(\vtheta_{t-1})\right) \\
& =\frac{h^2}{2 (1-m)^2}  \nabla_{\vtheta}^2 E(\vtheta') \nabla_{\vtheta} E(\vtheta') + m \vv_{t-1} + h \nabla_{\vtheta} E(\vtheta_{t-1}) -  \frac{h}{1-m} \nabla_{\vtheta} E (\vtheta_{t-1}) \\
& =\frac{h^2}{2 (1-m)^2}  \nabla_{\vtheta}^2 E(\vtheta') \nabla_{\vtheta} E(\vtheta') + m \vv_{t-1} -\frac{h m}{1-m} \nabla_{\vtheta} E(\vtheta_{t-1})
\end{align}

 Thus there is one part of the drift that comes from the same source as gradient descent and another part of the drift which comes from the alignment of the current gradient and the moving average obtained using momentum. Thus when using DAL-$p$ with momentum, we only focus on one of the sources of drift (the first term).

\section{Non-principal terms}
\label{sec:non_principal_ap}

This work focuses on understanding the effects of the PF on the behavior of gradient descent.  
The principal terms however are not the only terms in the discretization drift: we have found one non-principal term of the form $\nabla_{\vtheta} E^T (\nabla^3_{\vtheta} E) \nabla_{\vtheta} E$ (Eq~\ref{eq:third_order_modified_vector_field}) and have seen that it can have a stabilising effect (Figure~\ref{fig:intuition_banana}). 
To provide some intuition, while $\nabla_{\vtheta} E^T (\nabla^3_{\vtheta}E) \nabla_{\vtheta} E$ cannot be written as a gradient operator we can write:
\begin{align}
\nabla_{\vtheta}E ^T \nabla_{\vtheta}^3 E \nabla_{\vtheta}E = \sum_{i=0}^{D-1} (\nabla_{\vtheta}E ^T \vu_i) ^2 \left(\vu_i ^T \nabla_{\vtheta}^3 E \vu_i \right) \approx \sum_{i=0}^{D-1} (\nabla_{\vtheta}E ^T \vu_i) ^2 \nabla_{\vtheta} \lambda_i
\end{align}

where we used that $\vu_i$ does not change substantially around a GD iteration to write the derivative of $\lambda_i$, namely $\nabla_{\vtheta} \lambda_i = \vu_i ^T \nabla_{\vtheta}^3 E \vu_i$, since $\nabla_{\vtheta}^2 E$ is real and symmetric around a gradient descent iteration.
We can now write, again assuming $\vu_i$ is locally constant: 

\begin{align}
\sum_{i=0}^{D-1} (\nabla_{\vtheta}E ^T \vu_i) ^2 \nabla_{\vtheta} \lambda_i = \sum_{i=0}^{D-1} \nabla_{\vtheta} \left( \lambda_i (\nabla_{\vtheta}E ^T \vu_i) ^2 \right) - \sum_{i=0}^{D-1} 2 \lambda_i^2 (\nabla_{\vtheta}E ^T \vu_i) \vu_i.
\label{eq:non_principal_minimisation}
\end{align}

From here we can see that if $\lambda_i$ or $\nabla_{\vtheta}E ^T \vu_i$ are close to 0, the non principal term leads to a pressure to minimise $ \lambda_i (\nabla_{\vtheta}E ^T \vu_i) ^2 $, since the non-gradient term $\lambda_i^2 (\nabla_{\vtheta}E ^T \vu_i) \vu_i$ in Eq~\ref{eq:non_principal_minimisation} diminishes. We note that the work of~\citet{damian2022self} is what inspired us to write the third order non-principal term in the form of Eq~\ref{eq:non_principal_minimisation} in the Appendix, after we had previously noted its stabilizing properties.

Since $\lambda_i = 0$ or $\nabla_{\vtheta} E ^T \vu_i=0$ results in $\alpha_{PF}(h\lambda_i) = \alpha_{NGF}(h \lambda_i) = -1$, minimising $\lambda_i (\nabla_{\vtheta} E ^T \vu_i)^2$ has a stabilising effect by reducing the instability from the PF since in those directions the PF has the same behavior as the NGF.
We think this can shed light on observations about neural network training behaviour: the NGF tends to increase the eigenvalues, but as they reach close to 0 there is a pressure to minimise the above, which leads to $\lambda_i$ staying around 0; this is consistent with observations in the literature~\citep{sagun2017empirical,ghorbani2019investigation,papyan2018full}. This observation also explains the pressure for the dot product $\nabla_{\vtheta} E ^T \vu_0$ to stay small if it is initialised around 0, as we see is the case in the early in neural network training (see Figures~\ref{fig:early_training_eigen_loss} and~\ref{fig:instabilities_short}). Furthermore, as we show in Figure~\ref{fig:instability_largest_dot_prod} and previously argued, the dot product $\nabla_{\vtheta} E ^T \vu_0$ fluctuates around 0 in the edge of stability areas, making it likely for the minimisation effect induced by Eq~\ref{eq:non_principal_minimisation} to kick-in.

We plot the value of this non-principal term in neural network training in Figure~\ref{fig:non_principal}.
These results show that the non-principal third order term is very small outside the edge of stability area, but has a larger magnitude around the edge of stability, where the largest eigenvalues stop increasing but the magnitude of $(\nabla_{\vtheta} E(\vtheta_0)^T \vu_i)^2$ fluctuates and can be large (Figure~\ref{fig:instability_largest_dot_prod}).
This observation is consistent with the above interpretation, but more theoretical and empirical work is needed to understand the effects of non-principal terms on training stability and generalization.

\begin{figure}[tbh]
\centering
\includegraphics[width=0.43\columnwidth]{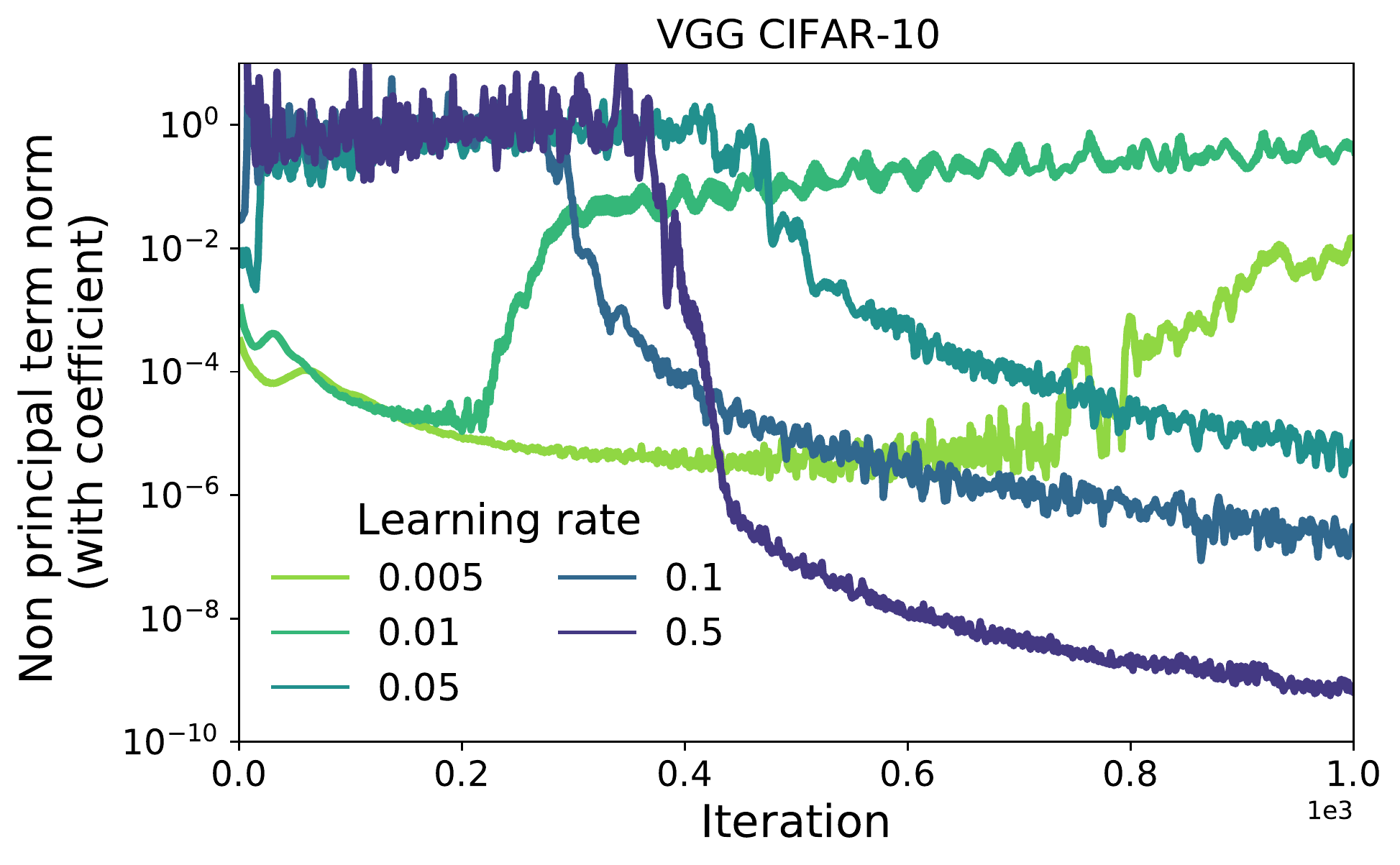}\hspace{2em}
\includegraphics[width=0.43\columnwidth]{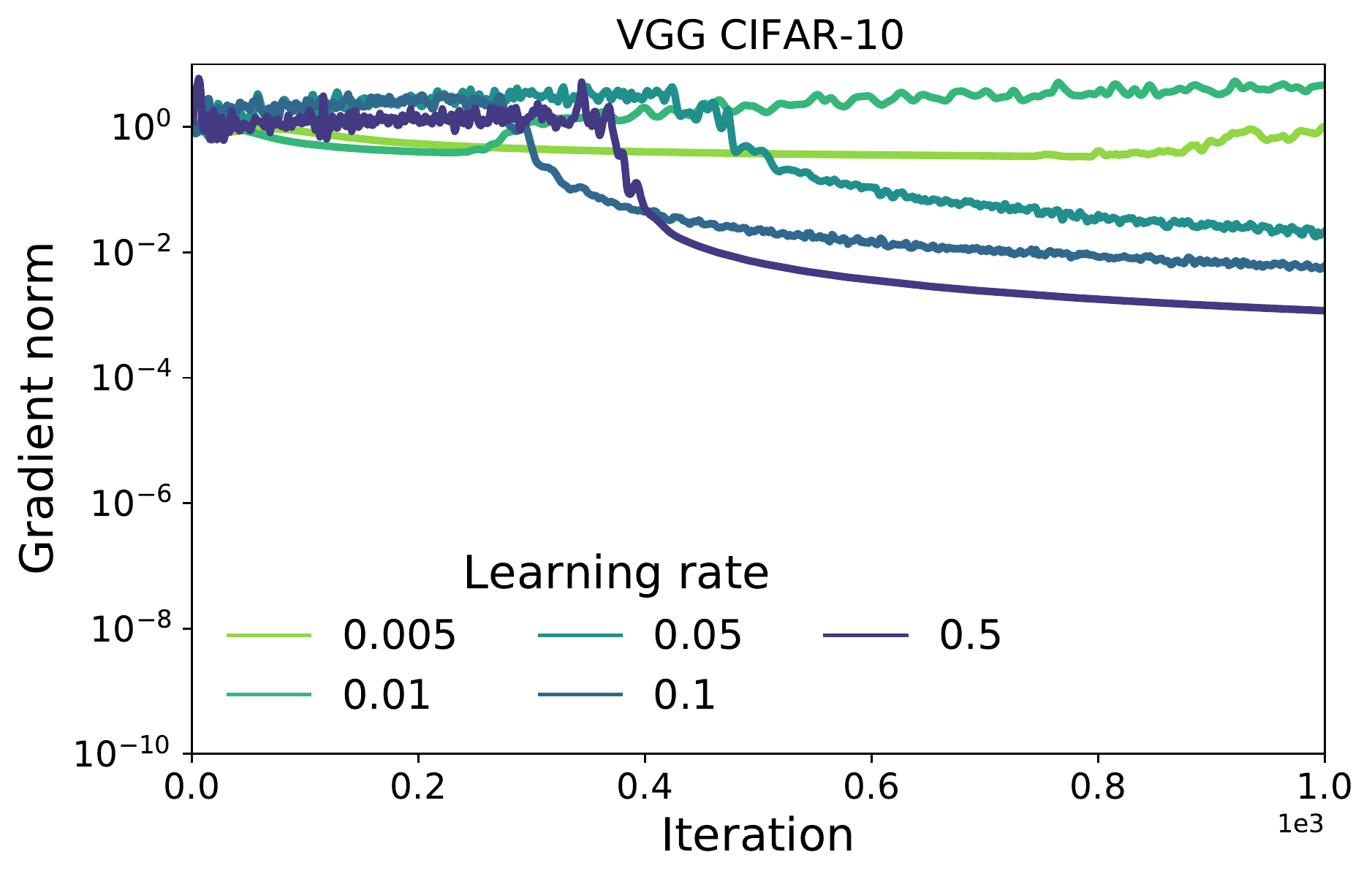}
\caption[Learning rate sweep showing the value of the non-principal third order term in training, on a Resnet-18 model trained on CIFAR-10. The coefficients used for the non-principal term are those in equation~\ref{eq:third_order_modified_vector_field}.]{The value of the non-principal third order term in training: outside the edge of stability areas (best seen for learning rates $0.005$ and $0.01$), the non-principal third order term is very small compared to the gradient. Inside the edge of stability areas, the non-principal term has a higher magnitude. Corresponding loss and $\lambda_0$ plots are provided in Figure~\ref{fig:reproduce_edge_of_stability}.}
\label{fig:non_principal}
\end{figure}

\section{Experimental details}

\textbf{Estimating continuous time trajectories of flows.} To estimate the continuous time trajectories we use Euler integration with a small learning rate $5 \times 10^{-5}$. We reached this learning rate through experimentation: further decreasing it did not change the obtained results. We note that this approach can be computationally expensive: to estimate the trajectory of the NGF of time give by the gradient descent learning rate $10^{-2}$, we need to do 5000 gradient steps. It is common in the literature to use RungeKutta4 to approximate continuous time flows, but we noticed that approximating a flow for time $h$ using RungeKutta4 with learning rate $h$ still introduced significant drift: if the learning rate was further reduced and multiple steps were taken the results were significantly different. Thus RungeKutta4 also needs multiple steps to estimate continuous trajectories. Given that one RungeKutta4 update requires computing 4 gradients, we found that using Euler integration with small learning rates is both sufficient and more efficient in practice.

\textbf{Datasets}. When training neural networks with primarily used three standard datasets: MNIST \citep{lecun1995learning}, CIFAR-10 \citep{cifar10} and Imagenet \citep{deng2009imagenet}. On the small NN example in Section \ref{sec:principal_flow}, we used a dataset of 5 examples, where the input is 2D sampled from a Gaussian distribution and the regression targets are also sample randomly. We used the UCI breast cancer dataset \citep{asuncion2007uci} in Figure \ref{fig:breast_cancer_principal_flow}.

\textbf{Architectures}. We use standard architectures: MLPs for MNIST, VGG \citep{simonyan2014very} or Resnet-18 (Version 1) \citep{he2016deep} for CIFAR-10, Resnet-50 for Imagenet (Version 1). We do not use any form of regularisation or early stopping. We use the Elu activation function \citep{elu} to ensure that the theoretical setting we discussed applies directly (we thus avoid discontinuities caused by Relus\citep{agarap2018deep}). We note that in the CIFAR-10 experiments, we did not adapt the Resnet-18 architecture to the dataset; doing so will likely increase performance.

\textbf{Losses}. Unless otherwise specified we used a cross entropy loss.

\textbf{Computing eigenvalues and eigenvectors.} We use the Lanczos algorithm to compute $\lambda_i$ and $\vu_i$.

\textbf{Computing $\norm{\nabla_{\vtheta}^2 E \nabla_{\vtheta} E}$}. We use Hessian vector products to compute $\norm{\nabla_{\vtheta}^2 E \nabla_{\vtheta} E}$. We experimented with  using the approximation $\nabla_{\vtheta}^2 E \nabla_{\vtheta} E = \frac{1}{2}\nabla_{\vtheta} \norm{\nabla_{\vtheta} E}^2 \approx \frac{\nabla_{\vtheta}E(\vtheta + \epsilon \nabla_{\vtheta} E) - \nabla_{\vtheta}E(\vtheta)}{\epsilon}$ with $\epsilon = 0.01/\norm{\nabla_{\vtheta} E}$ as suggested by~\citet{geiping2021stochastic} and saw no decrease in performance in the full batch setting when using it in DAL, but a slight decrease in performance when using it in stochastic gradient descent. We show experimental results in Figures~\ref{fig:dal_cifar_full_batch}, \ref{fig:dal_cifar_sgd},~\ref{fig:dal_approx_comp} and~\ref{fig:dal_0_5_approx_comp}. More experimentation is needed to see how to efficiently leverage this approximation, for example whether other values of $\epsilon$ are more suited for large networks. Alternatively, even when small batches are used for training a larger batch can be used to approximate $\norm{\nabla_{\vtheta}^2 E \nabla_{\vtheta} E}$; we did not experiment with this. We note however that all the conclusions we observed in the paper regarding the relative performance of $p$ values of DAL still hold, with and without the approximation and that indeed we see less of a difference when higher values of $p$ are used.

\begin{figure}
 \includegraphics[width=0.33\columnwidth]{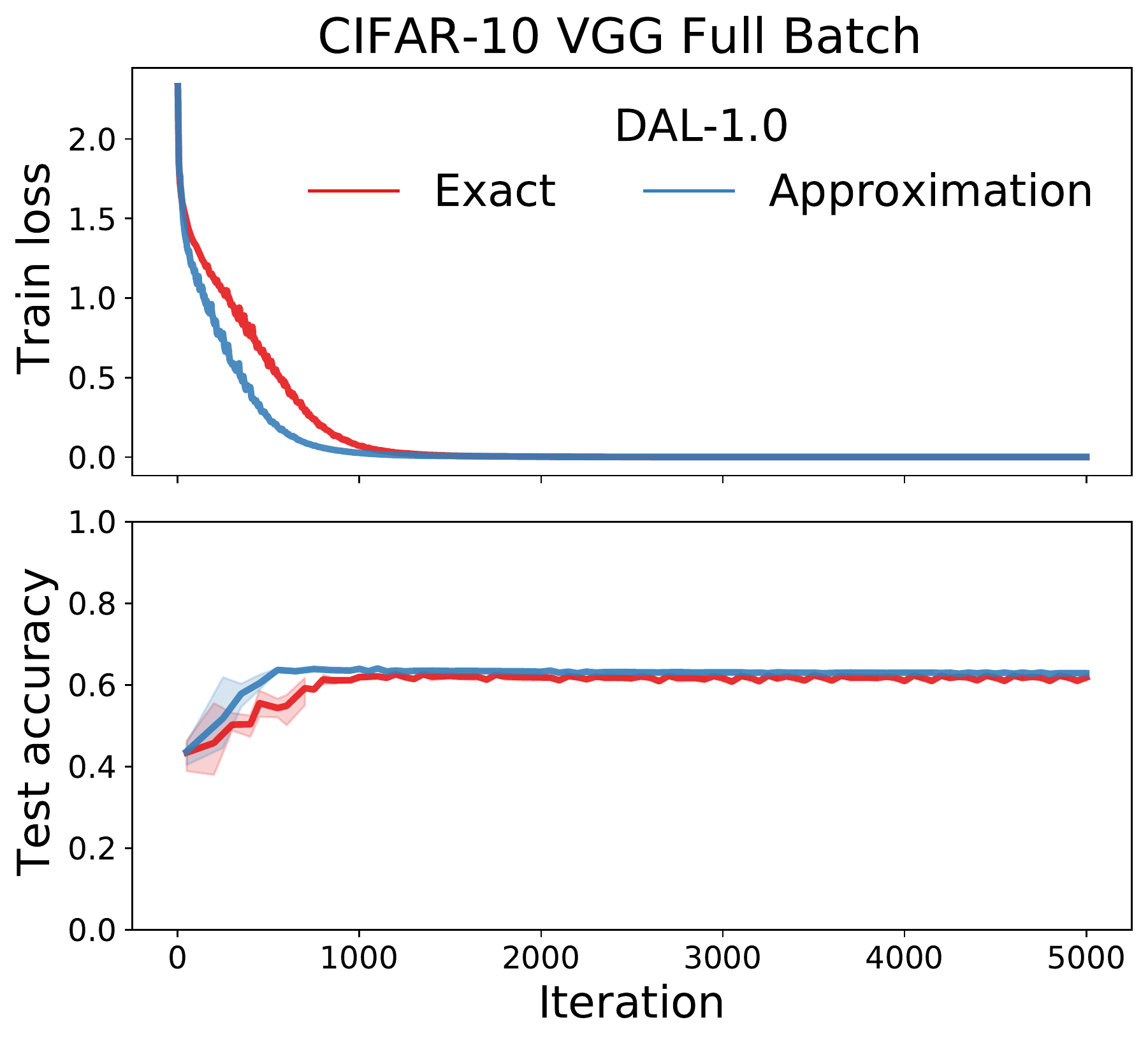}%
 \includegraphics[width=0.33\columnwidth]{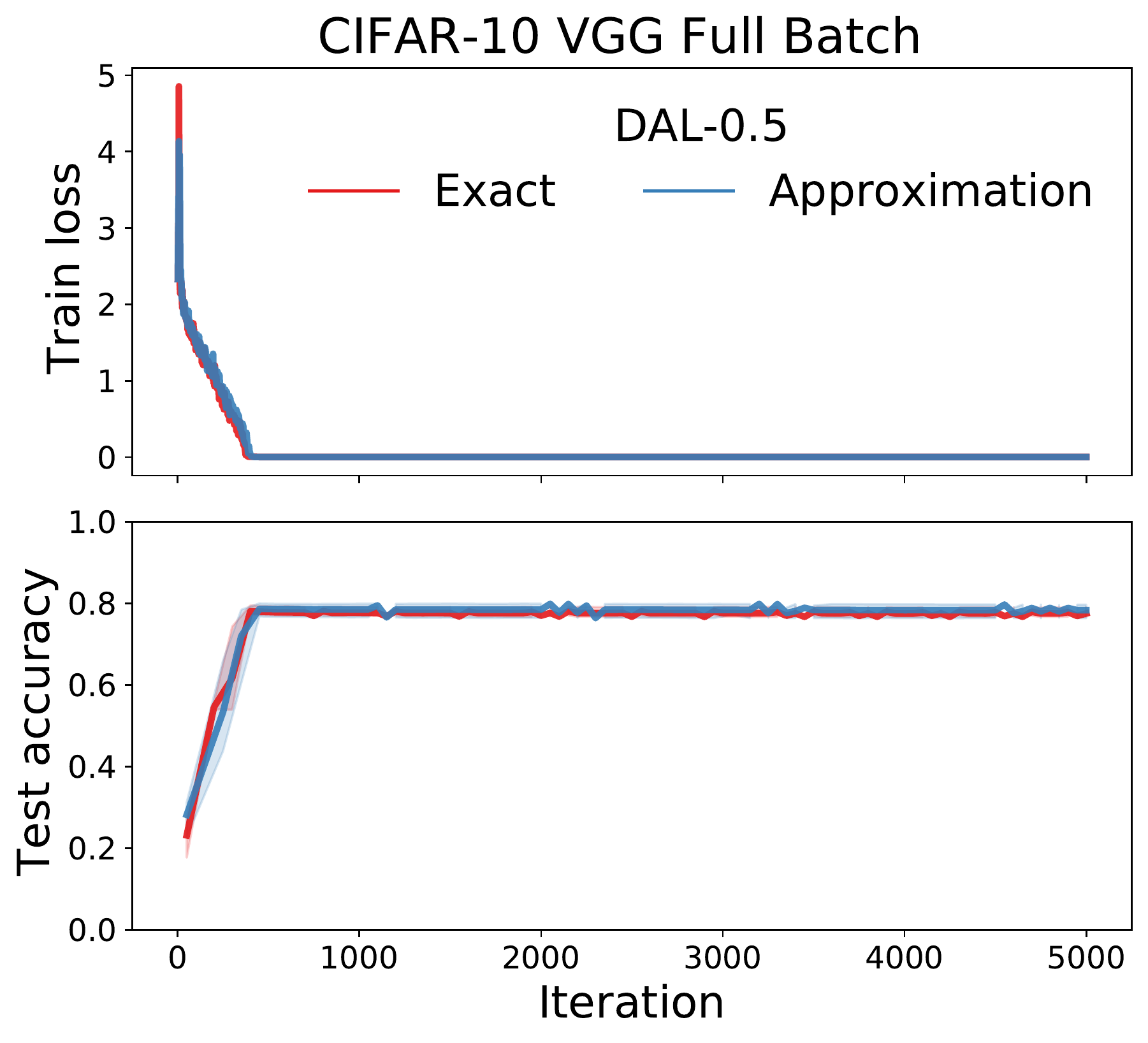}%
 \includegraphics[width=0.33\columnwidth]{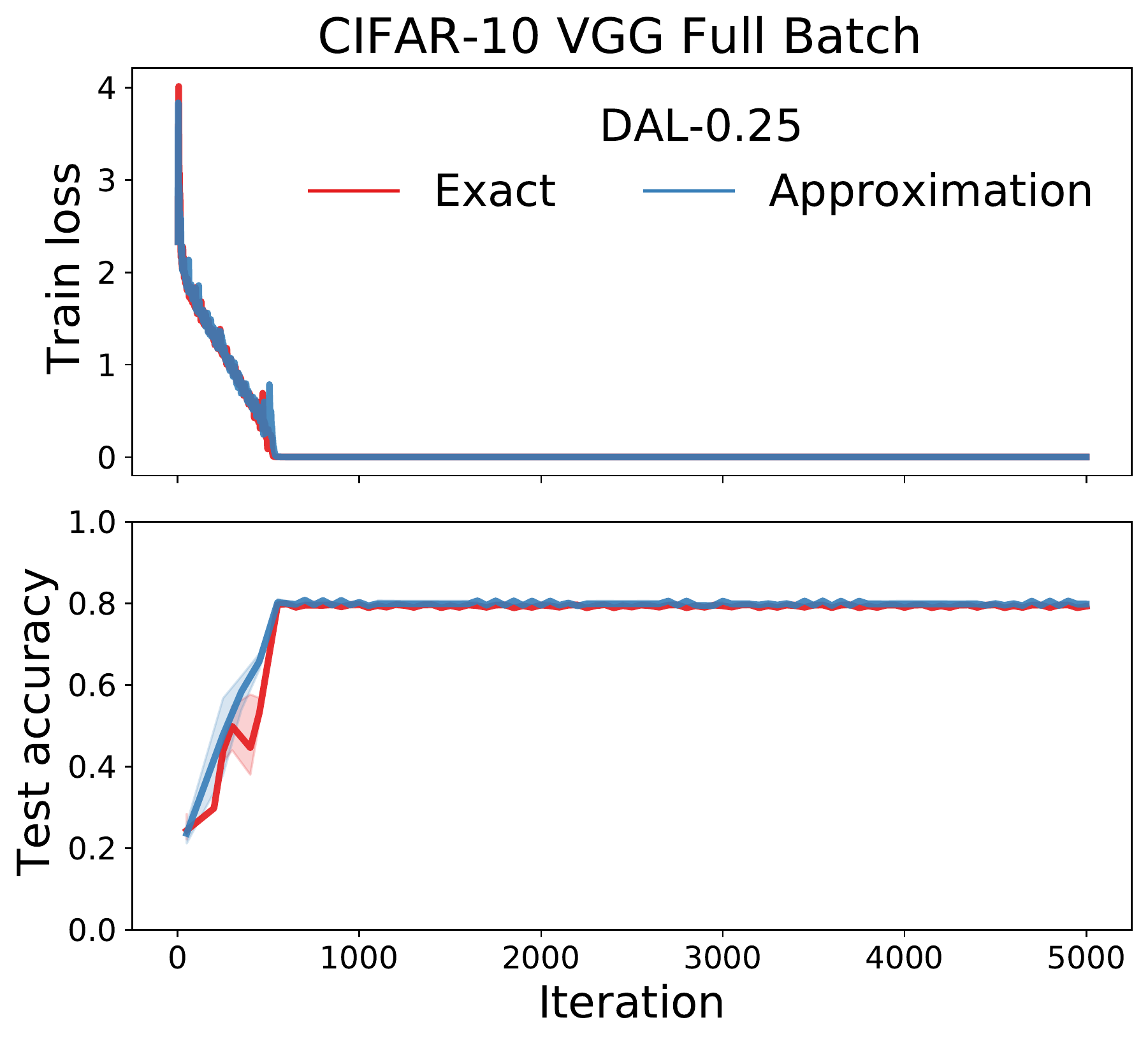}%
\caption[CIFAR-10 DAL results with a Hessian vector product computation of $\nabla_{\vtheta}^2 E \nabla_{\vtheta} E$ compared to an approximation. We used $\epsilon = 0.01$ in the approximation. Full batch training. ]{CIFAR-10 DAL results with a Hessian vector product computation of $\nabla_{\vtheta}^2 E \nabla_{\vtheta} E$ compared to an approximation. We used $\epsilon = 0.01$ in the approximation.}
\label{fig:dal_cifar_full_batch}
\end{figure}

\begin{figure}
 \includegraphics[width=0.32\columnwidth]{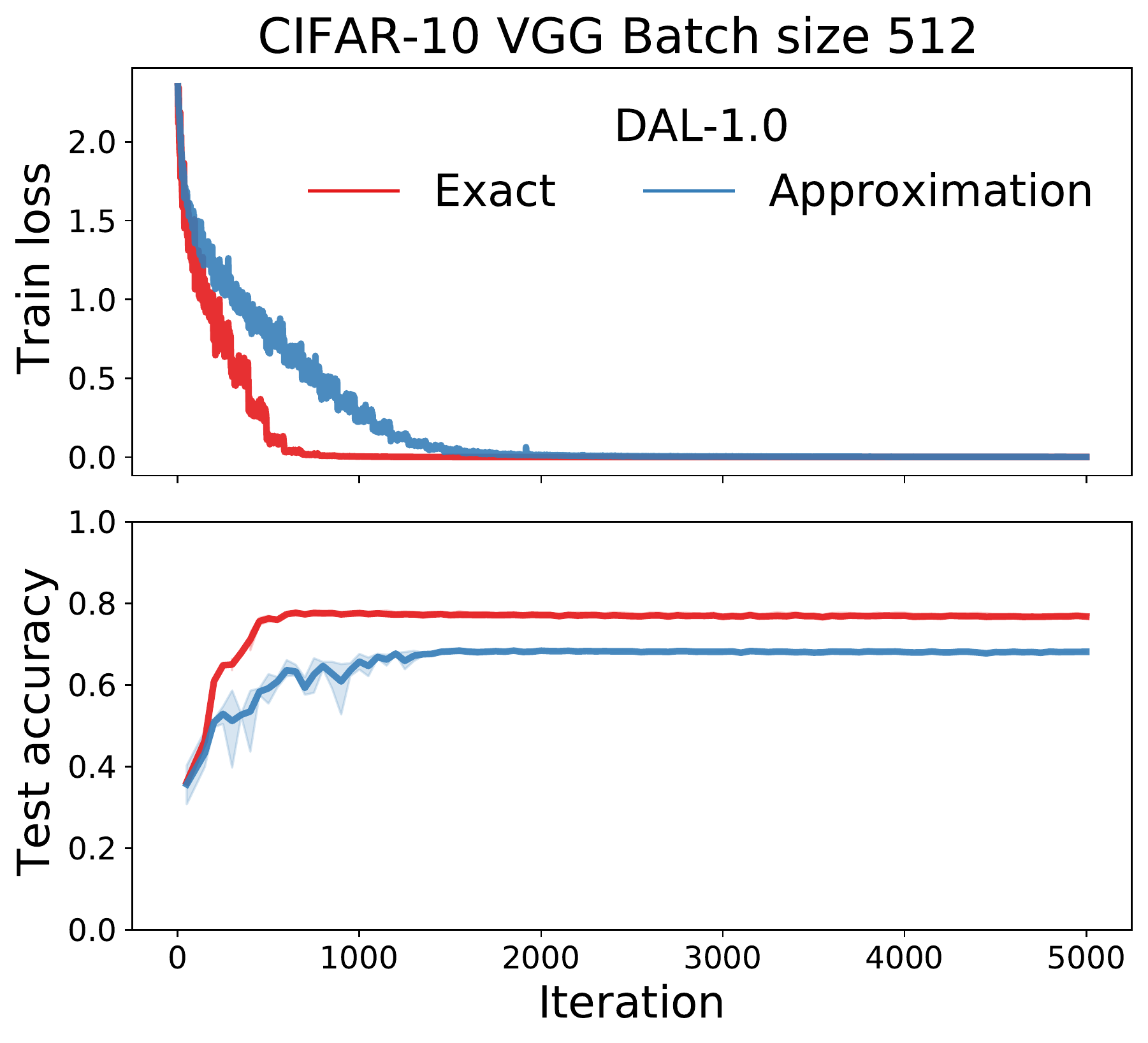}
 \includegraphics[width=0.32\columnwidth]{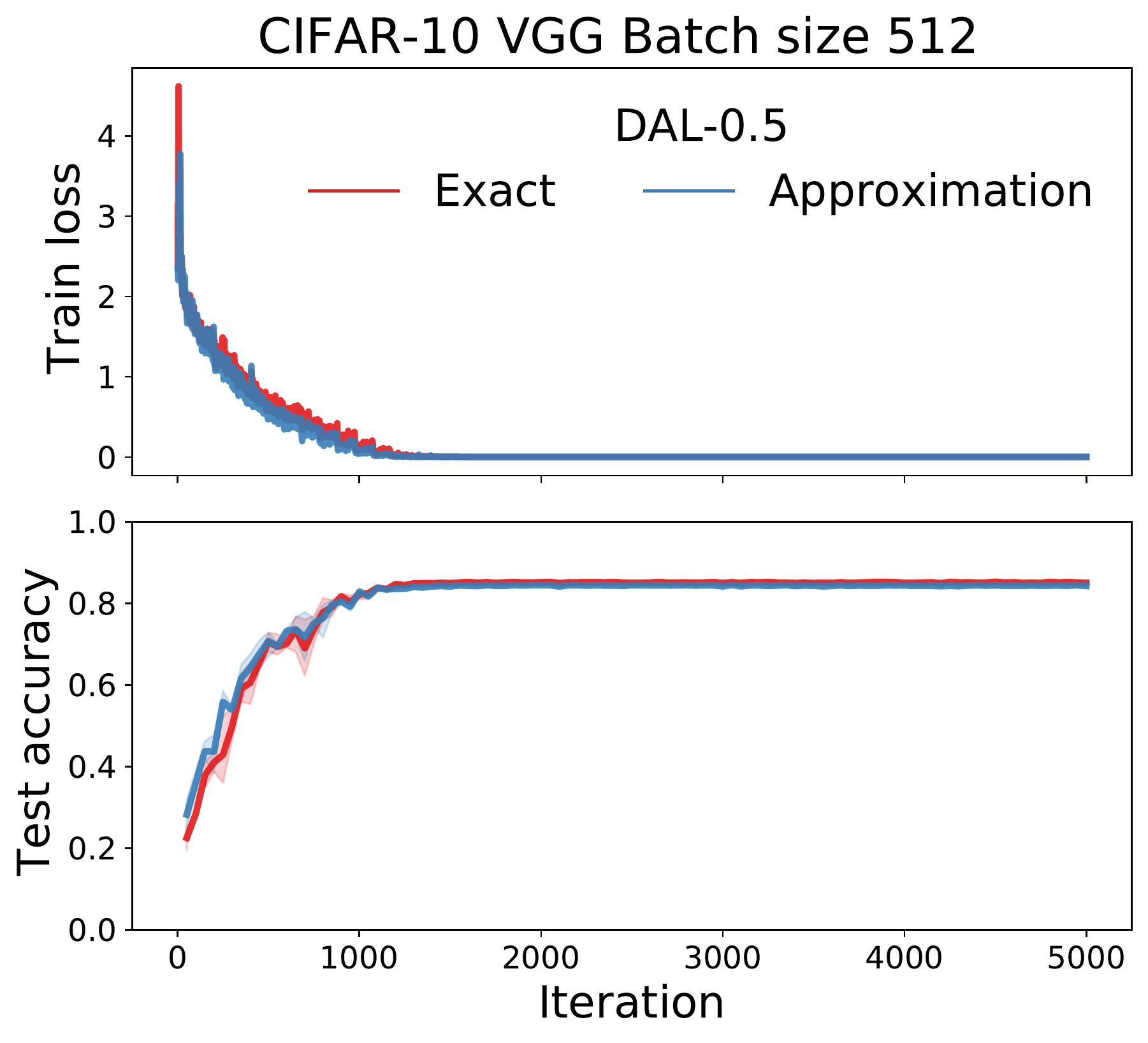}
 \includegraphics[width=0.32\columnwidth]{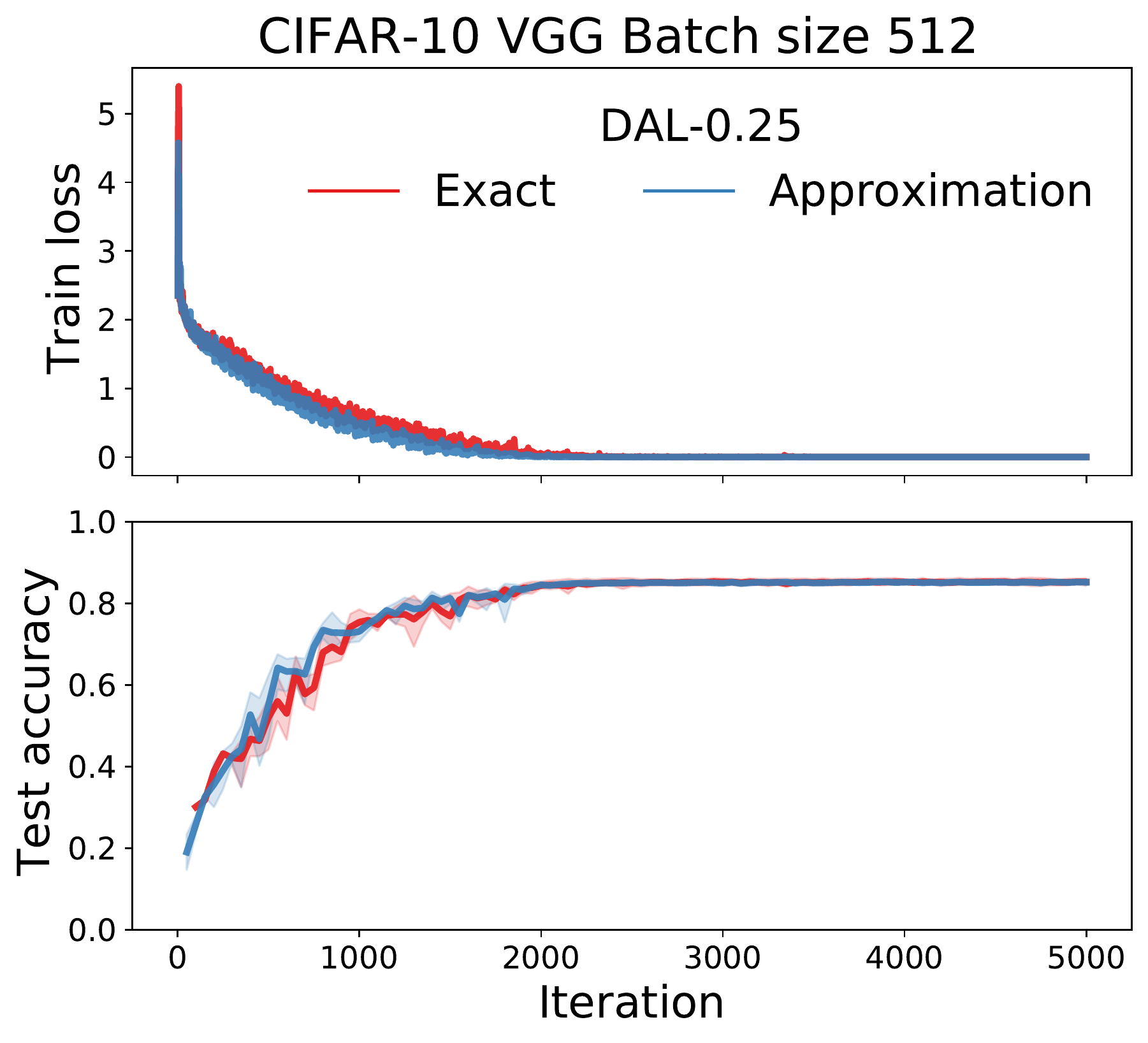}
\caption[CIFAR-10 DAL results with a Hessian vector product computation of $\nabla_{\vtheta}^2 E \nabla_{\vtheta} E$ compared to an approximation. We used $\epsilon = 0.01$ in the approximation. Batch size 512. ]{CIFAR-10 DAL results with a Hessian vector product computation of $\nabla_{\vtheta}^2 E \nabla_{\vtheta} E$ compared to an approximation. We used $\epsilon = 0.01$ in the approximation.}
\label{fig:dal_cifar_sgd}
\end{figure}

\begin{figure}
 \includegraphics[width=0.32\columnwidth]{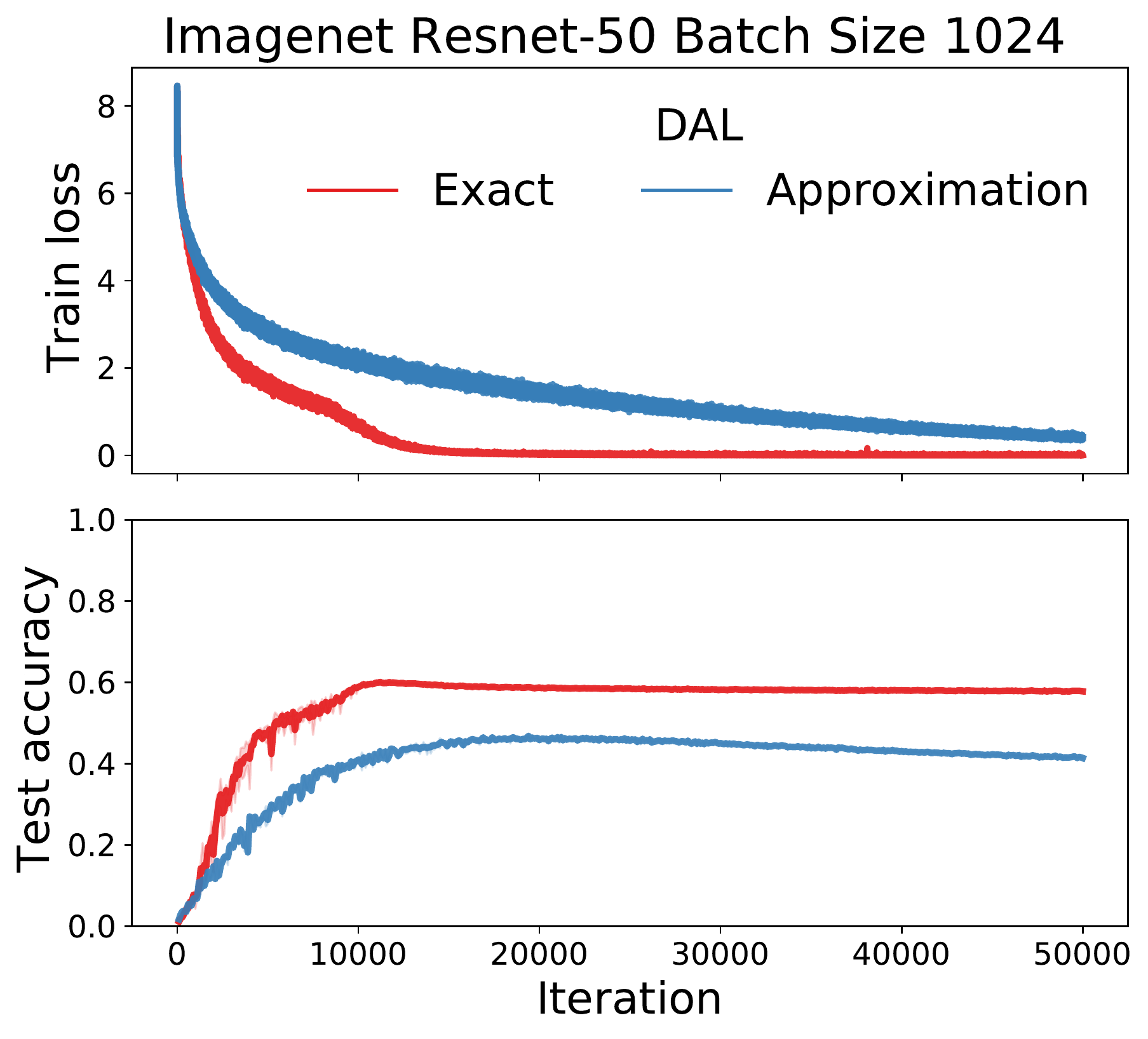}
 \includegraphics[width=0.32\columnwidth]{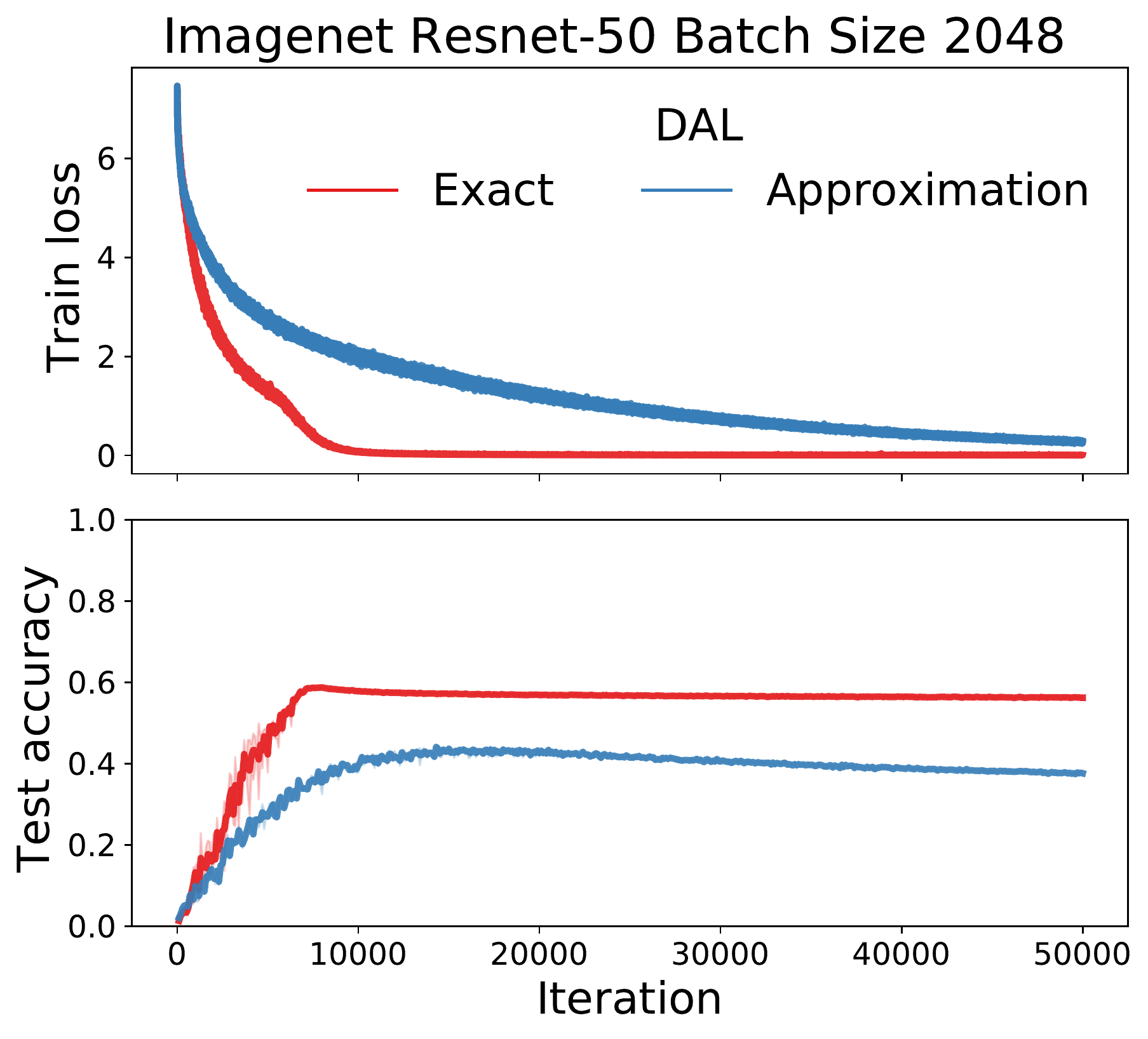}
 \includegraphics[width=0.32\columnwidth]{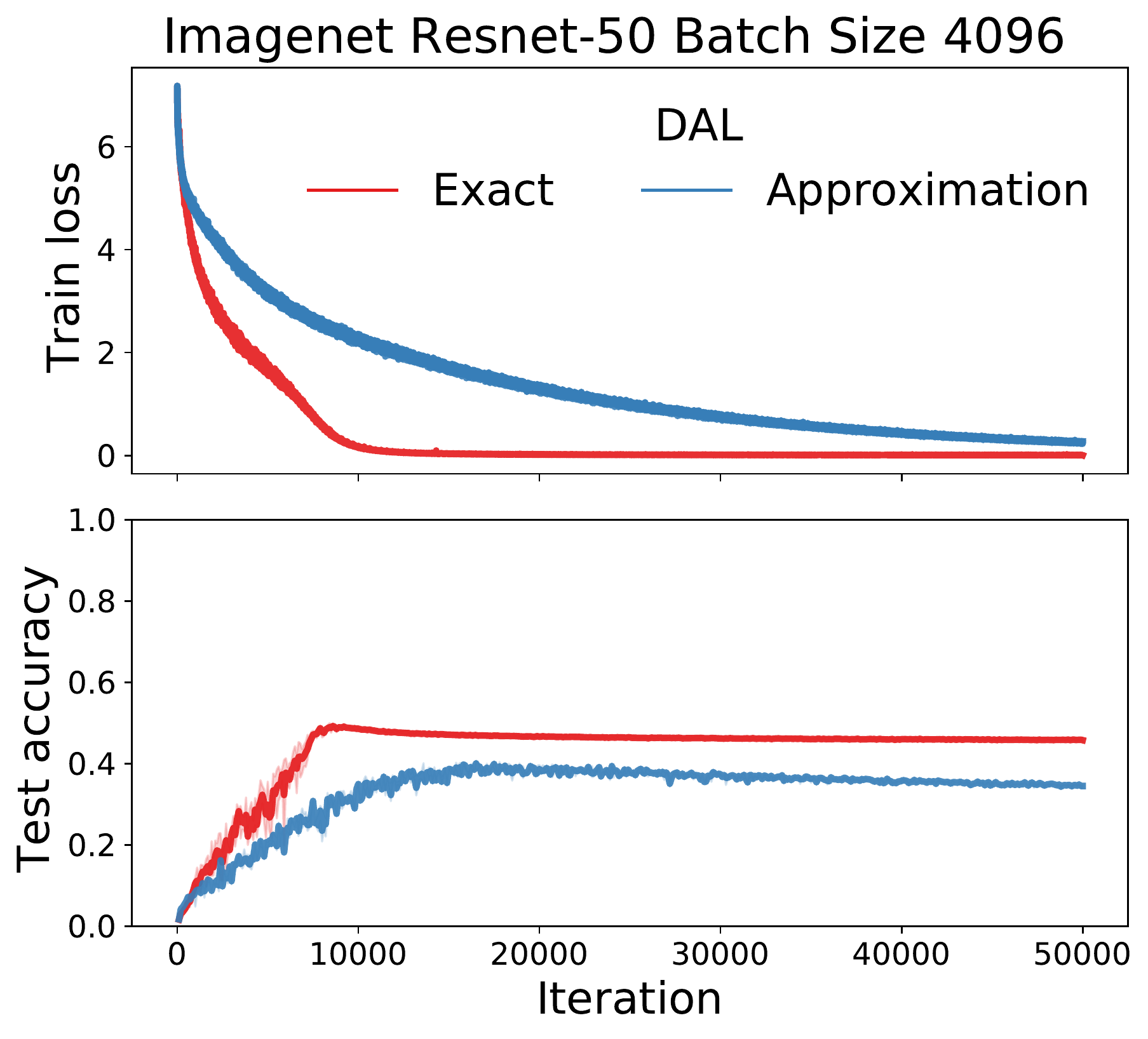}
\caption[Imagenet DAL results with a Hessian vector product computation of $\nabla_{\vtheta}^2 E \nabla_{\vtheta} E$ compared to an approximation. We used $\epsilon = 0.01$ in the approximation. ]{Imagenet DAL results with a Hessian vector product computation of $\nabla_{\vtheta}^2 E \nabla_{\vtheta} E$ compared to an approximation. We used $\epsilon = 0.01$ in the approximation.}
\label{fig:dal_approx_comp}
\end{figure}

\begin{figure}
 \includegraphics[width=0.32\columnwidth]{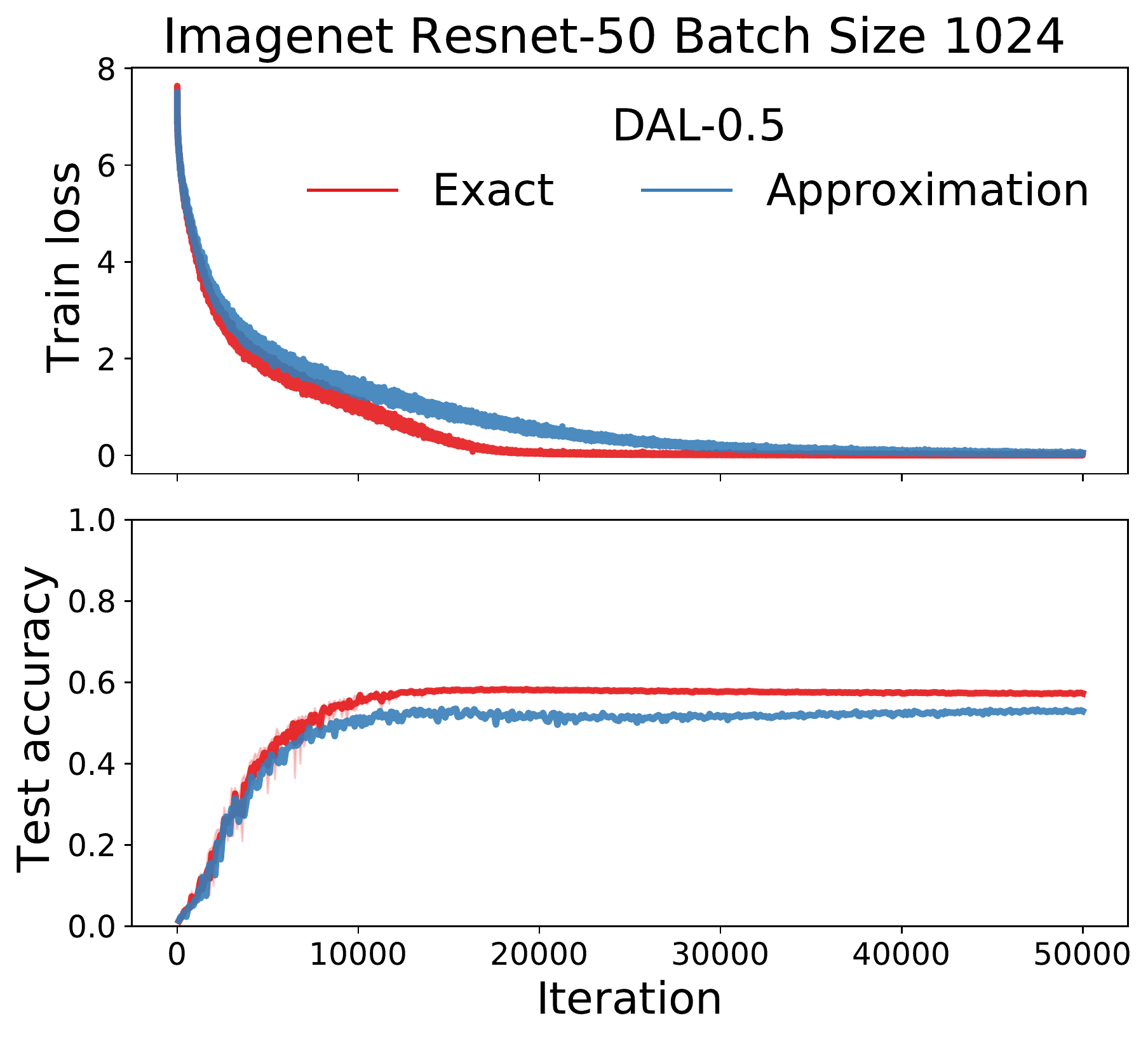}
 \includegraphics[width=0.32\columnwidth]{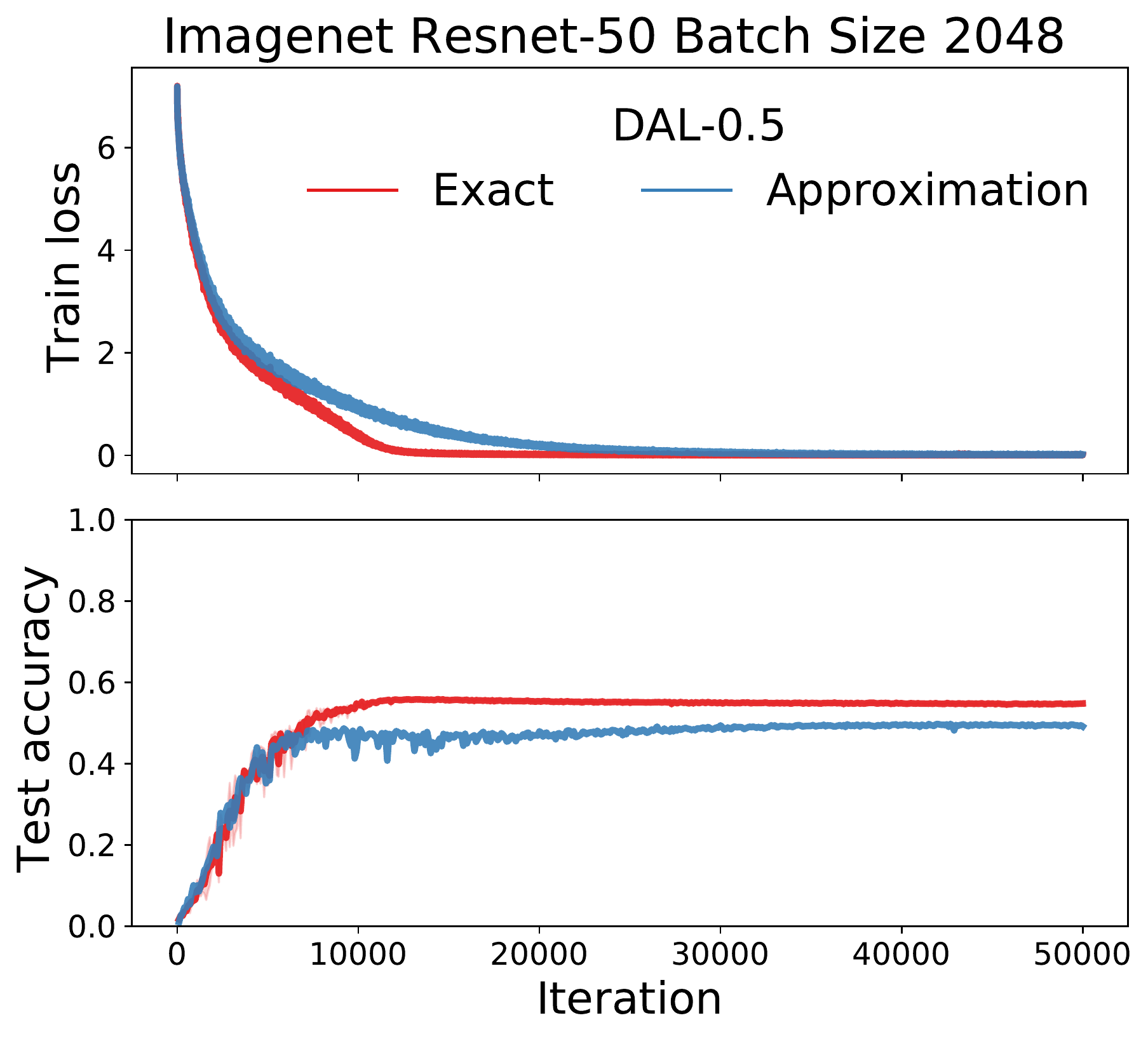}
 \includegraphics[width=0.32\columnwidth]{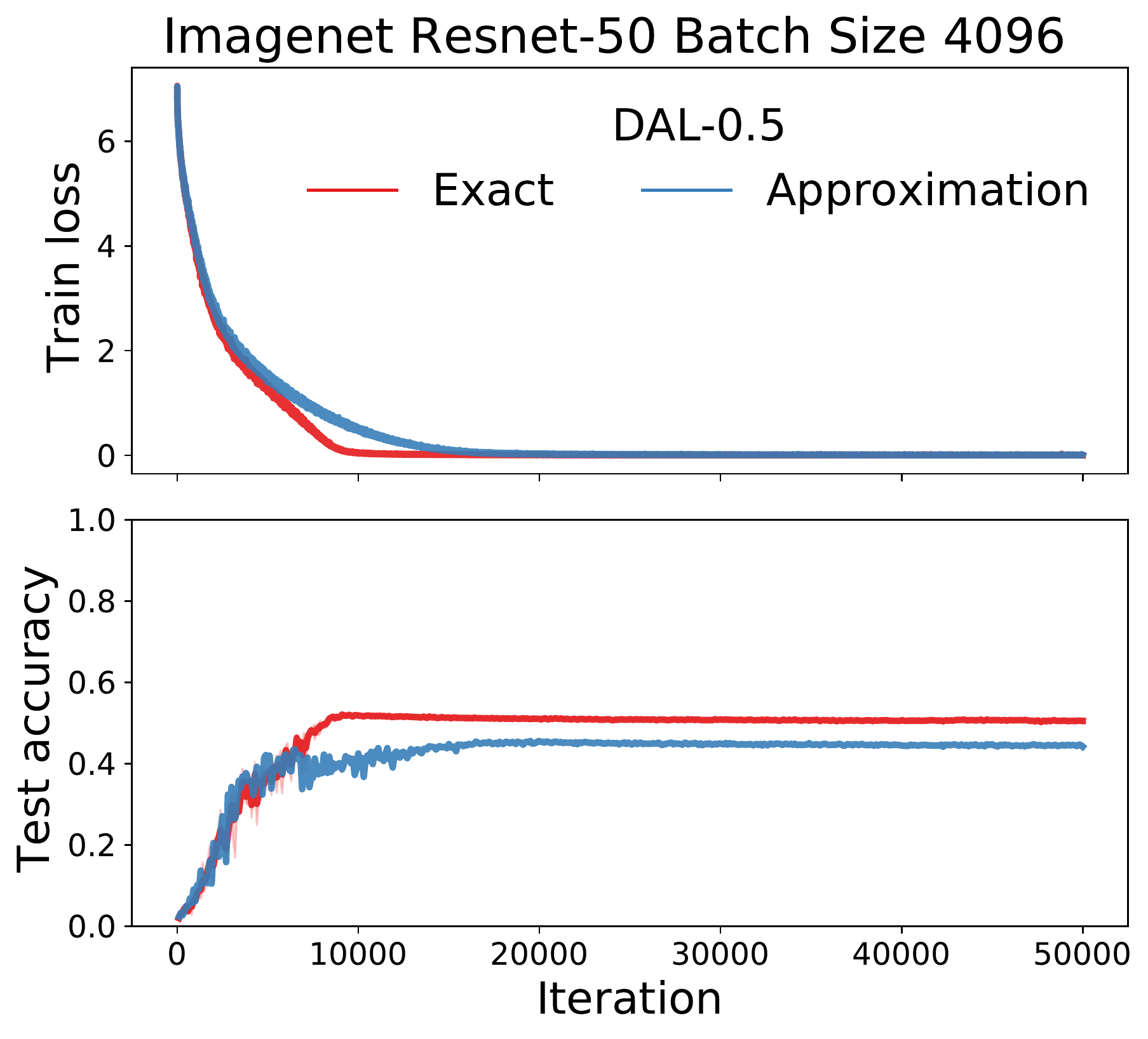}
\caption[Imagenet DAL-$0.5$ results with a Hessian vector product computation of $\nabla_{\vtheta}^2 E \nabla_{\vtheta} E$ compared to an approximation. We used $\epsilon = 0.01$ in the approximation. ]{Imagenet DAL$-0.5$ results with a Hessian vector product computation of $\nabla_{\vtheta}^2 E \nabla_{\vtheta} E$ compared to an approximation. We used $\epsilon = 0.01$ in the approximation.}
\label{fig:dal_0_5_approx_comp}
\end{figure}

\textbf{Seeds}. All test accuracies are shown averaged from 3 seeds. For training curves, we compare models across individual training runs to be able to observe the behavior of gradient descent. We did not observe variability across seeds with any of the behaviors reported in this paper.

\textbf{DAL.} When using DAL we set a maximum learning rate of 5 to avoid any potential instabilities. We did not experiment with other values.

\section{Additional experimental results}

\textbf{Oscillations for non linear functions}. 
We show that in the case of non-linear functions in Figure~\ref{fig:1d_cosine}, using $\cos$. Here too, the PF is better at describing the behavior of GD than the NGF and IGR flow, which stop at local minima.

\begin{figure}
 \includegraphics[width=0.48\columnwidth]{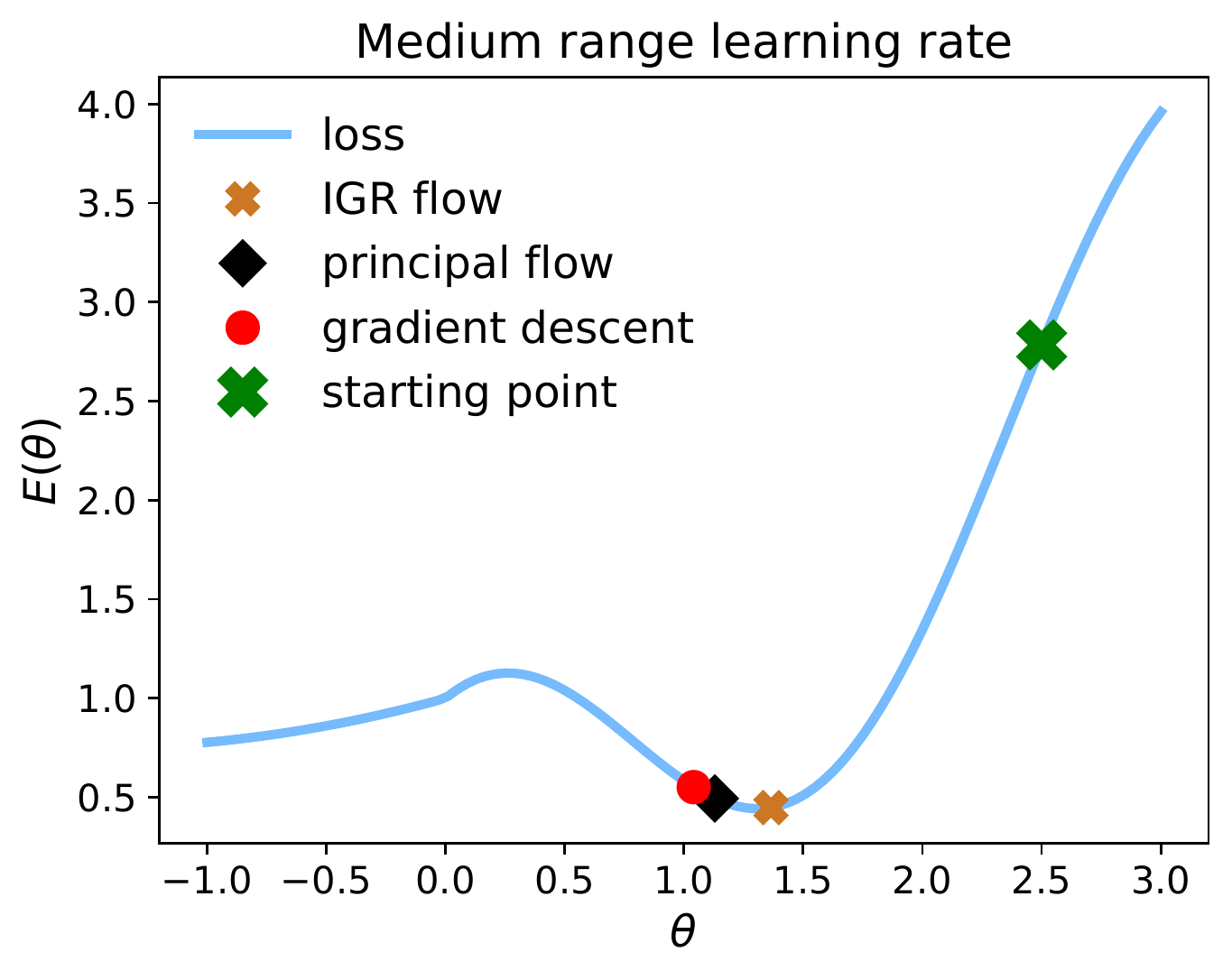}
 \includegraphics[width=0.48\columnwidth]{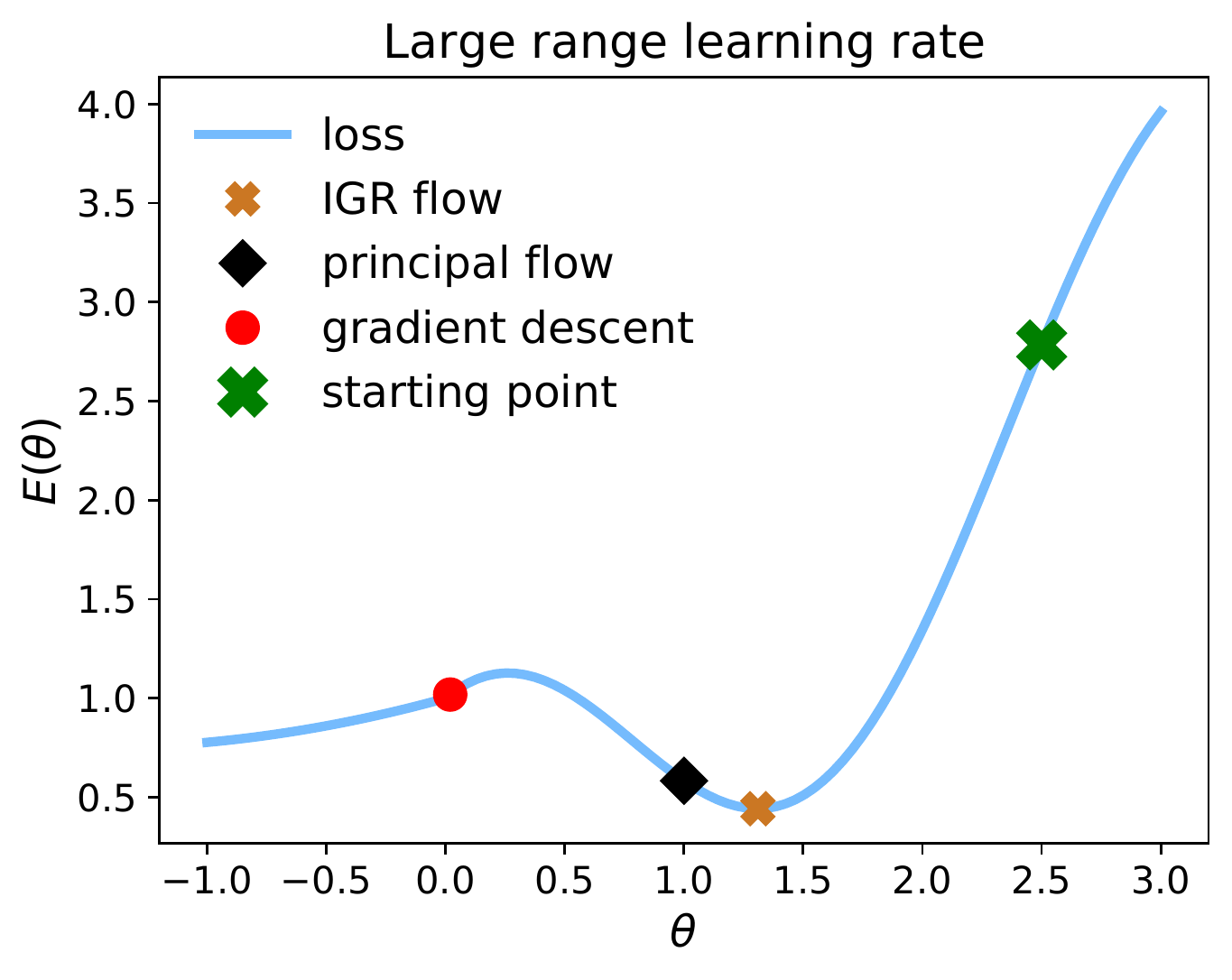}
\caption[Results with a non quadratic function, including $\cos$ and $\sin$. The function used is: $E(\theta) = \cos(\theta) + \theta, \text{if } \theta < 0; (\theta/3)^22 + 1 + \theta/3)$ otherwise. The principal flow and the IGR flow stay on the cosine branch. The main reason for using this function was to show the escape of gradient descent into a flatter valley and to assess what the corresponding flows do. Learning late on the left: $0.5$. On the right: $0.85$.]{Results with a non quadratic function, including $\cos$ and $\sin$. The function used is: $E(\theta) = \cos(\theta) + \theta, \text{if } \theta < 0; (\theta/3)^22 + 1 + \theta/3)$ otherwise. The principal flow and the IGR flow stay on the cosine branch. The main reason for using this function was to show the escape of gradient descent into a flatter valley and to assess what the corresponding flows do. Learning late on the left: $0.5$. On the right: $0.85$.}
\label{fig:1d_cosine}
\end{figure}

\textbf{Per iteration drift for neural networks with IGR and third order flows}.
We show results estimating the per iteration error between GD and the IGR flow and the third order flow in Figure~\ref{fig:validating_bea}.
While the IGR and third order flows reduce error compared to the NGF, a significant gap remains. This result shows that in neural network training, we need to look at higher orders in learning to explain neural network behavior.

\begin{figure}
 \includegraphics[width=0.33\columnwidth]{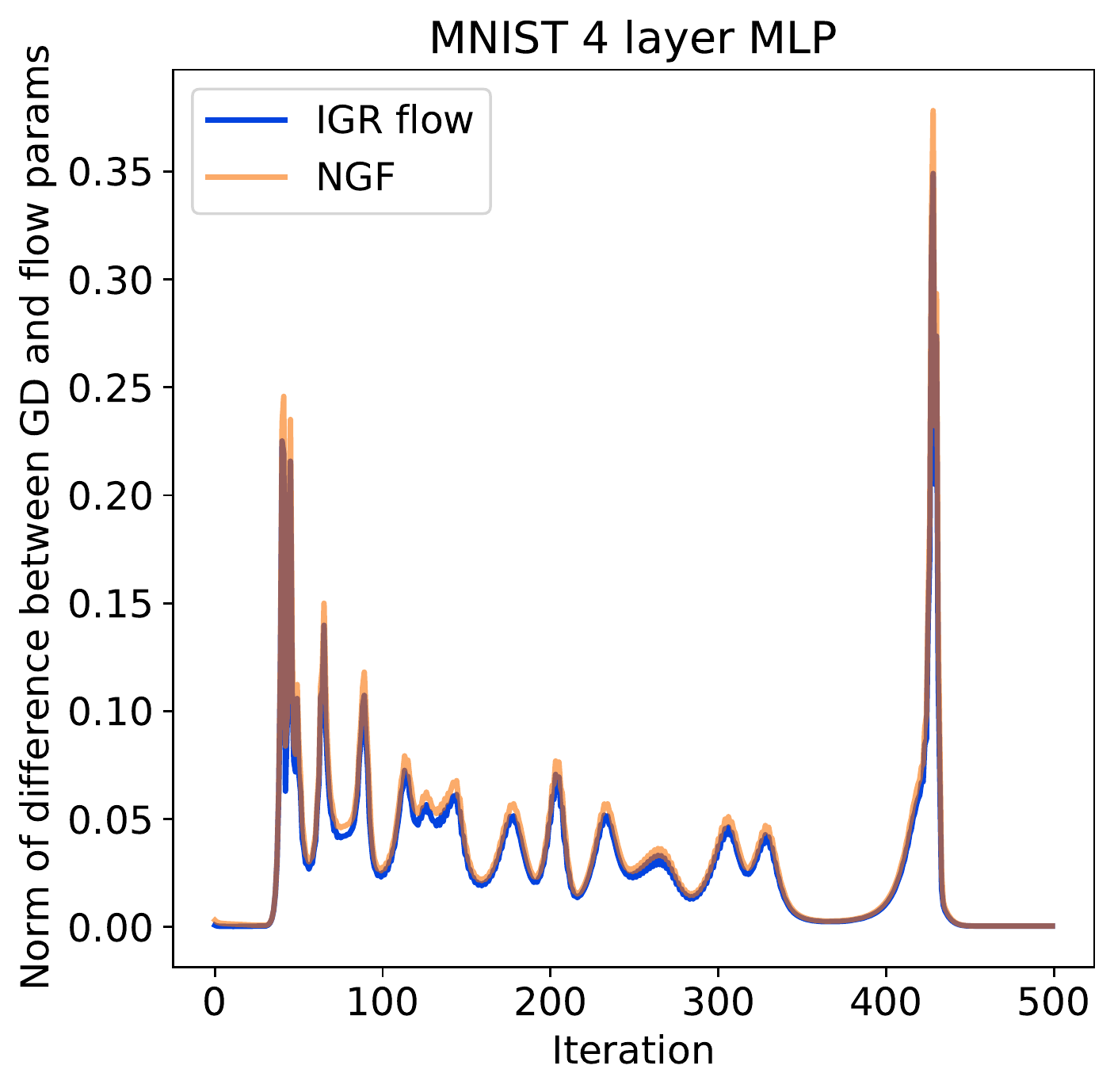}
 \includegraphics[width=0.33\columnwidth]{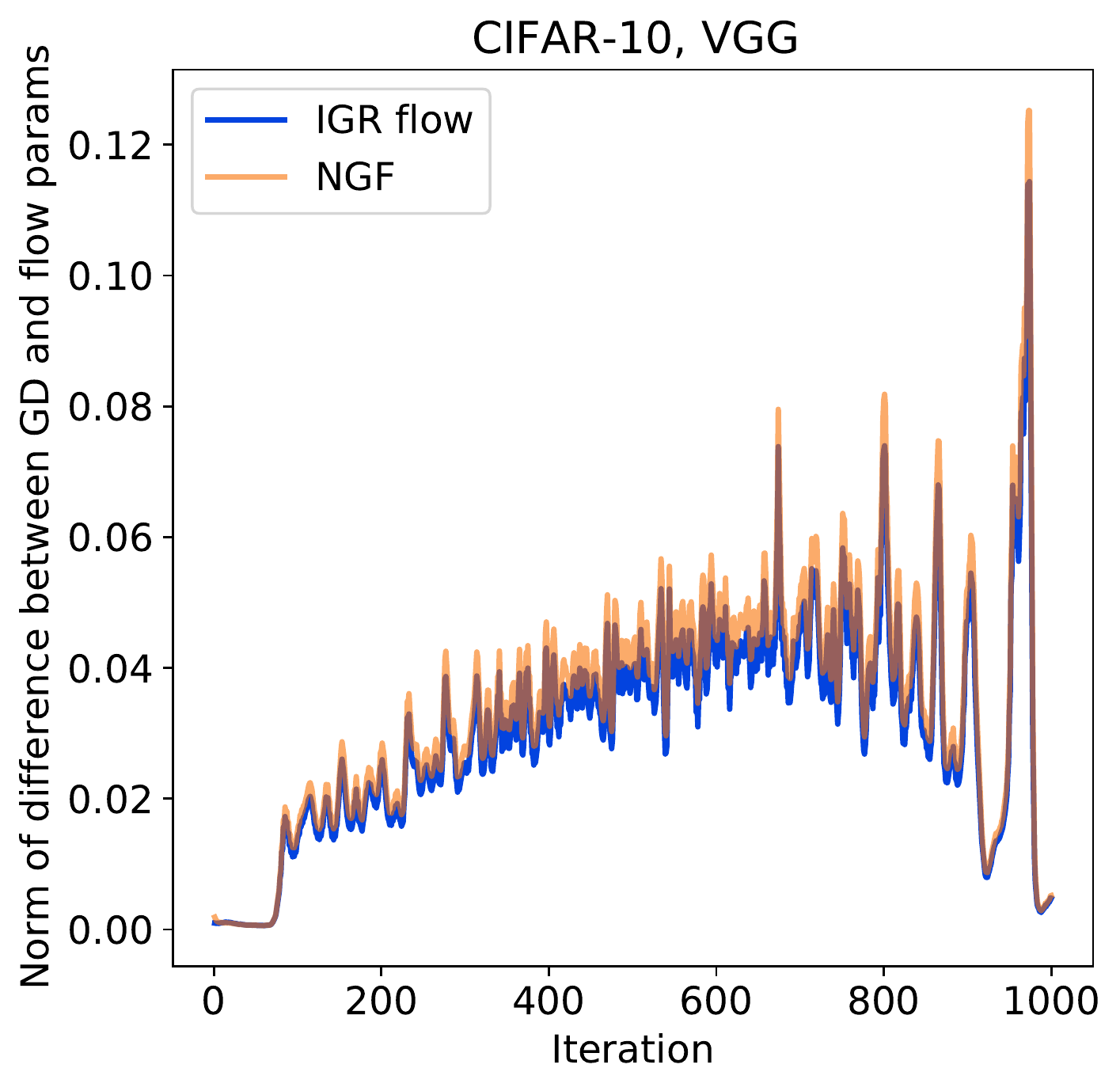}
 \includegraphics[width=0.33\columnwidth]{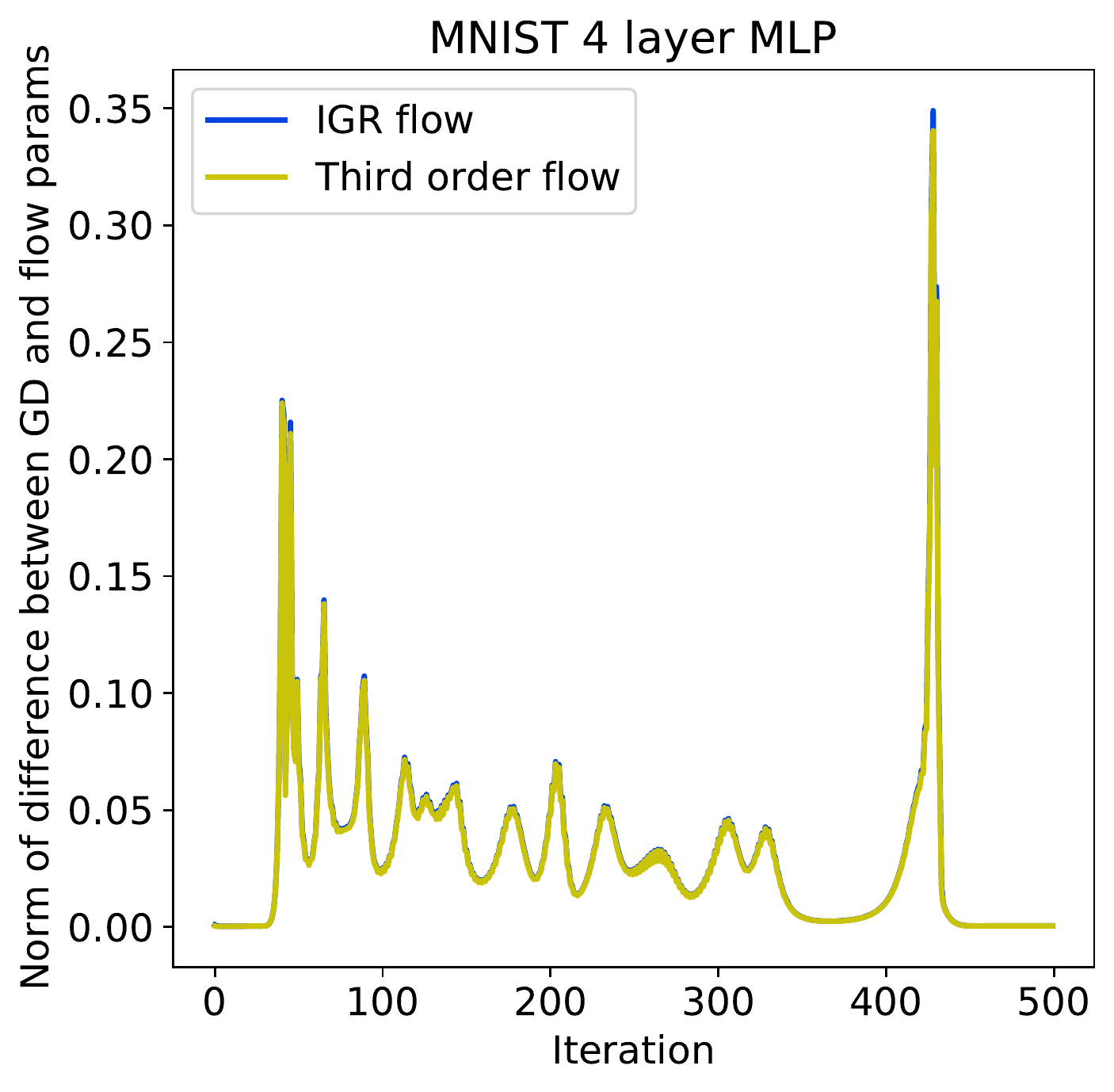}
\caption[Per iteration error between the NGF, IGR and PF and gradient descent. MNIST learning rate $0.05$. CIFAR learning rate $0.02$. We obtain the results by training a model with gradient descent $\vtheta_t = \vtheta_{t-1} - h \nabla_{\vtheta} E(\vtheta_{t-1})$ and at each iteration $t$ of gradient descent approximating the respective flows for time $h$ and computing the difference in norms between the resulting flow parameters and the gradient descent parameters at the next iteration: $\norm{\vtheta_t - \vtheta(h)}$ with $\vtheta(h) = \vtheta_{t-1}$. Results obtained on full batch training.]{\textbf{Motivation}. While modified  flows reduce the per iteration discretization error for neural networks trained with gradient descent, there is still a large gap. We obtain the results by training a model with gradient descent $\vtheta_t = \vtheta_{t-1} - h \nabla_{\vtheta} E(\vtheta_{t-1})$ and at each iteration $t$ of gradient descent approximating the respective flows for time $h$ and computing the difference in norms between the resulting flow parameters and the gradient descent parameters at the next iteration: $\norm{\vtheta_t - \vtheta(h)}$ with $\vtheta(h) = \vtheta_{t-1}$.}
\label{fig:validating_bea}
\end{figure}

\textbf{Global error for the UCI breast cancer dataset.}
We show in Figure~\ref{fig:breast_cancer_principal_flow} that the PF is better at tracking the behavior of GD in the case of NNs both in the stability and edge of stability areas.

\begin{figure}
\centering
\begin{subfigure}[Stability]{
 \includegraphics[width=0.33\columnwidth]{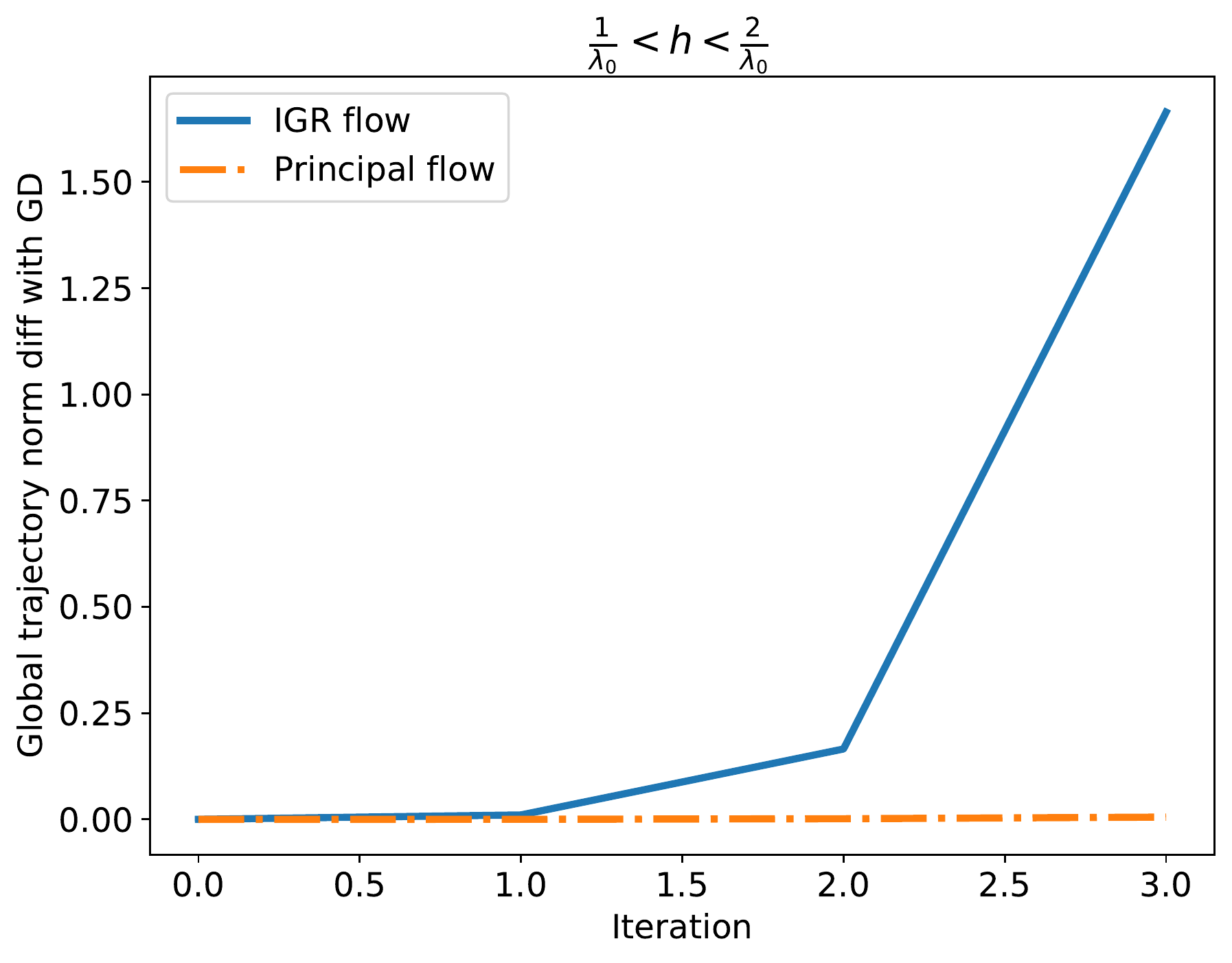}
}\end{subfigure}
\begin{subfigure}[Edge of stability]{
 \includegraphics[width=0.33\columnwidth]{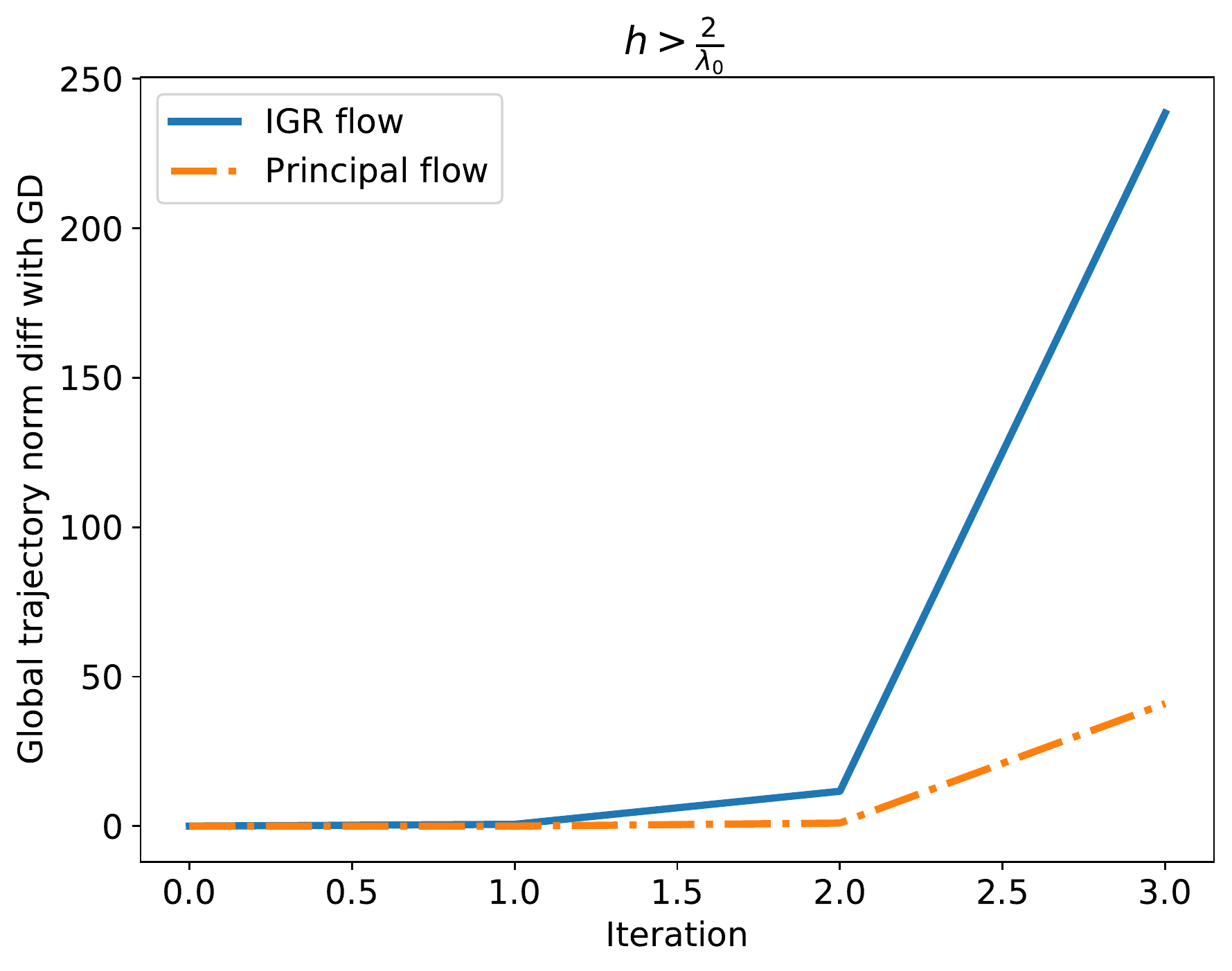}
}
\end{subfigure}
\caption[Global parameter error for IGR ad PF flows compared  on an MLP trained on the UCI breast cancer dataset. MLP with 10 hidden units and 1 output unit. Learning rates $1.5$ and $5.$ respectively.]{Global parameter error for IGR ad PF flows compared  on an MLP trained on the UCI breast cancer dataset. We initialize both gradient descent and the IGR and PF flows at a set of initial parameters, and run them for time step $2h$ and compare the error after each time step $h$. The PF is better at tracking the trajectory of GD than the IGR flow by a significant margin.}
\label{fig:breast_cancer_principal_flow}
\end{figure}

\textbf{Not attractive sharp minima experiment.} We perform an experiment to assess whether local minima with $\lambda_0^*> 2/h$ are attractive to gradient descent with learning rate $h$. Since in the case of neural networks we cannot easily construct local minima, we perform an experiment where we first train a neural network with learning rate $h'<h$ close to convergence, and then change the learning rate. We show results in Figure~\ref{fig:stability_analysis_nn} and observe that despite being in an area with a small gradient norm, increasing the learning rate slightly leads to instabilities and exiting the area around the local minima. We note that while this is \textit{not} the same as assessing local attraction in a stability sense, since we do not know if the point of the learning rate change is in the neigbourhood provided by the existance conditions of local stability, it is the closest practical approximation that we can provide in the context of neural networks. We also note that one of our goals with this experiment is to try to disentangle the training trajectory from the stability behavior, since the training trajectory also has a connection with the largest eigenvalue of the Hessian, through the edge of stability results.

\begin{figure}
\centering
 \includegraphics[width=0.31\columnwidth]{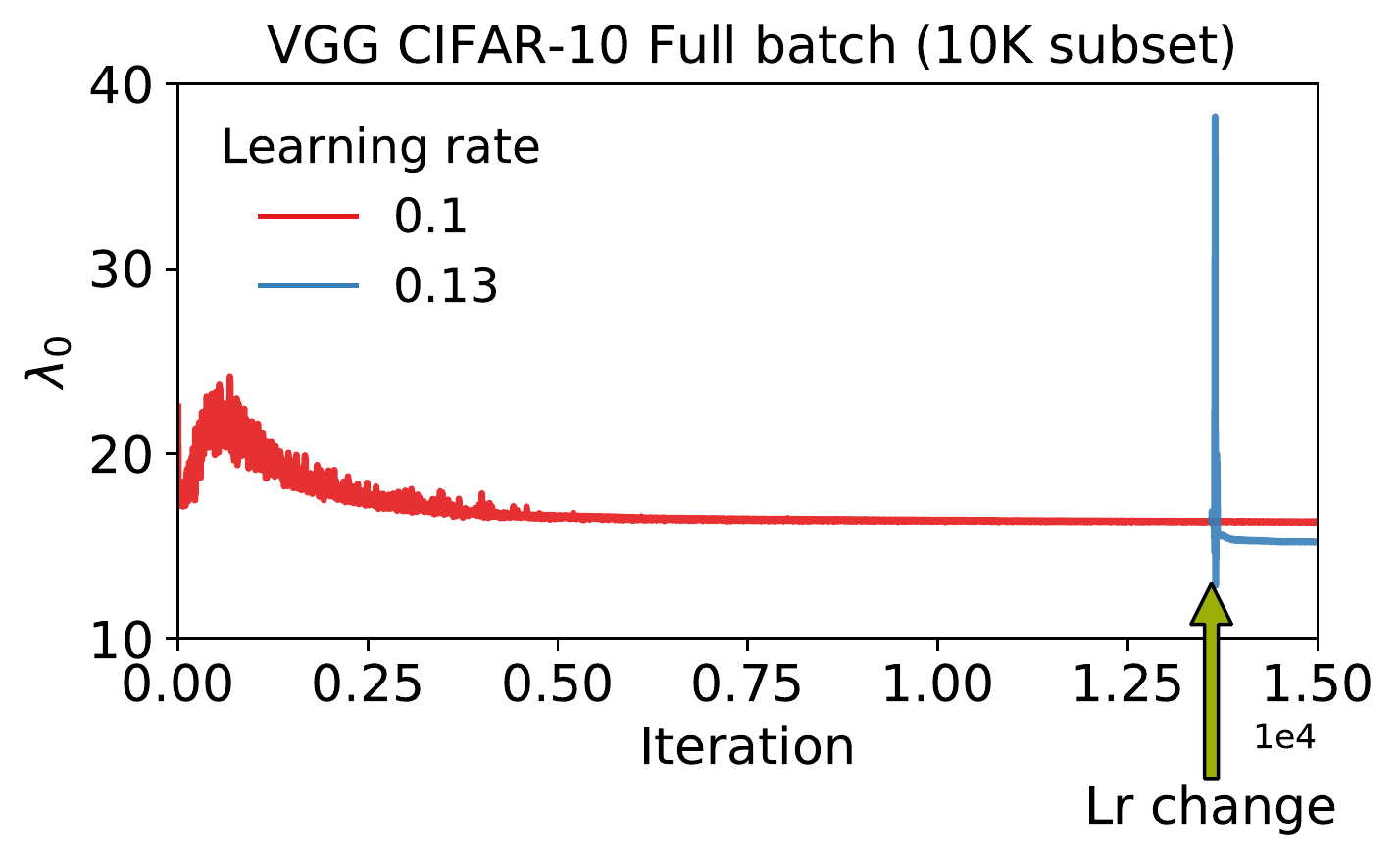}
 \includegraphics[width=0.31\columnwidth]{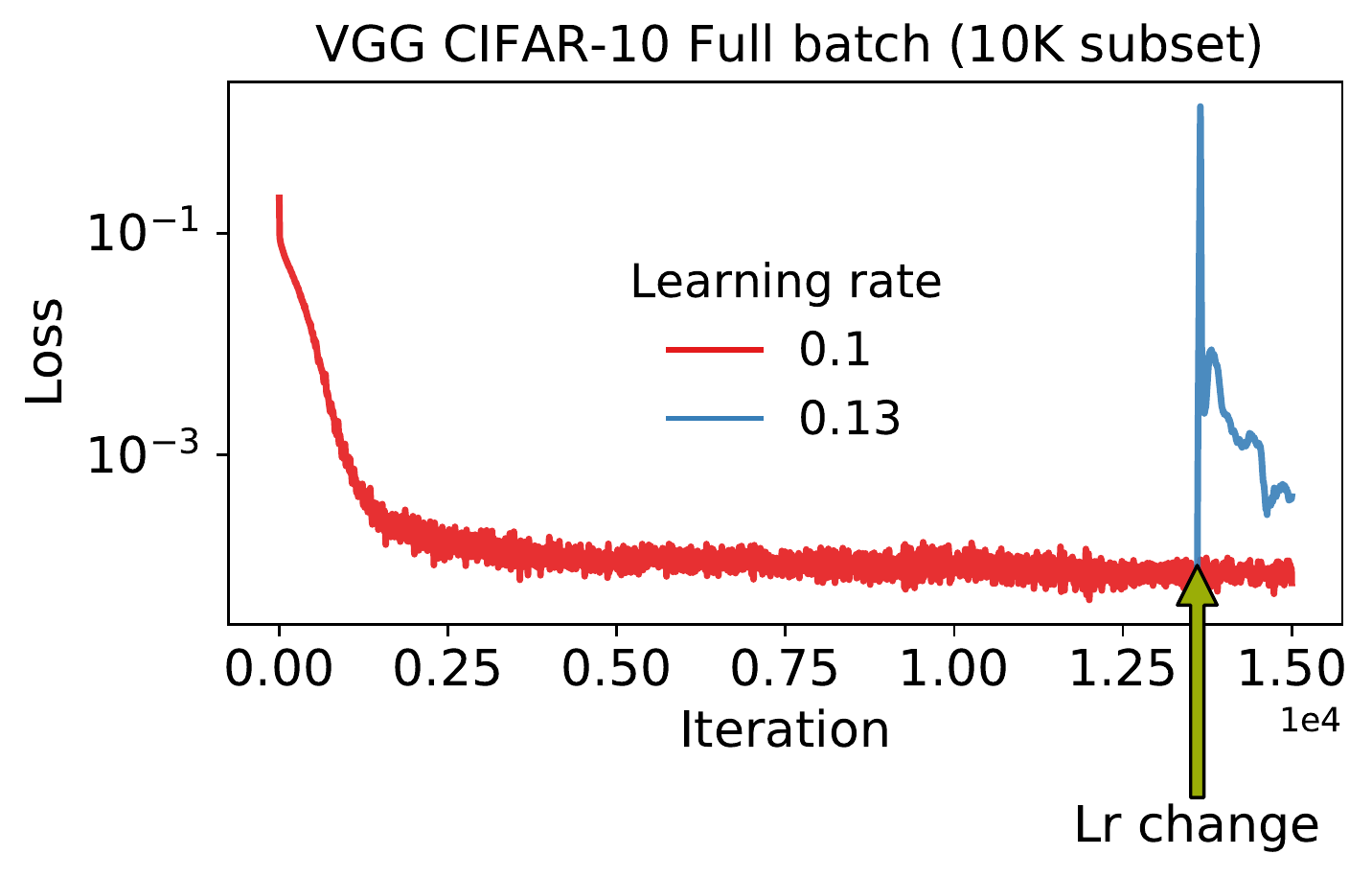}
 \includegraphics[width=0.31\columnwidth]{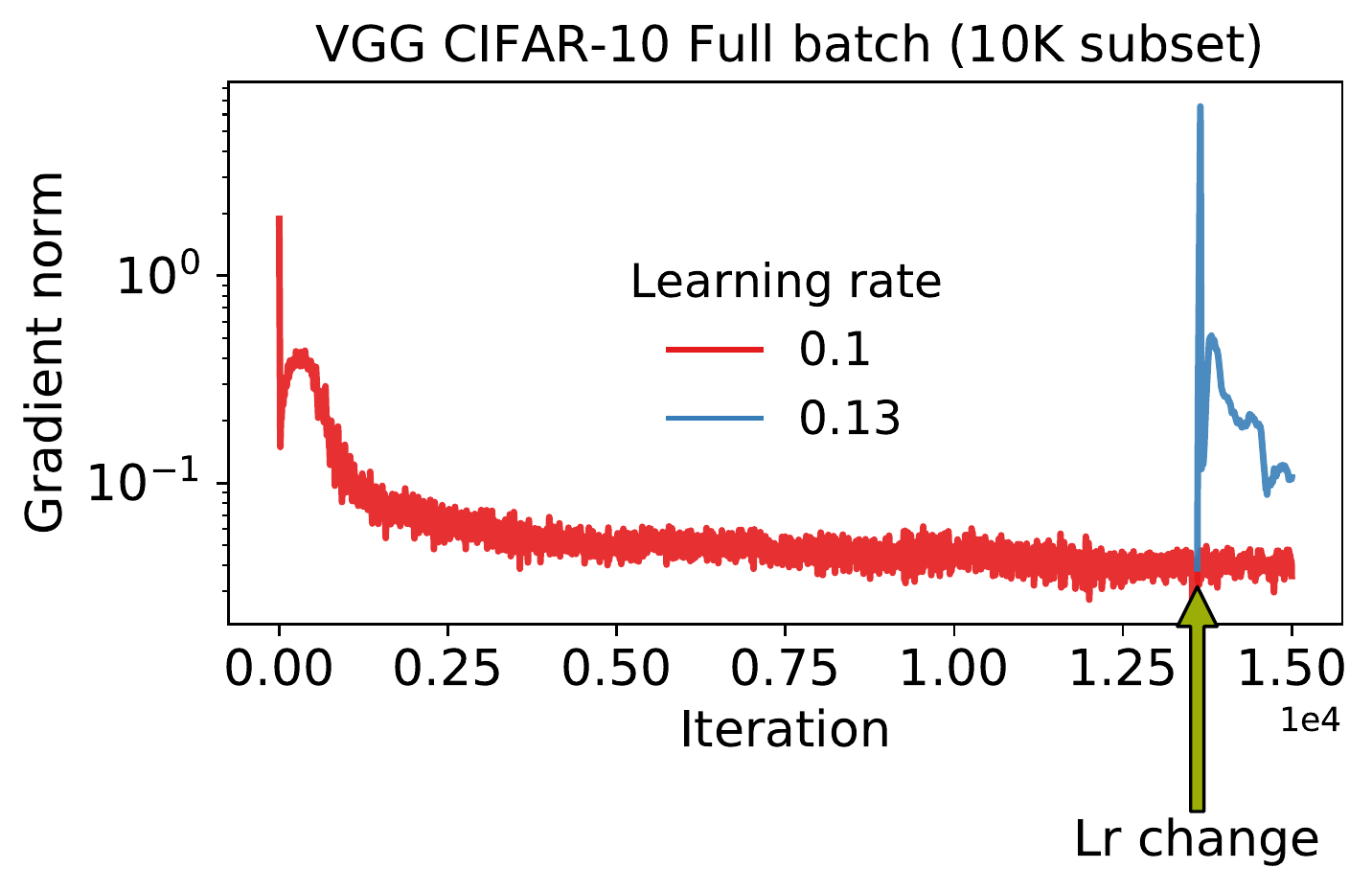}
\caption[We train a model on CIFAR-10 with learning rate $0.1$, and as it reaches converges increase the learning rate slightly so that $\lambda_0 > 2/h$. ]{Local minima are not attractive to gradient descent if $\lambda_0^*> 2/h$. We use a square loss here to avoid training the decrease in eigenvalues later in training associated with cross entropy losses.}
\label{fig:stability_analysis_nn}
\end{figure}

\textbf{Additional edge of stability experiments}.
Results showing the decrease in learning rate in an edge of stability area stabilizes training are shown in Figure \ref{fig:instabilities_change_learning_rate_discrete}. 
We show that one dimension is sufficient to cause instability in training on MNIST in Figure \ref{fig:mnist_continuous_time_swap} (as we have done in Figure \ref{fig:instabilities_change_learning_rate_loss} for CIFAR-10). 
We show the connection between the stability coefficient $sc_0$ and areas where the loss increases in Figure \ref{fig:cifar_peak_commons}: this shows that when the loss increases, it is in an area where $sc_0$ is also increasing. We note that to get a full picture of the behavior of the PF we would need to use continuous time computation, but since that is computationally prohibitive we use the incomplete but easily available discrete data. Figure~\ref{fig:edge_of_stability_results_lambda} zooms in the behavior of $\lambda_0$ as dependent of the behavior of the corresponding flows on MNIST and CIFAR-10 and Figure~\ref{fig:edge_of_stability_results_cifar_resnet} shows additional Resnet-18 results.

\begin{figure}
\centering
 \includegraphics[width=0.31\columnwidth]{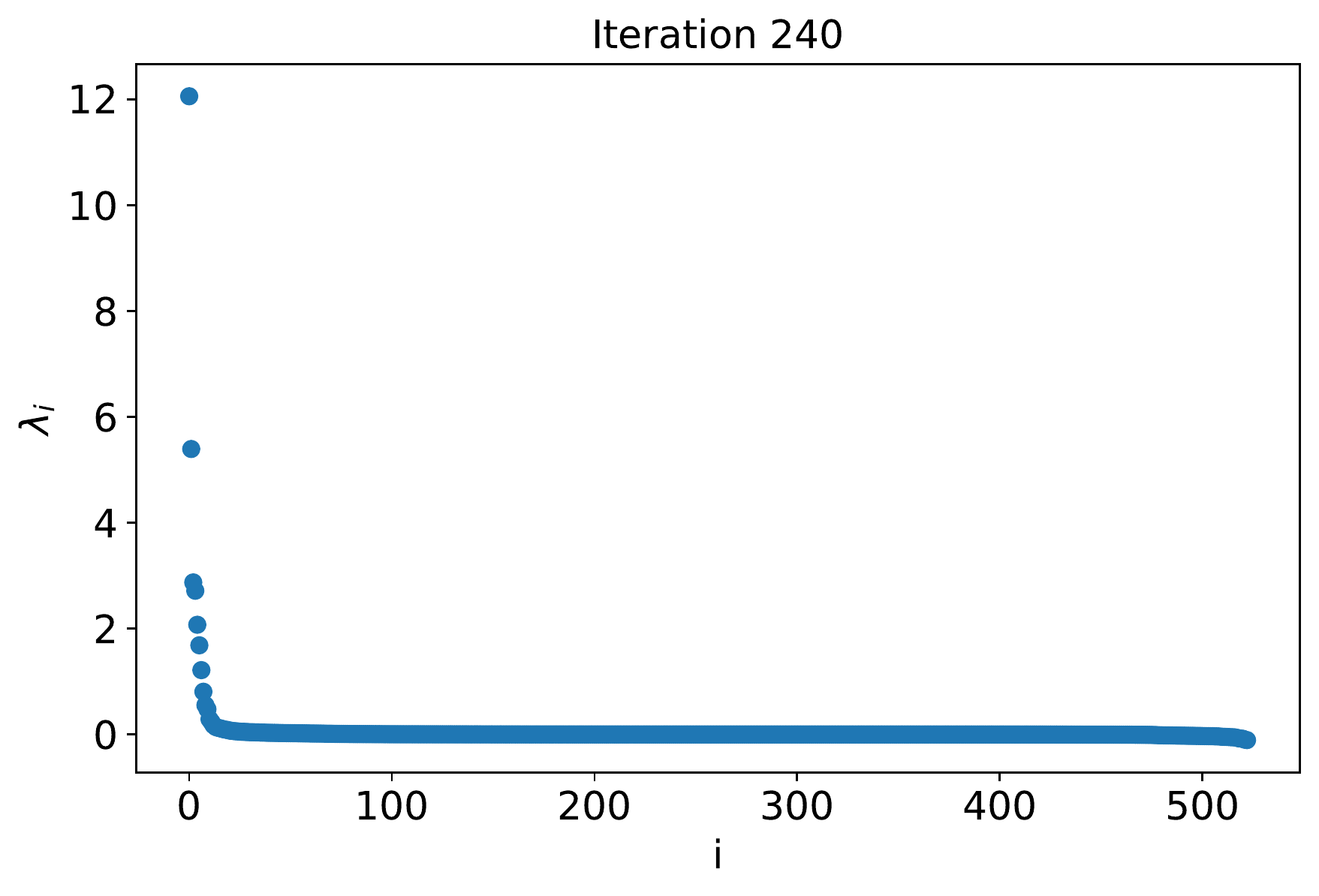}
 \includegraphics[width=0.31\columnwidth]{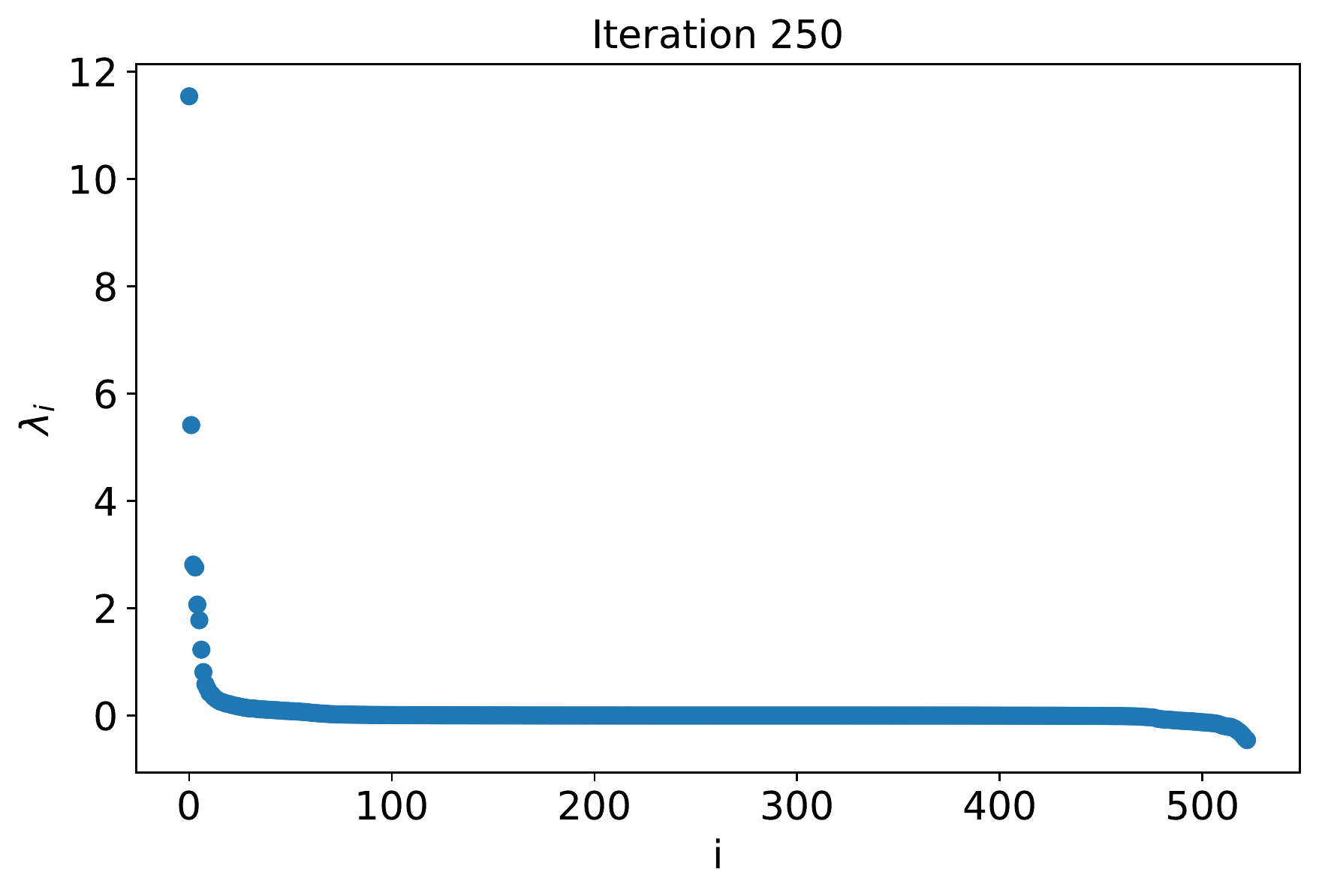}
 \includegraphics[width=0.31\columnwidth]{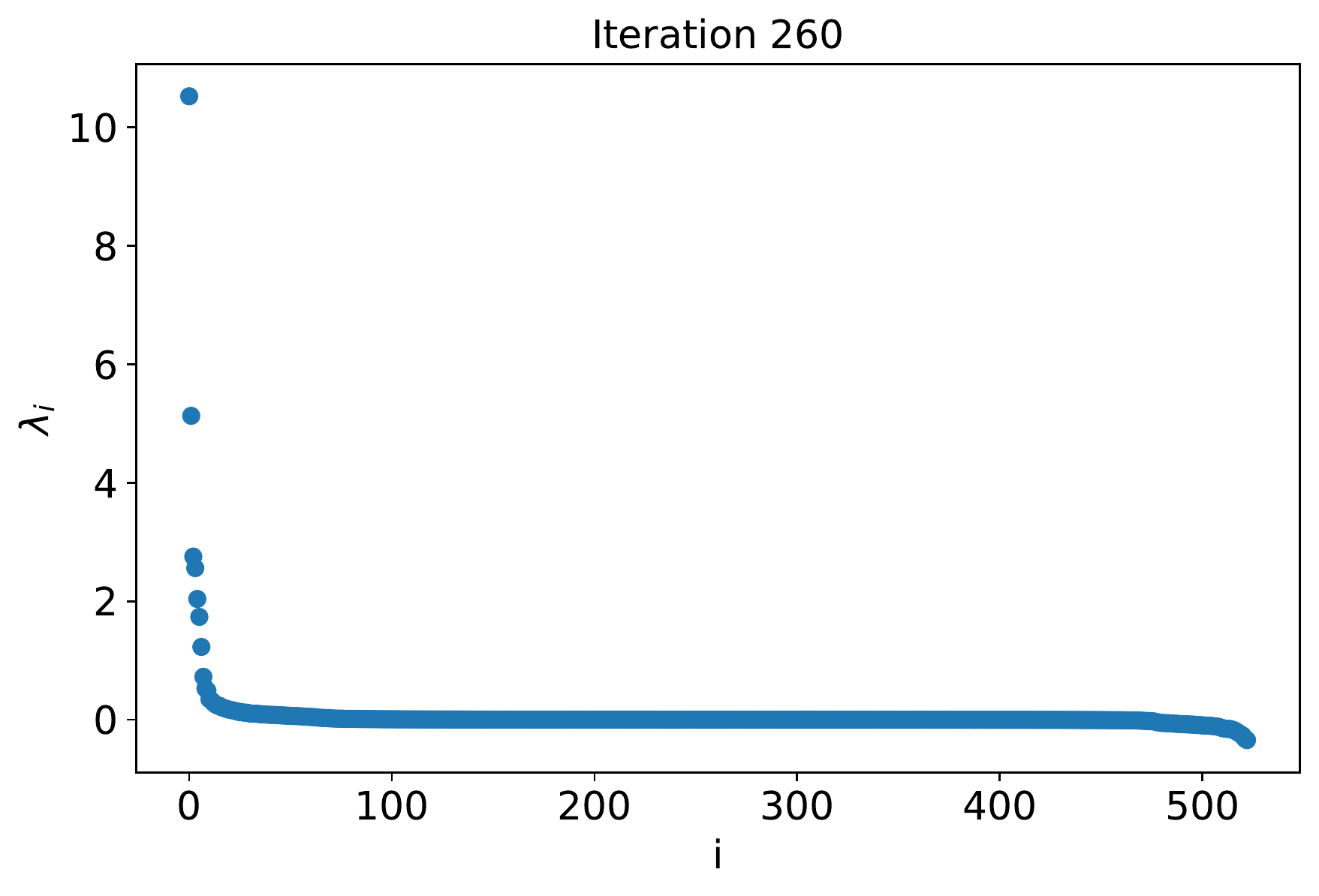}
\caption[The eigenspectrum obtained along the gradient descent path of a 5 layer small MLP trained on the UCI Iris dataset. The Hessian eigenvalues are largely positive. The result accompanies Figure~\ref{fig:model_of_gd_at_edge_of_stability} in the main paper.]{The eigenspectrum obtained along the gradient descent path of a 5 layer small MLP trained on the UCI Iris dataset. The Hessian eigenvalues are largely positive. The result accompanies Figure~\ref{fig:model_of_gd_at_edge_of_stability} in the main paper.}
\label{fig:eigspectrum_small_mlp}
\end{figure}

\begin{figure}[tbh]
 \includegraphics[width=0.45\columnwidth]{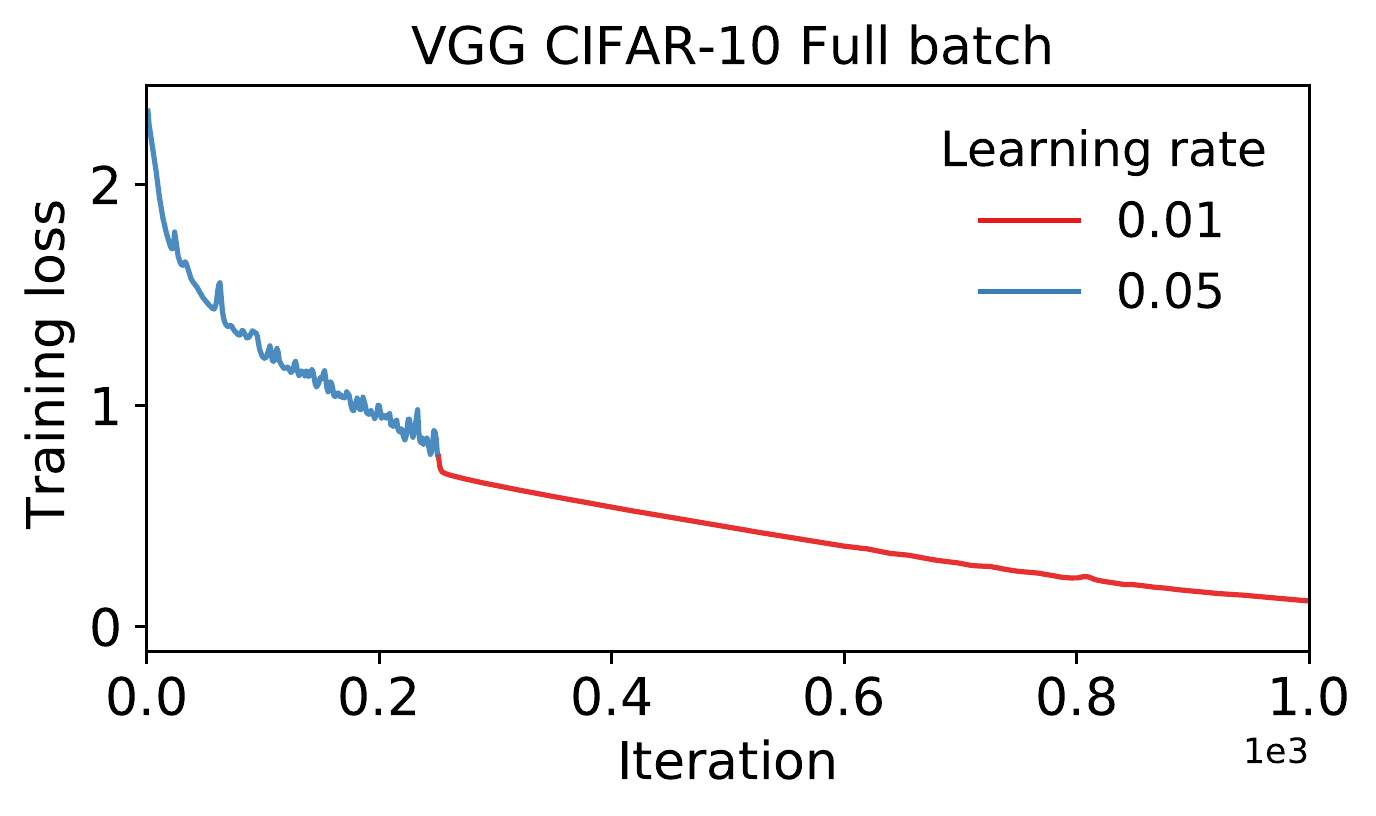}
 \includegraphics[width=0.45\columnwidth]{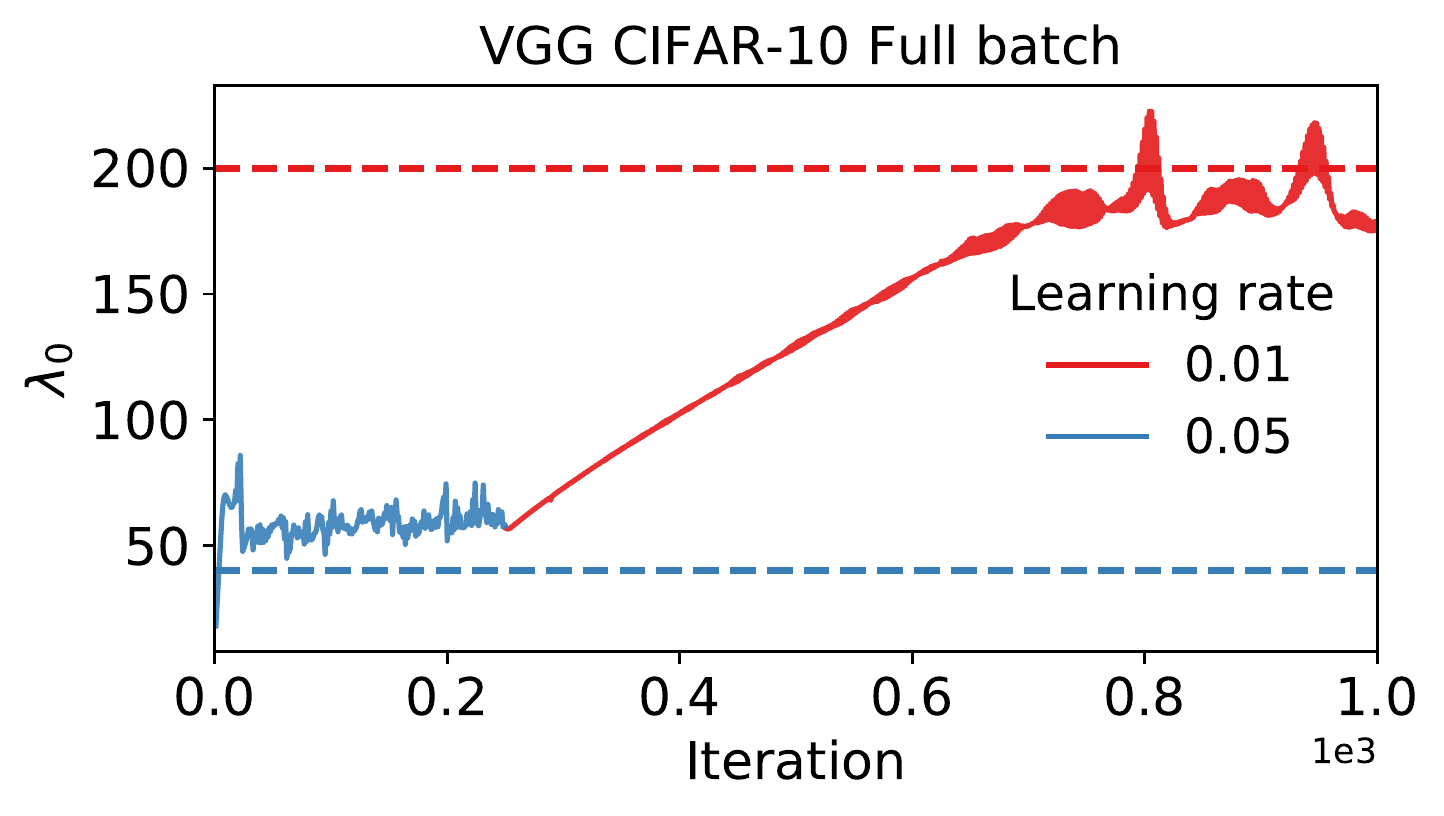}
\caption[Training a VGG model on CIFAR-10 and training with learning rate $0.05$ after which switching to $0.01$.]{The effect of learning rate changes: as discovered by~\citet{cohen2021gradient}, reducing the learning rate in training leads to increased stability while the largest eigenvalue increases. The lack of instability with a smaller learning rate is explained by the principal flow, and the increase in the largest eigenvalue can be thought of through the lens of the principal flow arguments.}
\label{fig:instabilities_change_learning_rate_discrete}
\end{figure}

\begin{figure}[tbh]
  \includegraphics[width=0.45\columnwidth]{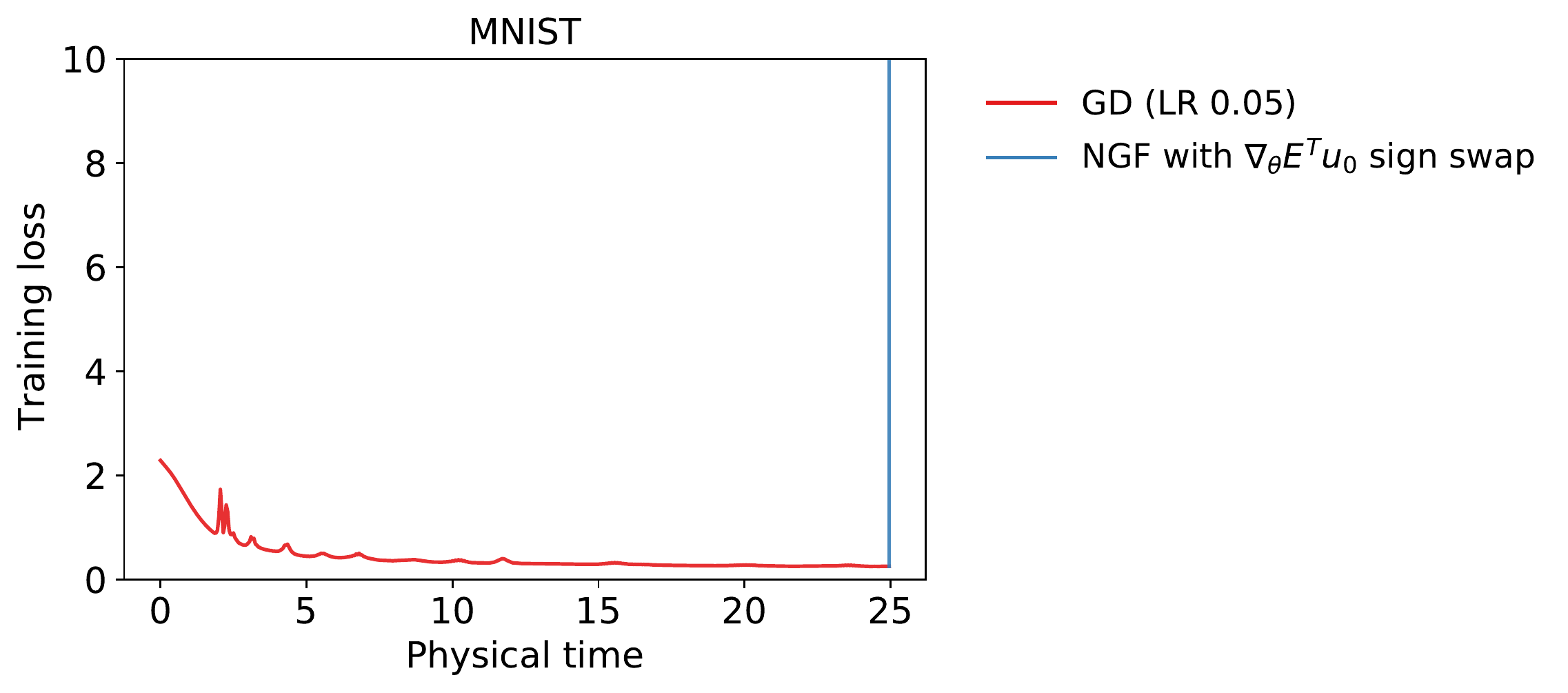}
  \includegraphics[width=0.45\columnwidth]{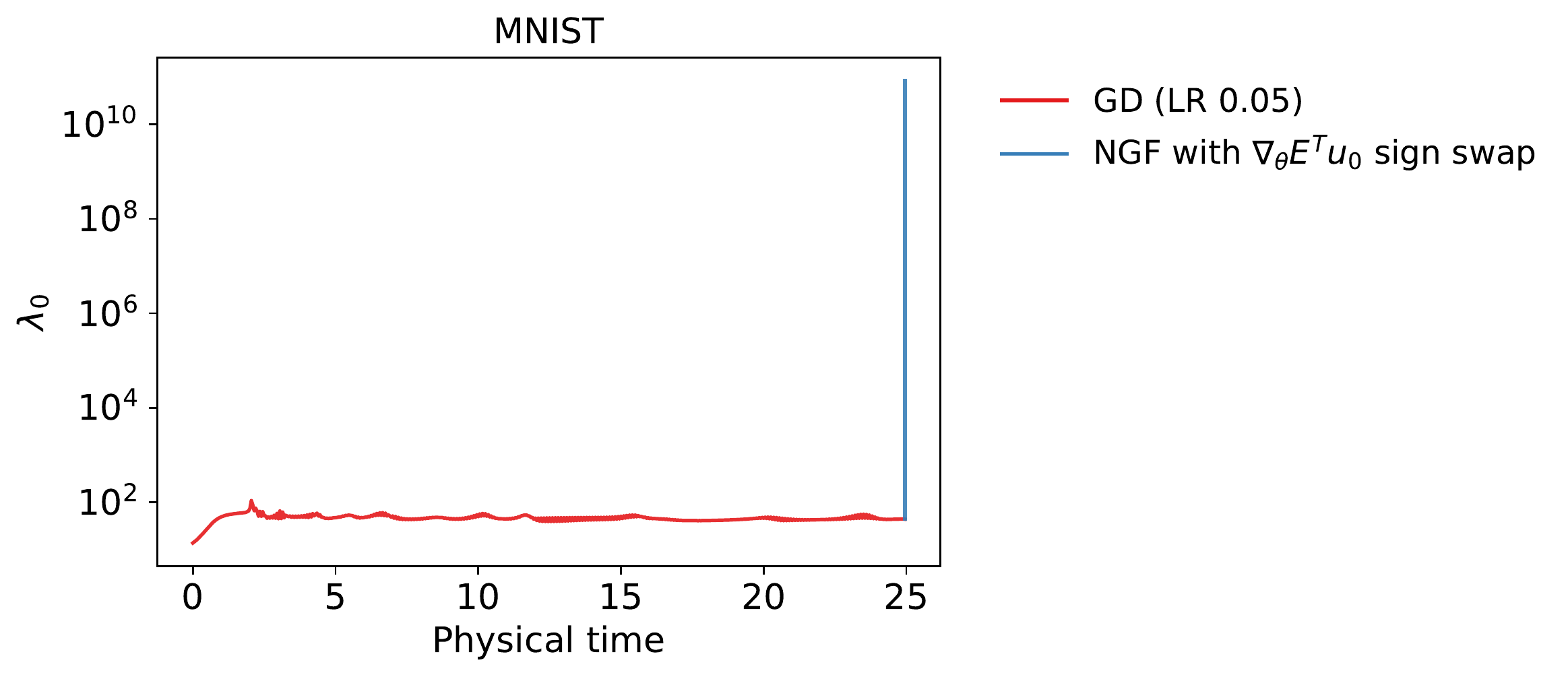}
\caption[MNIST results when changing one direction in continuous time. The model is a 4 layer MLP with 100 hidden units per layer. The model was trained with learning rate $0.05$ until the edge of stability area is reached after which the flow $\dot{\vtheta} = \nabla_{\vtheta}E^T \vu_0 \vu_0 + \sum_{i=1}^{D-1} -\nabla_{\vtheta}E^T \vu_i \vu_i$ is used.]{MNIST results when changing one direction in continuous time: one eigendirection is sufficient to cause instability. The model was trained with learning rate $0.05$ until the edge of stability area is reached after which the flow $\dot{\vtheta} = \nabla_{\vtheta}E^T \vu_0 \vu_0 + \sum_{i=1}^{D-1} -\nabla_{\vtheta}E^T \vu_i \vu_i$ is used.}
\label{fig:mnist_continuous_time_swap}
\end{figure}

\begin{figure}[tbh]
  \includegraphics[width=0.45\columnwidth]{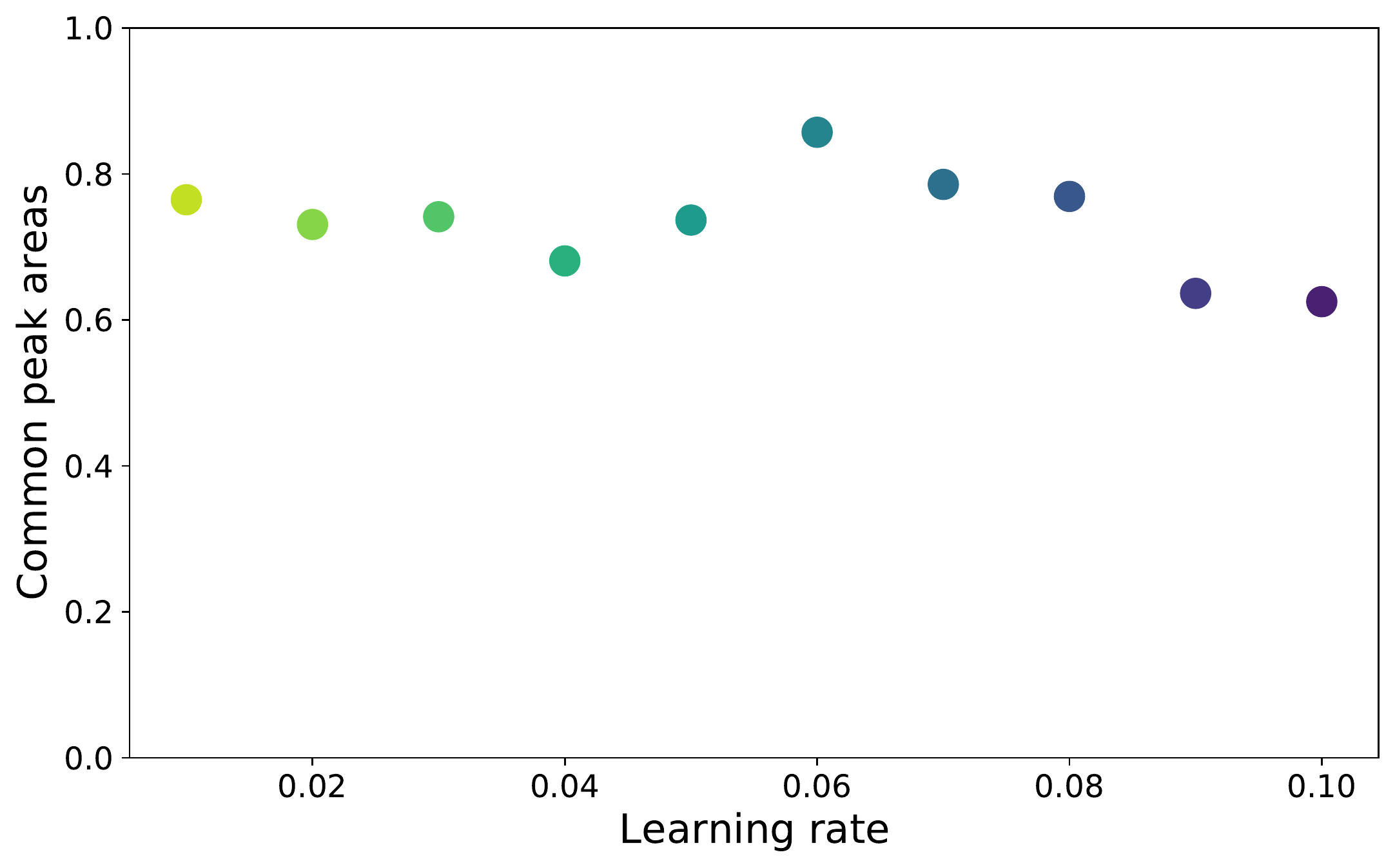}
  \includegraphics[width=0.45\columnwidth]{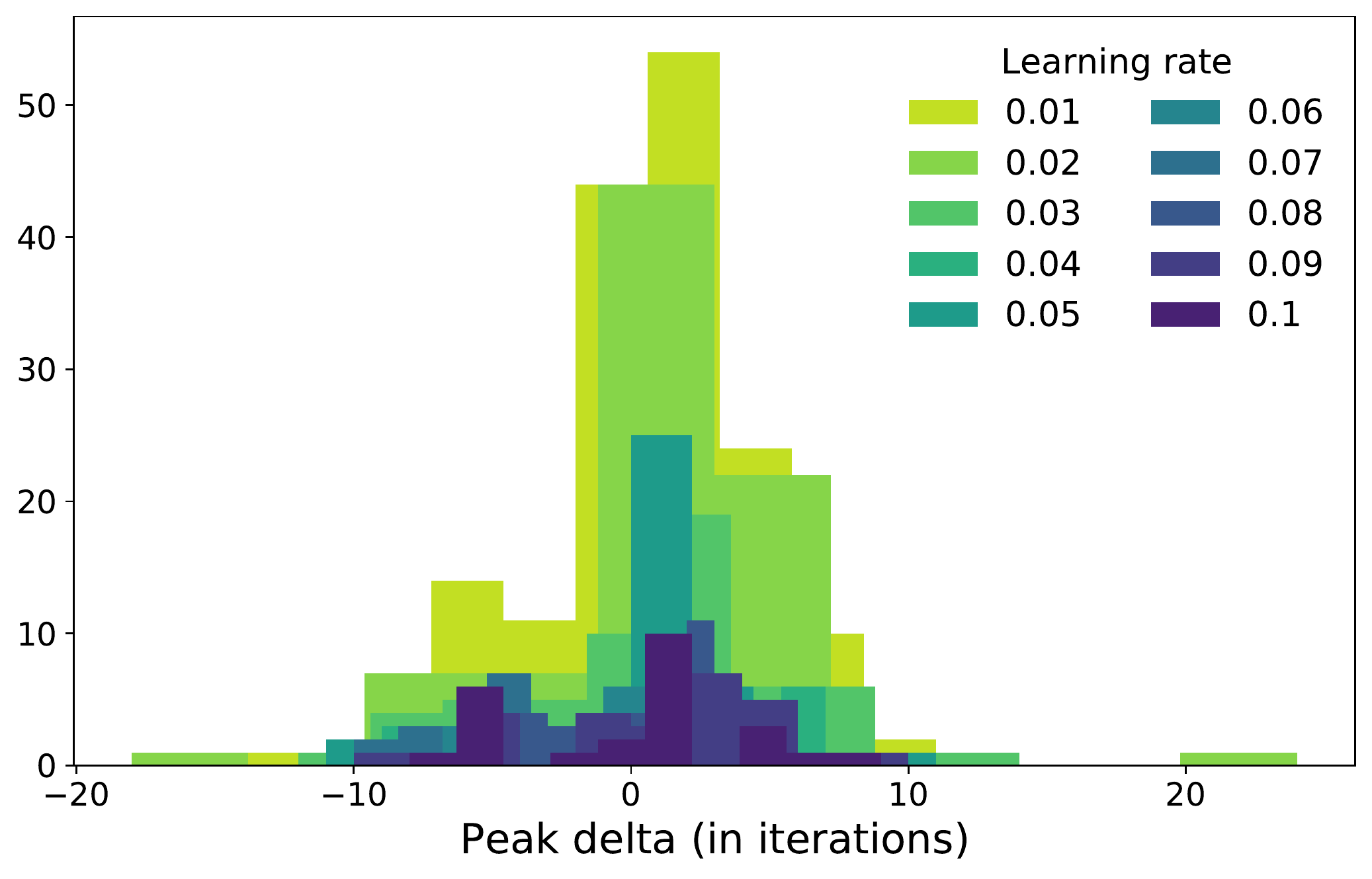}
\caption[Peak areas of loss increase and the stability coefficient for the largest eigendirection of the PF. Results using a VGG model trained with full batch gradient descent on CIFAR-10 across a learning rate sweep.]{Peak areas of loss increase and the coefficient for the largest eigendirection of the principal flow. Here we normalise the coefficient by the gradient norm to remove the gradient norm as a potential confounder. Results on CIFAR-10.}
\label{fig:cifar_peak_commons}
\end{figure}

\begin{figure}[tbh]
\begin{subfigure}[MNIST MLP]{
 \includegraphics[width=0.48\columnwidth]{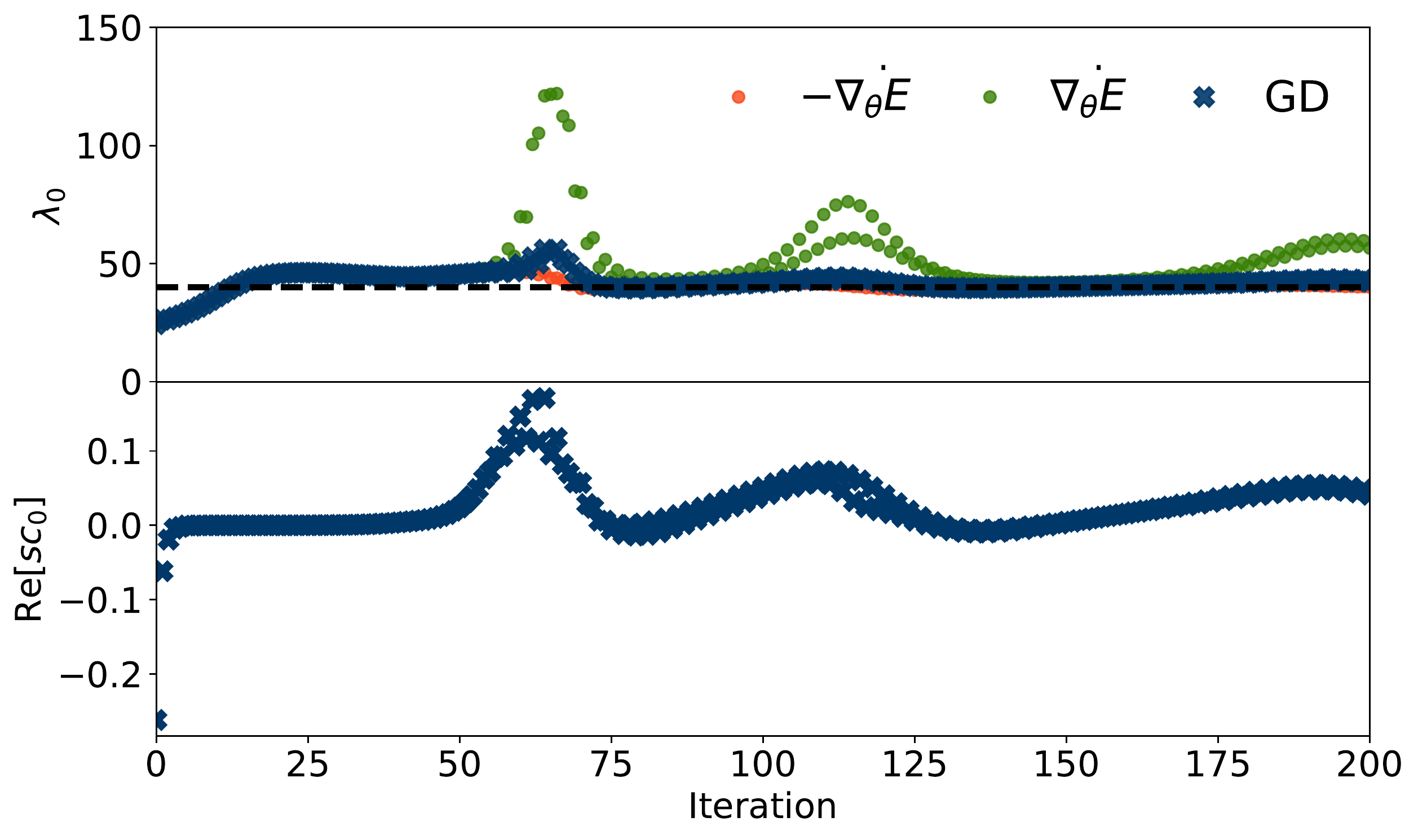}
 }
 \end{subfigure}
\begin{subfigure}[CIFAR-10 VGG]{
 \includegraphics[width=0.48\columnwidth]{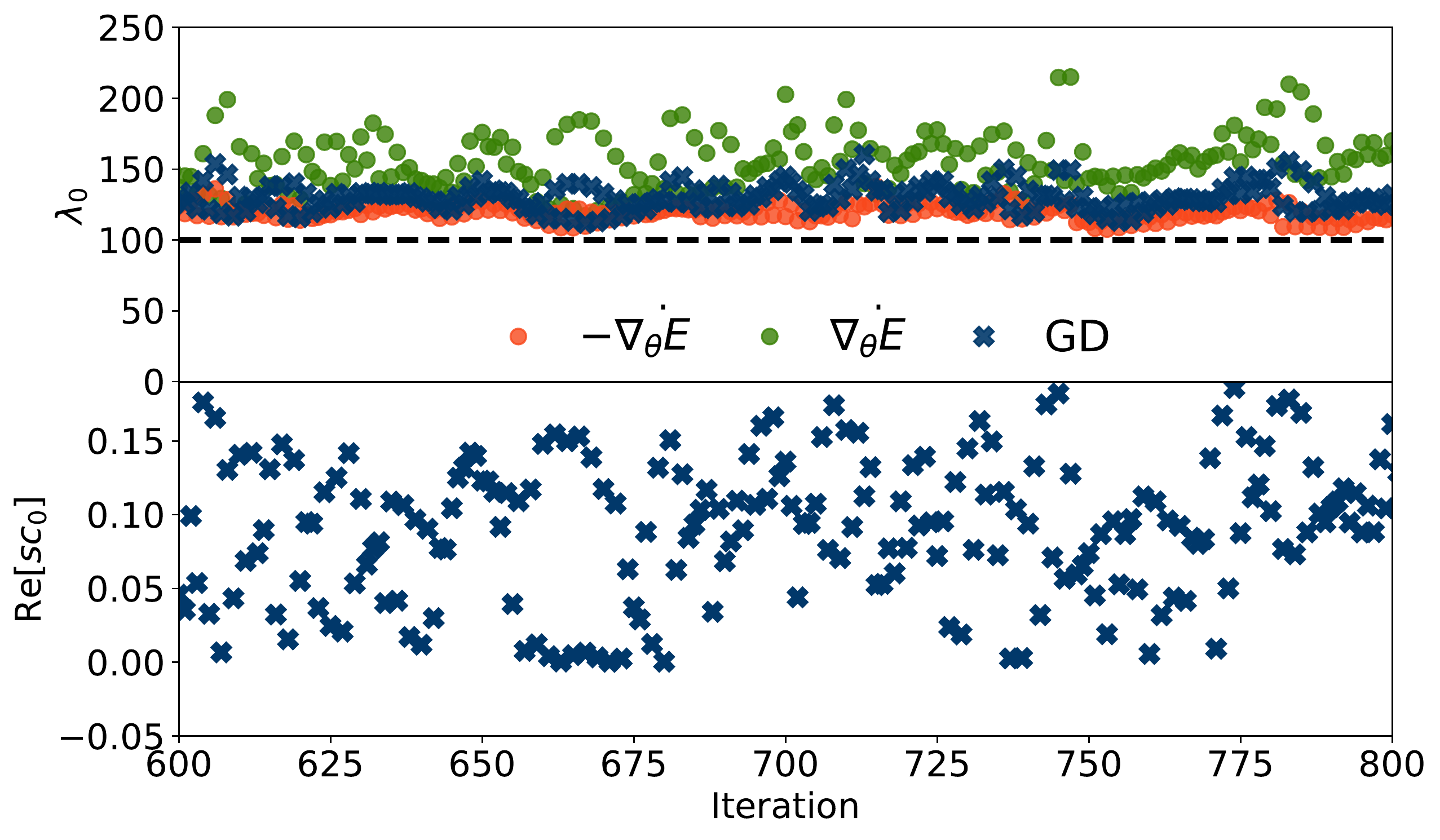}
 }
 \end{subfigure}
\caption[Understanding changes to $\lambda_0$ using the principal flow. MLP with 4 layers with 100 units each trained on MNIST, and VGG model trained on CIFAR-10. Learning rate $0.05$ and $0.01$ respectively.]{Understanding changes to $\lambda_0$ using the principal flow: increases in $\lambda_0$ above $2/h$ correspond to increases in the flow $\dot{\vtheta} = \nabla_{\vtheta} E$. The stability coefficient reflects the changes in $\lambda_0$.}
\label{fig:edge_of_stability_results_lambda}
\end{figure}

\begin{figure}[tbh]
  \includegraphics[width=0.9\columnwidth]{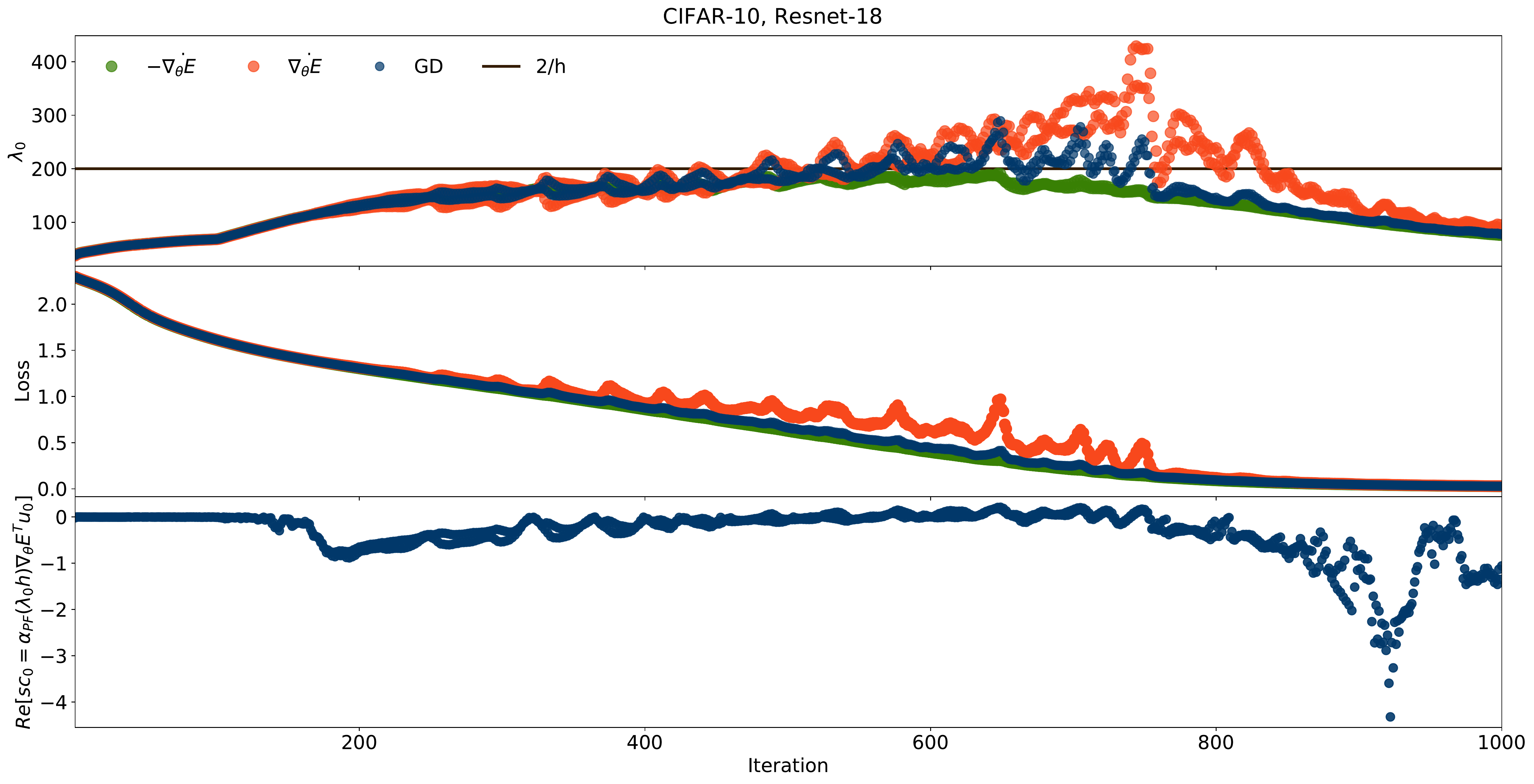}
\caption[The connection between the principal flow, $\lambda_0$ and loss instabilities with Resnet-18 trained on CIFAR-10 with a learning rate $0.01$.]{Learning rate: $0.01$. Edge of stability results: the connection between the principal flow, stability coefficients and loss instabilities with Resnet-18.}
\label{fig:edge_of_stability_results_cifar_resnet} 
\end{figure}

\textbf{DAL}.
We use gradient descent training with a fixed learning rate to measure the quantities we would like to use as a learning rate in DAL, to see if they have a reasonable range and show results in Figure~\ref{fig:connections_quantities}. We show sweeps across batch sizes Figures \ref{fig:power_sweeps_batch_sizes_vgg}, \ref{fig:imagenet_lr_scaling_across_batch_sizes}, \ref{fig:imagenet_lr_sqrt_scaling_across_batch_sizes}. We show DAL-$p$ sweeps with effective learning rates, training losses and test accuracies in FIgure~\ref{fig:power_sweeps_all}. We show results with DAL on a mean square loss in Figure \ref{fig:least_square_loss_dal}.

\begin{figure}[tbh]
  \includegraphics[width=0.32\columnwidth]{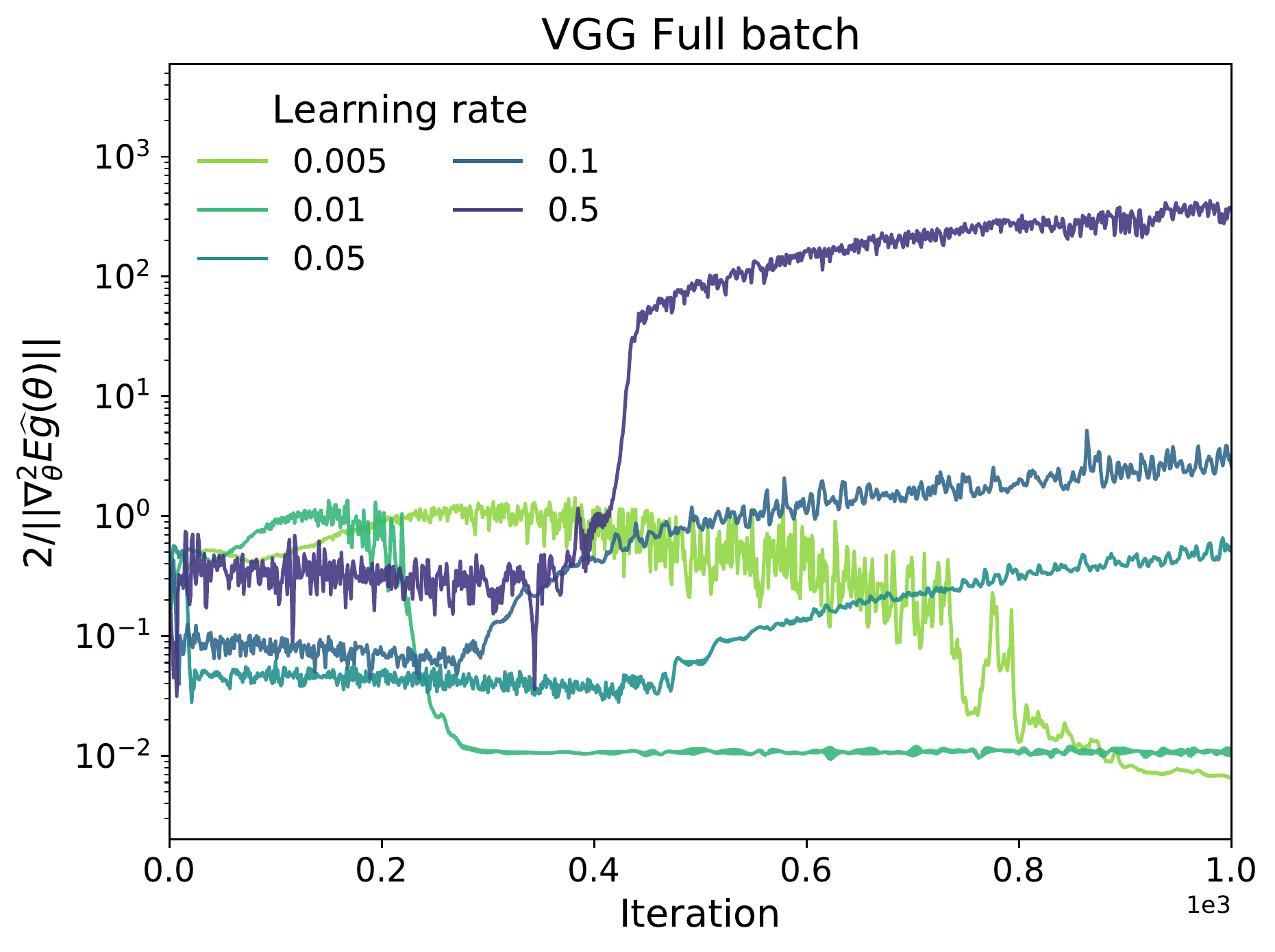}
 \includegraphics[width=0.32\columnwidth]{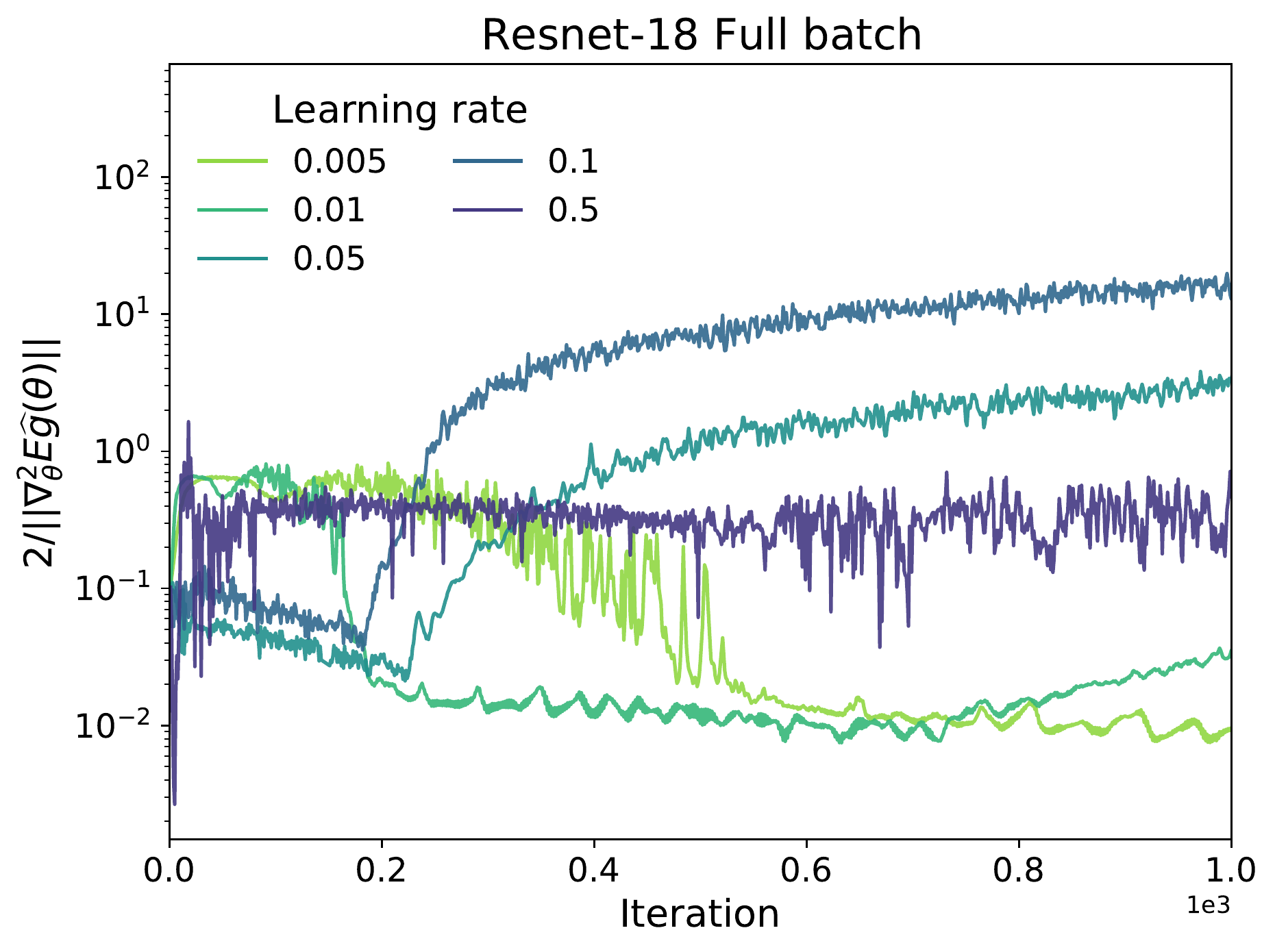}
  \includegraphics[width=0.32\columnwidth]{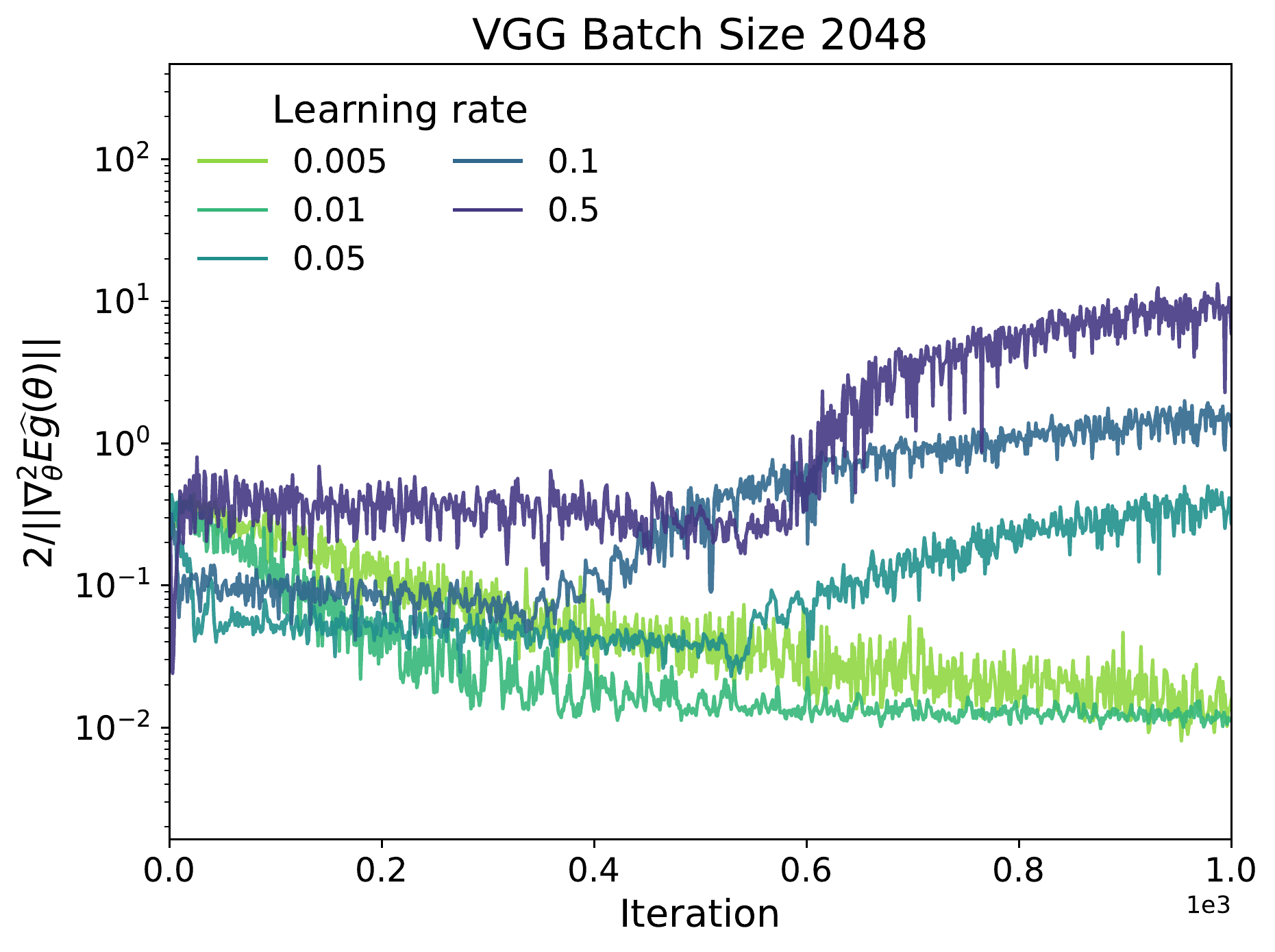}
 \caption[Evaluating the value of the DAL learning rate for a vanilla (fixed) learning rate sweep to assess whether it would have reasonable magnitudes. Models trained on CIFAR-10 using a learning rate sweep.]{Evaluating the value of the DAL learning rate for a vanilla (fixed) learning rate sweep to assess whether it would have reasonable magnitudes.}
\label{fig:connections_quantities}
\end{figure}

\begin{figure}
  \includegraphics[width=0.33\columnwidth]{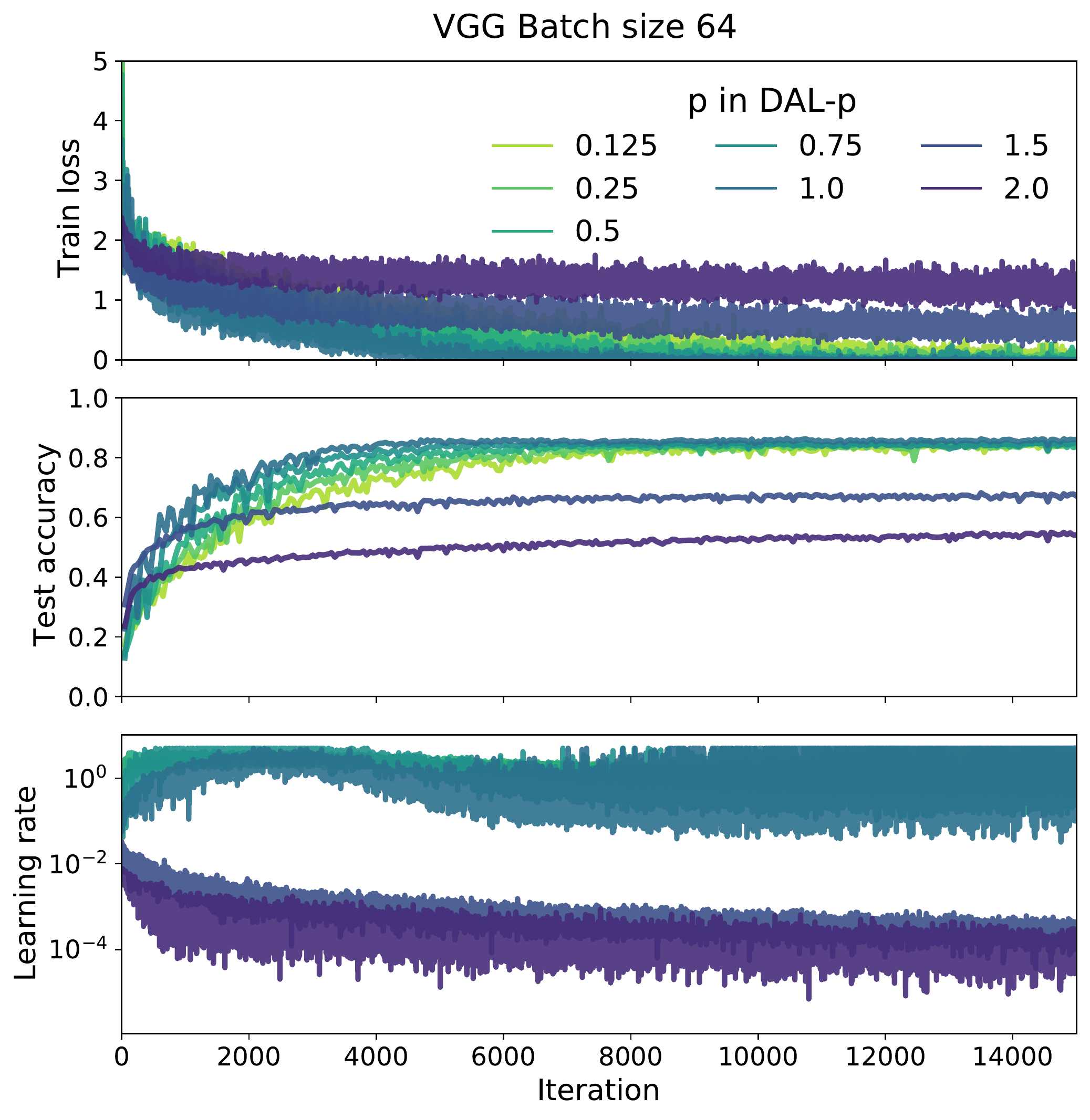}
 \includegraphics[width=0.33\columnwidth]{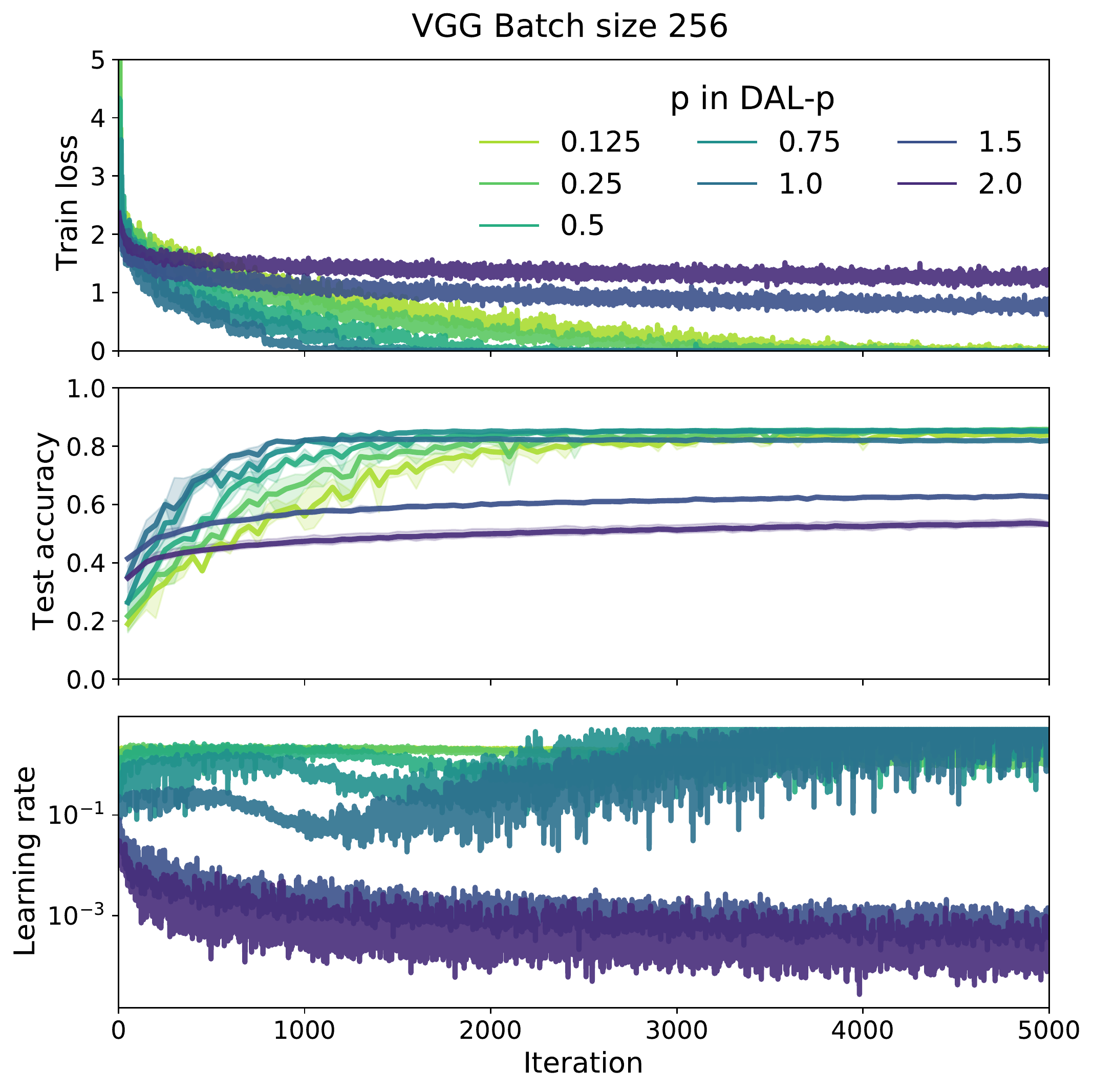}
 \includegraphics[width=0.33\columnwidth]{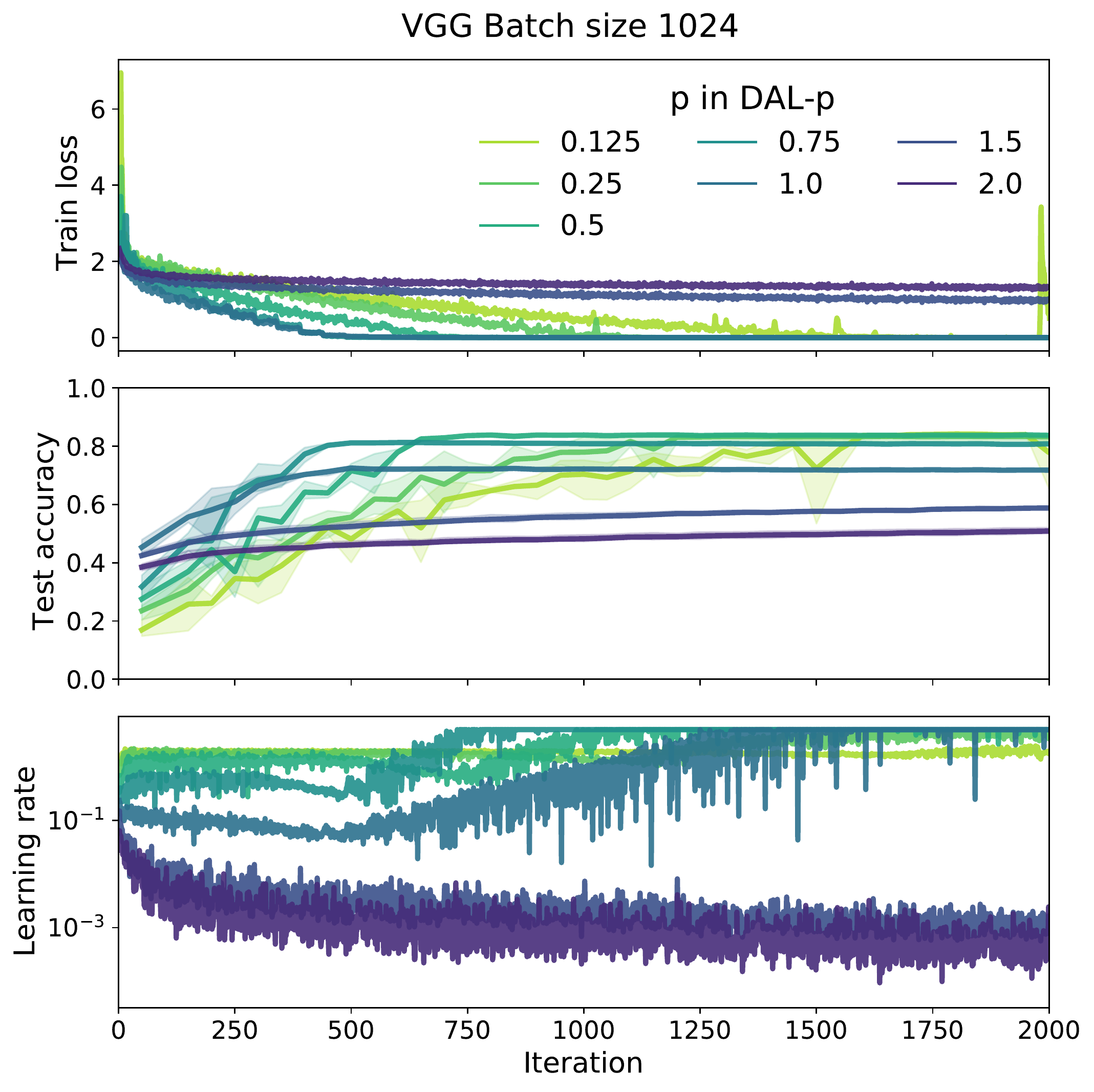}
\caption[DAL-$p$ sweep with a VGG model trained on CIFAR-10 for small and large batch sizes.]{DAL-$p$ training a VGG model on CIFAR-10. Sweep across batch sizes: discretization drift helps test performance, but at the cost of stability. We also show the effective learning rate and train losses and test accuracies.}
\label{fig:power_sweeps_batch_sizes_vgg}
\end{figure}

\begin{figure}[tbh]
 \includegraphics[width=0.24\columnwidth]{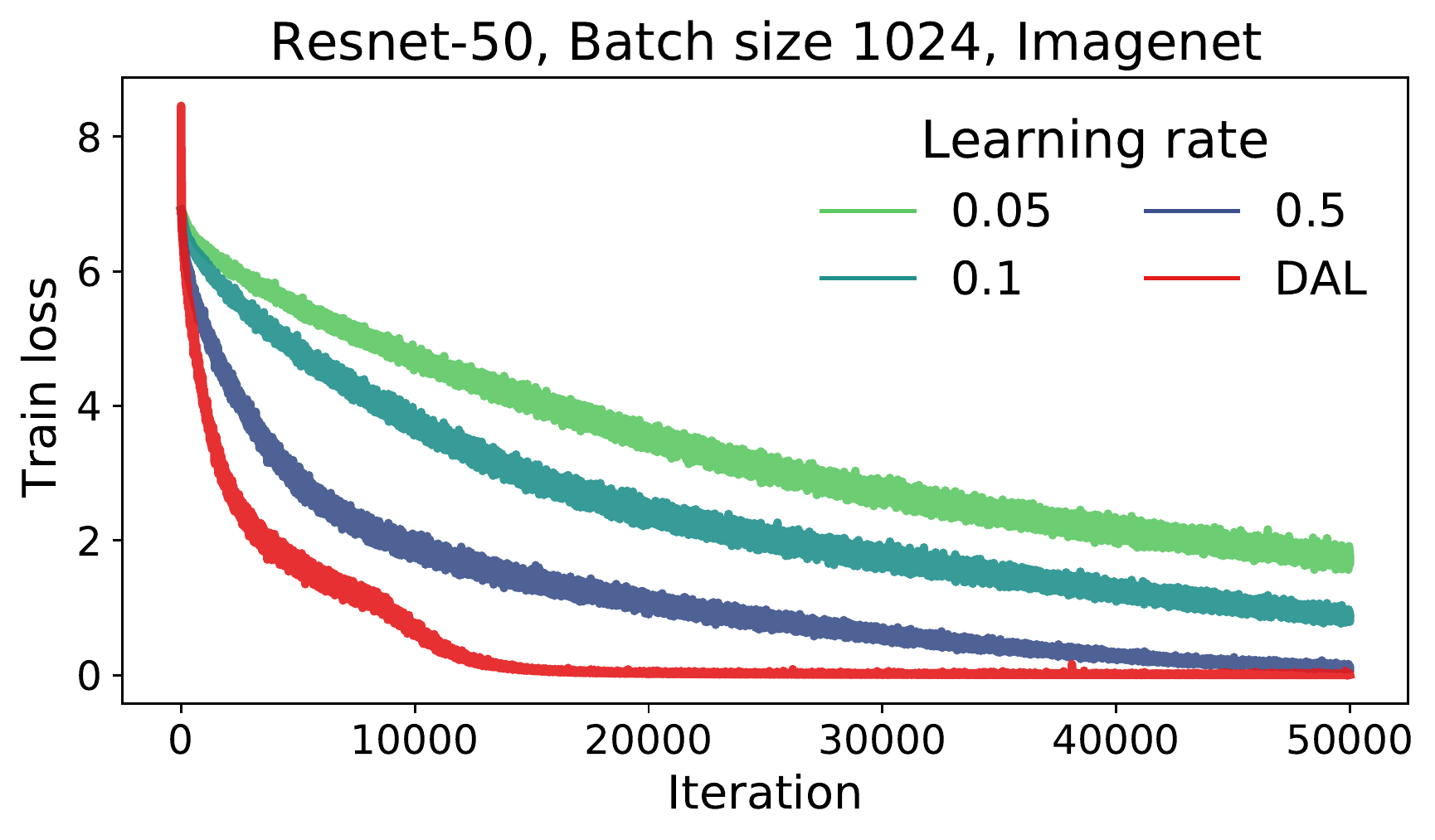}
  \includegraphics[width=0.24\columnwidth]{imagenet_train_loss_h_g_scaling_2048}
  \includegraphics[width=0.24\columnwidth]{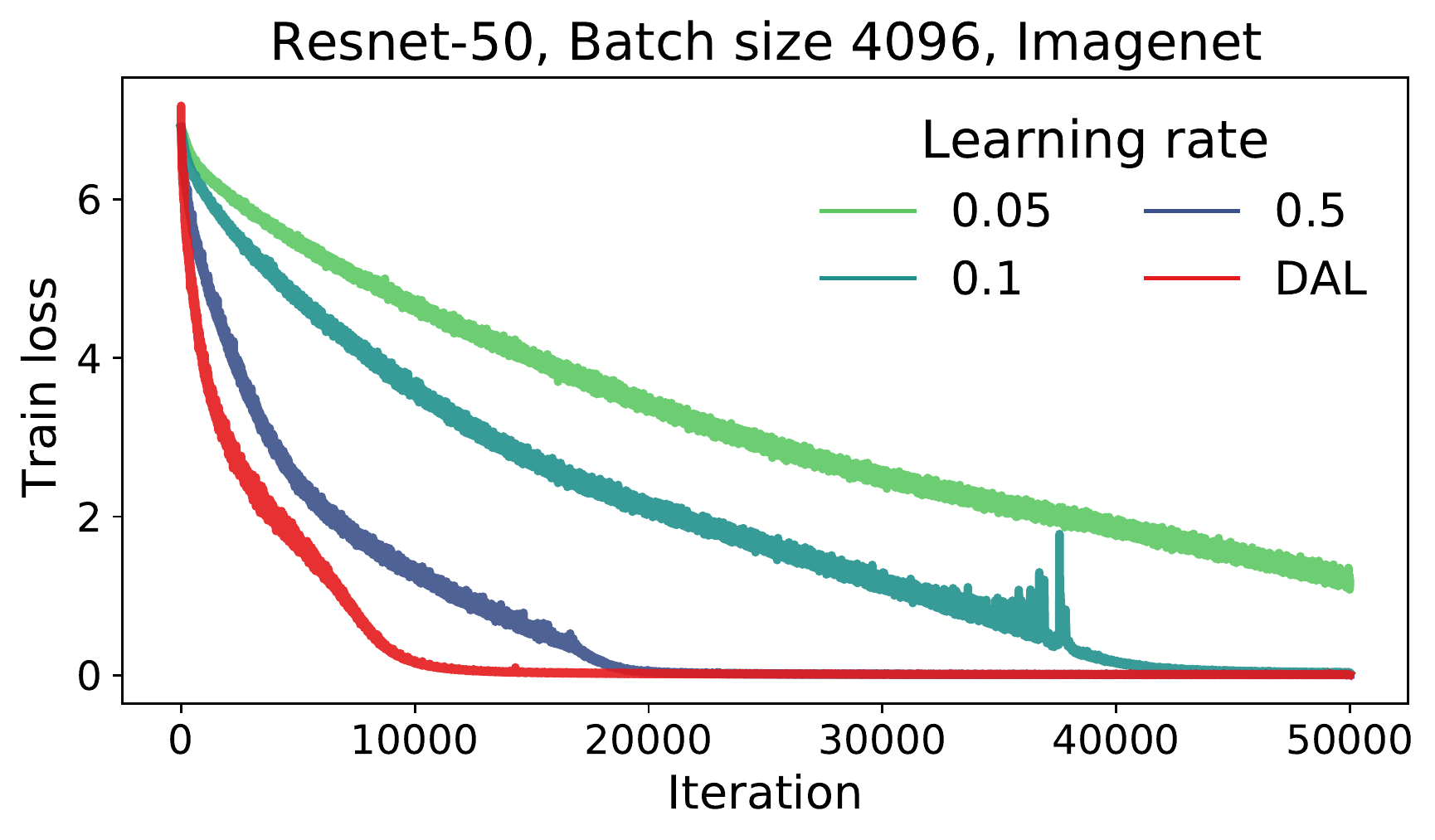}
  \includegraphics[width=0.24\columnwidth]{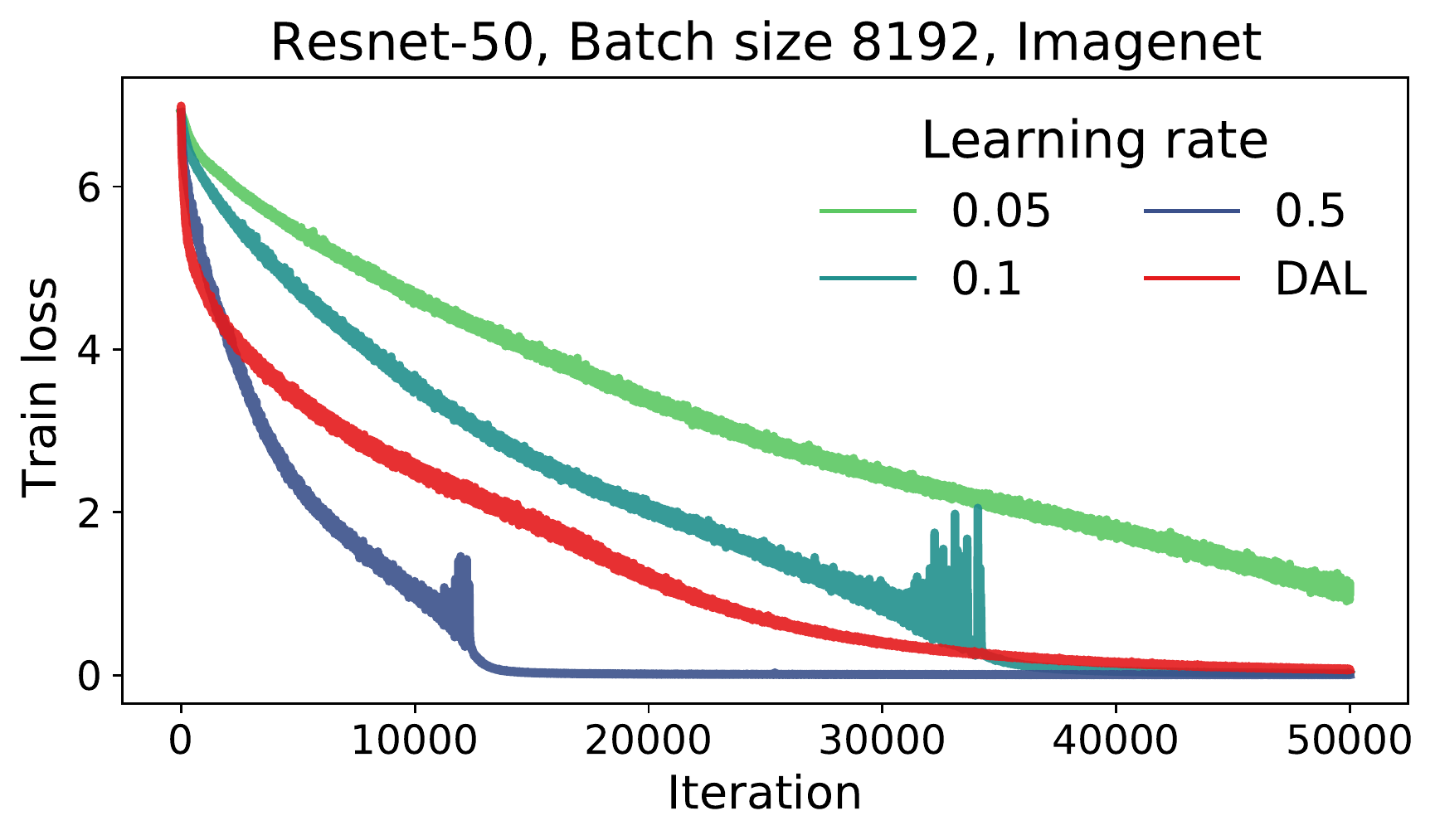}
\caption[DAL on Imagenet, a learning rate sweep across batch sizes.]{DAL: Imagenet results across batch sizes.}
\label{fig:imagenet_lr_scaling_across_batch_sizes}
\end{figure}

\begin{figure}[tbh]
 \includegraphics[width=0.24\columnwidth]{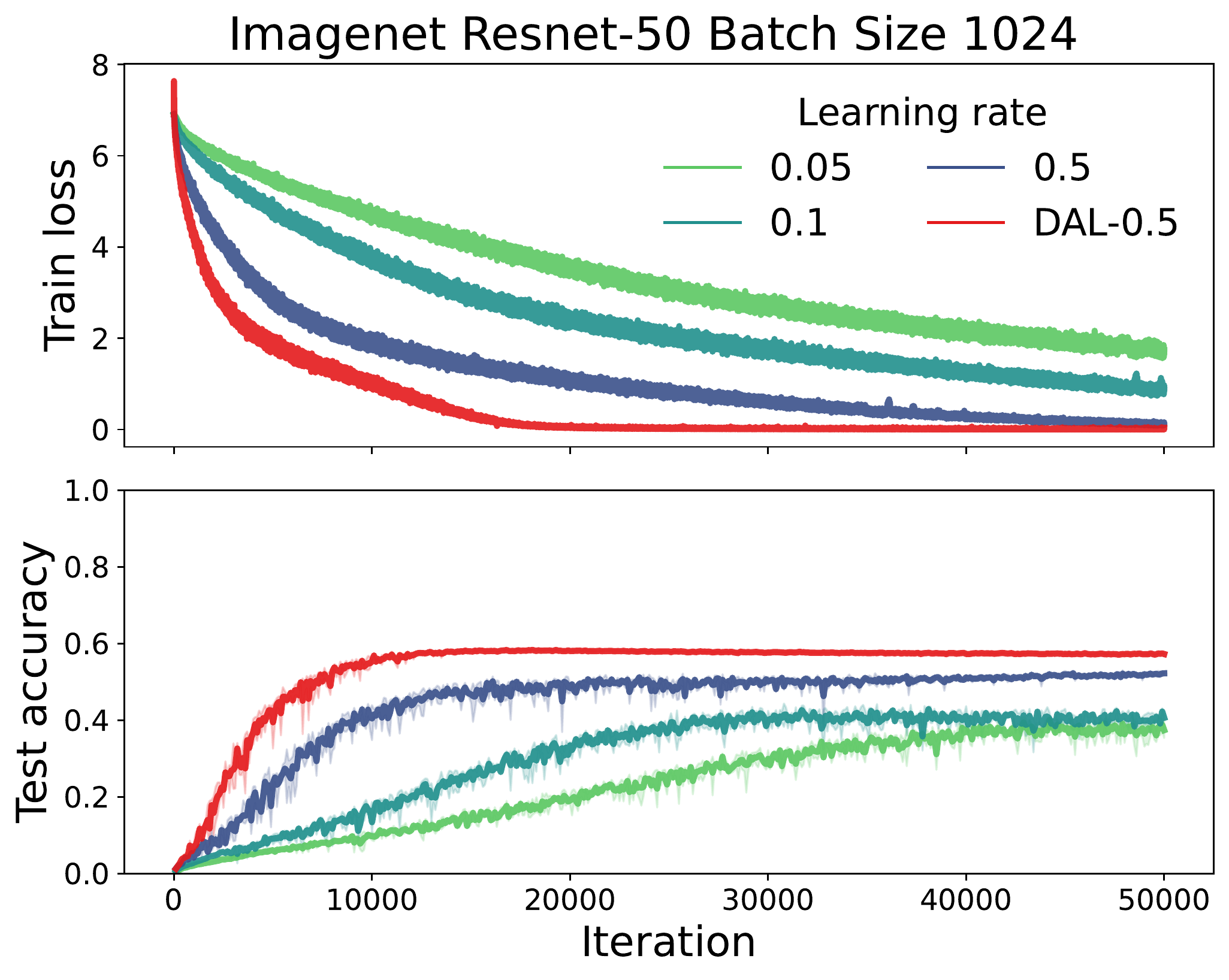}
  \includegraphics[width=0.24\columnwidth]{imagenet_2048_train_test_h_g_scaling_sqrt}
  \includegraphics[width=0.24\columnwidth]{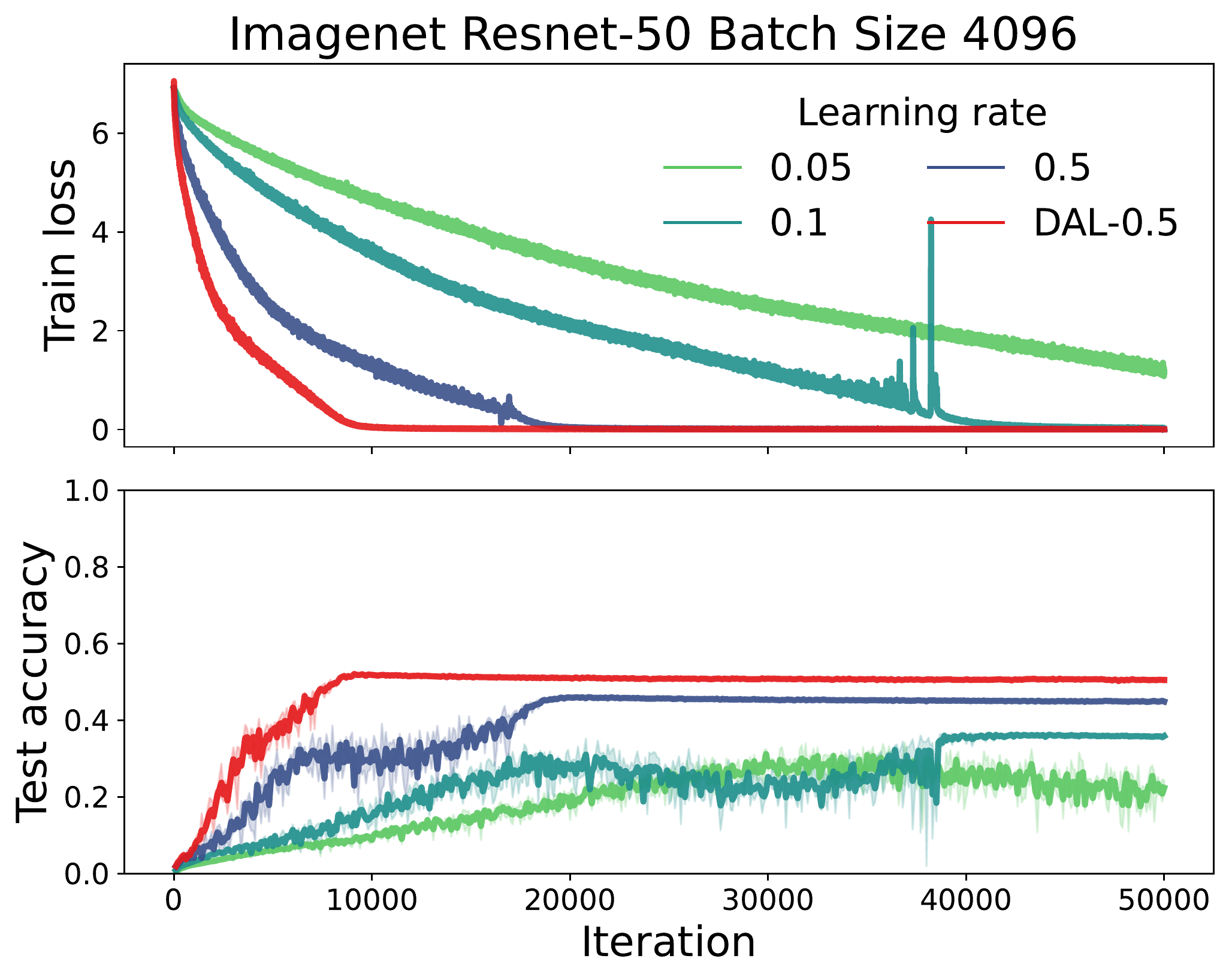}
  \includegraphics[width=0.24\columnwidth]{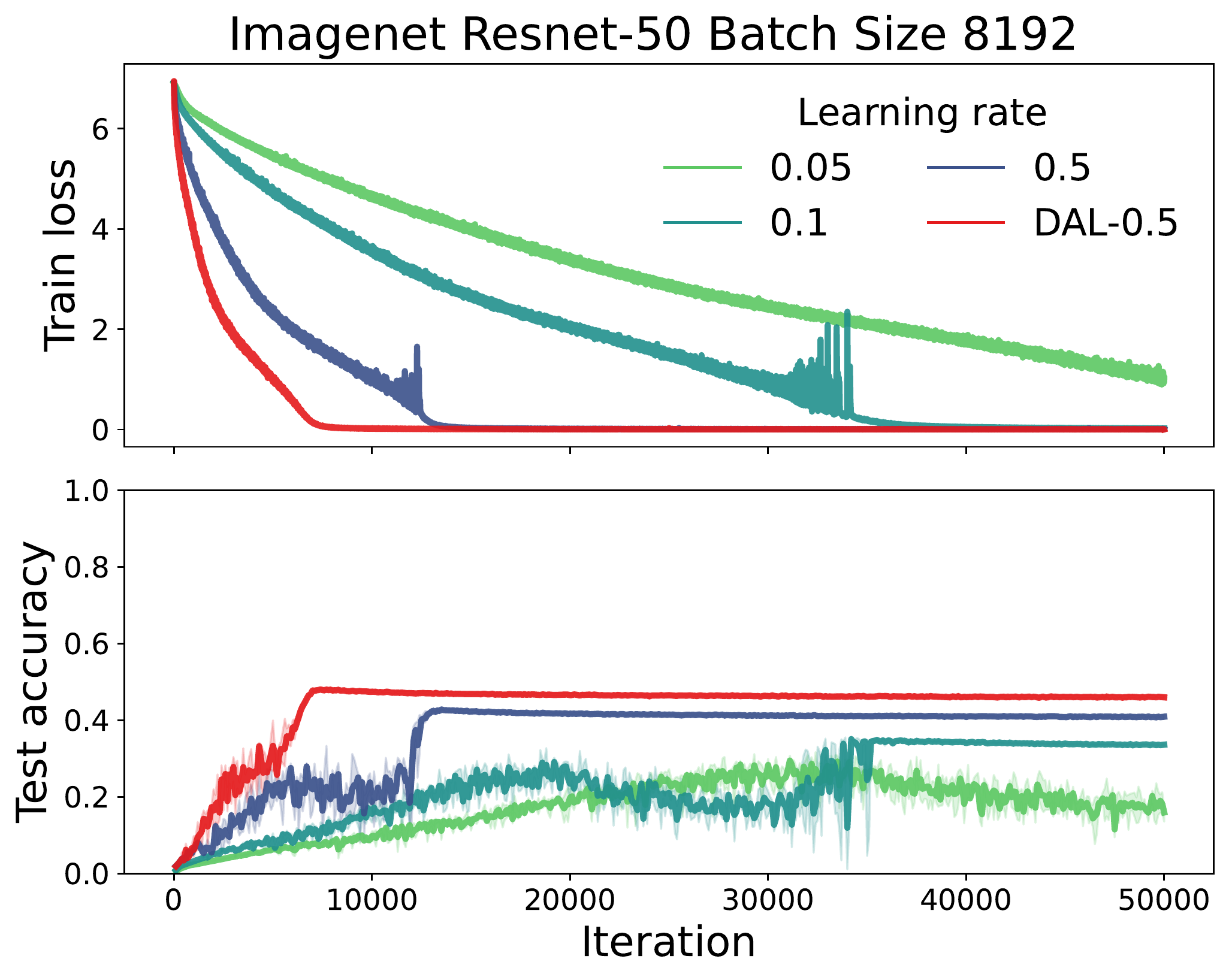}
\caption[DAL$-0.5$: on Imagenet, a learning rate sweep across batch sizes.]{DAL$-0.5$: Imagenet results across batch sizes.}
\label{fig:imagenet_lr_sqrt_scaling_across_batch_sizes}
\end{figure}

\begin{figure}
  \includegraphics[width=0.33\columnwidth]{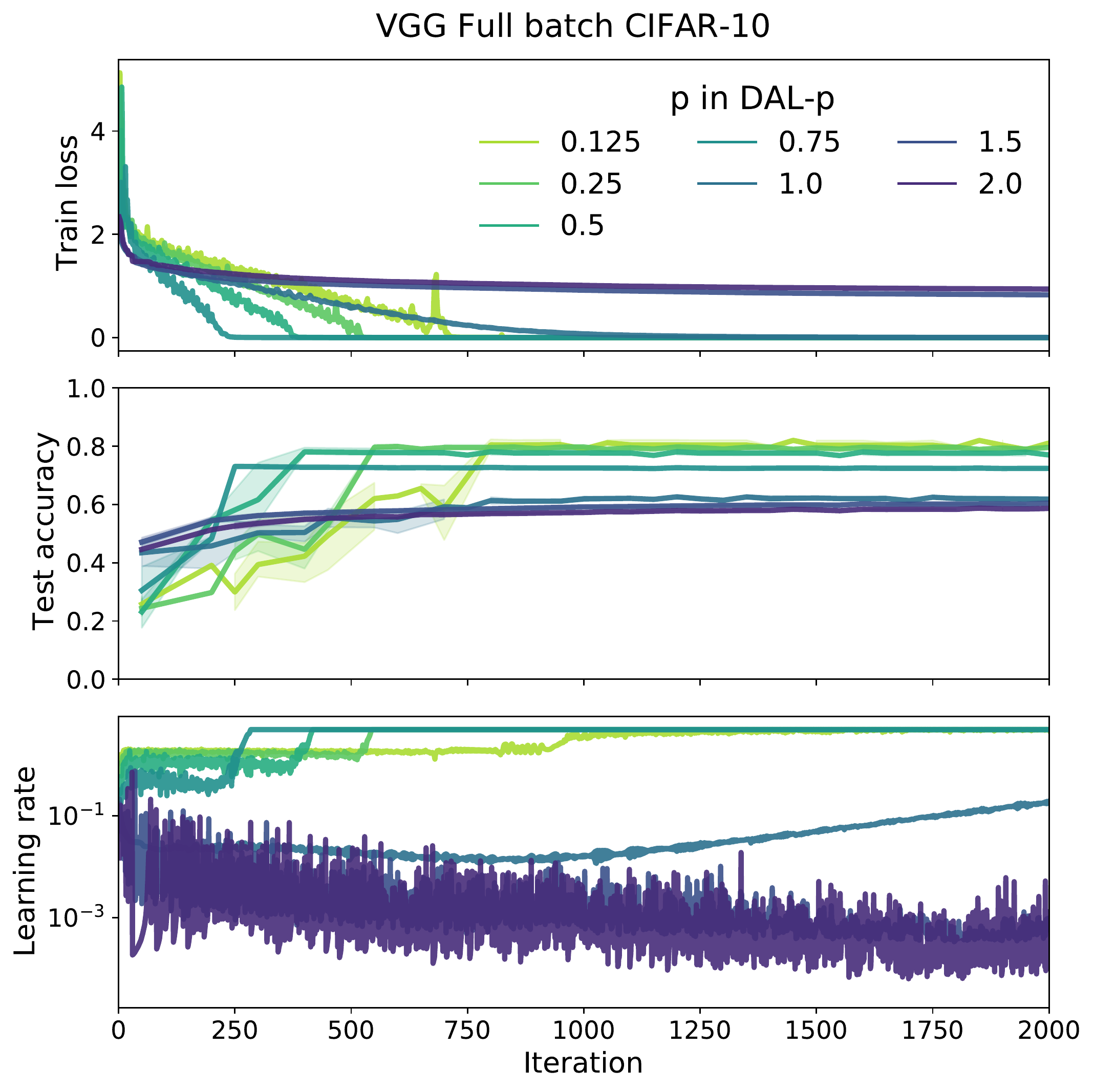}
 \includegraphics[width=0.33\columnwidth]{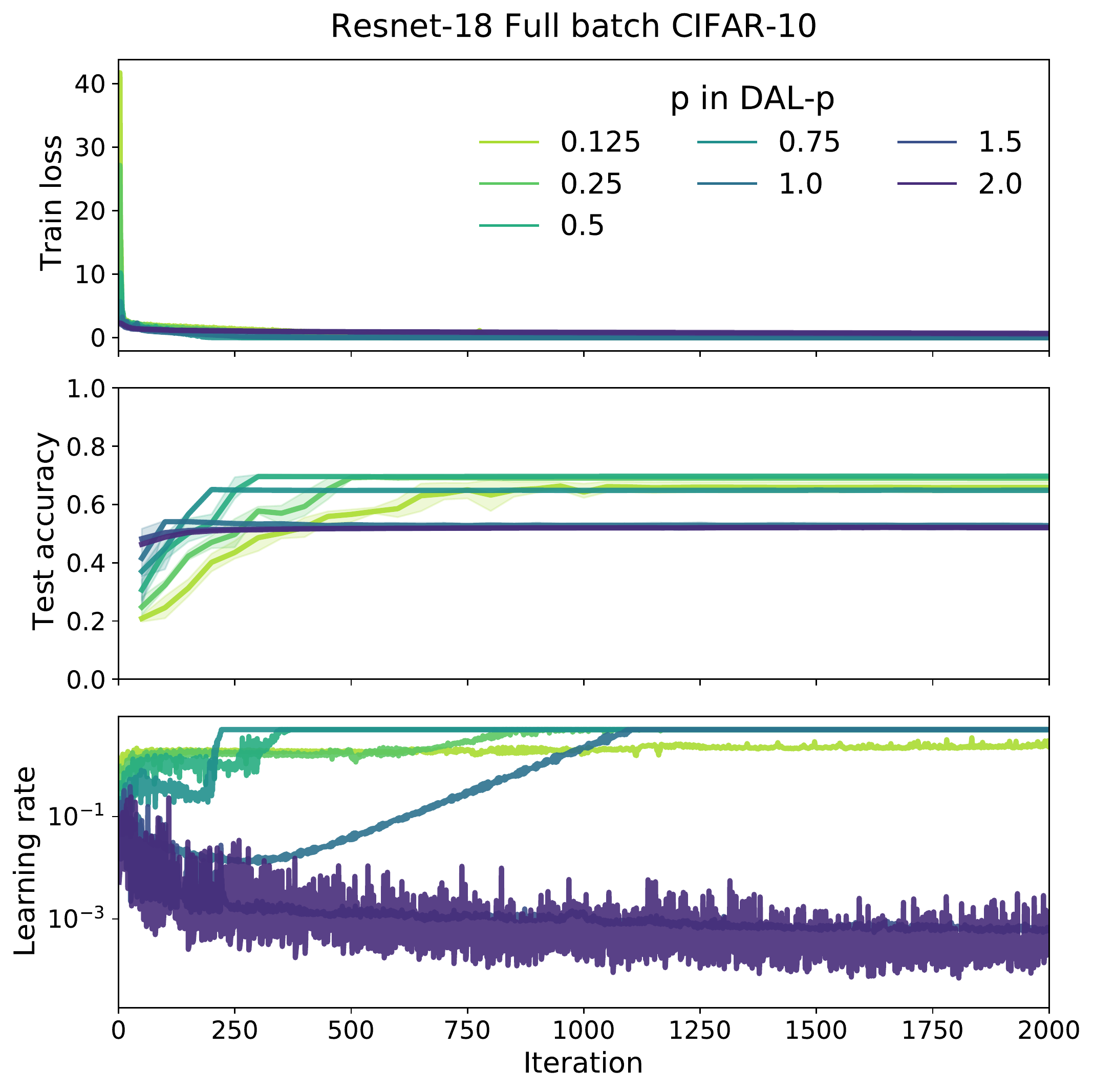}
 \includegraphics[width=0.33\columnwidth]{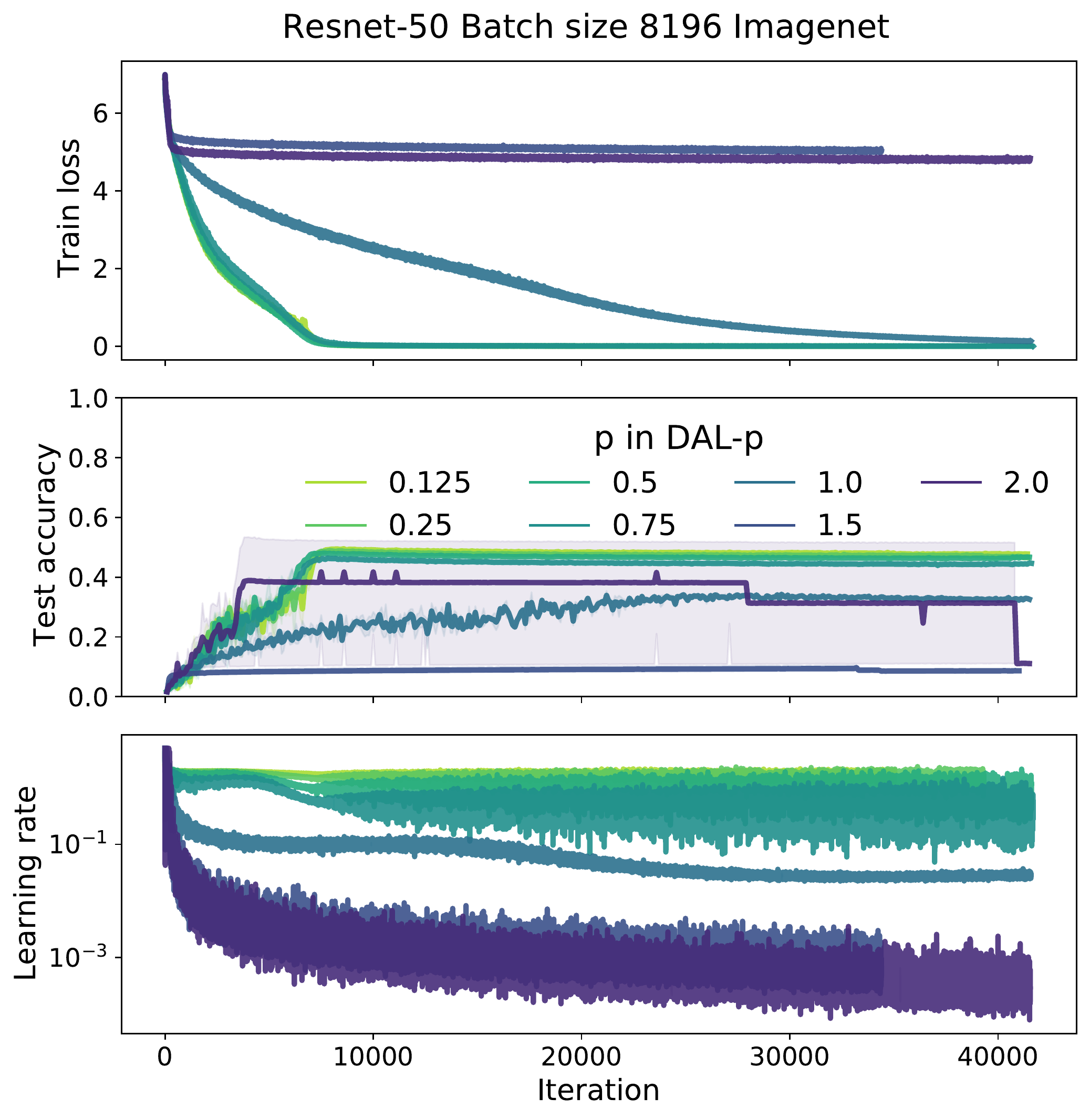}
\caption[DAL-$p$ sweep on VGG, Resnet-18 and Imagenet.]{DAL-$p$ sweep: discretization drift helps test performance, but at the cost of stability. We also show the effective learning rate and train losses and test accuracies.}
\label{fig:power_sweeps_all}
\end{figure}

\begin{figure}
 \includegraphics[width=0.48\columnwidth]{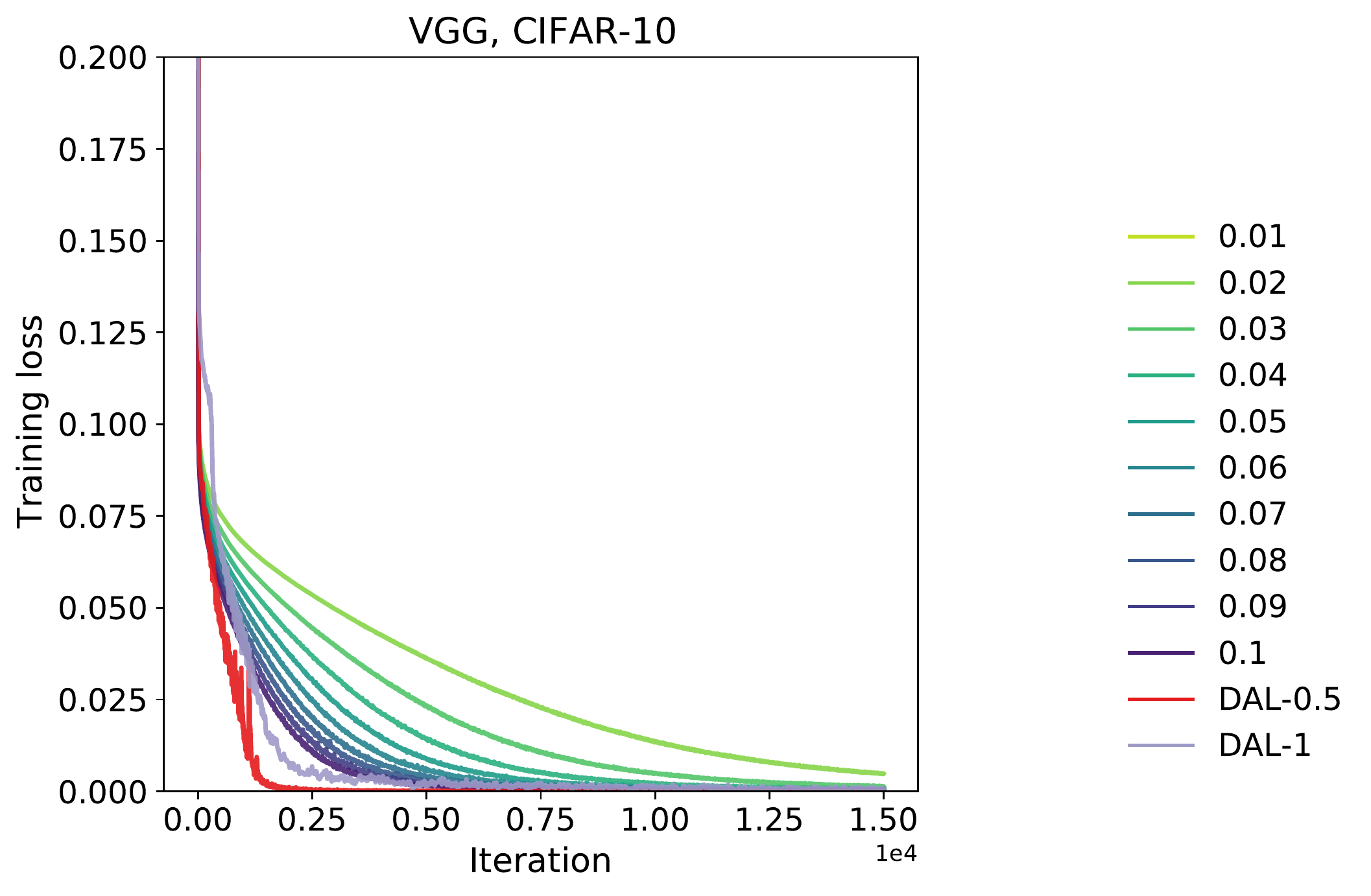}
 \includegraphics[width=0.48\columnwidth]{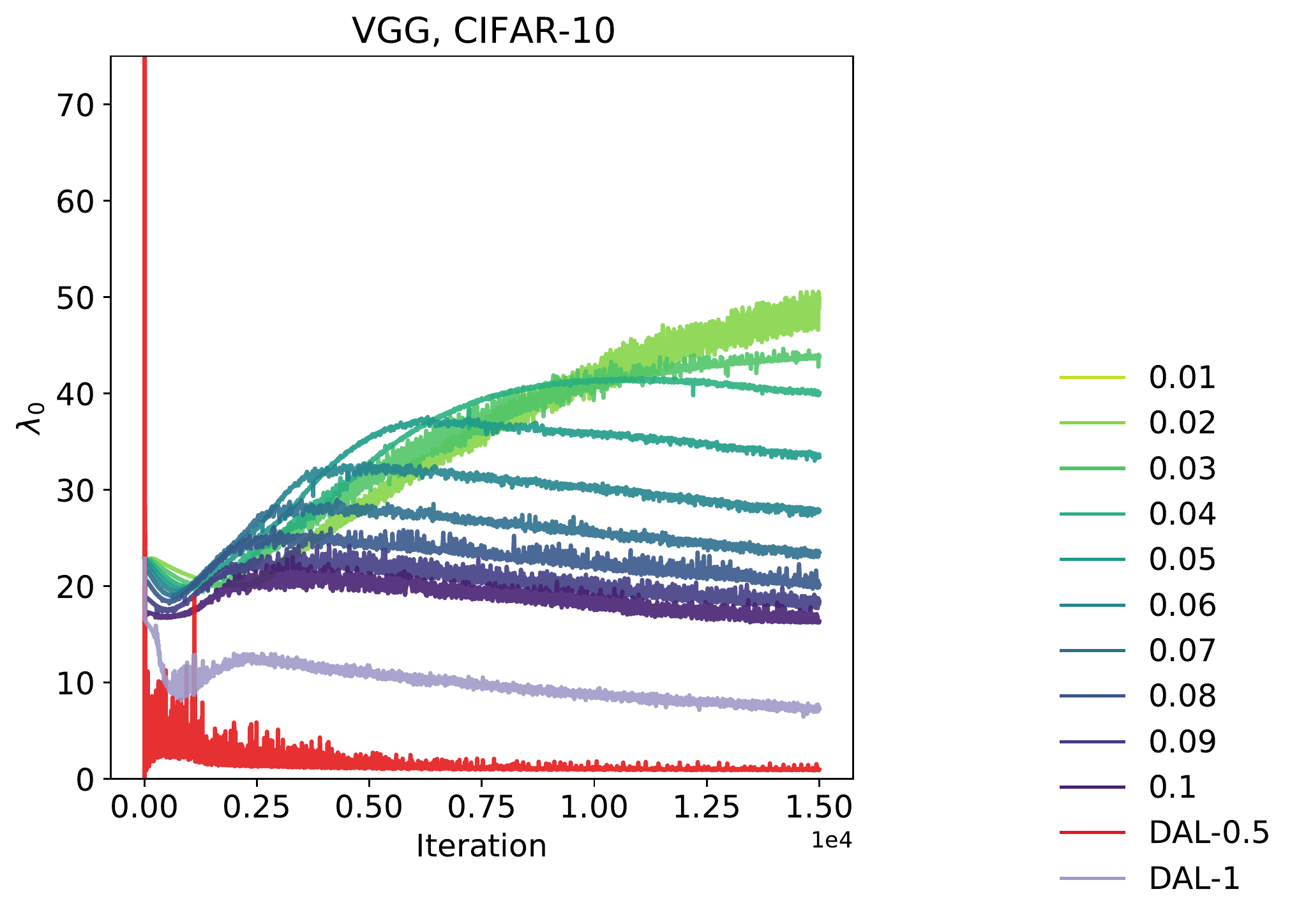}
\caption[DAL results with a least square loss. Full batch training of a VGG model on CIFAR-10.]{Results with a least square loss. DAL and DAL-$0.5$ lead to quicker training in this setting as well.}
\label{fig:least_square_loss_dal}
\end{figure}

\textbf{DAL learned landscapes.}
To investigate the landscapes learned by DAL-$p$, we use the method of~\citet{li2018visualizing} and compare against the landscapes learned using SGD in Figures~\ref{fig:DAL_p_64_landscape},~\ref{fig:DAL_p_full_batch_landscape},~\ref{fig:DAL_p_imagenet_landscape}. Across datasets and batch sizes, we consistenly observe that DAL-$p$ learns flatter landscapes. We also consistenly observe that during training, $\lambda_0$ is smaller with DAL-$p$ than with SGD, as shown in Figures~\ref{fig:DAL_p_64_eigen} and~\ref{fig:DAL_p_full_batch_eigen}. Crucially, we observe these results \textit{even when the accuracy of the two models is similar, as is the case with a batch size of 64 on CIFAR-10.}

\begin{figure}[t]
\begin{subfigure}[DAL]{
\includegraphics[width=0.45\columnwidth]{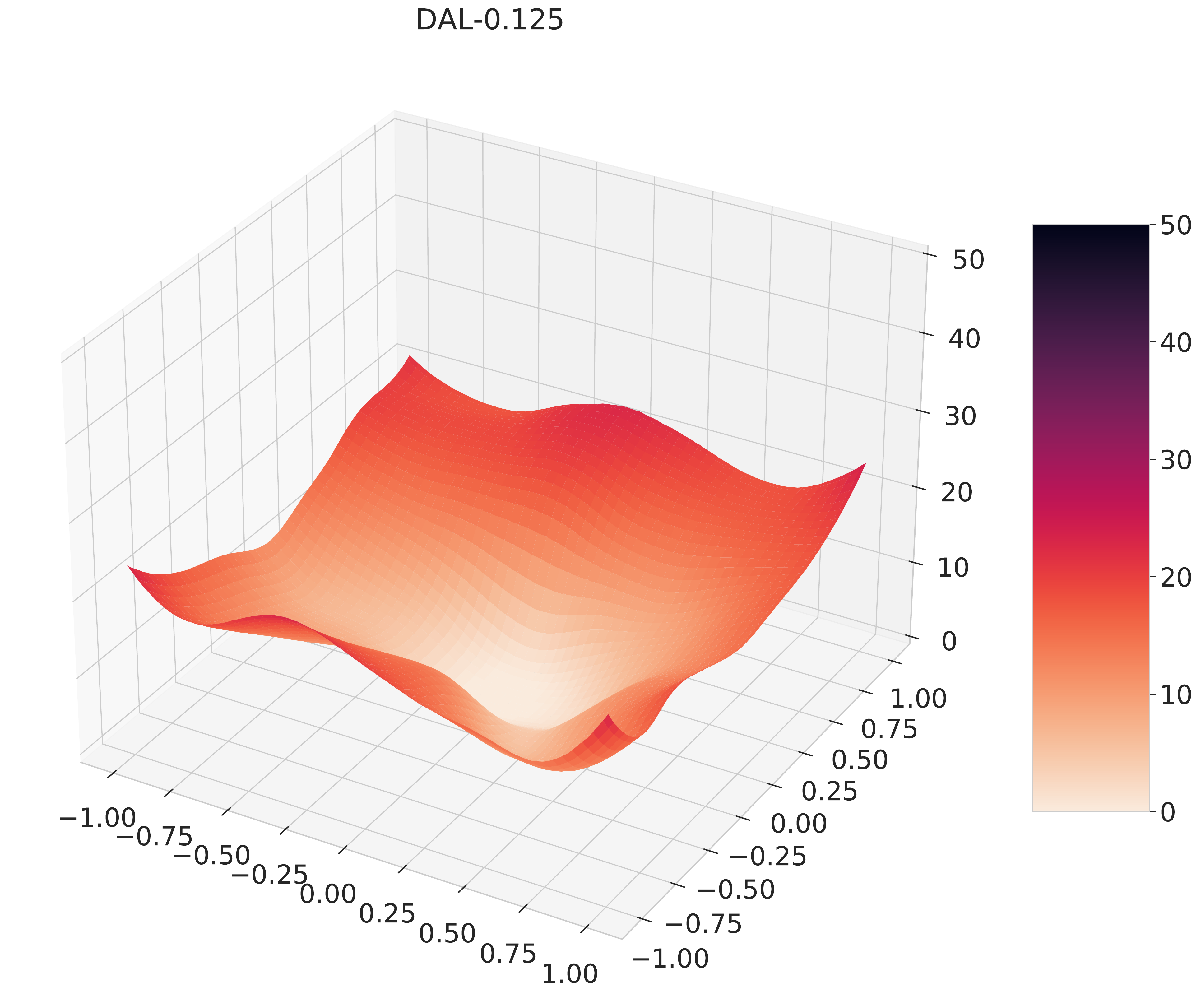}
}
\end{subfigure}
\begin{subfigure}[SGD]{
\includegraphics[width=0.45\columnwidth]{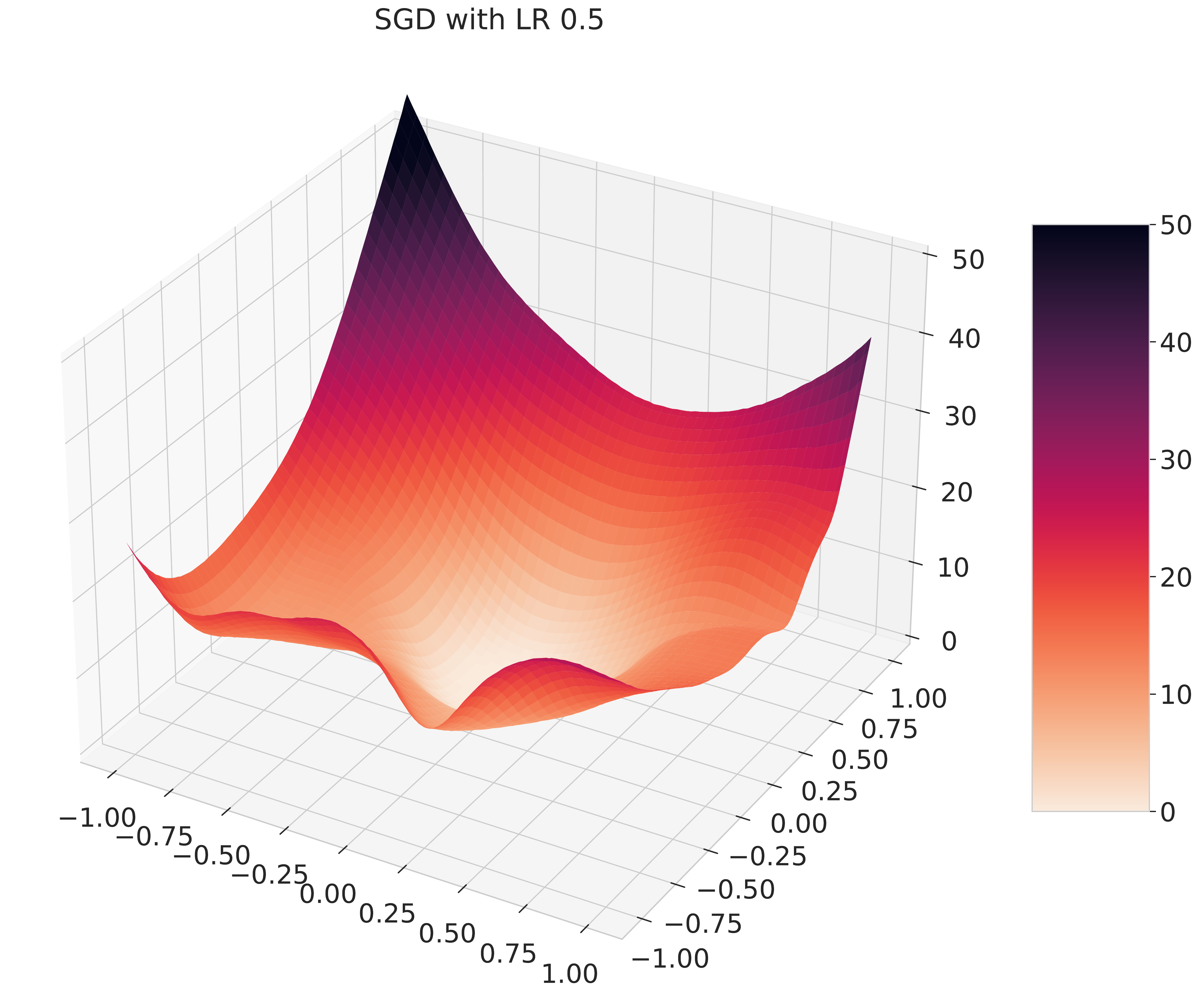}
}
\end{subfigure}
\caption{\textbf{CIFAR-10, batch size 64.} The 2D projection of the DAL-$p$ and SGD learned landscapes on CIFAR-10 using a VGG model. The visualisation is made using the method of~\citet{li2018visualizing}. \textit{Though both models achieve a 86\% accuracy on the test dataset, DAL-$p$ has learned a flatter model.} We also show the trajectory of $\lambda_0$ for both models in Figure~\ref{fig:DAL_p_64_eigen}.}
\label{fig:DAL_p_64_landscape}
\end{figure}

\begin{figure}[t]
\begin{subfigure}[DAL]{
\includegraphics[width=0.4\columnwidth]{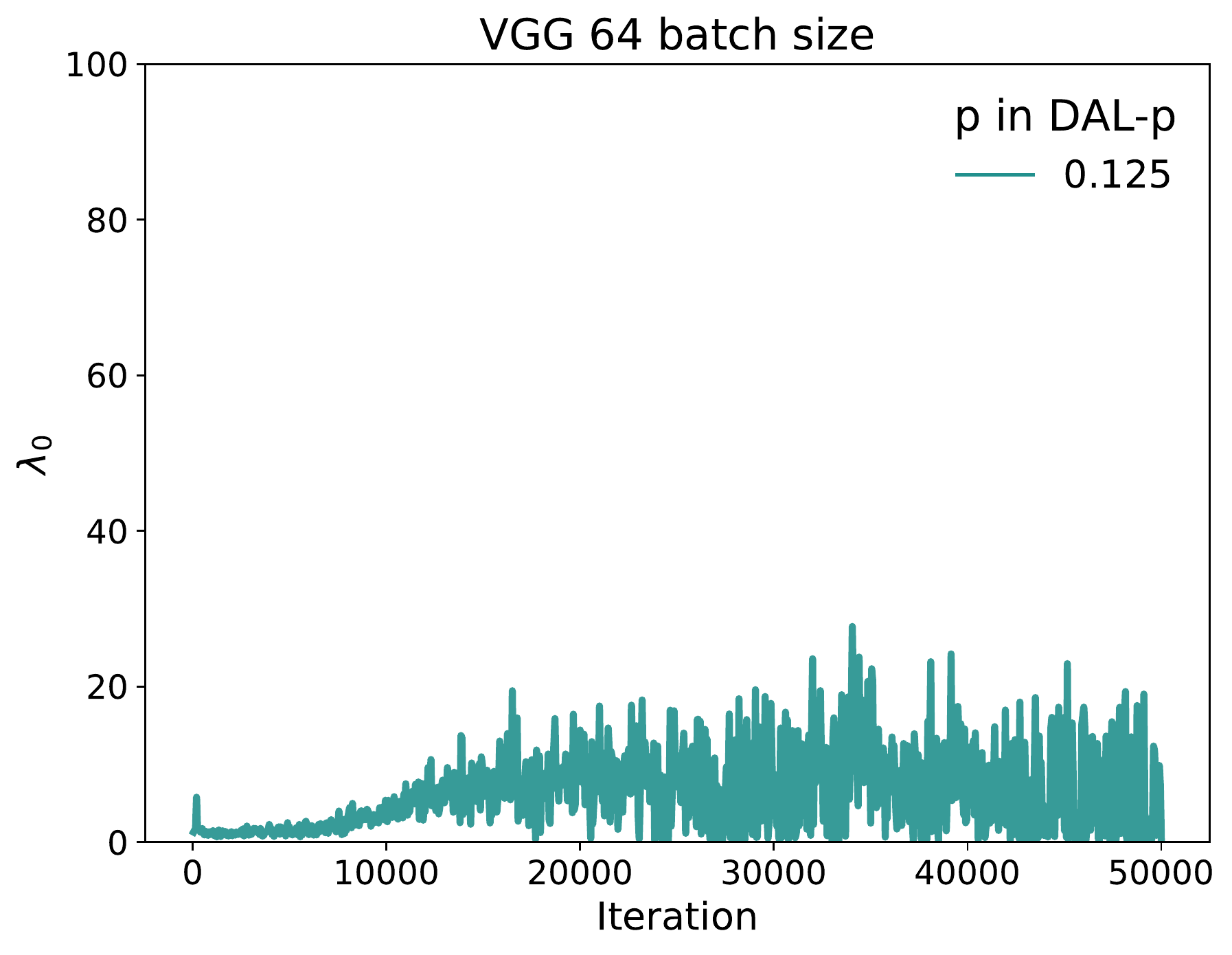}
}
\end{subfigure}
\begin{subfigure}[SGD]{
\includegraphics[width=0.4\columnwidth]{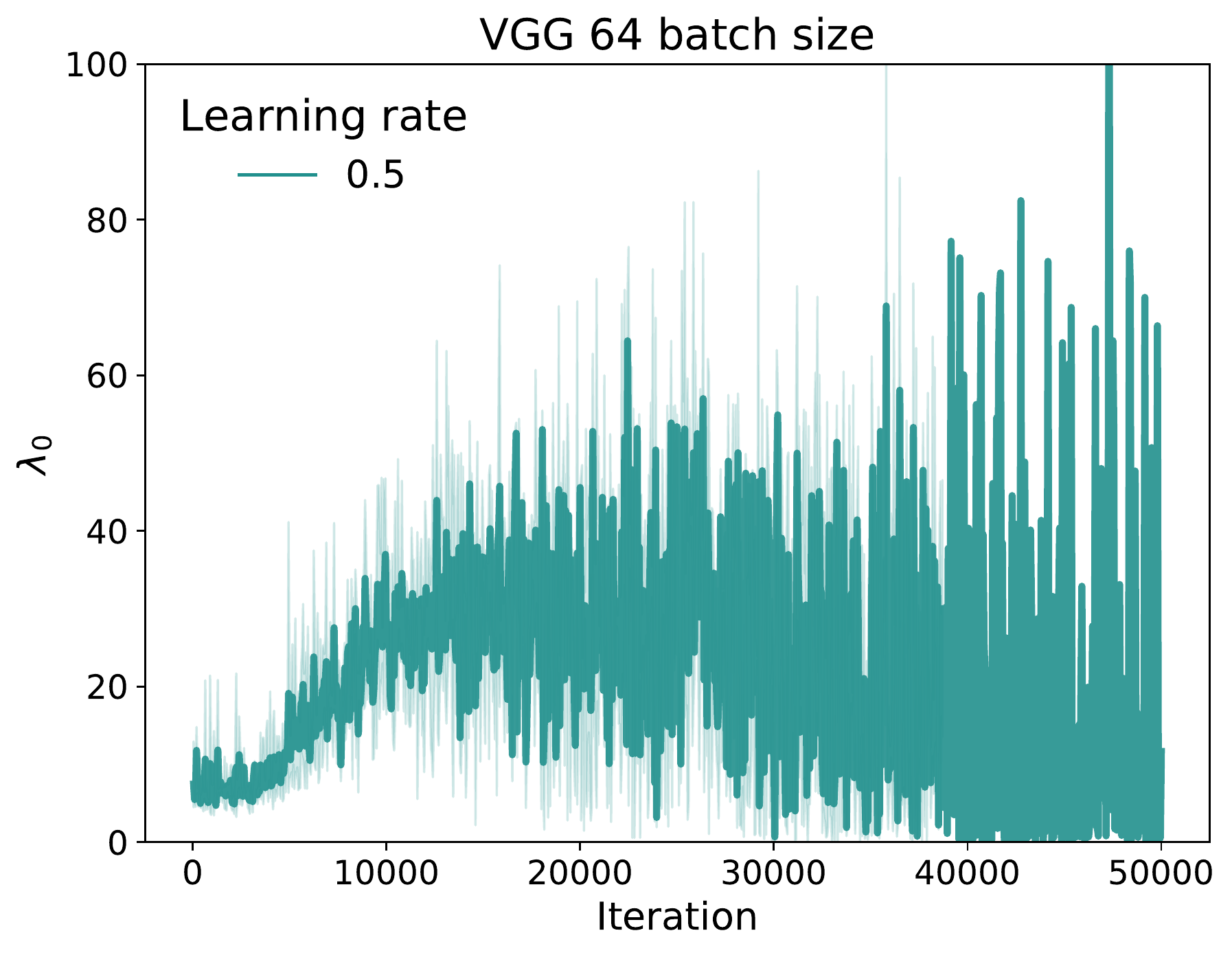}
}
\end{subfigure}
\caption{\textbf{CIFAR-10, batch size 64.} $\lambda_0$ for learned models with using SGD and DAL-$p$ on CIFAR-10.}
\label{fig:DAL_p_64_eigen}
\end{figure}

\begin{figure}[t]
\centering
\begin{subfigure}[DAL]{
\includegraphics[width=0.43\columnwidth]{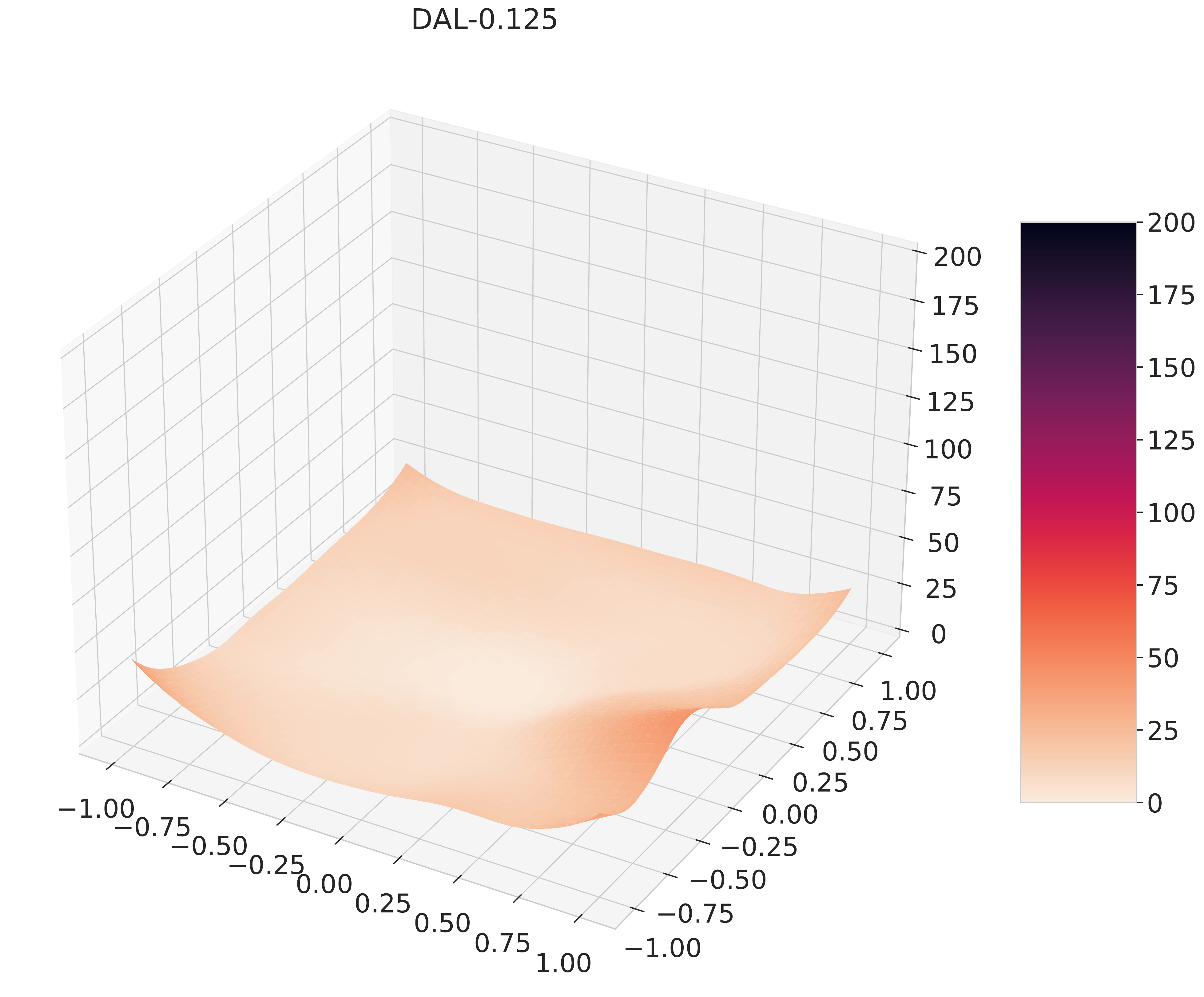}
}
\end{subfigure}
\begin{subfigure}[SGD]{
\includegraphics[width=0.43\columnwidth]{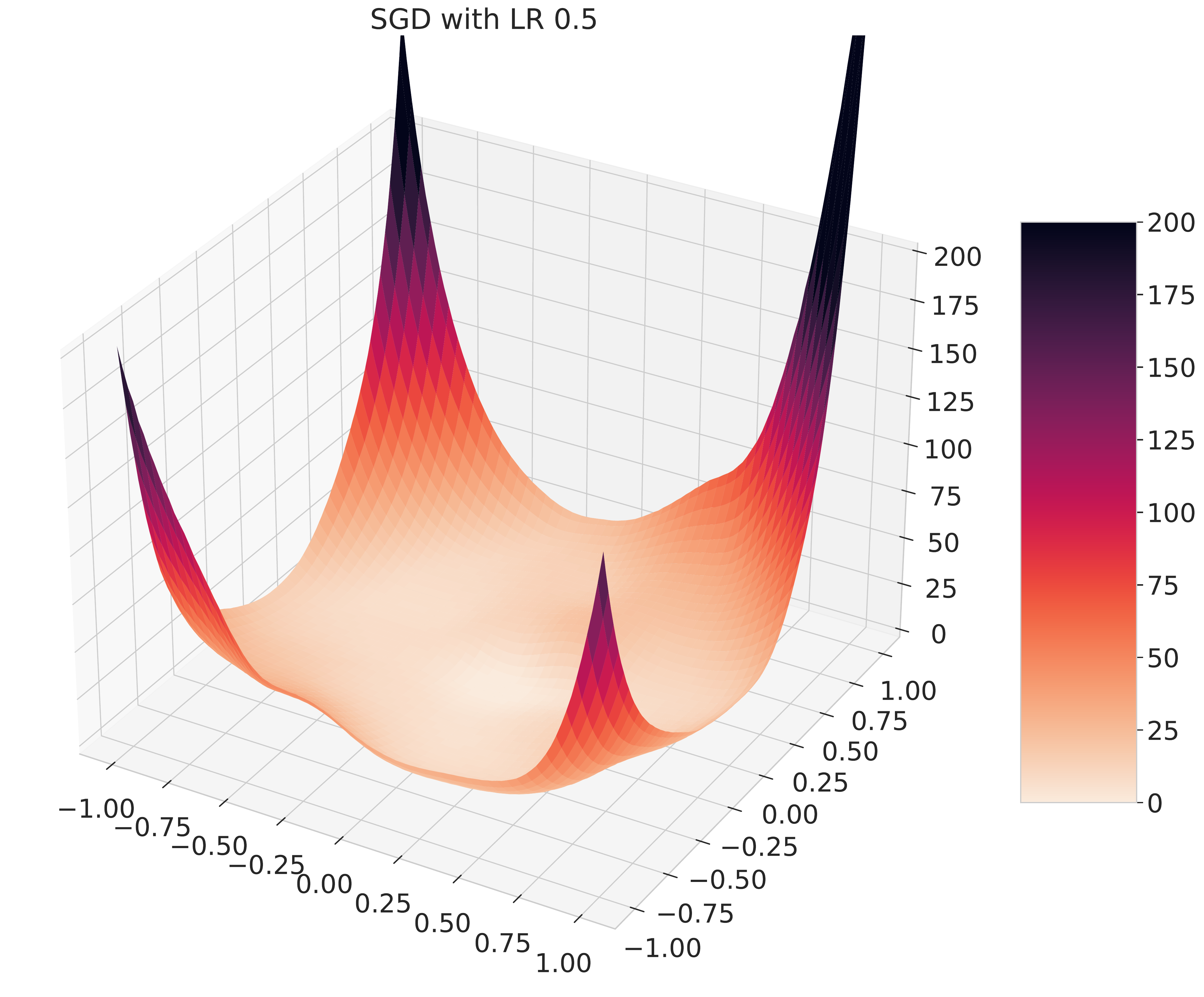}
}
\end{subfigure}

\begin{subfigure}[DAL]{
\includegraphics[width=0.43\columnwidth]{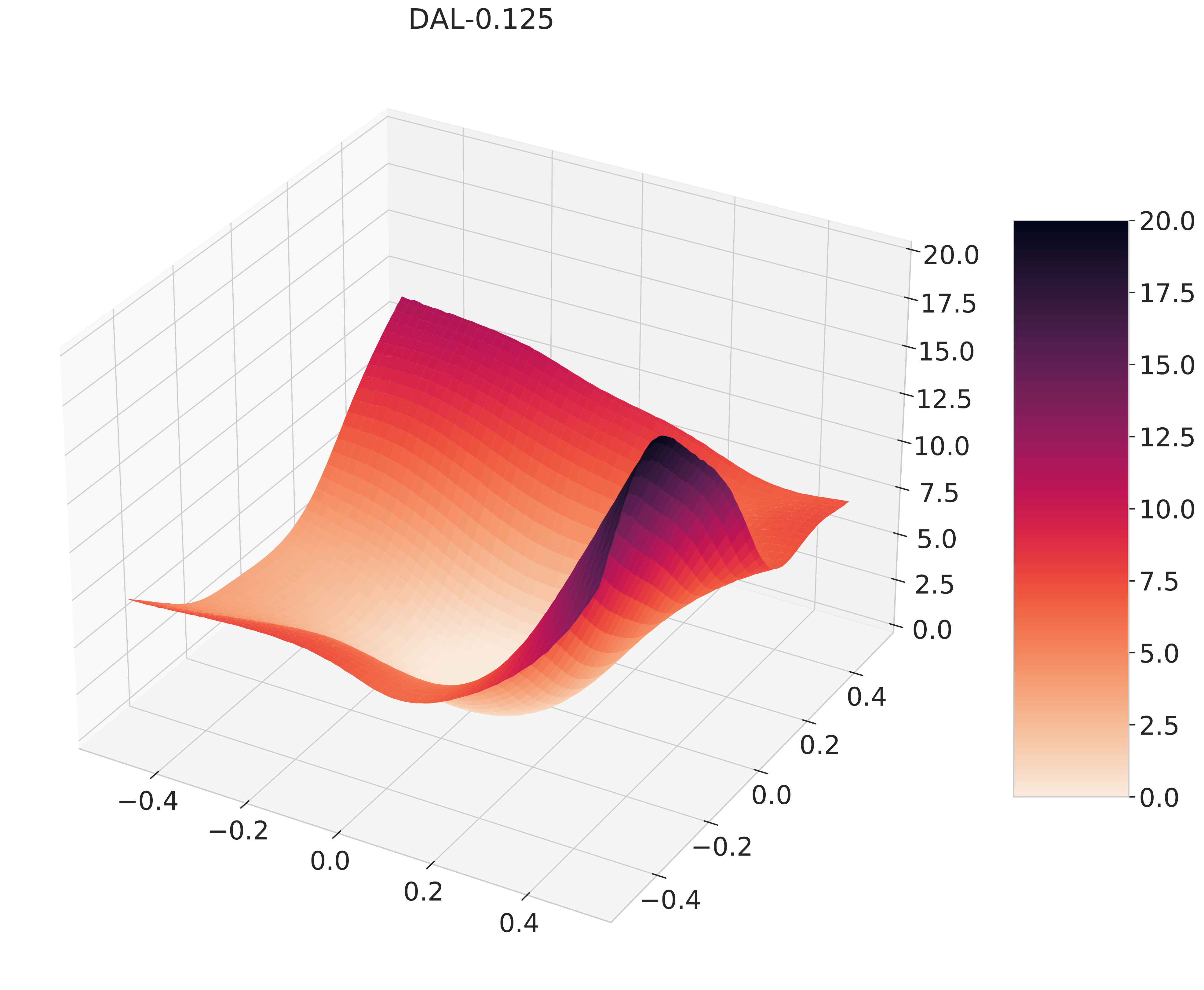}
}
\end{subfigure}
\begin{subfigure}[SGD]{
\includegraphics[width=0.43\columnwidth]{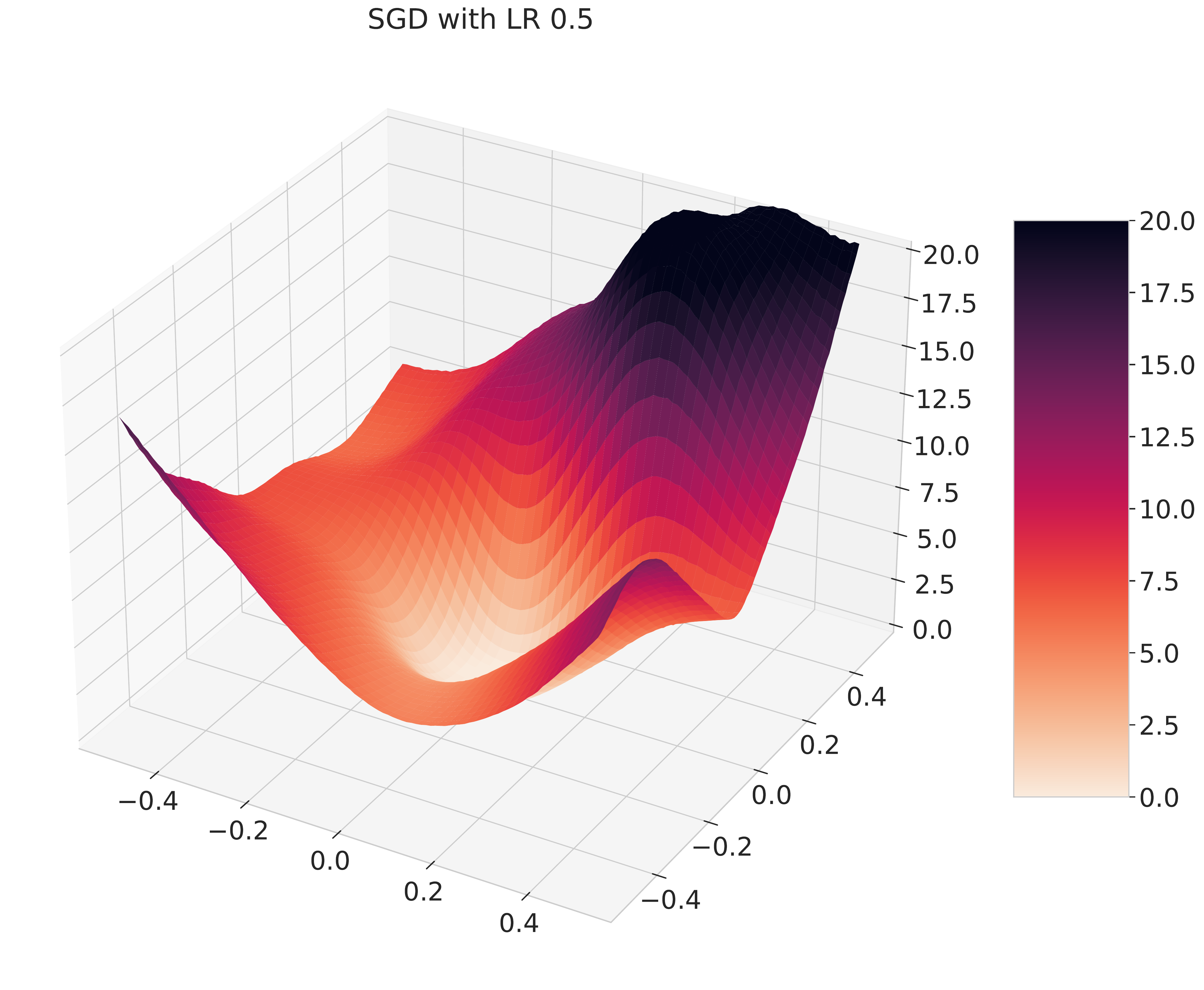}
}
\end{subfigure}

\caption{\textbf{CIFAR-10, full batch, at various levels of zoom into around the convergence points.}  The 2D projection of the DAL-$p$ and SGD learned landscapes on CIFAR-10, using a VGG model. The visualisation is made using the method of~\citet{li2018visualizing}. The DAL-$p$ model achieves an accuracy of 82\% which the SGD model achieves 77\% accuracy. We also show the trajectory of $\lambda_0$ for both models in Figure~\ref{fig:DAL_p_full_batch_eigen}.}
\label{fig:DAL_p_full_batch_landscape}
\end{figure}

\begin{figure}[t]
\begin{subfigure}[]{
\includegraphics[width=0.4\columnwidth]{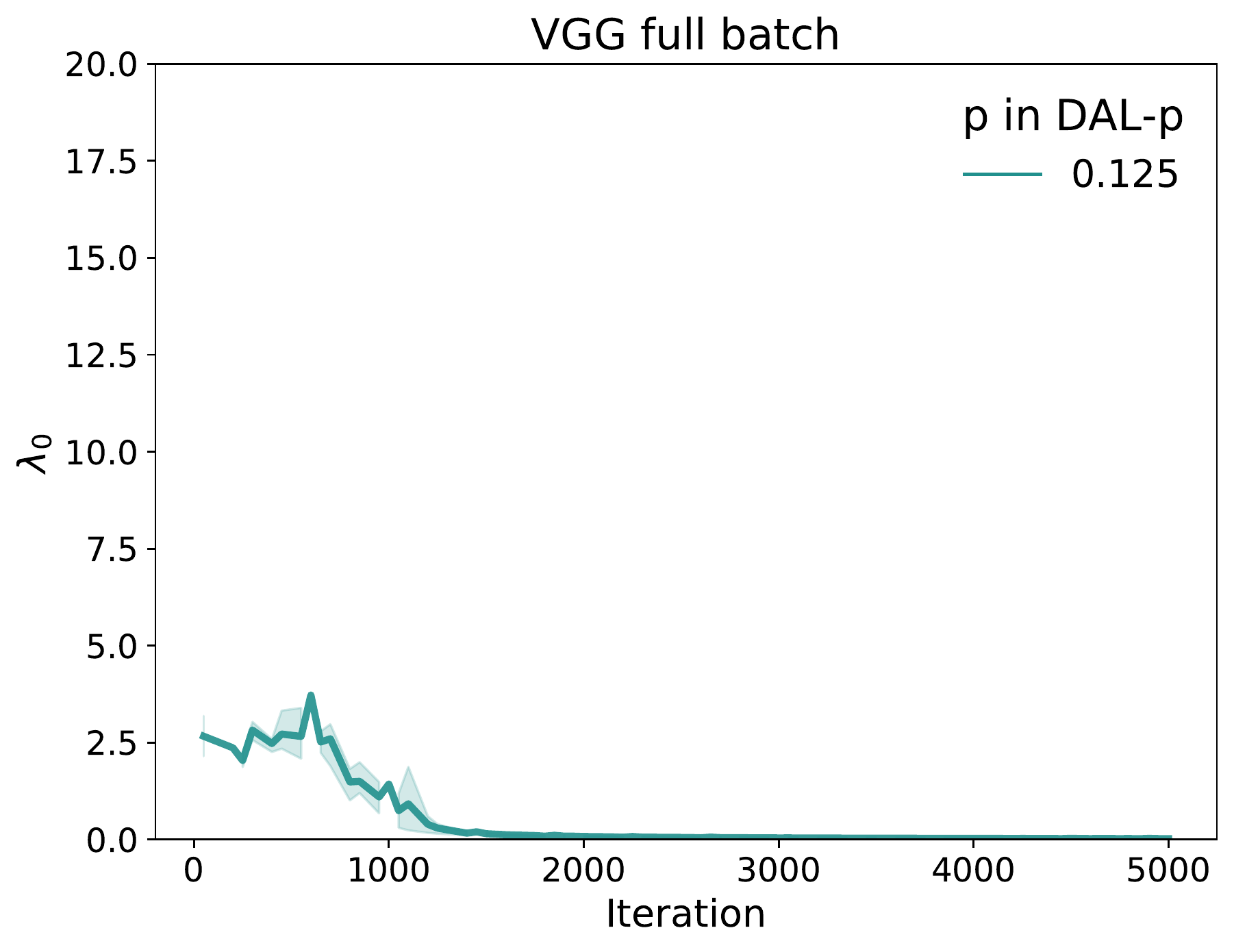}
}
\end{subfigure}
\begin{subfigure}[]{
\includegraphics[width=0.4\columnwidth]{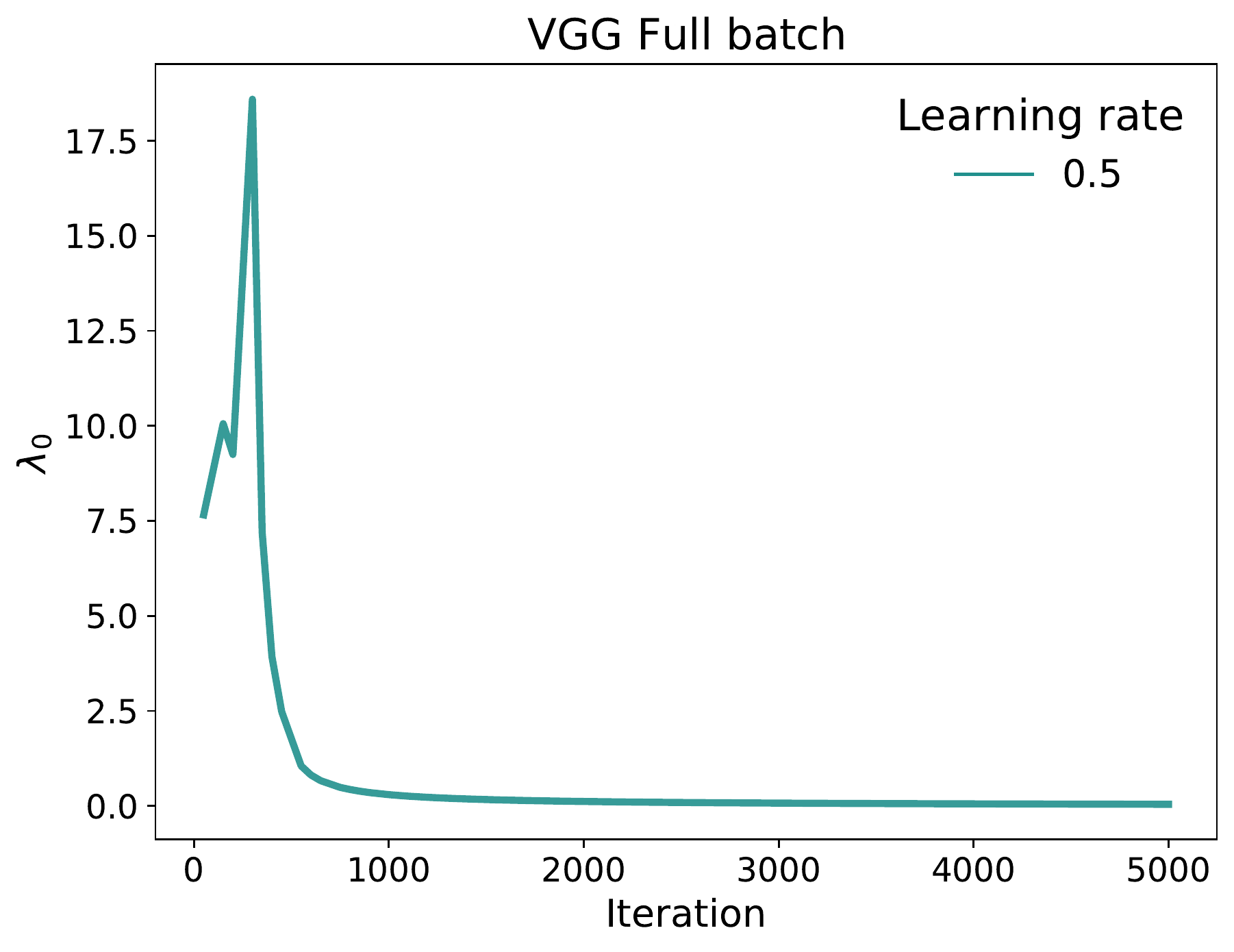}
}
\end{subfigure}
\caption{$\lambda_0$ for learned models with using full batch gradient descent and DAL-$p$ on CIFAR-10.}
\label{fig:DAL_p_full_batch_eigen}
\end{figure}

\begin{figure}[t]
\centering
\begin{subfigure}[DAL]{
\includegraphics[width=0.45\columnwidth]{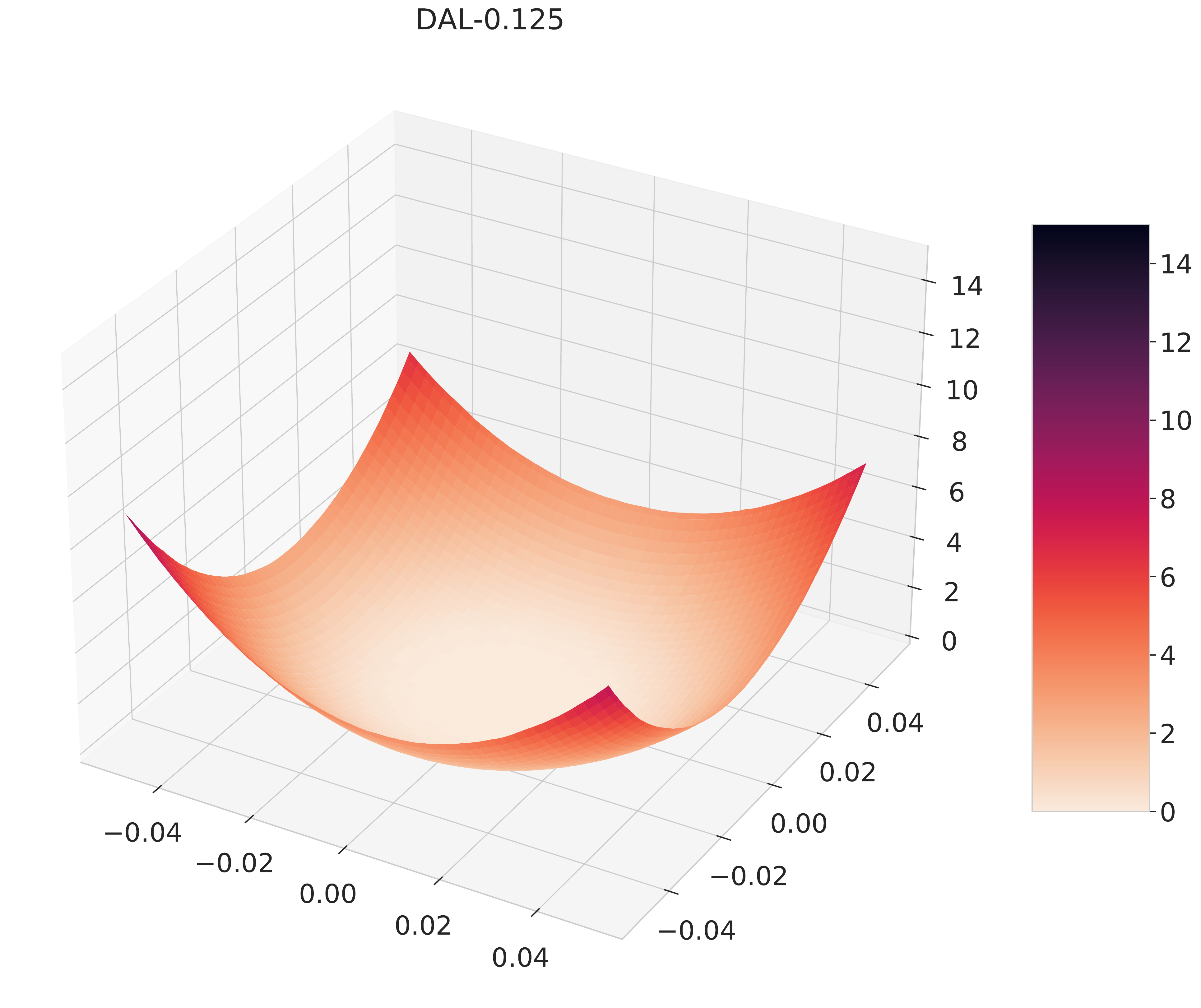}
}
\end{subfigure}
\begin{subfigure}[SGD]{
\includegraphics[width=0.45\columnwidth]{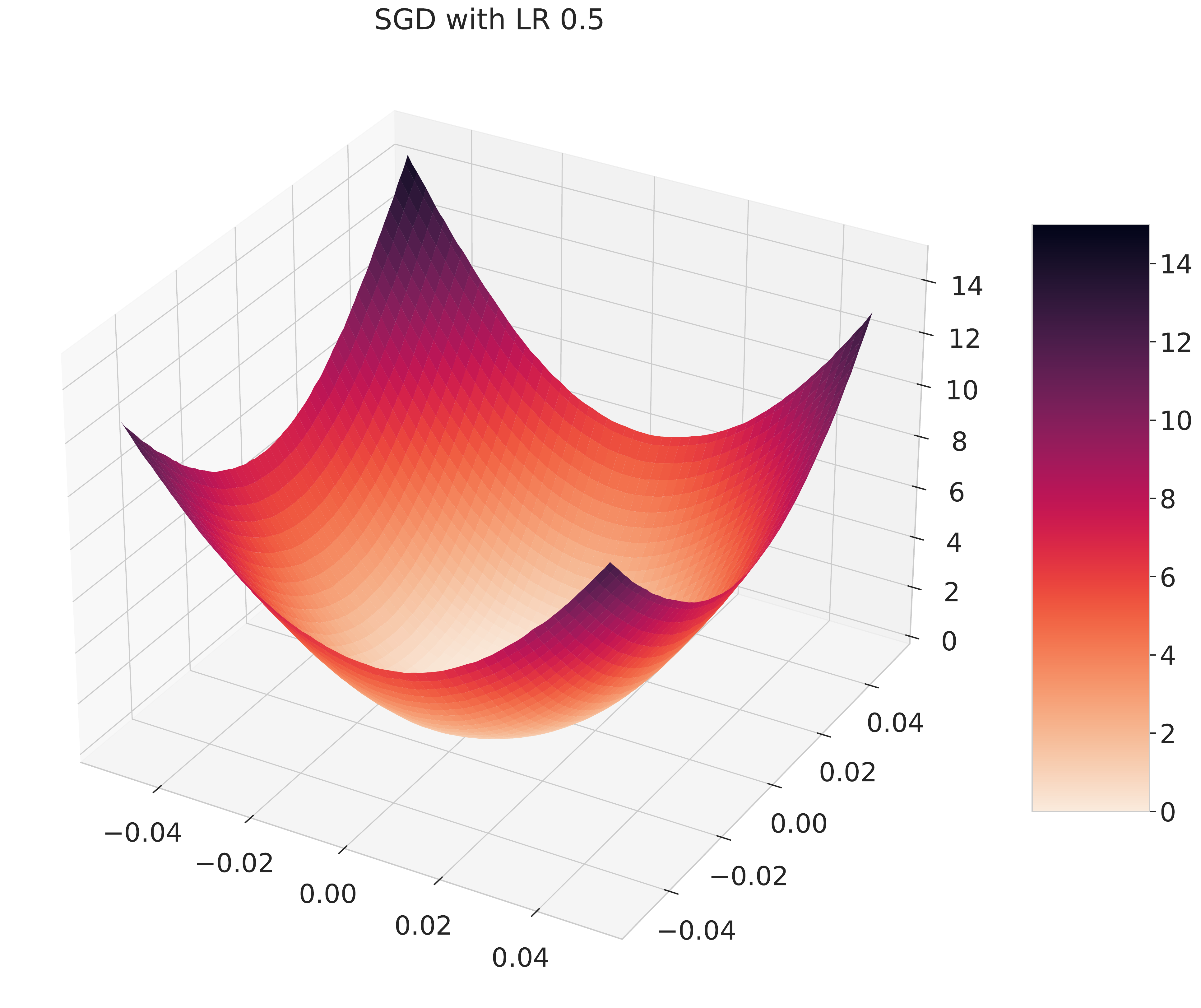}
}
\end{subfigure}
\caption{\textbf{Imagenet, batch size 8192}. The 2D projection of the DAL-$p$ and SGD learned landscapes, using the method of~\citet{li2018visualizing}. The DAL-$p$ model achieves an accuracy of 50\% which the SGD model achieves 42\% accuracy.}
\label{fig:DAL_p_imagenet_landscape}
\end{figure}

\textbf{Learning rate per parameter}. We show preliminary results of how to use a DAL like learning rate to use a per-parameter learning rate in Figure~\ref{fig:vgg_lr_scaling_per_parameter} and~\ref{fig:imagenet_lr_scaling_per_parameter}. We note that here we found that we can directly use 
$\frac{1}{\left(\nabla_{\vtheta}^2 E \nabla_{\vtheta} E\right)_i}$ as the learning rate for parameter $i$. When using the per parameter learning rate update, the argument regarding the norm of the gradient used for DAL is no longer necessary.

\begin{figure}
  \includegraphics[width=0.24\columnwidth]{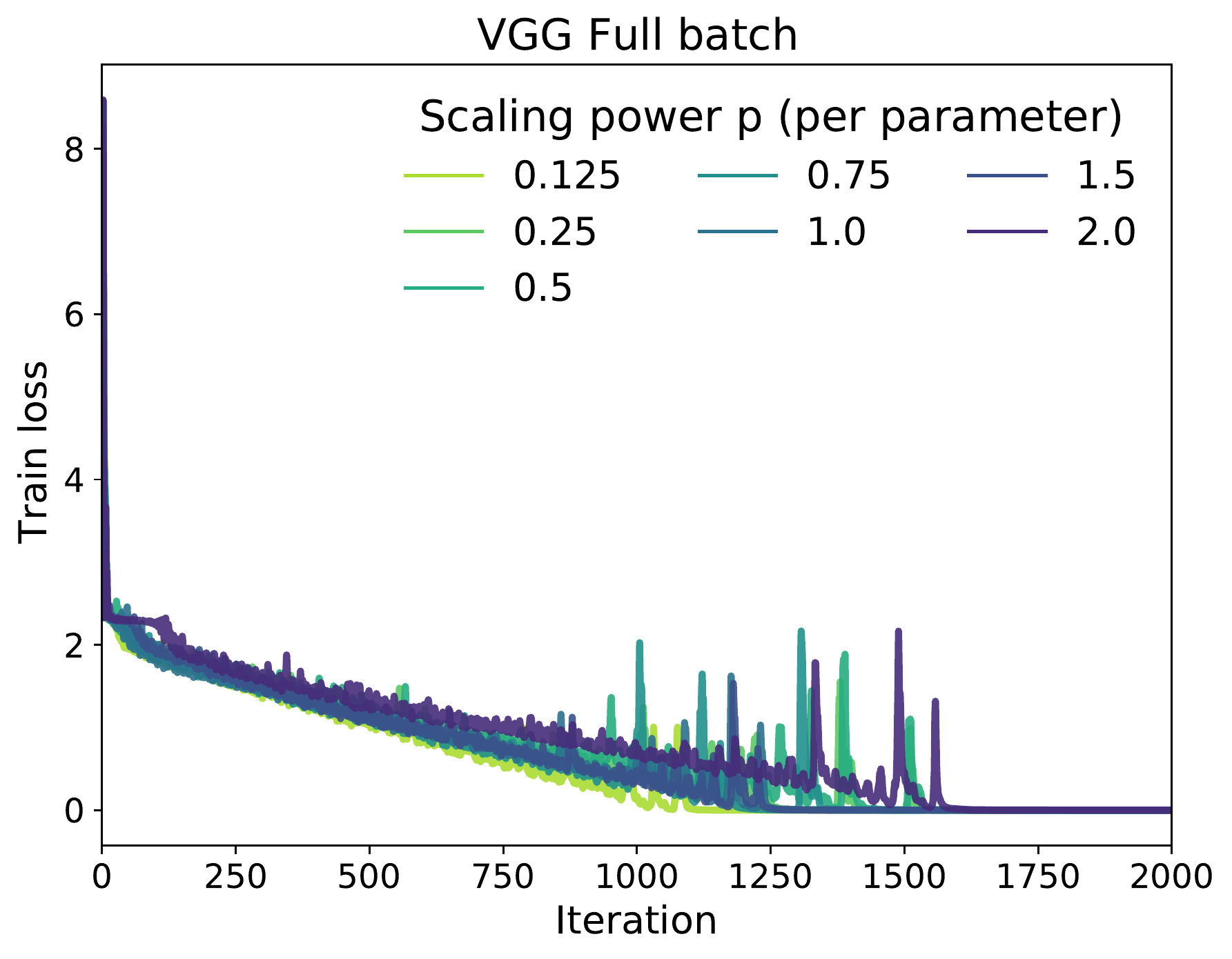}
  \includegraphics[width=0.24\columnwidth]{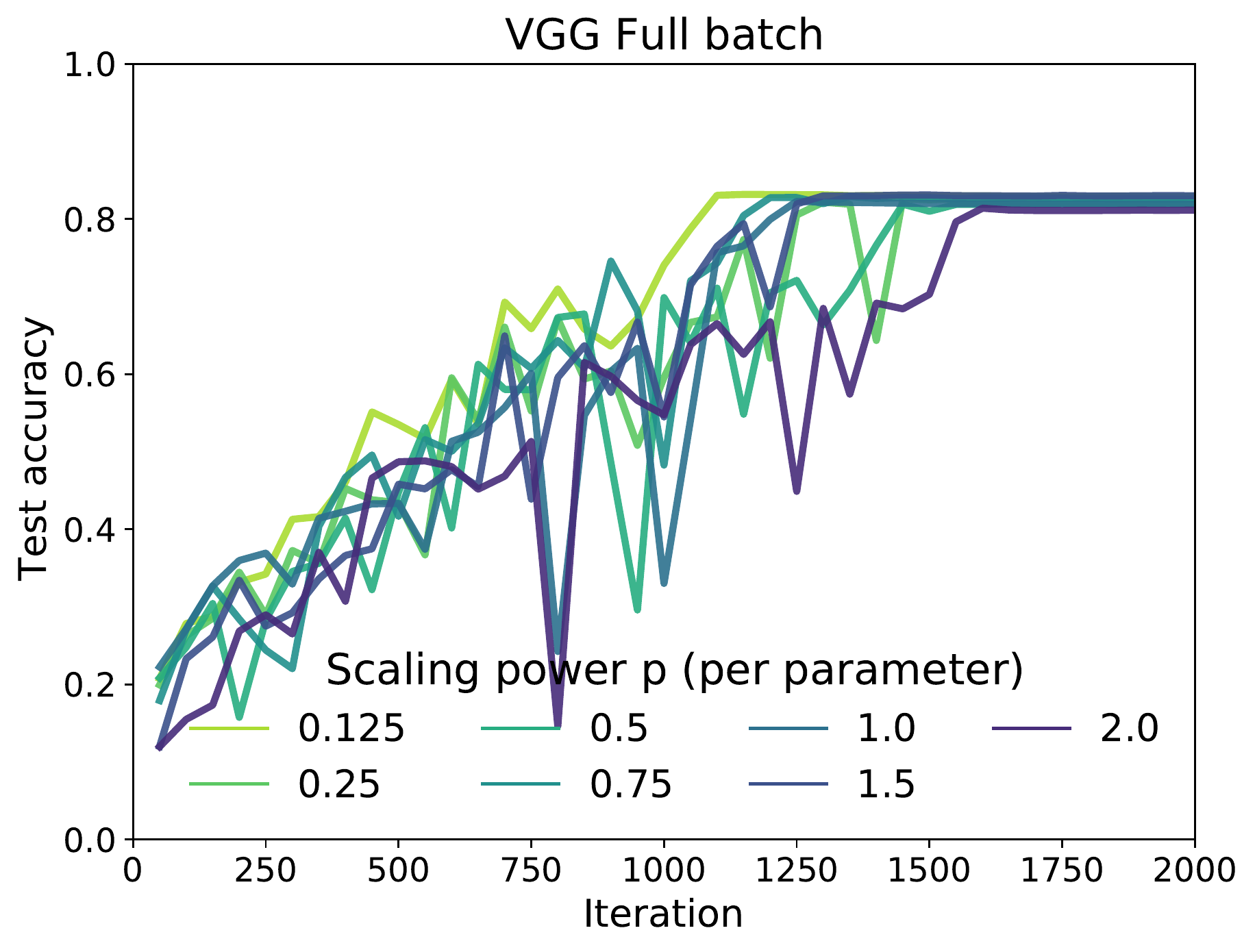}
  \includegraphics[width=0.24\columnwidth]{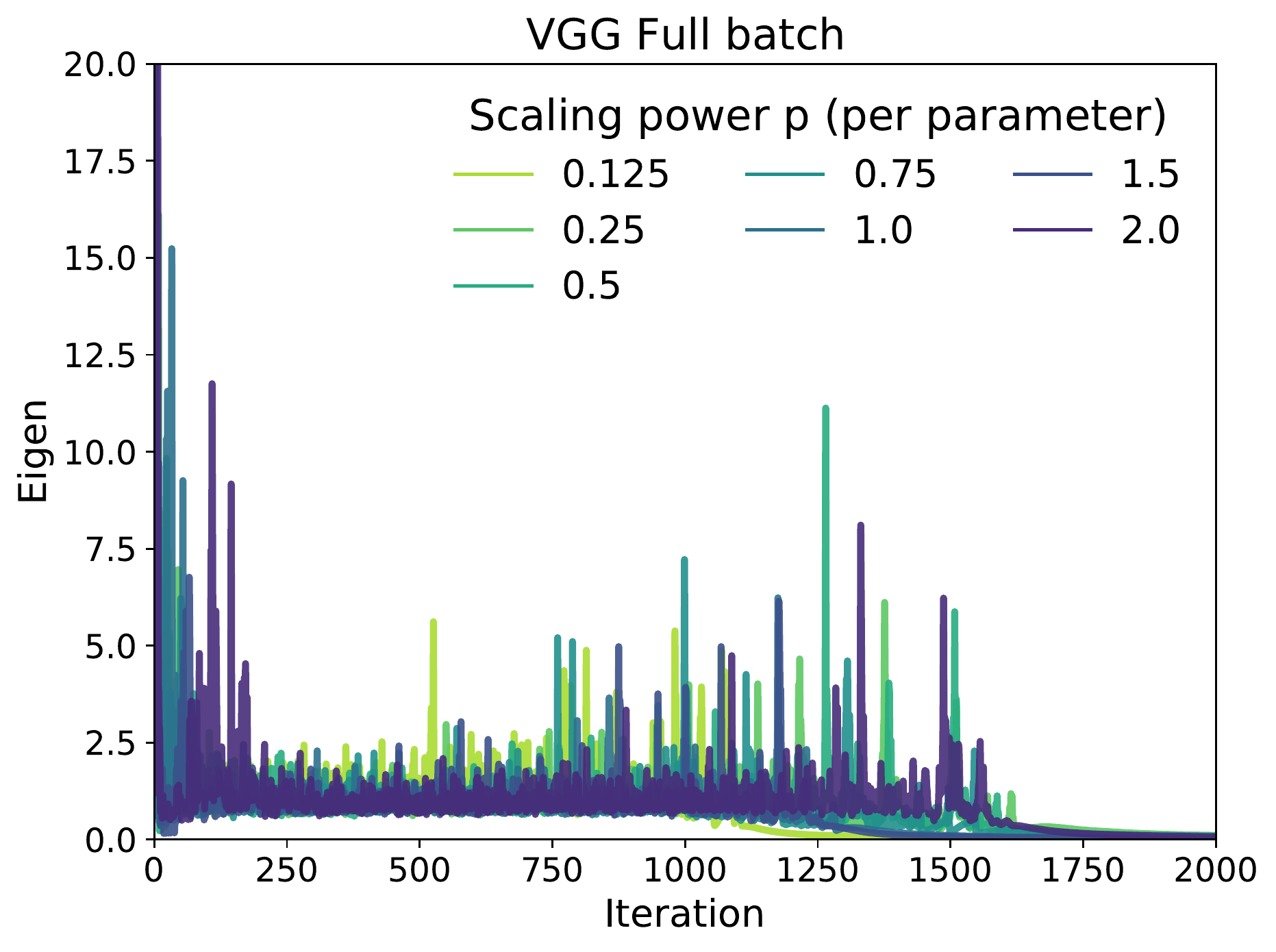}
  \includegraphics[width=0.24\columnwidth]{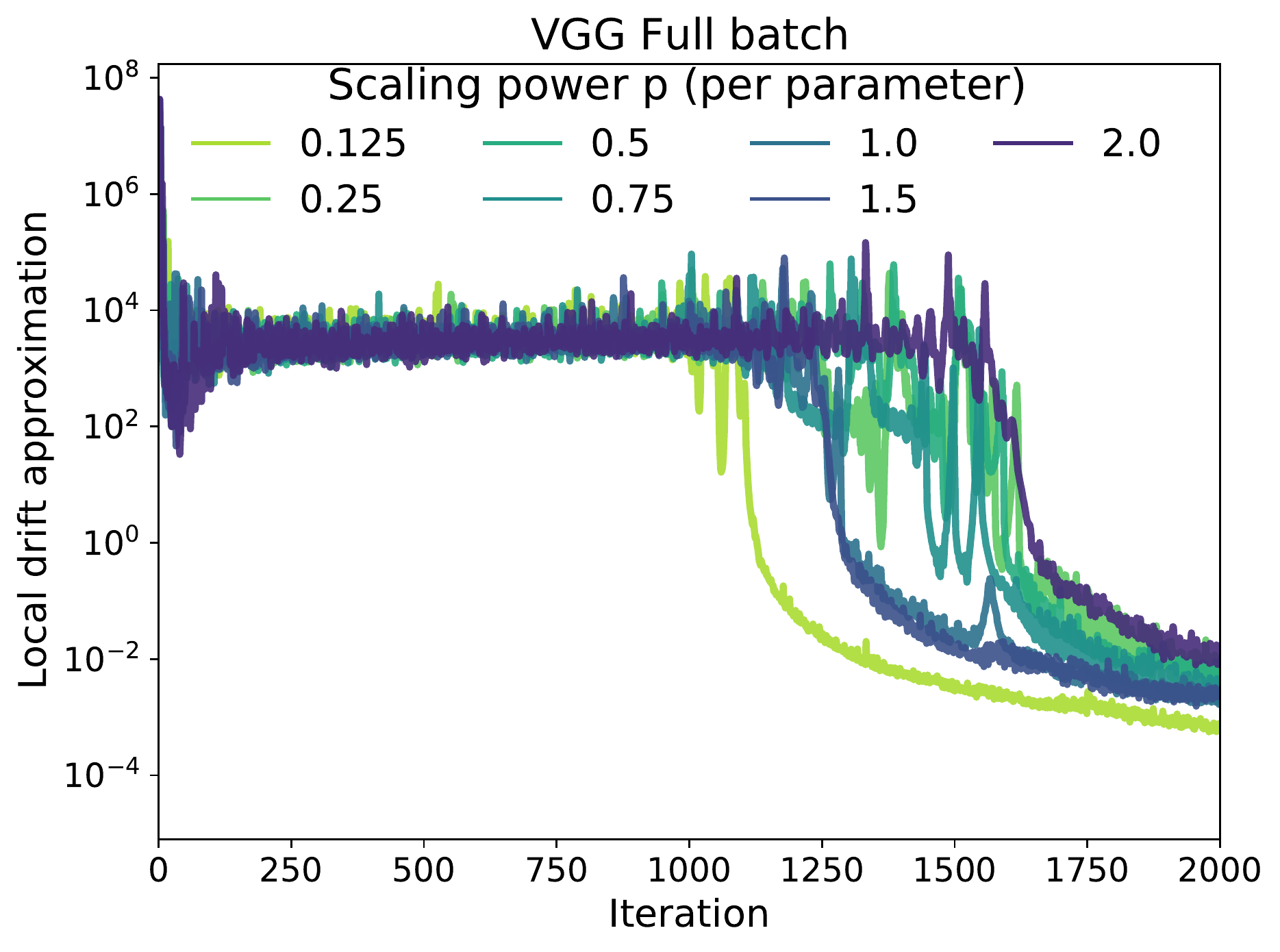}
\caption[VGG CIFAR-10 full batch training with a learning rate per parameter.]{VGG results with learning rate scaling per parameter, but instead of using the global learning rate $2/||\nabla_{\vtheta} E^2 \widehat{g}(\vtheta)||^p$, using the per parameter $\vtheta_i$ learning rate $2/(\nabla_{\vtheta} E^2 \nabla_{\vtheta} E)_i^p$.}
\label{fig:vgg_lr_scaling_per_parameter}
\end{figure}

\begin{figure}
  \includegraphics[width=0.33\columnwidth]{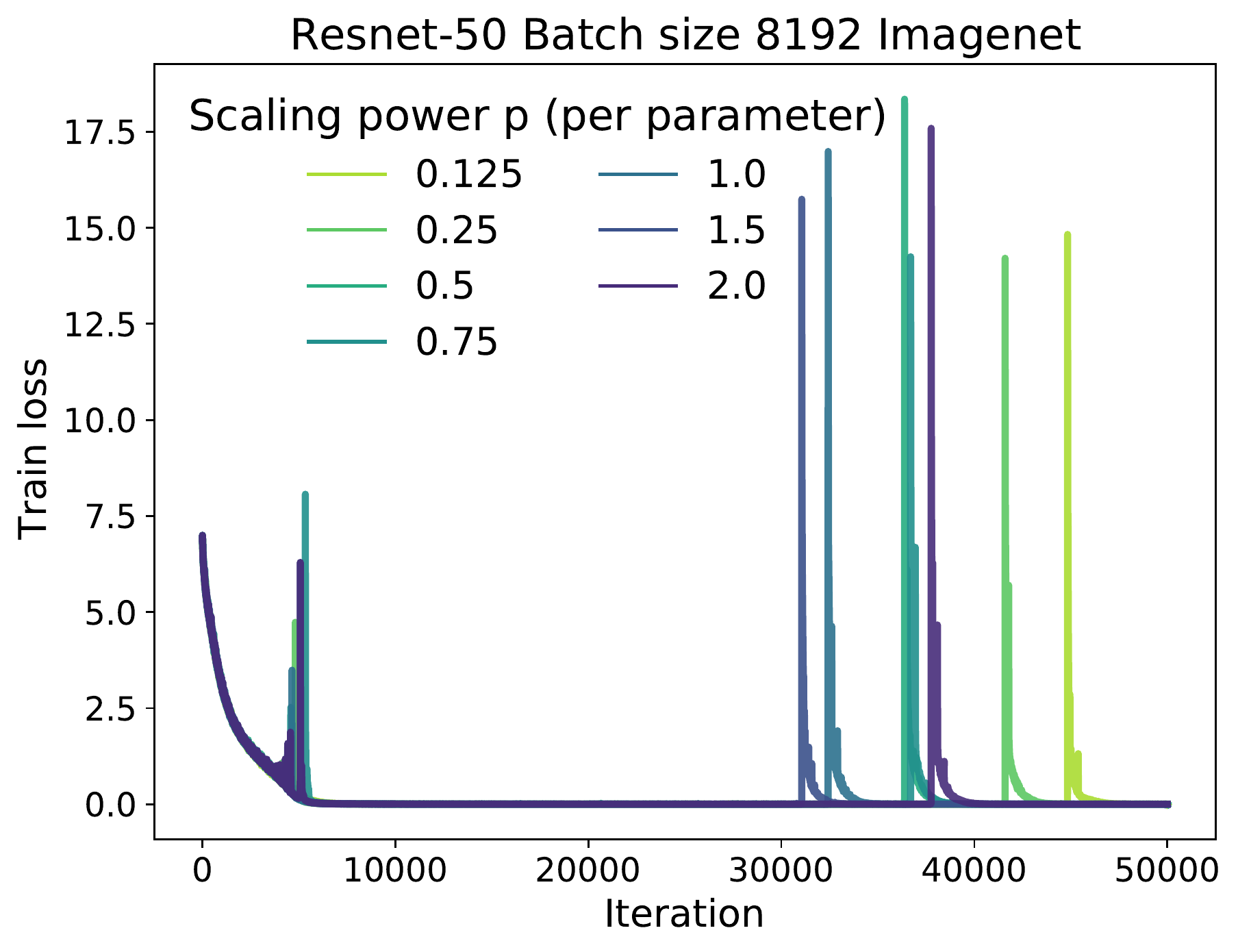}
  \includegraphics[width=0.33\columnwidth]{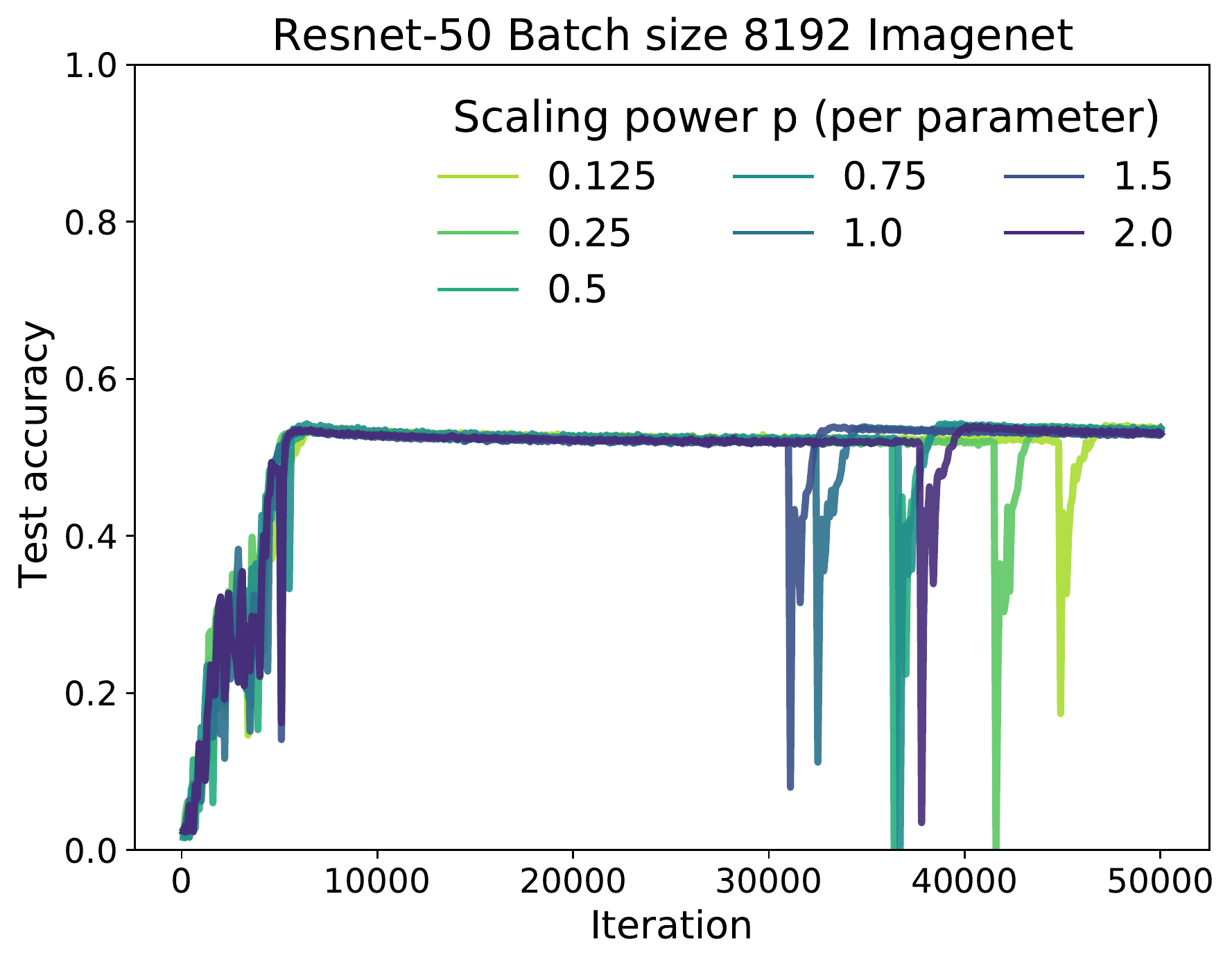}
  \includegraphics[width=0.33\columnwidth]{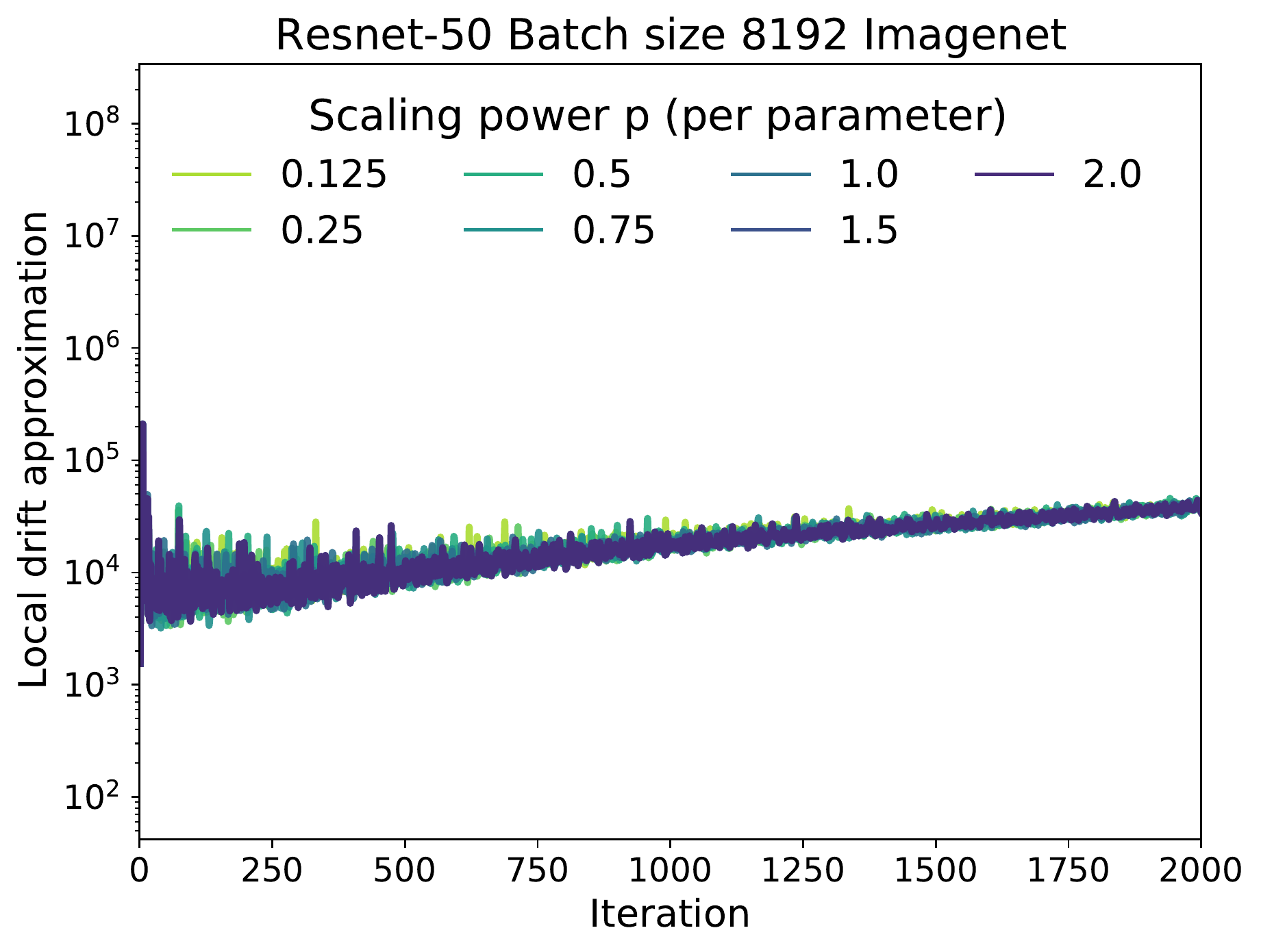}
\caption[Resnet-50 Imagenet training with a learning rate per parameter. Batch size 8192.]{Imagenet results when using a DAL like method of scaling per parameter, but instead of using the global learning rate $2/||\nabla_{\vtheta} E^2 \widehat{g}(\vtheta)||$, using the per parameter $\vtheta_i$ learning rate $2/(\nabla_{\vtheta} E^2 \nabla_{\vtheta} E)_i^p$.}
\label{fig:imagenet_lr_scaling_per_parameter}
\end{figure}

\begin{figure}
\begin{subfigure}[Fixed learning rate sweep.]{
  \includegraphics[width=0.23\columnwidth]{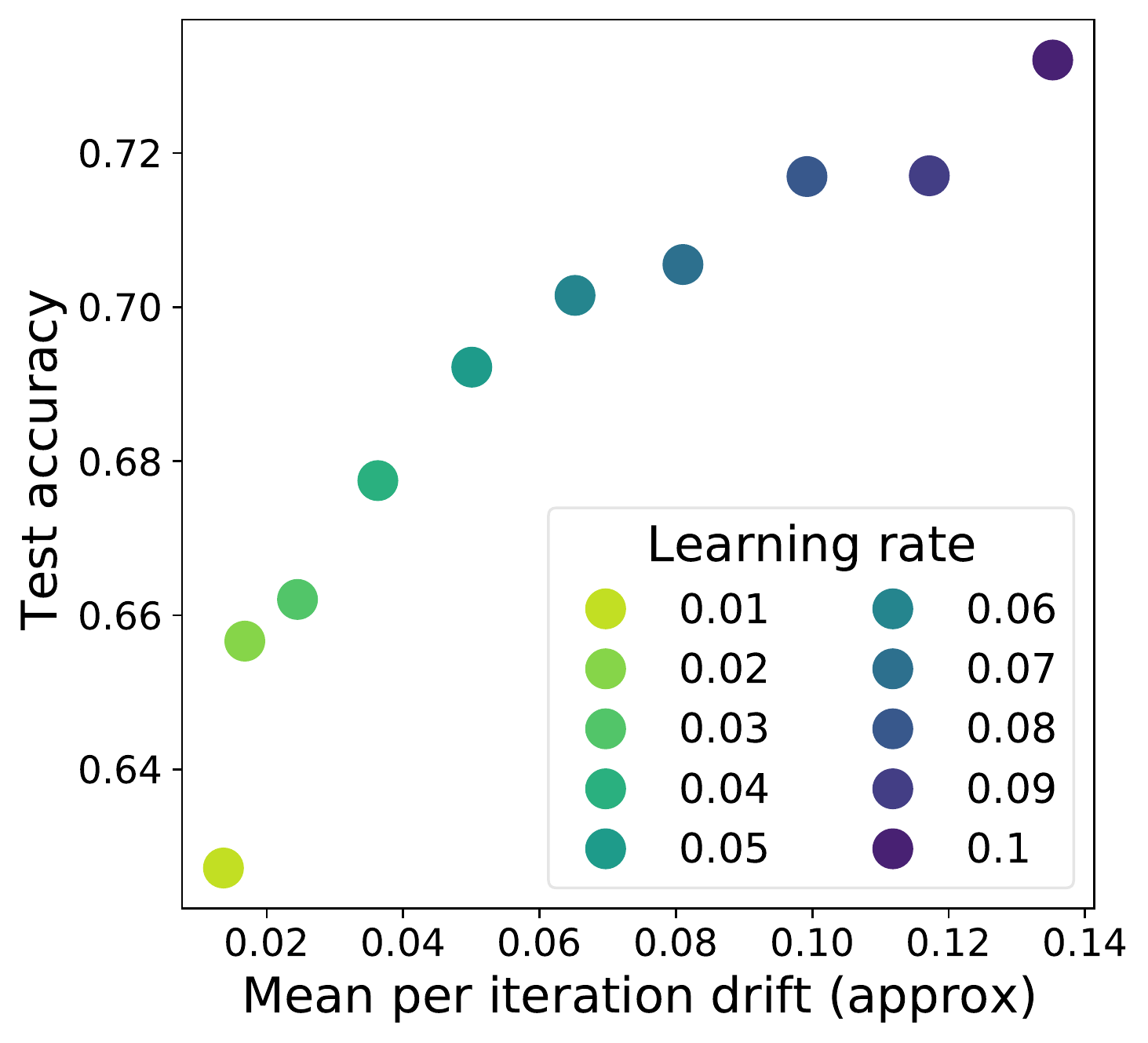}
  \includegraphics[width=0.23\columnwidth]{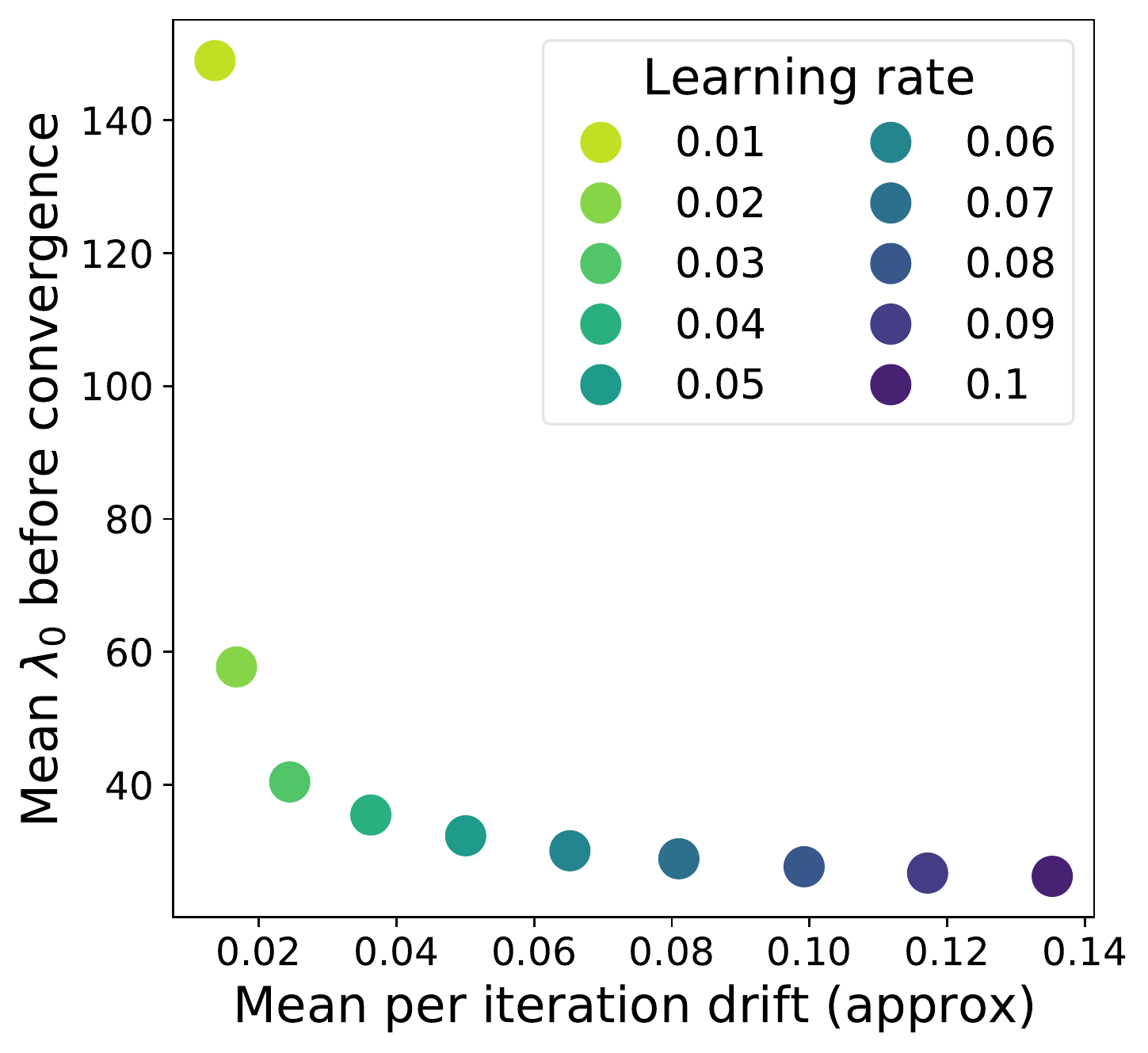}
}\end{subfigure}
\begin{subfigure}[DAL-$p$ sweep.]{
  \includegraphics[width=0.23\columnwidth]{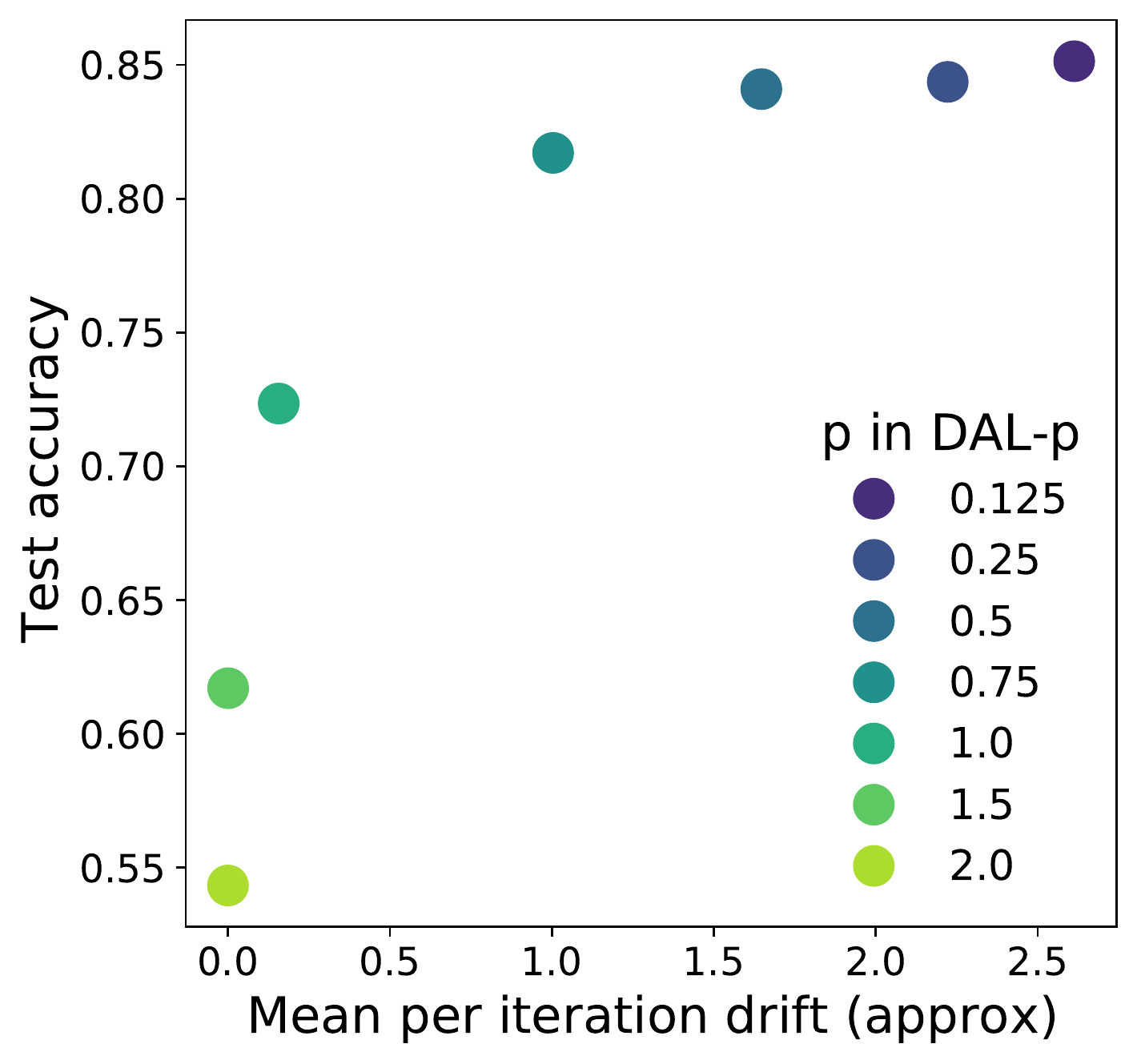}
  \includegraphics[width=0.23\columnwidth]{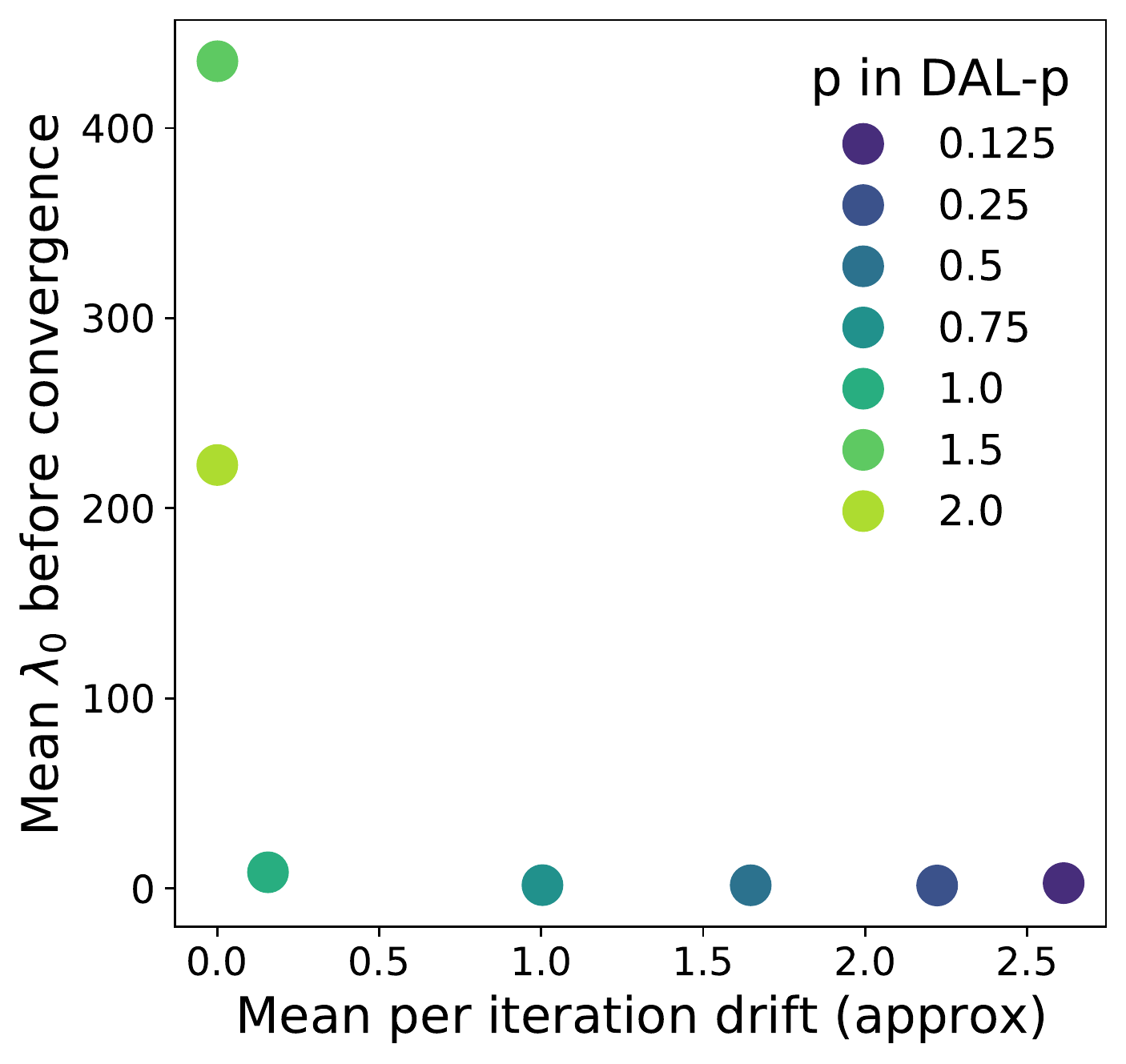}
}\end{subfigure}
\caption[VGG trained on CIFAR-10 with batch size 1024: connection between drift, test error and $\lambda_0$.]{VGG CIFAR-10 with batch size 1024: connection between drift, test error and $\lambda_0$. We observe the same patterns for other batch sizes.}
\label{fig:drift_lambda_sgd_connection}
\end{figure}

\textbf{Momentum}. We show additional results with learning rate adaptation and momentum in Figure \ref{fig:momentum_imagenet_1024}.

\begin{figure}
 \includegraphics[width=0.45\columnwidth]{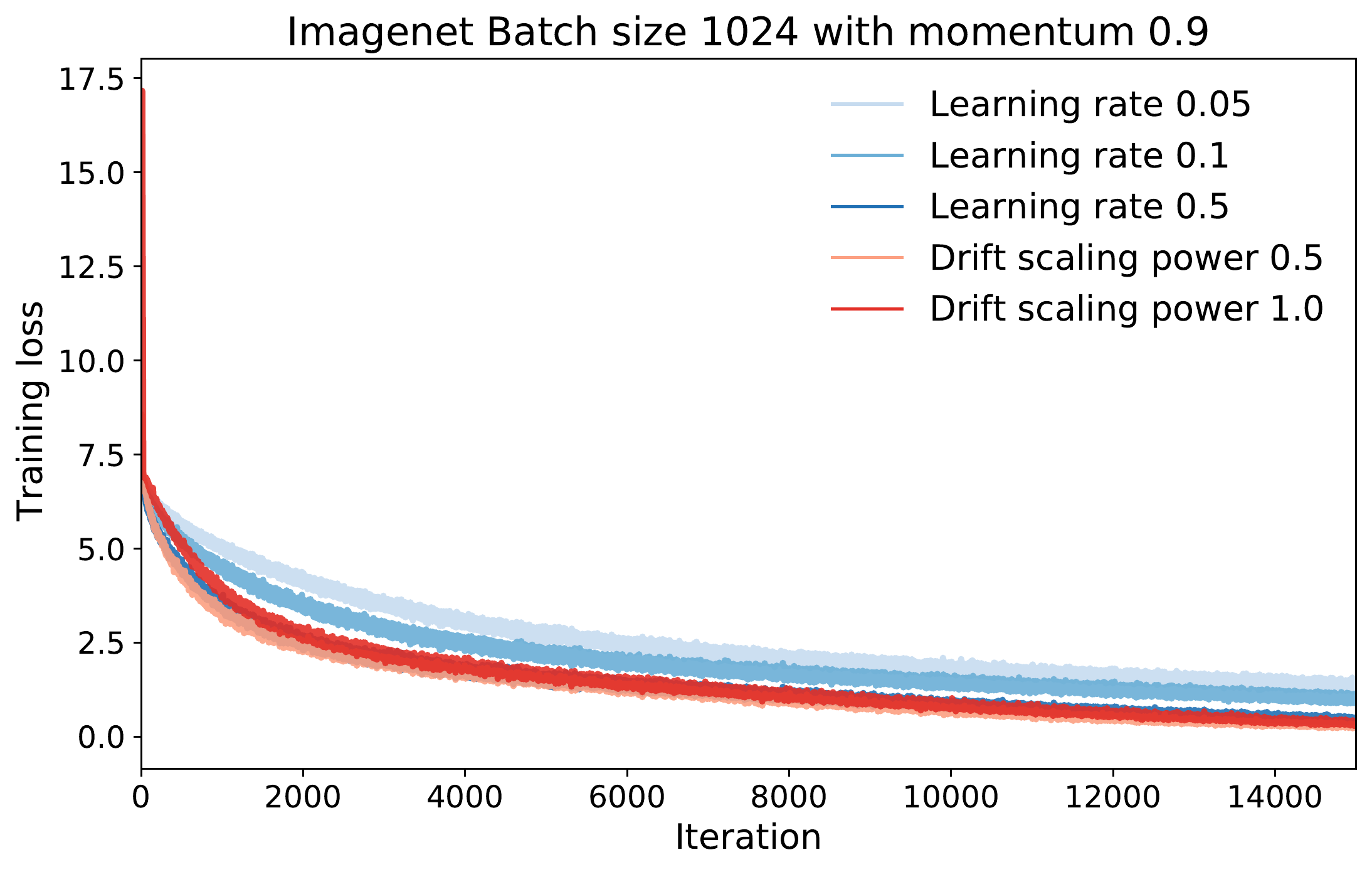}
  \includegraphics[width=0.45\columnwidth]{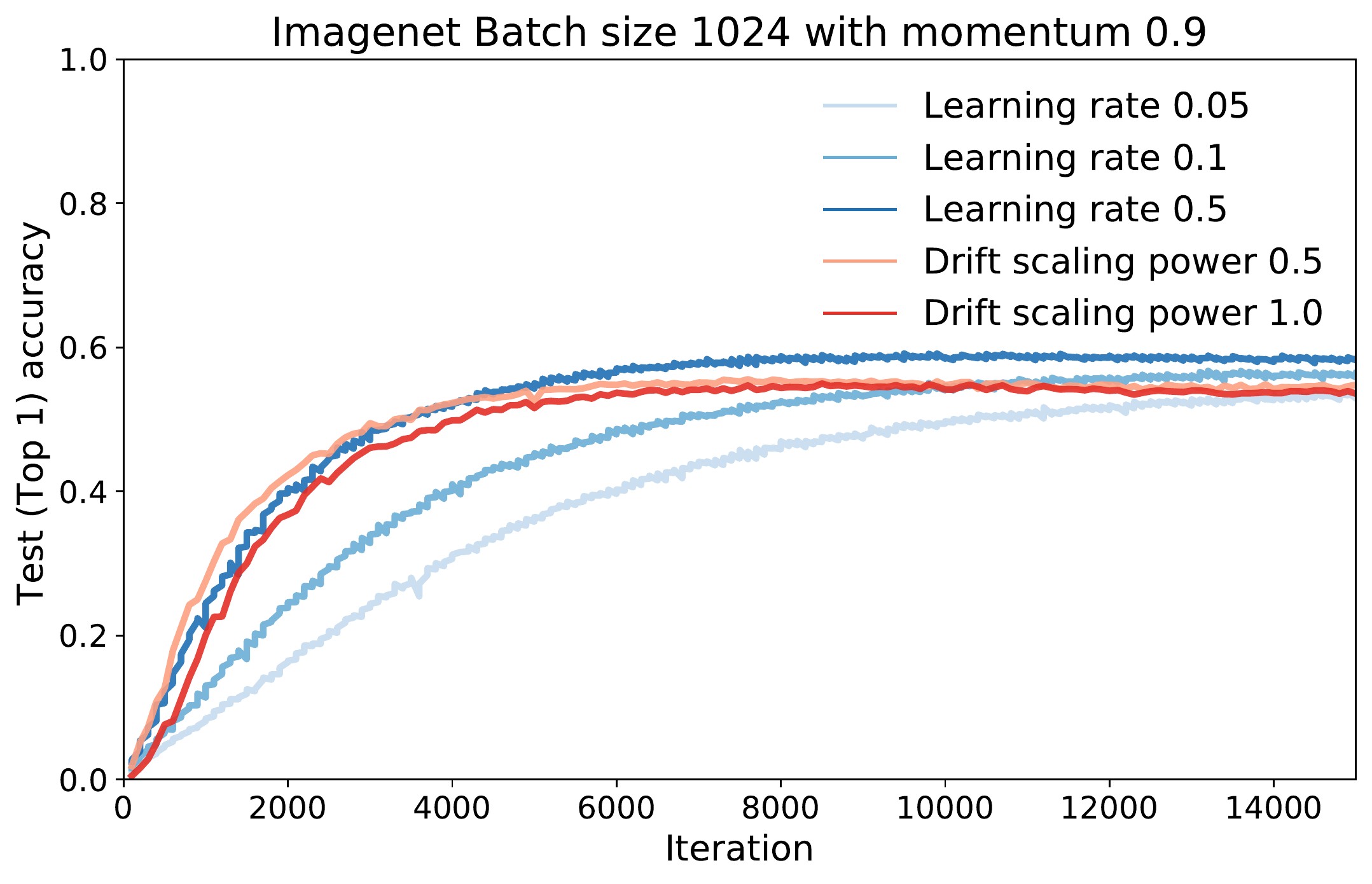}
\caption[DAL-$p$ with momentum $0.9$ on Imagenet. The model is Resnet-50 trained with batch size 1024.]{Integrating information on drift with momentum: with smaller batch sizes, we observe a smaller empirical gain. Since these are preliminary results, more investigation is needed into why that is. These results are consistent with the observation obtained with vanilla gradient descent, where we say larger gains with larger batch sizes on Imagenet.}
\label{fig:momentum_imagenet_1024}
\end{figure}

\textbf{Global discretization error}. We show the global error in trajectory between the NGF and gradient descent in Figure \ref{fig:mnist_gradient_flow}. As previously observed \citep{cohen2021gradient}, gradient descent follows the NGF early in training, and the eigenvalues of the trajectory following the NGF keeps growing.

\begin{figure}
 \includegraphics[width=0.45\columnwidth]{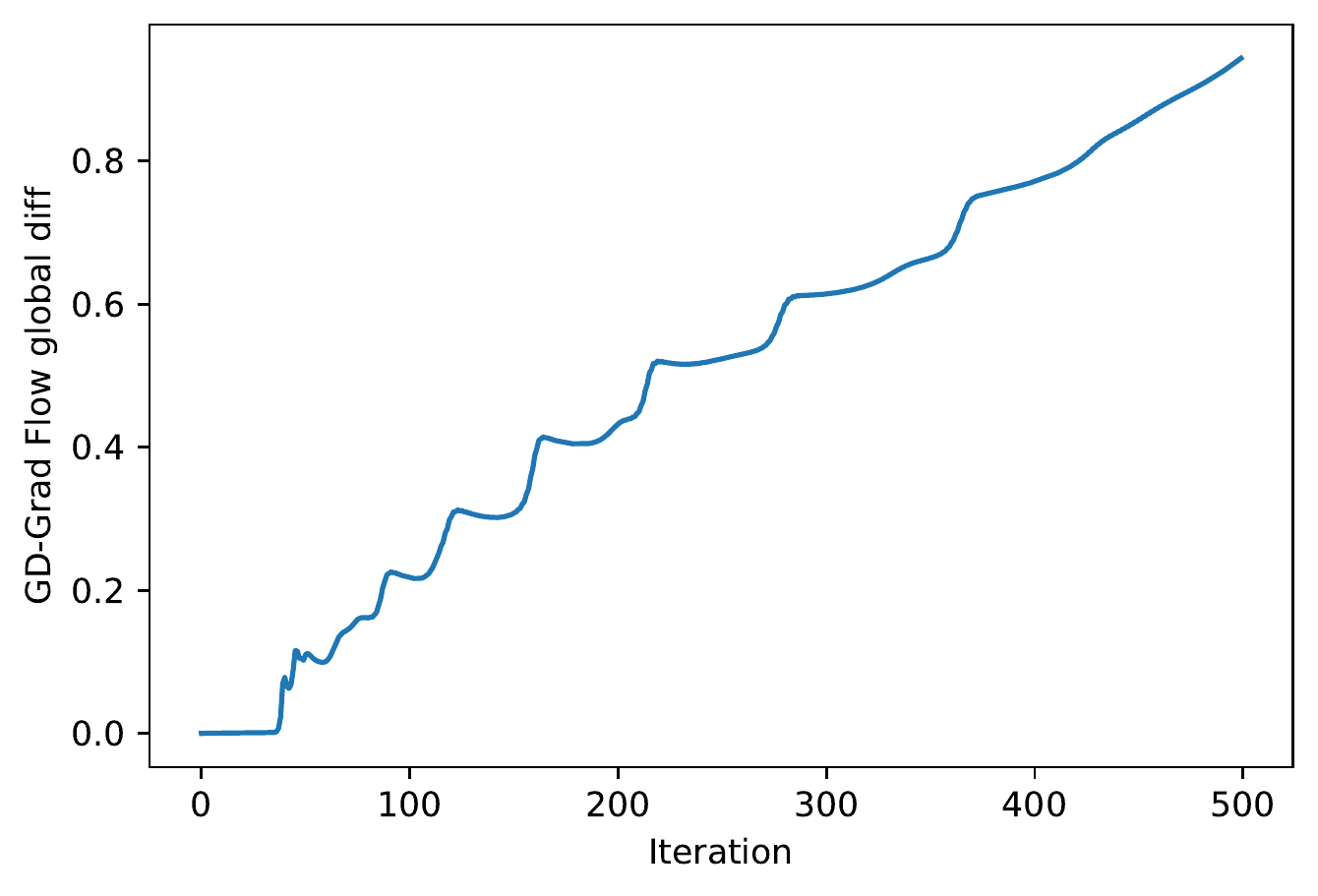}
 \includegraphics[width=0.45\columnwidth]{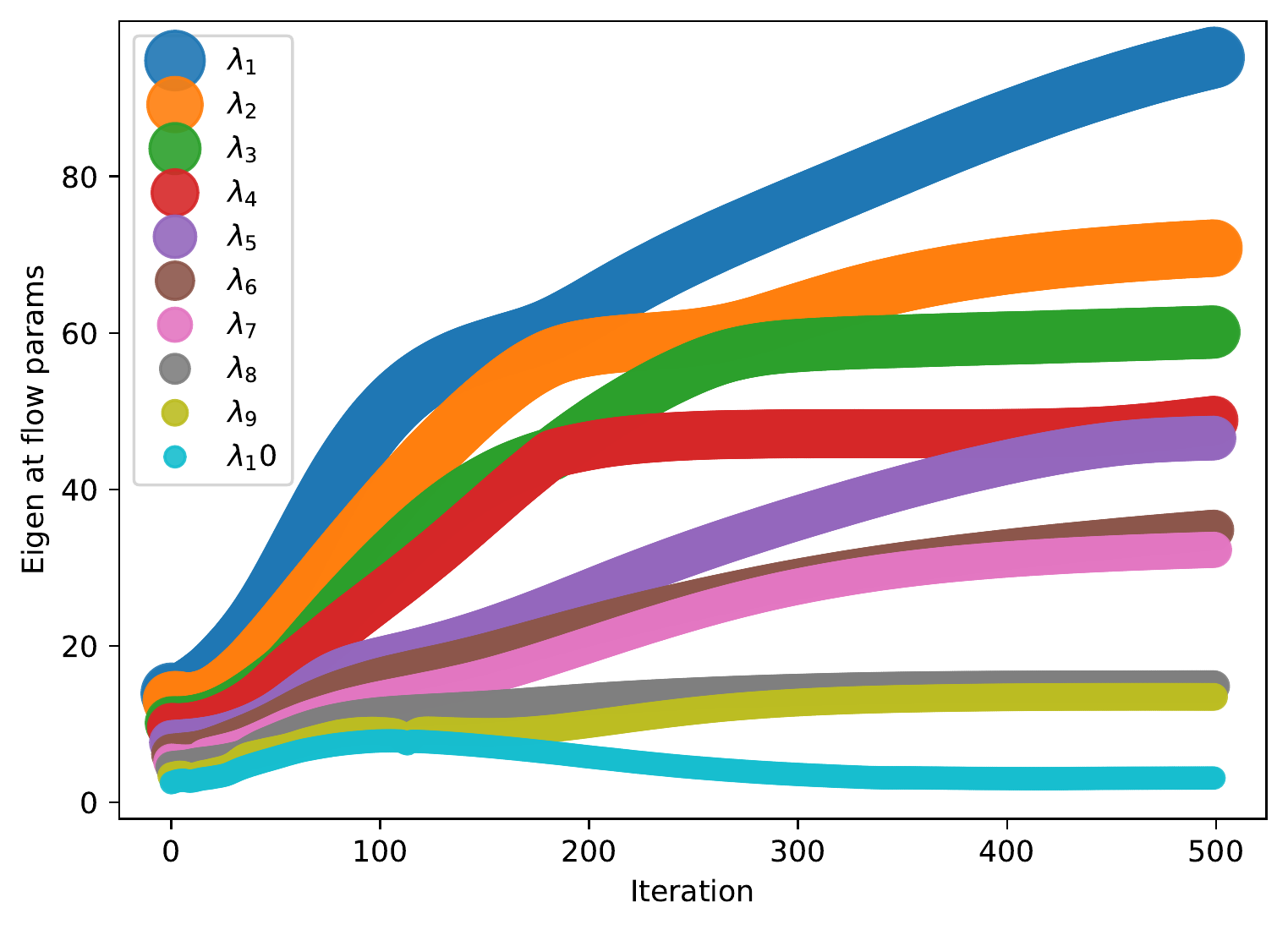}
\caption[MNIST results with a 4 layer MLP with 100 hidden units. The learning rate used is $0.05$.]{The global error between the NGF and gradient descent (left), as well as the behavior of $\lambda_0$ of the NGF.}
\label{fig:mnist_gradient_flow}
\end{figure}

\clearpage
\listoffigures

\end{document}